\newtheorem{lemma}{Lemma}
\newtheorem{theorem}{Theorem}
\newtheorem{proposition}{Proposition}
\newtheorem{remark}{Remark}
\newtheorem{assumption}{Assumption}
\newtheorem{definition}{Definition}
\newcommand{\unif}{\mathsf{Unif}}
\newcommand{\dn}[1]{}
\newcommand{\pistar}{\pi^{\star}}
\newcommand{\bR}{\mathbb{R}}
\newcommand{\iidsim}{\stackrel{\mathsf{i.i.d.}}{\sim}}
\newcommand{\bP}{\mathbb{P}}
\newcommand{\bE}{\mathbb{E}}
\newcommand{\vx}{\mathbf{x}}
\newcommand{\bN}{\mathbb{N}}
\newcommand{\vZ}{\mathbf{Z}}
\newcommand{\cN}{\mathcal{N}}
\newcommand{\cM}{\mathcal{M}}
\newcommand{\vB}{\mathbf{B}}
\newcommand{\vA}{\mathbf{A}}
\newcommand{\vC}{\mathbf{C}}
\newcommand{\vI}{\mathbf{I}}
\newcommand{\wass}[3]{\mathcal{W}_{#1}\left(#2,#3\right)}
\newcommand{\ber}{\mathsf{Ber}}
\newcommand{\tr}{\mathsf{Tr}}
\newcommand{\bin}{\mathsf{Bin}}
\newcommand{\vN}{\mathbf{N}}
\newcommand{\KL}[2]{\mathsf{KL}\left(#1\pmb{\bigr|\bigr|}#2\right)}
\newcommand{\lmc}{\text{LMC}}
\newcommand{\plmc}{\text{PLMC}}
\newcommand{\alg}{\mathsf{S}}
\newcommand{\pmpalg}{\mathsf{PS}}
\newcommand{\normany}[1]{\|#1\|_{\mathsf{any}}}
\newcommand{\olmc}{\text{OLMC}}
\newcommand{\ulmc}{\text{ULMC}}
\newcommand{\rlmc}{\text{RLMC}}
\title{The Poisson Midpoint Method for Langevin Dynamics:  Provably Efficient Discretization for Diffusion Models}
\author{%
  Saravanan Kandasamy\thanks{This work was done when SK was a student researcher at Google Research.} \\
  Department of Computer Science\\
  Cornell University\\
  \texttt{sk3277@cornell.edu} \\
  \And
  Dheeraj Nagaraj\\
  Google DeepMind \\
  \texttt{dheerajnagaraj@google.com} \\
}
\begin{document}

\maketitle

\begin{abstract}
Langevin Dynamics is a Stochastic Differential Equation (SDE) central to sampling and generative modeling and is implemented via time discretization. Langevin Monte Carlo (LMC), based on the Euler-Maruyama discretization, is the simplest and most studied algorithm. LMC can suffer from slow convergence - requiring a large number of steps of small step-size to obtain good quality samples. This becomes stark in the case of diffusion models where a large number of steps gives the best samples, but the quality degrades rapidly with smaller number of steps. Randomized Midpoint Method has been recently proposed as a better discretization of Langevin dynamics for sampling from strongly log-concave distributions. However, important applications such as diffusion models involve non-log concave densities and contain time varying drift. We propose its variant, the Poisson Midpoint Method, which approximates a small step-size LMC with large step-sizes. We prove that this can obtain a quadratic speed up of LMC under very weak assumptions. We apply our method to diffusion models for image generation and show that it maintains the quality of DDPM with 1000 neural network calls with just 50-80 neural network calls and outperforms ODE based methods with similar compute. 
\end{abstract}

\section{Introduction}

The task of sampling from a target distribution is central to Bayesian inference, generative modeling, differential privacy and theoretical computer science \cite{welling2011bayesian,ho2020denoising,gopi2022private,lee2018convergence}. Sampling algorithms, based on the discretization of a stochastic differential equation (SDE) called the Langevin Dynamics, are widely used. The straight forward time discretization (i.e, Euler Maruyama discretization) of Langevin dynamics, called Langevin Monte Carlo (LMC), is popular due to its simplicity. The convergence properties of LMC have been studied extensively in the literature under various conditions on the target distribution \cite{dalalyan2017theoretical,durmus2017unadjusted,durmus2019-lmc-cvx-opt,vempala2019rapid,erdogdu2021-tail-growth,mou2022improved,sinho-lmc-poincare-lsi,sinho-non-log-concave-lmc,cheng2018-underdamped-lmc,dalalyan2020sampling,ganesh2020faster,ma2021there,zhang2023improved,chen2023sampling}. LMC can suffer from slow convergence to the target distribution, and often requires a large number of steps with a very fine time discretization (i.e, small step-size), making it prohibitively expensive. 

The Poisson Midpoint Method introduced in this paper approximates multiple steps of small step-size Euler-Maruyama discretization with one step of larger step-size via stochastic approximation. In the case of LMC, we show that our method (called \plmc) converges to the target as fast as LMC with a much smaller step-size without any additional assumptions such as isoperimetry or strong log concavity (up to a small additional error term). This is a variant of the Randomized Midpoint Method (\rlmc) studied in the literature \cite{shen2019randomized,he2020ergodicity,yu2023langevin} (see Section~\ref{subsec:prior_work} for comparison).

Diffusion models  are state-of-the-art in generating new samples of images and videos given samples \cite{ho2020denoising,song2020score,nichol2021improved}. These start with a Gaussian noise vector and evolve it through the time-reversal of the SDE called the Ornstein-Uhlenbeck process. The time reversed process can be written as an SDE (Langevin Dynamics with a time dependent drift) or as an ODE (see Section~\ref{sec:problem_setup}). The DDPM scheduler \cite{ho2020denoising}, which discretizes the SDE, obtains the best quality images with a small step-size and a large number of steps (usually 1000 steps). However, its quality degrades with larger step-sizes and a small number of steps (say 100 steps). Schedulers such as DDIM (\cite{song2020denoising}), DPM-Solver (\cite{lu2022dpm,lu2022dpm++}), PNDM (\cite{liu2022pseudo}) which solve the ODE via numerical methods perform much better than DDPM with a small number of steps. However, it is noted that they do not match the performance of DDPM with 1000 steps over many datasets (\cite{song2020denoising,lu2022dpm,shih2024parallel}). Poisson Midpoint Method gives a scheduler for the time-reversed SDE which maintains the quality of DDPM with 1000 steps, with just 50-80 steps.

\subsection{Prior Work}
\label{subsec:prior_work}
Euler Maruyama discretization of SDEs is known to be inefficient and many powerful numerical integration techniques have been studied extensively ( \cite{kloeden1992stochastic,milstein2004stochastic,burrage2004numerical,li2019stochastic}). However, higher order methods such as the Runge-Kutta method require the existence and boundedness of higher order derivatives of the drift. The Randomized Midpoint Method for LMC (\rlmc) was introduced for strongly log-concave sampling \cite{shen2019randomized} and was further explored in \cite{yu2023langevin,he2020ergodicity}. \rlmc~is popular due to its simplicity, and ease of implementation and does not require higher order bounded derivatives of the drift function. However, the current theoretical results are restricted to the case of strongly log-concave sampling, whereas non-log-concave sampling is of immense practical interest.
\subsection{Our Contributions}
\textbf{(1)} We design the Poisson Midpoint Method which discretizes SDEs by approximating $K$-steps of Euler-Maruyama discretization with step-size $\frac{\alpha}{K}$ with just one step with step-size $\alpha$. We show a strong error bound between these two processes under general conditions in Theorem~\ref{thm:comparison_main} (no assumption on mixing, smoothness etc). This is based on a Central Limit Theorem (CLT) based method in \cite{das2022utilising} to analyze stochastic approximations \lmc.   

\textbf{(2)} We apply our method to LMC to obtain \plmc. We show that it achieves a speed-up in sampling for both Overdamped LMC (\olmc) and Underdamped LMC (\ulmc) whenever LMC mixes, without additional assumptions such as isoperimetry or strong-log concavity.

\textbf{(3)} When the target obeys the Logarithmic Sobolev Inequalities (LSI), we show that \plmc~achieves a quadratic speed up for both \olmc~and \ulmc. Prior works on midpoint methods \cite{shen2019randomized,he2020ergodicity,yu2023langevin} only considered strongly log-concave distributions. We also show an improvement in computational complexity for \ulmc~from $\frac{1}{\epsilon^{2/3}}$ to $\frac{1}{\sqrt{\epsilon}}$ to achieve $\epsilon$ error\footnote{The prior works considered the compexity for Wasserstein distance $\mathcal{W}_2(\text{output},\text{target}) \lesssim \epsilon$ whereas we consider $\mathsf{TV} \leq \epsilon$. This is a standard comparison in the sampling literature \cite{zhang2023improved}.}.

\textbf{(4)} Empirically, we show that our technique can match the quality of DDPM Scheduler with 1000 steps with fewer steps, achieving up to 4x gains in compute. Over multiple datasets, our method outperforms ODE based schedulers such as DPM-Solver and DDIM in terms of quality.
\subsection{Notation:} 
$X_{0:T}$ denotes $(X_t)_{0\leq t\leq T}$ and $X_{K(0:T)}$ denotes $(X_{tK})_{0\leq t\leq T}$. $\vI$ denotes identity matrix over $\bR^{k\times k}$ whenever $k$ is clear from context. For any vector $\vx \in \bR^k$, $\|\vx\|$ denotes its Euclidean norm. For any $a,b\in \mathbb{Z}$ and $a > b$, we take the $\sum_{t= a}^{b}$ to be $0$, and the product $\prod_{t=a}^{b}$ to be $1$. Underdamped Langevin Dynamics happens in $\bR^{2d}$. Here, we take vectors named $X$ (along with subscripts and superscripts) as $X = \begin{bmatrix} U & V \end{bmatrix}^{\intercal}$ where $U,V \in \bR^d$ also carry the same subscripts and superscripts (e.g: $\tilde{X}_{4}$ corresponds to $\tilde{U}_4,\tilde{V}_4$). In this case $\vI$ represents identity matrix in $\bR^{2d\times 2d}$ and $\vI_d$ denotes the identity matrix in $\bR^{d\times d}$. For any random variable $X$, we let $\mathsf{Law}(X)$ denote its probability measure. By $\mathsf{TV}(\mu,\nu)$ and $\KL{\mu}{\nu}$ we denote the total variation distance and KL divergence (respectively) between two probability measure $\mu,\nu$. $O,\Omega,\Theta$ are standard Kolmogorov complexity notations whereas $\tilde{O},\tilde{\Omega},\tilde{\Theta}$ are same as $O,\Omega,\Theta$ up to poly-logarithmic factors in the problem parameters such as $\frac{1}{\epsilon},\frac{1}{\alpha},K,T,d$. 
\section{Problem Setup}

\label{sec:problem_setup}
Consider an iterative, discrete time process $(X_t)_
{t\in \bN\cup\{0\}}$ over $\bR^d$, with step-size $\alpha > 0$ given by:
\begin{equation}\label{eq:dyn_def}
    X_{t+1} = A_{\alpha}X_t + G_{\alpha}b(X_t,t\alpha) + \Gamma_{\alpha} Z_t.
\end{equation}
Here $A_{\alpha},G_\alpha,\Gamma_\alpha$ are $d\times d$ matrix valued functions of the step-size $\alpha$ and $(Z_{t})_{t\geq 0} \iidsim \cN(0,\vI_d)$. $b:\bR^d \to \bR^{d}$ is the drift. Call this process $\alg(A,G,\Gamma,b,\alpha)$. We consider Overdamped Langevin Monte Carlo (\olmc), Underdamped Langevin Monte Carlo (\ulmc) and DDPMs as key examples.
\paragraph{Overdamped Langevin Monte Carlo:} Here, $A_{\alpha} = \vI$, $G_{\alpha} = \alpha\vI$, $\Gamma_{\alpha} = \sqrt{2\alpha}\vI$ and $b(\vx,t\alpha) = -\nabla F(\vx)$ for some $F: \bR^d \to \bR$. This is the Euler-Maruyama discretization of Overdamped Langevin Dynamics:
\begin{equation}\label{eq:overdamped_def} d\bar{X}_\tau = -\nabla F(\bar{X}_\tau) d\tau + \sqrt{2}dB_\tau.
\end{equation}
 Here $B_\tau$ is the standard Brownian motion in $\bR^d$. Under mild conditions on $F$ and $\bar{X}_0$, $\mathsf{Law}(\bar{X}_{\tau}) \stackrel{\tau \to \infty}{\to} \pi^{*}$ where $\pistar(X)\propto \exp(-F(X))$ is the stationary distribution. $X_t$ in~\eqref{eq:dyn_def} approximates $\bar{X}_{\alpha t}$ \cite{roberts1996exponential,oksendal2013stochastic}. \olmc~is the canonical algorithm for sampling and has been studied under assumptions such as log-concavity of $\pistar$ \cite{dalalyan2017theoretical,durmus2017unadjusted,durmus2019-lmc-cvx-opt} or that $\pistar$ satisfies isoperimetric inequalities \cite{vempala2019rapid,erdogdu2021-tail-growth,mou2022improved,sinho-lmc-poincare-lsi,sinho-non-log-concave-lmc}.
\vspace{-2mm}

\paragraph{Underdamped Langevin Monte Carlo} occurs in $2d$ dimensions. We write $X_t = \begin{bmatrix}U_t & V_t\end{bmatrix}^{\intercal} \in \bR^{2d}$ where $U_t \in \bR^{d}$ is the position and $V_t \in \bR^d$ is the velocity. Fix a damping factor $\gamma > 0$. We take:
$$A_h := \begin{bmatrix} \vI_d & \tfrac{1}{\gamma}(1-e^{-\gamma h})\vI_d\\ 0 &
e^{-\gamma h}\vI_d\end{bmatrix}; \enspace G_h := \begin{bmatrix} \tfrac{1}{\gamma}(h-\tfrac{1}{\gamma}(1-e^{-\gamma h}))\vI_d & 0\\ \tfrac{1}{\gamma}(1-e^{-\gamma h})\vI_d & 0 \end{bmatrix} ;\enspace b(X_t,t\alpha) := \begin{bmatrix} -\nabla F(U_t) \\ 0\end{bmatrix};$$
$$\Gamma_h^{2} := \begin{bmatrix} \frac{2}{\gamma}\left(h - \frac{2}{\gamma}(1-e^{-\gamma h}) + \frac{1}{2\gamma}(1-e^{-2\gamma h})\right) \vI_d & \tfrac{1}{\gamma}(1-2e^{-\gamma h} + e^{-2\gamma h})\vI_d \\ \tfrac{1}{\gamma}(1-2e^{-\gamma h}+ e^{-2\gamma h})\vI_d & (1-e^{-2\gamma h})\vI_d\end{bmatrix}$$

This is the Euler-Maruyama discretization of Underdamped Langevin Dynamics (a.k.a. the Kinetic Langevin Dynamics) studied extensively in Physics\cite{einstein1905molekularkinetischen}):
\begin{align}
    d\bar{U}_\tau &= \bar{V}_\tau d\tau ;\qquad d\bar{V}_\tau = -\gamma \bar{V}_\tau -\nabla F(\bar{U}_\tau) + \sqrt{2}dB_\tau.
\end{align}
The stationary distribution of the SDE is given by $\pistar(U,V)\propto \exp(-F(U)-\tfrac{\|V\|^2}{2})$ \cite{eberle2019couplings,dalalyan2020sampling}. \ulmc~is popular in the literature and has been analyzed in the strongly log-concave setting \cite{cheng2018-underdamped-lmc,dalalyan2020sampling,ganesh2020faster} and under isoperimetry conditions \cite{ma2021there,zhang2023improved}. We refer to \cite{zhang2023improved} for a complete literature review. 

\vspace{-1mm}
\paragraph{Denoising Diffusion Models:}
In this case the stochastic differential equation is given by:
\begin{equation}\label{eq:ddpm_sde}
d\bar{X}_\tau = (\bar{X}_\tau + 2\nabla \log p_\tau(\bar{X}_\tau)) + \sqrt{2} dB_\tau.
\end{equation}

This also admits an equivalent characteristic ODE given below:\small
\begin{equation}\label{eq:ddpm_ode}
\frac{d\bar{X}_\tau}{d\tau} = \bar{X}_\tau + \nabla \log p_\tau(\bar{X}_\tau)
\end{equation}
\normalsize
 See \cite{song2020score,chen2023sampling,chen2024probability} for further details. The drift $\nabla \log p_\tau$ is learned via neural networks for discrete time instants $\tau_0,\dots,\tau_{n-1}$ (usually $n = 1000$). In practice, the iterations are written in the form \footnote{The time convention in the DDPM literature is reverse: $X_{t-1} = a_t X_t + b_t \nabla \log p_{\tau_{n-1-t}}(X_t) + \sigma_t Z_t$.}:
$X_{t+1} = a_tX_t + b_t \nabla \log p_{\tau_t}(X_t) + \sigma_t Z_t $ where $a_t,b_t,\sigma_t$ are chosen for best performance. Aside from the original choice in \cite{ho2020denoising}, many others have been proposed (\cite{bao2021analytic,song2020denoising}). Since $A_{\alpha},G_{\alpha},\Gamma_{\alpha}$ in Equation~\eqref{eq:dyn_def} are time independent, we provide a variant of the Poisson Midpoint Method to suit DDPMs in Section~\ref{pseudocode}, along with a few other optimizations.

\subsection{Technical Notes}
\label{subsec:scaling_relations}

\paragraph{Scaling Relations:}
We impose the following scaling relations on the matrices $A_h,G_h,\Gamma_h$ for every $h \in \bR^{+},n \in \bN$, which are satisfied by both \olmc~and \ulmc. Whenever $b(\cdot)$ is a constant function, these ensure that $K$ steps of Equation~\eqref{eq:dyn_def} with step-size $\alpha/K$ is the same as $1$ step with step-size $\alpha$ in distribution. 
$$(A_h)^n = A_{hn};\quad (\sum_{i=0}^{n-1}(A_h)^{i})G_{h} = G_{hn}; \quad \sum_{i=0}^{n-1}(A_h)^i \Gamma_h^2 (A_h^{\intercal})^{i} = \Gamma_{hn}^2$$

\paragraph{Randomized Midpoint Method and Stochastic Approximation}
We illustrate the Randomized Midpoint Method \cite{shen2019randomized,he2020ergodicity,yu2023langevin} by applying it to \olmc~(to obtain \rlmc) to motivate our method and explain why we expect a quadratic speed up. Overdamped Langevin Dynamics \eqref{eq:overdamped_def} satisfies:  
 \begin{equation}\label{eq:ito_step}
\bar{X}_{(t+1)\alpha} = \bar{X}_{t\alpha} - \int_{t\alpha}^{(t+1)\alpha}\nabla F(\bar{X}_{s})ds + \sqrt{2}(B_{(t+1)\alpha}-B_{t\alpha}).\end{equation} 
Taking $X_t$ as the approximation to $\bar{X}_{t\alpha}$, LMC approximates the integral $\int_{t\alpha}^{(t+1)\alpha}\nabla F(\bar{X}_{s})ds$ with $\alpha \nabla F(X_t)$, giving a `biased' estimator to the integral (conditioned on $X_t  = \bar{X}_{t\alpha}$). This gives the LMC updates  $X_{t+1} = X_t - \alpha \nabla F(X_t) + \sqrt{2\alpha}Z_t; \quad Z_t \sim \cN(0,\vI) $. \rlmc~chooses a uniformly random point in the interval $[t\alpha,(t+1)\alpha]$ instead of initial point $t\alpha$ as described below: 
 
  Let $u \sim \unif([0,1]), Z_{t,1},Z_{t,2}\sim \cN(0,\vI)$ be independent and define the midpoint
  $X_{t+u} := X_t - u\alpha \nabla F(X_t) + \sqrt{2u\alpha}Z_{t,1}$ (notice $X_{t+u}$ is an approximation for $\bar{X}_{(t+u)\alpha}$). The \rlmc~update is:

  $$X_{t+1} = X_t - \alpha \nabla F(X_{t+u}) + \sqrt{2u\alpha}Z_{t,1}+\sqrt{2(1-u)\alpha}Z_{t,2}\,.$$ 
  Notice that $\sqrt{2u\alpha}Z_{t,1}+\sqrt{2(1-u)\alpha}Z_{t,2}|u,X_t \sim \cN(0,\vI)$, and $\bE[ \alpha\nabla F(X_{t+u})|X_t,Z_{t,1},Z_{t,2}] = \alpha\int_{0}^{1}F(X_{ t+s})ds$ which is a better approximation of the integral than $\alpha \nabla F(X_t)$. Therefore \rlmc~provides a nearly unbiased approximation to the updates in Equation~\eqref{eq:ito_step}.

Intuitively, we expect that reducing the bias leads to a quadratic speed-up. Let $Z,Z^{\prime} \sim \cN(0,1)$ and independent. For $\epsilon$ small enough it is easy to show that, $\KL{\mathsf{Law}(Z+\epsilon)}{\mathsf{Law}(Z)} = \Theta(\epsilon^2)$ whereas $\KL{\mathsf{Law}(Z+\epsilon Z^{\prime})}{\mathsf{Law}(Z)} = \Theta(\epsilon^4)$. We hypothesize that $Z_t + \text{error in integral}$ is closer to $\cN(0,\vI)$ when the error term has a small mean and a large variance (as in \rlmc) than when it has a large mean but $0$ variance (as in LMC). However, rigorous analysis of \rlmc~has only been done under assumptions like strong log-concavity of the target distribution. This is due to the fact that $Z_t$ is dependent on the error in the integral, disallowing the argument above.

Our method, \plmc, circumvents these issues by considering a discrete set of midpoints $\{0,\frac{1}{K},\dots,\frac{K-1}{K}\}$ instead of $[0,1]$. It picks each midpoint with probability $\tfrac{1}{K}$ independently, allowing us to prove results under more general conditions using the intuitive ideas described above. Thus, our method is a variant of \rlmc~which is amenable to more general mathematical analysis. The \textbf{OPTION 2} of our method (see below) makes this connection clearer. \plmc~is naturally suited to DDPMs since the drift function is trained only for a discrete number of timesteps (see Section~\ref{sec:problem_setup}).  
\subsection{The Poisson Midpoint Method} \label{pmidpoint-method}
We approximate $K$ steps of $\alg(A,G,\Gamma,b,\frac{\alpha}{K})$ with step-size $\frac{\alpha}{K}$ with one step of our algorithm $\pmpalg(A,G,\Gamma,b,\alpha,K)$ which has a step-size $\alpha$. The iterates of $\pmpalg(A,G,\Gamma,b,\alpha,K)$ are denoted by $(\tilde{X}_{tK})_{t\geq 0}$ or $(X_t^{\mathsf{P}})_{t\geq 0}$. Suppose $H_{t,i} \in \{0,1\}$ be any binary sequence and $Z_{tK+i}$ be a sequence of i.i.d. $\cN(0,\vI)$ for $t,i \in \mathbb{N}\cup\{0\}, 0\leq i \leq K-1$. Given $\tilde{X}_{tK}$, we define the cheap interpolation:
\small \begin{equation}\label{eq:cheap_interpolation}
     \hat{\tilde{X}}_{tK+i} :=  A_{\frac{\alpha i}{K}}\tilde{X}_{tK} + G_{\frac{\alpha i}{K}}b(\tilde{X}_{tK},t\alpha) + \sum_{j=0}^{i-1} A_{\frac{\alpha(i-j-1)}{K}}\Gamma_{\tfrac{\alpha}{K}}Z_{tK+j}.
 \end{equation}

\normalsize We then define the refined iterates for $0\leq i\leq K-1$ for a given $t$ as:
\small
\begin{align}
    \tilde{X}_{tK+i+1} &= A_{\tfrac{\alpha}{K}}\tilde{X}_{tK+i} + G_{\tfrac{\alpha}{K}}\left[b(\tilde{X}_{tK},t\alpha) + KH_{t,i}(b(\hat{\tilde{X}}_{tK+i},\tfrac{(tK+i)\alpha}{K})-b(\tilde{X}_{tK},t\alpha)) \right] + \Gamma_{\tfrac{\alpha}{K}}Z_{tK+i} \label{eq:stoc_approx}
\end{align}
\normalsize
We pick $H_{t,i}$ based on the following two options, independent of $Z_{tK+i},\tilde{X}_0$:

\textbf{OPTION 1:} $H_{t,i}$ are i.i.d. $\ber(\tfrac{1}{K})$.

\textbf{OPTION 2:} Let $u_t \sim \unif(\{0,\dots,K-1\})$ i.i.d. and $H_{t,i} := \mathbbm{1}(u_t = i)$.

\begin{remark}
We call our method Poisson Midpoint method since in \textbf{OPTION 1} the set of midpoints $\{ \alpha t+ \frac{\alpha i}{K}: H_{t,i} = 1\}$ converges to a Poisson process over $[\alpha t,\alpha(t+1)]$ as $K\to \infty$.
\end{remark}

\paragraph{The Algorithm and Computational Complexity}\sloppy
The algorithm $\pmpalg(A,G,\Gamma,b,\alpha,K)$ computes $\tilde{X}_{K(t+1)}$ given $\tilde{X}_{Kt}$ in one step by unrolling the recursion given in Equation~\eqref{eq:stoc_approx}. For the sake of clarity, we will relabel $\tilde{X}_{tK}$ to be $X_t^{\mathsf{P}}$ to stress the fact that it is the $t$-th iteration of $\pmpalg(A,G,\Gamma,b,\alpha,K)$. 

\textbf{Step 1:} Generate $ I_t = \{i_1,\dots,i_N\} \subseteq \{0,\dots,K-1\}$ such that $H_{t,i} = 1$ iff $i \in I_t$. Let $i_1 < i_2\dots < i_N$ hold. When $N =0$, we take this to be the empty set.  
\\
\textbf{Step 2:} Let $M_0 := 0$ and let $W_{t,k}$ be a sequence of i.i.d. $\cN(0,\vI)$ random vectors. For $k = 1,\dots,N,N+1$, we take: 
\small $$M_{k} = A_{\tfrac{\alpha(i_k-i_{k-1})}{K}}M_{k-1} + \Gamma_{\tfrac{\alpha(i_k - i_{k-1})}{K}}W_{t,k}. $$
\normalsize
We use the convention that $i_0 = 0$, $i_{N+1} = K-1$, $A_0 = \vI$ and $\Gamma_0 = 0$.
\\
\textbf{Step 3:} For $k = 1,\dots,N$, compute $\hat{\tilde{X}}_{tK+i_k} :=  A_{\frac{\alpha i_k}{K}}X^{\mathsf{P}}_{t} + G_{\frac{\alpha i_k}{K}}b(X^{\mathsf{P}}_{t},\alpha t) + M_k$.
\\

\textbf{Step 4:}
 $\mathsf{Corr} := K\sum_{k=1}^{N}A_{\tfrac{(K-1-i_k)\alpha}{K}}G_{\tfrac{\alpha}{K}}(b(\hat{\tilde{X}}_{tK+i},\frac{(tK+i)\alpha}{K})-b(X^{\mathsf{P}}_{t},\alpha t))$. \\
 
\textbf{Step 5: }$ X_{t+1}^{\mathsf{P}} = A_{\alpha}X_{t}^{\mathsf{P}} + G_{\alpha} b(X^{\mathsf{P}}_{t},\alpha t) + M_{N+1} + \mathsf{Corr}.\qquad\qquad\qquad \qquad \qquad \qquad\qquad\qquad ~\refstepcounter{equation}(\theequation)\label{eq:unfurl}$

\begin{proposition}
When the scaling relations hold (Section~\ref{subsec:scaling_relations}),   the trajectory $(X_t^{\mathsf{P}})_{t\geq 0}$ in Equation~\eqref{eq:unfurl} has the same joint distribution as the trajectory $(\tilde{X}_{tK})_{t\geq 0}$ given in Equation~\eqref{eq:stoc_approx}. In expectation, one step of $\pmpalg(A,G,\Gamma,b,\alpha,K)$ requires two evaluations of the function $b(\cdot)$.
\end{proposition}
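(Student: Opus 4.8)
\textbf{Proof plan.} In the coarse time index $t$, both $(\tilde{X}_{tK})_{t\ge 0}$ and $(X^{\mathsf{P}}_t)_{t\ge 0}$ are Markov chains driven by fresh independent randomness on each block $\{tK,\dots,tK+K-1\}$, and they start from the same law, so the plan is to show their one-step transition kernels agree and then induct on $t$: fix a state $x$ and compare the conditional laws of $\tilde{X}_{(t+1)K}$ given $\tilde{X}_{tK}=x$ and of $X^{\mathsf{P}}_{t+1}$ given $X^{\mathsf{P}}_t=x$. The first move is to unroll \eqref{eq:stoc_approx} over $i=0,\dots,K-1$, writing $\tilde{X}_{(t+1)K}$ as $A_{\alpha/K}^{K}x$, plus $\sum_{i}A_{\alpha/K}^{K-1-i}G_{\alpha/K}\,b(x,t\alpha)$, plus the midpoint corrections $K\sum_{i}A_{\alpha/K}^{K-1-i}G_{\alpha/K}H_{t,i}\big(b(\hat{\tilde{X}}_{tK+i},\tfrac{(tK+i)\alpha}{K})-b(x,t\alpha)\big)$, plus the accumulated noise $\sum_{i=0}^{K-1}A_{\alpha/K}^{K-1-i}\Gamma_{\alpha/K}Z_{tK+i}$. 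I would first record that the $i=0$ term is inert: by \eqref{eq:cheap_interpolation} one has $\hat{\tilde{X}}_{tK}=x$, so that correction is $0$ regardless of $H_{t,0}$, which is why the edge case $i_1=0$ in Steps~1--4 is harmless (then $M_1=\Gamma_0W=0$).

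Next I would collapse the deterministic part using the scaling relations of Section~\ref{subsec:scaling_relations}: the first gives $A_{\alpha/K}^{K}=A_\alpha$, the second gives $\sum_{i=0}^{K-1}A_{\alpha/K}^{i}G_{\alpha/K}=G_\alpha$, turning the first two summands into $A_\alpha X^{\mathsf{P}}_t+G_\alpha b(X^{\mathsf{P}}_t,\alpha t)$ as in \eqref{eq:unfurl}; and since $H_{t,i}=1$ precisely on $I_t=\{i_1<\dots<i_N\}$, rewriting $A_{\alpha/K}^{K-1-i_k}G_{\alpha/K}$ through the scaling relations identifies the correction sum with $\mathsf{Corr}$ of Step~4, provided the $\hat{\tilde{X}}_{tK+i_k}$ used there agree in joint law with those of \eqref{eq:cheap_interpolation}.

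The heart of the argument, and the step I expect to be the main obstacle, is matching the \emph{joint} law of all the Gaussians, since the corrections involve the interpolants $\hat{\tilde{X}}_{tK+i_k}$, whose noise $\eta_{i_k}:=\sum_{j=0}^{i_k-1}A_{\alpha/K}^{i_k-1-j}\Gamma_{\alpha/K}Z_{tK+j}$ is correlated with the terminal noise $\eta_K$ of $\tilde{X}_{(t+1)K}$. The key observation is that $(\eta_0,\dots,\eta_K)$ has independent increments with $\eta_{m'}=A_{(m'-m)\alpha/K}\eta_m+\Xi_{m,m'}$ for $m<m'$, where $\Xi_{m,m'}\sim\cN(0,\Gamma_{(m'-m)\alpha/K}^{2})$ — using the first scaling relation to collapse $A_{\alpha/K}^{m'-m}$ and the third to identify the covariance as a single $\Gamma^2$. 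Hence, conditional on $I_t$, the subsampled tuple $(\eta_{i_1},\dots,\eta_{i_N},\eta_K)$ is a Markov chain whose transitions are exactly the recursion defining $M_1,\dots,M_N,M_{N+1}$ in Step~2 (with the stated conventions $i_0=0$, $M_0=0$, $A_0=\vI$, $\Gamma_0=0$, and $i_{N+1}$ the block endpoint), so $(M_1,\dots,M_{N+1})\stackrel{d}{=}(\eta_{i_1},\dots,\eta_{i_N},\eta_K)$ jointly; this simultaneously matches the $\hat{\tilde{X}}_{tK+i_k}$ of Step~3 with \eqref{eq:cheap_interpolation} and $M_{N+1}$ with the accumulated noise, closing the kernel comparison.

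For the complexity claim: reading off Steps~1--5, $b(\cdot)$ is evaluated once at $X^{\mathsf{P}}_t$ and once per $\hat{\tilde{X}}_{tK+i_k}$, so the number of evaluations is $1+N=1+\sum_{i=0}^{K-1}H_{t,i}$; under \textbf{OPTION~1} this has expectation $1+K\cdot\tfrac1K=2$, and under \textbf{OPTION~2} it equals $1+1=2$ surely, giving two evaluations in expectation (the only possible saving, when $0\in I_t$, only decreases it).
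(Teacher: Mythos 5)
The paper states this proposition without proof, so there is nothing to compare against line by line; your plan supplies the argument the paper leaves implicit, and its overall structure is correct. In particular, you rightly isolate the one genuinely non-trivial point: the Gaussians entering the interpolants $\hat{\tilde{X}}_{tK+i_k}$ and the terminal accumulated noise are correlated, so the matching must be done jointly. Your observation that $(\eta_m)_{m}$ has independent increments with $\eta_{m'}=A_{(m'-m)\alpha/K}\,\eta_m+\Xi_{m,m'}$ and $\Xi_{m,m'}\sim\cN(0,\Gamma^2_{(m'-m)\alpha/K})$ (by the first and third scaling relations), so that the subsampled tuple reproduces exactly the recursion defining $M_1,\dots,M_{N+1}$, is the right key step, and the complexity count ($1+N$ evaluations, expectation $2$ under either option, with the $i=0$ case only saving a call) is also correct.

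One step as you have written it would fail. You claim that ``rewriting $A_{\alpha/K}^{K-1-i_k}G_{\alpha/K}$ through the scaling relations identifies the correction sum with $\mathsf{Corr}$ of Step~4.'' The scaling relation is $\bigl(\sum_{j=0}^{n-1}(A_h)^{j}\bigr)G_h=G_{hn}$, not $(A_h)^nG_h=G_{hn}$, so it does not convert $A_{(K-1-i_k)\alpha/K}G_{\alpha/K}$ into $G_{(K-1-i_k)\alpha/K}$. For \olmc~the two differ by a factor of $K-1-i_k$: the unrolled recursion gives coefficient $\tfrac{\alpha}{K}\vI$, whereas Step~4 as printed gives $\tfrac{(K-1-i_k)\alpha}{K}\vI$. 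The correct unrolling of Equation~\eqref{eq:stoc_approx} yields $K\sum_k A_{(K-1-i_k)\alpha/K}G_{\alpha/K}(\cdots)$, and Step~4 --- and likewise the convention $i_{N+1}=K-1$, which would make $M_{N+1}$ distributed as $\eta_{K-1}$ rather than the required $\eta_K\sim\cN(0,\Gamma_\alpha^2)$ --- must be read with these corrections for the two trajectories to agree in law. Your proof should state the correct coefficient (or explicitly flag the discrepancy) rather than assert an identity the scaling relations do not provide; with those two repairs the argument goes through.
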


We call $\pmpalg(A,G,\Gamma,b,\alpha,K)$ as \plmc~whenever $\alg(A,G,\Gamma,b,\frac{\alpha}{K})$ is either \olmc~or \ulmc.

\section{Main Results}

Theorem~\ref{thm:comparison_main} gives an upper bound for the KL divergence of the trajectory generated by $\pmpalg(A,G,\Gamma,b,\alpha,K)$  to the one generated by $\alg(A,G,\Gamma,b,\frac{\alpha}{K})$. We refer to Section~\ref{sec:main_proof} for its proof. We note that Theorem~\ref{thm:comparison_main} does not make any mixing or smoothness assumptions on $b(\cdot)$ and that it can handle time dependent drifts. We refer to Section~\ref{sec:sketch} for a proof sketch and discussion. 
\begin{theorem}\label{thm:comparison_main}
    Let $X_{t}$ be the iterates of $\alg(A,G,\Gamma,\tfrac{\alpha}{K})$ and $X^{\mathsf{P}}_t$ be the iterates of $\pmpalg(A,G,\Gamma,\alpha,K)$ with \textbf{OPTION 1}. Suppose that $X_0^{\mathsf{P}} = X_0$. Let $\tilde{X}_{tK+i}$ be the iterates in Equation~\eqref{eq:stoc_approx}. Define random variables:
    \small
$$B_{tK+i} := \Gamma_{\tfrac{\alpha}{K}}^{-1} G_{\tfrac{\alpha}{K}}[b(\hat{\tilde{X}}_{tK+i},t\alpha+\tfrac{i\alpha}{K})-b(\tilde{X}_{tK+i},t\alpha+\tfrac{i\alpha}{K})]$$
$$ \quad \beta_{tK+i} := \|K\Gamma_{\tfrac{\alpha}{K}}^{-1}G_{\tfrac{\alpha}{K}}[b(\hat{\tilde{X}}_{tK+i},t\alpha+\tfrac{\alpha i}{K})-b(\tilde{X}_{tK},t\alpha)] \|.$$
\normalsize
    Then, for some universal constant $C$ and any $r > 1$:
\begin{align}&\KL{\mathsf{Law}(X^{\mathsf{P}}_{0:T})}{\mathsf{Law}((X_{Kt})_{0\leq t \leq T})} \nonumber \\ &\leq \sum_{s=0}^{T-1}\sum_{i=0}^{K-1} \bE[\|B_{sK+i}\|^2]  + C\bE\left[\frac{\beta^4_{sK+i}}{K^2} + \frac{\beta^{10}_{sK+i}}{K^3}+\frac{\beta^6_{sK+i}}{K^2} + \frac{\beta^{2r}_{sK+i}}{K}\right].
\end{align}

\end{theorem}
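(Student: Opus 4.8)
The plan is to peel the KL divergence step by step. Both $(X^{\mathsf{P}}_s)_{s}$, the $K$-skeleton of the refined iterates (so $X^{\mathsf{P}}_s=\tilde{X}_{sK}$), and $(X_{Ks})_s$ are Markov chains with the same deterministic start, so the chain rule for KL divergence gives
\[\KL{\mathsf{Law}(X^{\mathsf{P}}_{0:T})}{\mathsf{Law}((X_{Kt})_{0\le t\le T})}=\sum_{s=0}^{T-1}\bE_{x\sim\mathsf{Law}(X^{\mathsf{P}}_s)}\KL{\mathsf{Law}(\tilde{X}_{(s+1)K}\mid \tilde{X}_{sK}=x)}{\mathsf{Law}(X_{(s+1)K}\mid X_{sK}=x)}.\]
By the data-processing inequality the inner term is at most the KL divergence between the whole sub-trajectories $\tilde{X}_{sK+1:(s+1)K}$ and $X_{sK+1:(s+1)K}$ conditioned on the common value $x$ at time $sK$; applying the chain rule a second time over the $K$ sub-steps and using that the reference $\alg(A,G,\Gamma,b,\tfrac{\alpha}{K})$ is Markov (so conditioning on the past sub-trajectory collapses to conditioning on the current position), the whole problem reduces to bounding, for every $s$ and $i$,
\[\bE\,\KL{\mathsf{Law}(\tilde{X}_{sK+i+1}\mid \tilde{X}_{sK:sK+i})}{\cN(A_{\tfrac{\alpha}{K}}\tilde{X}_{sK+i}+G_{\tfrac{\alpha}{K}}b(\tilde{X}_{sK+i},s\alpha+\tfrac{i\alpha}{K}),\ \Gamma_{\tfrac{\alpha}{K}}\Gamma_{\tfrac{\alpha}{K}}^{\intercal})}.\]

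\textbf{Conditioning on the cheap interpolation.} The refined one-step law conditioned only on $\tilde{X}_{sK:sK+i}$ is a mixture, so by joint convexity of KL divergence in its first argument I would additionally condition on $\hat{\tilde{X}}_{sK+i}$; under \textbf{OPTION 1} this is sound since $\hat{\tilde{X}}_{sK+i}$ is a function of $\tilde{X}_{sK}$ and $Z_{sK},\dots,Z_{sK+i-1}$ and hence independent of the fresh randomness $(H_{s,i},Z_{sK+i})$. Given $\tilde{X}_{sK+i}$ and $\hat{\tilde{X}}_{sK+i}$, Equation~\eqref{eq:stoc_approx} makes $\tilde{X}_{sK+i+1}$ a two-component Gaussian mixture; since $\Gamma_{\tfrac{\alpha}{K}}$ is invertible (as for \olmc~and \ulmc), applying the affine bijection $y\mapsto \Gamma_{\tfrac{\alpha}{K}}^{-1}(y-A_{\tfrac{\alpha}{K}}\tilde{X}_{sK+i}-G_{\tfrac{\alpha}{K}}b(\tilde{X}_{sK+i},\cdot))$ leaves the KL divergence unchanged, turns the reference Gaussian into $\cN(0,\vI)$, and (a short computation using the definitions of $B_{sK+i}$ and $\beta_{sK+i}$ together with $\bE[KH_{s,i}]=1$) turns the refined law into the law of $B_{sK+i}+(H_{s,i}-\tfrac1K)\,w_{sK+i}+Z$, where $Z\sim\cN(0,\vI)$ and $w_{sK+i}:=K\Gamma_{\tfrac{\alpha}{K}}^{-1}G_{\tfrac{\alpha}{K}}(b(\hat{\tilde{X}}_{sK+i},\cdot)-b(\tilde{X}_{sK},s\alpha))$ has norm $\beta_{sK+i}$. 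Thus each per-sub-step term equals $\bE\,\KL{\cN(B_{sK+i},\vI)\ast\rho_{sK+i}}{\cN(0,\vI)}$, where $\rho_{sK+i}$ is the mean-zero law of $(H_{s,i}-\tfrac1K)w_{sK+i}$: a two-atom distribution, one atom of mass $\tfrac1K$ and norm $\le\beta_{sK+i}$, the other of norm $\beta_{sK+i}/K$.

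\textbf{Splitting off the bias.} For any $B\in\bR^d$ and any mean-zero $\rho$, factoring $\tfrac{d(\cN(B,\vI)\ast\rho)}{d\cN(0,\vI)}=\tfrac{d(\cN(B,\vI)\ast\rho)}{d\cN(B,\vI)}\cdot\tfrac{d\cN(B,\vI)}{d\cN(0,\vI)}$ and integrating against $\cN(B,\vI)\ast\rho$ (which has mean $B$) gives the exact identity $\KL{\cN(B,\vI)\ast\rho}{\cN(0,\vI)}=\tfrac12\|B\|^2+\KL{\cN(0,\vI)\ast\rho}{\cN(0,\vI)}$, the last term by translation invariance. Summed over $i$ and $s$ the $\tfrac12\|B_{sK+i}\|^2$ pieces give precisely the $\sum_{s}\sum_{i}\bE\|B_{sK+i}\|^2$ term in the statement.

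\textbf{The CLT estimate (the main obstacle).} It remains to bound $\KL{\cN(0,\vI)\ast\rho_{sK+i}}{\cN(0,\vI)}$ by a universal constant times $\tfrac{\beta^4}{K^2}+\tfrac{\beta^6}{K^2}+\tfrac{\beta^{10}}{K^3}+\tfrac{\beta^{2r}}{K}$ with $\beta=\beta_{sK+i}$. Because $\rho_{sK+i}$ is supported on scalar multiples of a single vector, the two measures agree on the orthogonal complement of that direction and the quantity reduces to a one-dimensional $\KL{\mathsf{Law}(\eta+Z)}{\cN(0,1)}$ with $Z\sim\cN(0,1)$ and $\eta=(H-\tfrac1K)\beta$ mean-zero, $|\eta|\le\beta$, $\bE[\eta^2]=\tfrac1K(1-\tfrac1K)\beta^2$, so $\bE[\eta^{2k}]\le\beta^{2k-2}\bE[\eta^2]\le\beta^{2k}/K$ for every $k\ge1$. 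Expanding $\bE[\log\tfrac{d\,\mathsf{Law}(\eta+Z)}{d\cN(0,1)}]$ (equivalently, using $\log x\le x-1$ and $\chi^2(\mathsf{Law}(\eta+Z)\,\|\,\cN(0,1))=\bE_{\eta,\eta'\ \mathrm{i.i.d.}}[e^{\eta\eta'}]-1$), the first-order term vanishes since $\bE[\eta]=0$, the leading survivor is $\tfrac12(\bE[\eta^2])^2\asymp\beta^4/K^2$, and the remaining terms are organized so as to be dominated by $\bE[\eta^2]\bE[\eta^4]$ and $\bE[\eta^2]\bE[\eta^4]^2$ plus a high-moment tail; inserting $\bE[\eta^{2k}]\le\beta^{2k}/K$ produces the four claimed terms. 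The one genuinely delicate point — where the CLT-based method of \cite{das2022utilising} enters — is bounding the tail of this expansion by a \emph{polynomial} in $\beta$ rather than by something of order $e^{\beta^2}$: this works because the single large atom of $\rho_{sK+i}$ (the event $H_{s,i}=1$) has probability only $\tfrac1K$, so a truncation/Markov estimate restricted to that event converts the exponential remainder into the term $\tfrac{\beta^{2r}}{K}$, valid for any $r>1$, while for bounded $\beta$ the same atom only contributes at the scale of $\beta^4/K^2$. Reassembling the per-sub-step bounds through the two chain rules of the first step yields the stated inequality.
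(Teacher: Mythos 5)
Your proposal is correct, but it reaches the bound by a genuinely different route than the paper. The paper reduces to the noise sequences via a random-function representation and the data-processing inequality, then works entirely in Wasserstein distance: a reverse transport inequality $\KL{\mathsf{Law}(\vZ+\vA)}{\mathsf{Law}(\vZ+\vB)}\le\tfrac{1}{2\sigma^2}\mathcal{W}_2^2(\vA,\vB)$ after Gaussian splitting, the Wasserstein triangle inequality to peel off the bias, a modified Wasserstein CLT (adapted from Zhai) for the centered part, and the Gaussian--Gaussian Wasserstein formula for the covariance mismatch (this last piece is the source of the $\beta^4/K^2$ term). You instead stay in KL throughout: a double chain rule plus convexity of KL in its first argument reduces to one sub-step conditioned on $\hat{\tilde{X}}_{sK+i}$; the identity $\KL{\cN(B,\vI)\ast\rho}{\cN(0,\vI)}=\tfrac12\|B\|^2+\KL{\cN(0,\vI)\ast\rho}{\cN(0,\vI)}$ for mean-zero $\rho$ splits off the bias exactly (with a better constant than the paper's $\|B\|^2$); and since $\rho_{sK+i}$ is a two-atom measure on a line, the remaining term collapses to a one-dimensional quantity that you can bound by the explicit Ingster/chi-squared identity
\begin{equation*}
\KL{\mathsf{Law}(\eta+Z)}{\cN(0,1)}\;\le\;\chi^2\;=\;\bE_{\eta,\eta'}\bigl[e^{\eta\eta'}\bigr]-1\;=\;\tfrac12(\bE\eta^2)^2+\sum_{k\ge3}\tfrac{(\bE\eta^k)^2}{k!},
\end{equation*}
using $|\bE\eta^k|\le\beta^{k-2}\bE\eta^2\le\beta^k/K$. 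This gives $\tfrac{\beta^4}{2K^2}+C\tfrac{\beta^6}{K^2}$ for $\beta\le1$, and for $\beta>1$ the crude bound is immediate from convexity again, $\KL{\mathsf{Law}(\eta+Z)}{\cN(0,1)}\le\tfrac12\bE\eta^2\le\tfrac{\beta^{2r}}{2K}$ --- exactly mirroring the paper's $\mathbbm{1}(\beta\le1)/\mathbbm{1}(\beta>1)$ split, and making your $\beta^{10}/K^3$ term superfluous. The only place your writeup is loose is the organization of the tail of this expansion ("dominated by $\bE[\eta^2]\bE[\eta^4]$ ... plus a high-moment tail"); the explicit moment bound above closes it cleanly, so this is cosmetic rather than a gap. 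What each approach buys: yours is more elementary and self-contained (no Wasserstein CLT, no transport inequality, slightly sharper constants), and it works precisely because under \textbf{OPTION 1} the conditional perturbation is an explicit two-point measure on a line whose chi-squared divergence can be computed by hand; the paper's Wasserstein-CLT framing is the one that extends to perturbations that are merely bounded, mean-zero and one-dimensional without an explicit form, which is the template the authors import from \cite{das2022utilising}.
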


We now apply Theorem~\ref{thm:comparison_main} to the case of \olmc~and \ulmc~under additional assumptions, with the proofs in Sections~\ref{sec:overdamped} and~\ref{sec:underdamped} respectively. 
\begin{assumption}\label{as:smooth}
    $F : \bR^d \to \bR^d$ is $L$-smooth (i.e, $\nabla F$ is $L$-Lipschitz). $\vx^{*}$ is its global minimizer. 
\end{assumption}

\begin{assumption}\label{as:init_moment}
    The initialization $X_0$ is such that $\bE \|X_0-\vx^{*}\|^{14} < C_{\mathsf{init}}^{14}d^7 $. 
\end{assumption}

The assumptions above are very mild and standard in the literature. We do not make any assumptions regarding isoperimetry of the target distribution $\pistar(\vx) \propto \exp(-F(\vx))$. 
\begin{theorem}[\olmc]\label{thm:overdamped_main}
    Consider the setting of Theorem~\ref{thm:comparison_main} with \olmc~under Assumptions~\ref{as:smooth} and~\ref{as:init_moment}. There exists constants $c_1,c_2 >0$ such that whenever $\alpha L < c_1$ and $\alpha^3 L^3 T < c_2$  then: 
\begin{align}\KL{\mathsf{Law}(X^{\mathsf{P}}_{0:T})}{\mathsf{Law}(X_{K(0:T)})} &\leq   CL^4\alpha^4 (\bE[F(X_0)-F(\vx^{*})]  + 1) \nonumber \\
&\quad + CL^4\alpha^4Kd^2T + \mathrm{Lower\ order\ terms}.
 \end{align}
    
\end{theorem}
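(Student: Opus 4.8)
The plan is to apply Theorem~\ref{thm:comparison_main} to the \olmc~dynamics and bound the two families of terms $\bE[\|B_{sK+i}\|^2]$ and the moments $\bE[\beta^{2r}_{sK+i}/K]$ etc. by explicit moment estimates on the iterates. First I would specialize the matrices: for \olmc~we have $A_h = \vI$, $G_h = h\vI$, $\Gamma_h = \sqrt{2h}\vI$, so $\Gamma_{\alpha/K}^{-1}G_{\alpha/K} = \sqrt{\alpha/(2K)}\,\vI$. Using $L$-smoothness of $F$, the difference $b(\hat{\tilde X}_{tK+i},\cdot) - b(\tilde X_{tK+i},\cdot) = \nabla F(\tilde X_{tK+i}) - \nabla F(\hat{\tilde X}_{tK+i})$ has norm at most $L\|\hat{\tilde X}_{tK+i} - \tilde X_{tK+i}\|$, so $\|B_{sK+i}\| \le L\sqrt{\alpha/(2K)}\,\|\hat{\tilde X}_{tK+i} - \tilde X_{tK+i}\|$. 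Likewise $\beta_{sK+i} \le KL\sqrt{\alpha/(2K)}\,\|\hat{\tilde X}_{tK+i} - \tilde X_{tK}\|$.

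The core of the argument is then a one-step displacement bound: conditioned on $\tilde X_{tK}$, I would show that $\hat{\tilde X}_{tK+i} - \tilde X_{tK}$ is of order $\frac{\alpha i}{K}\nabla F(\tilde X_{tK})$ plus a Gaussian of variance $O(\alpha i/K)$, and that the refined iterate $\tilde X_{tK+i}$ differs from $\hat{\tilde X}_{tK+i}$ only through the accumulated correction terms $\sum_{j<i} \frac{\alpha}{K} K H_{t,j}(\nabla F(\tilde X_{tK}) - \nabla F(\hat{\tilde X}_{tK+j}))$ (using the scaling relations to collapse the $A$ matrices, which are identity here). Each such correction has conditional mean $\frac{\alpha}{K}\sum_{j<i}(\nabla F(\tilde X_{tK}) - \nabla F(\hat{\tilde X}_{tK+j}))$ — this is the quantity whose smallness after reindexing gives the bias term — and conditional second moment $O(\alpha^2 L^2 / K) \cdot \|\text{displacement}\|^2$ per term because of the Bernoulli$(1/K)$ factor $(KH_{t,j})$ having second moment $K$. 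Combining, $\bE\|\hat{\tilde X}_{tK+i} - \tilde X_{tK+i}\|^2 \lesssim \alpha^2 L^2 \bE\|\text{one-step displacement}\|^2 \lesssim \alpha^3 L^2(\bE\|\nabla F(\tilde X_{tK})\|^2 \cdot \alpha + d)$, and then $\bE\|B_{sK+i}\|^2 \lesssim \frac{\alpha}{K}L^2 \cdot \alpha^3 L^2(\cdots)$, so summing over the $K$ values of $i$ and $T$ values of $s$ gives $\sum \bE\|B_{sK+i}\|^2 \lesssim L^4\alpha^4 T \sup_t \bE\|\nabla F(\tilde X_{tK})\|^2 \cdot \alpha + L^4 \alpha^4 T d / (\text{something})$. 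To turn $\sup_t \bE\|\nabla F(\tilde X_{tK})\|^2$ into the stated $\bE[F(X_0) - F(\vx^\star)] + 1$ and the $Kd^2$-type term I would use $L$-smoothness to write $\|\nabla F(\vx)\|^2 \le 2L(F(\vx) - F(\vx^\star))$ together with a descent/drift lemma: one step of \plmc~decreases $\bE F$ up to an $O(\alpha d L^2 + \dots)$ additive error (Assumption~\ref{as:init_moment} giving the high-moment control needed for the $\beta^{10}$ terms), and since there are at most $KT$ underlying fine steps under the constraint $\alpha^3 L^3 T < c_2$, $\sup_t \bE(F(\tilde X_{tK}) - F(\vx^\star)) \lesssim \bE(F(X_0) - F(\vx^\star)) + $ bounded quantity, which produces the $CL^4\alpha^4(\bE[F(X_0)-F(\vx^\star)]+1)$ and the $CL^4\alpha^4 K d^2 T$ terms (the $Kd^2$ arising because the displacement-squared carries a $d$ and there is another $d$ from the gradient-norm-vs-$F$ exchange accumulated over the $K$ inner indices). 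The $\beta^4,\beta^6,\beta^{10},\beta^{2r}$ contributions are all higher order in $\alpha$ (they carry extra powers of $\alpha L$ and extra negative powers of $K$) and get swept into ``Lower Order Terms'' after invoking the $14$th-moment bound of Assumption~\ref{as:init_moment} propagated along the trajectory.

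The main obstacle I expect is the descent/moment-propagation lemma: establishing that $\bE[F(\tilde X_{tK}) - F(\vx^\star)]$ (and the high moments $\bE\|\tilde X_{tK} - \vx^\star\|^{14}$) stay bounded uniformly in $t$ along the \plmc~trajectory, without any log-concavity or isoperimetry, only using $L$-smoothness and the step-size/horizon constraints $\alpha L < c_1$, $\alpha^3 L^3 T < c_2$. This requires carefully controlling the extra variance injected by the stochastic correction term (the $KH_{t,i}$ factor inflates variance by a factor $K$ relative to a single fine step, but it acts only $O(1)$ times per outer step in expectation), showing it behaves, for the purposes of a Lyapunov argument on $F$, essentially like $K$ fine \lmc~steps. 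I would prove this by a discrete Grönwall-type recursion on $\bE F(\tilde X_{tK})$ and separately on $\bE\|\tilde X_{tK} - \vx^\star\|^{14}$, using smoothness to expand $F$ along \eqref{eq:stoc_approx}, taking conditional expectations to kill the martingale parts, and absorbing the quadratic-in-step remainder under $\alpha L < c_1$; the horizon condition $\alpha^3 L^3 T < c_2$ is what prevents the accumulated drift error from blowing up over $T$ outer steps. Everything else — the reindexing identity that exhibits the residual bias, the Gaussian moment computations, and collecting powers of $\alpha, L, K, d, T$ — is routine bookkeeping built on top of this lemma and Theorem~\ref{thm:comparison_main}.
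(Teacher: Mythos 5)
Your overall skeleton matches the paper: instantiate Theorem~\ref{thm:comparison_main} for \olmc, bound $\|B_{tK+i}\|$ and $\beta_{tK+i}$ through $L$-smoothness and the displacement bounds $\sup_j\|\hat{\tilde X}_{tK+j}-\tilde X_{tK}\|\le\alpha\|\nabla F(\tilde X_{tK})\|+M_{t,K}$ and $\sup_j\|\hat{\tilde X}_{tK+j}-\tilde X_{tK+j}\|\le\alpha LN_t\sup_j\|\hat{\tilde X}_{tK+j}-\tilde X_{tK}\|$, and reduce everything to moments of $\nabla F(\tilde X_{tK})$ along the trajectory. The gap is in how you close that reduction. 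You propose to bound $\sum_t\bE\|\nabla F(\tilde X_{tK})\|^2$ by $T\cdot\sup_t\bE\|\nabla F\|^2\le 2LT\sup_t\bE[F(\tilde X_{tK})-F(\vx^\star)]$ and to argue that $\sup_t\bE[F(\tilde X_{tK})-F(\vx^\star)]\lesssim\bE[F(X_0)-F(\vx^\star)]+O(1)$. Neither half works. The descent inequality only gives $\bE F(\tilde X_{(t+1)K})\le\bE F(\tilde X_{tK})+O(\alpha Ld)$ after dropping the gradient term, so the supremum grows like $\Delta^{(1)}+O(\alpha LdT)$, not $\Delta^{(1)}+O(1)$; and even granting uniform boundedness, multiplying by $T$ yields a leading contribution $L^4\alpha^5\cdot 2LT\Delta^{(1)}=2L^5\alpha^5T\Delta^{(1)}$, which under the theorem's hypothesis $\alpha^3L^3T<c_2$ is only $O(L^2\alpha^2\Delta^{(1)})$ and is not the claimed $T$-independent $CL^4\alpha^4\Delta^{(1)}$. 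The paper instead \emph{telescopes} the descent inequality (Lemma~\ref{lem:lmc_pth}): $\sum_{t}\alpha\bE\|\nabla F(\tilde X_{tK})\|^2\lesssim\bE[F(X_0)-F(\tilde X_{TK})]+O(\alpha LdT)$, so $\sum_t\bE\|\nabla F\|^2\lesssim\Delta^{(1)}/\alpha+LTd$; multiplying by $L^4\alpha^5$ then gives exactly $L^4\alpha^4\Delta^{(1)}$ plus lower-order terms. Keeping the summed (rather than sup-times-$T$) form is the essential move, and it is what the $\alpha^3L^3T<c_2$ condition is calibrated to.

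Two further points. First, your plan for the higher moments — a Grönwall recursion showing $\bE\|\tilde X_{tK}-\vx^\star\|^{14}$ stays bounded uniformly in $t$ — is false under $L$-smoothness alone (take $F$ constant: the iterates are a random walk and this moment grows like $(\alpha dt)^7$). The paper avoids any such uniform bound: it controls $\sum_t\bE\|\nabla F(\tilde X_{tK})\|^{2p}$ by combining a partition/averaging inequality (Lemma~\ref{lem:norm-norm}) with the growth estimate of Lemma~\ref{lem:p_moment_control} to reduce $2p$-th moments to $\bE\bigl(\sum_t\|\nabla F\|^2\bigr)^p$, which is again handled by a $p$-th-power telescoping argument; Assumption~\ref{as:init_moment} enters only through $\Delta^{(p)}$ for $p\le 7$. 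Second, your bookkeeping of the two displayed terms is off: the bias sum $\sum\bE\|B\|^2$ contributes $L^4\alpha^4\Delta^{(1)}+L^4\alpha^4dT$ (note $dT$, not $Kd^2T$), while the term $CL^4\alpha^4Kd^2T$ is the leading contribution of the variance term $\sum_{s,i}\bE[\beta^4_{sK+i}/K^2]$, since $\bE\beta^4\lesssim L^4\alpha^4K^2d^2+\cdots$. Your claim that all $\beta$-contributions are swept into the lower-order terms is therefore incorrect — one of them is a headline term of the theorem.
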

 The lower order terms are explicated in Equation~\eqref{eq:olmc_full_result}. The next theorem gives a similar guarantee for \ulmc~and the lower order terms are explicated in Equation~\ref{eq:ulmc_full}. 

\begin{theorem}[\ulmc]\label{thm:underdamped_main}
        Consider the setting of Theorem~\ref{thm:comparison_main} with \ulmc~under Assumptions~\ref{as:smooth} and~\ref{as:init_moment}. Suppose that $\vx^{*}$ is the global minimizer of $F$. There exist constants $C_1,c_1,c_2$ such that whenever $\gamma > C_1 \sqrt{L}$, $\alpha \gamma < c_1$, $T < \frac{c_2\gamma}{L^2\alpha^3}$.
\begin{align}\KL{\mathsf{Law}(X^{\mathsf{P}}_{(0:T)})}{\mathsf{Law}(X_{K(0:T)})} &\leq\tfrac{C\alpha^6L^4}{\gamma^2}\left[\bE F(U_0+\tfrac{V_0}{\gamma}) - F(\vx^{*}) + \bE\|V_0\|^2 + 1\right]  \nonumber \\ &\quad+ \tfrac{K\alpha^7 L^4 T^2}{\gamma}(d+\log K)^2+ \mathrm{Lower\ order\ terms}.
\end{align}

\end{theorem}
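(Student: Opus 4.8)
<br>

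The plan is to specialize Theorem~\ref{thm:comparison_main} to \ulmc~and bound every term on its right-hand side. This needs three ingredients, which I would develop in order: (i) small-$h$ asymptotics of the structural matrices $A_h,G_h,\Gamma_h$ at $h=\alpha/K$; (ii) local-in-block estimates for the cheap interpolation \eqref{eq:cheap_interpolation} and for the discrepancy $D_i:=\tilde X_{sK+i}-\hat{\tilde X}_{sK+i}$; and (iii) a uniform-in-$s$ Lyapunov/moment bound on the \plmc~iterates $\tilde X_{sK}$. For (i): since $b(X,\cdot)=[-\nabla F(U),0]^{\intercal}$ acts only through the velocity block, only the first column of $G_h$ is relevant, and Taylor expansion gives $G_h[w,0]^{\intercal}=[\tfrac{h^2}{2}w,\,hw]^{\intercal}(1+O(\gamma h))$ while $\Gamma_h^2$ has position/cross/velocity blocks of orders $\tfrac{2\gamma h^3}{3},\gamma h^2,2\gamma h$. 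The slightly delicate computation is that the leading terms in $[w,0]\,G_h^{\intercal}\Gamma_h^{-2}G_h\,[w,0]^{\intercal}$ cancel, so that $\|\Gamma_{\alpha/K}^{-1}G_{\alpha/K}[w,0]^{\intercal}\|=\Theta\!\big(\sqrt{\alpha/(\gamma K)}\big)\|w\|$; this $\gamma^{-1/2}$, together with $\|A_{\ell h}G_h[w,0]^{\intercal}\|_{\mathrm{pos}}\lesssim\tfrac{\alpha^2\ell}{K^2}\|w\|$ (valid since $\alpha\gamma<c_1$), is the source of all the $\gamma$-dependence. By $L$-smoothness this yields $\|B_{sK+i}\|\lesssim\sqrt{\tfrac{\alpha}{\gamma K}}\,L\,\|D_i^{\mathrm{pos}}\|$ and $\beta_{sK+i}\lesssim\sqrt{\tfrac{\alpha K}{\gamma}}\,L\,\|\hat{\tilde U}_{sK+i}-\tilde U_{sK}\|$.

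Next, for (ii), I would fix a block $s$ and condition on $\tilde X_{sK}$. From \eqref{eq:cheap_interpolation} and the scaling relations, the position part of $\hat{\tilde X}_{sK+i}-\tilde X_{sK}$ has a deterministic component $\lesssim\tfrac{\alpha i}{K}\|\tilde V_{sK}\|+(\tfrac{\alpha i}{K})^2\|\nabla F(\tilde U_{sK})\|$ plus an exact $\cN(0,(\Gamma_{\alpha i/K}^2)_{11})$ component of norm $\lesssim\sqrt{\gamma}(\tfrac{\alpha i}{K})^{3/2}\sqrt d$. Subtracting \eqref{eq:stoc_approx} from the interpolation recursion gives $D_{i+1}=A_{\alpha/K}D_i-KH_{s,i}G_{\alpha/K}[\nabla F(\hat{\tilde U}_{sK+i})-\nabla F(\tilde U_{sK}),0]^{\intercal}$ with $D_0=0$. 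Unrolling, using $L$-smoothness and the previous bound, and using that the $H_{s,j}$ are independent of the Gaussians with $\mathbb{E}[(KH_{s,j})^p]=K^{p-1}$ (so the blow-up of the Bernoulli weights is dominated by the $(\alpha/K)^{\geq 1}$ smallness inside $G_{\alpha/K}$ and by the summation over $j$), I expect $\mathbb{E}[\|D_i^{\mathrm{pos}}\|^2\mid\tilde X_{sK}]\lesssim\alpha^6L^2\|\tilde V_{sK}\|^2+\alpha^8L^2\|\nabla F(\tilde U_{sK})\|^2+\gamma\alpha^7L^2 d$, with the analogous higher-moment bounds (up to exponent $14$ via Assumption~\ref{as:init_moment}) needed for the $\beta$-terms.

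For (iii) I would run a discrete Lyapunov argument on the Hamiltonian $\mathcal H_s:=F(\tilde U_{sK})+\tfrac12\|\tilde V_{sK}\|^2-F(\vx^{*})$ and on the friction-twisted potential $F(\tilde U_{sK}+\tfrac1\gamma\tilde V_{sK})-F(\vx^{*})$; the twist $\tilde U+\tfrac1\gamma\tilde V$ is exactly what forces the hypothesis $\gamma>C_1\sqrt L$, as it makes friction dominate curvature so the twisted potential genuinely dissipates. Using $L$-smoothness (in particular $\|\nabla F(\cdot)\|^2\leq 2L(F(\cdot)-F(\vx^{*}))$ and the descent lemma), $\alpha\gamma<c_1$, and the fact that $\mathsf{Corr}$ in \eqref{eq:unfurl} is conditionally mean-zero with conditional variance suppressed by $G_{\alpha/K}^2\sim(\alpha/K)^2$ (so it contributes at the same order as $M_{N+1}$), a one-step drift inequality iterated over $t\leq T$ with $T<c_2\gamma/(L^2\alpha^3)$ keeps $\sup_{s\leq T}\mathbb{E}[\mathcal H_s]\lesssim\Psi_0$ and gives the telescoping bounds $\sum_{s<T}\mathbb{E}\|\tilde V_{sK}\|^2\lesssim\tfrac{1}{\alpha\gamma}(\Psi_0+\alpha dT)$ and $\sum_{s<T}\mathbb{E}\|\nabla F(\tilde U_{sK})\|^2\lesssim\tfrac{\gamma}{\alpha}\Psi_0+\tfrac{LdT}{\gamma}$, where $\Psi_0:=F(U_0+\tfrac1\gamma V_0)-F(\vx^{*})+\|V_0\|^2+1$; the same scheme gives the required higher-moment bounds. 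Then I would substitute into Theorem~\ref{thm:comparison_main}: using $\sum_{i<K}(\tfrac{\alpha i}{K})^m\asymp\tfrac{K\alpha^m}{m+1}$ and the two telescoping sums, $\sum_{s,i}\mathbb{E}\|B_{sK+i}\|^2$ collapses to $\tfrac{C\alpha^6L^4}{\gamma^2}\Psi_0$ plus lower-order terms, while the $\beta$-terms—bounded via the explicit Gaussian structure together with maximal inequalities for the $d$-dimensional increments over the $K$ sub-steps of a block (the source of $\log K$, appearing squared because these increments enter $\beta$ quadratically)—produce $\tfrac{K\alpha^7L^4T^2}{\gamma}(d+\log K)^2$ and the remaining lower-order terms of \eqref{eq:ulmc_full}.

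The hard part will be step (iii): obtaining uniform-in-$s$, high-order (up to $14$) moment control of the \plmc~iterates under only $L$-smoothness and a global-minimizer hypothesis—with no convexity or isoperimetry—for the underdamped scheme, where the correction carries $K$-amplified, heavy-tailed Bernoulli weights. Getting the friction-versus-curvature balance $\gamma\gtrsim\sqrt L$ exactly right inside the twisted Lyapunov function, while making all the $\alpha,\gamma,K,d,T$ dependencies line up simultaneously in the drift inequality and in every moment recursion, is the bookkeeping-heavy core of the argument; a secondary but genuinely necessary subtlety is the exact leading-order cancellation in $\|\Gamma_{\alpha/K}^{-1}G_{\alpha/K}[w,0]^{\intercal}\|$, since overestimating it by even a constant-in-$h$ factor would destroy the $\gamma$-scaling and hence the claimed quadratic speed-up.
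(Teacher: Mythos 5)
Your plan follows the paper's proof essentially step for step: the same small-$h$ cancellation in $\Gamma_{\alpha/K}^{-1}G_{\alpha/K}$ giving the $\gamma^{-1/2}$ scaling (the paper's Lemma~\ref{lem:metric_ub}), the same block-local bounds on the cheap interpolation and on $\hat{\tilde X}-\tilde X$ (Lemmas~\ref{lem:useful_bounds_vel} and~\ref{lem:pot_evol}, with the $\log K$ arising exactly as you say from a maximal inequality over the $K$ sub-steps), the same two-quantity Lyapunov argument on $\|\tilde V_{tK}\|^2$ and the twisted potential $F(\tilde U_{tK}+\tilde V_{tK}/\gamma)$ under $\gamma\gtrsim\sqrt L$ (Lemmas~\ref{lem:look_ahead_evo} and~\ref{lem:energy_moment_bound}), and the same final substitution into Theorem~\ref{thm:comparison_main} with $r=7$. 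The only point you leave under-specified is how the order-$2p$ sums $\sum_t\bE\|\tilde V_{tK}\|^{2p}$, $\sum_t\bE\|\nabla F(\tilde U_{tK})\|^{2p}$ are actually obtained: the paper does not iterate the drift inequality at higher order but bootstraps them from $\bE(\sum_t\|\cdot\|^2)^p$ via the block-averaging inequality of Lemma~\ref{lem:norm-norm} combined with the local growth bounds of Lemma~\ref{lem:vel_pos_growth_bound}, which is worth being explicit about since it is where the $(\gamma\alpha T)^{p-1}$ factors in the lower-order terms come from.
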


\olmc~and \ulmc~are sampling algorithms which output approximate samples ($X_T$ and $U_T$ respectively) from the distribution with density $\pistar \propto e^{-F}$. Given $\epsilon > 0$, prior works give upper bounds on $T$ and the corresponding step-size $\alpha$ as a function of $\epsilon$ to achieve guarantees such as $\KL{\mathsf{Law}(X_T)}{\pistar} \leq \epsilon^2$ or $\mathsf{TV}(\mathsf{Law}(X_T),\pistar) \leq \epsilon $. By Pinsker's inequality $\mathsf{TV}^2 \leq 2\mathsf{KL}$ therefore we guarantees for $\mathsf{KL}\leq \epsilon^2$ to those for $\mathsf{TV}\leq \epsilon$ as is common in the literature.  

\paragraph{Quadratic Speedup} Let $T = \tilde{\Theta}(1/\alpha)$ as is standard. Choosing $K = \Theta(1/\alpha)$, our method applied to \olmc~achieves a KL divergence of $O(\alpha^2)$ to \olmc~with step-size $\alpha^2$. Similarly, our method applied to \ulmc~achieves a KL divergence of $O(\alpha^4)$ to \ulmc~with step-size $\alpha^2$. Whenever the $\mathsf{KL}$ divergence of \olmc~(resp. \ulmc) output to $\pistar$, with step-size $\eta$ is $\tilde{\Omega}(\eta)$ (resp $\tilde{\Omega}(\eta^2)$) Theorem~\ref{thm:overdamped_main} (resp. Theorem~\ref{thm:underdamped_main}) demonstrates a quadratic speed up.

To show the generality of our results, we combine Theorems~\ref{thm:overdamped_main} and~\ref{thm:underdamped_main} with convergence results for \olmc~/\ulmc~in the literature (\cite{vempala2019rapid,zhang2023improved}) when $\pistar$ satisfies the Logarithmic Sobolev Inequality with constant $\lambda$ ($\lambda$-LSI). We obtain convergence bounds for the last iterate of \plmc~to $\pistar$ under the same conditions. $\lambda$-LSI is more general than strong log-concavity ($\lambda$-strongly log-concave $\pistar$ satisfies $\lambda$-LSI). It is stable under bounded multiplicative perturbations of the density \cite{holley1987logarithmic} and Lipschitz mappings. $\lambda$-LSI condition has been widely used to study sampling algorithms beyond log-concavity. We present our results in Table~\ref{tab:results_non_convex} and refer to Section~\ref{sec:isoperimetry} for the exact conditions and results. 

\begin{table}[ht]
  \caption{Comparison of \lmc~and~\plmc guarantees. LMC steps is the upper bound for $T$ for LMC to achieve the error guarantee in the referenced work. PLMC steps is the corresponding upper bound for PLMC. \plmc~obtains a quadratic improvement in $\epsilon$, and improved dependence on $\frac{L}{\lambda},d$.}
  \label{tab:results_non_convex}
  \centering
  \begin{tabular}{lllllll}
    \toprule
    Algorithm     & Reference  & Conditions & \lmc~steps & \plmc~steps (ours) \\
    \midrule
    \ulmc & \cite{zhang2023improved}  & $\lambda$-LSI, Assumptions~\ref{as:smooth},~\ref{as:init_moment} & $\frac{L^{\frac{3}{2}}d^{\frac{1}{2}}}{\lambda^{\frac{3}{2}} \epsilon}$ for $\mathsf{TV} \leq \epsilon$  &  $(\tfrac{L}{\lambda})^{\frac{17}{12}}\frac{d^{\frac{5}{12}}}{\sqrt{\epsilon}}$ for $\mathsf{TV} \leq \epsilon$     \\
    \midrule
    \olmc & \cite{vempala2019rapid} & $\lambda$-LSI, Assumptions~\ref{as:smooth},~\ref{as:init_moment}  & $\frac{L^2 d}{\lambda ^2\epsilon^2}$ for $\mathsf{KL}\leq \epsilon^2$ & $\frac{L^{\frac{3}{2}}d^{\frac{3}{4}}}{\lambda^{\frac{3}{2}}\epsilon}$ for $\mathsf{TV} \leq \epsilon$ \\  
    \bottomrule
  \end{tabular}
\end{table}

\section{Proof Sketch}
\label{sec:sketch}
\paragraph{Sketch for Theorem~\ref{thm:comparison_main}}
For the proof of Theorem~\ref{thm:comparison_main}, we follow the recipe given in \cite{das2022utilising} in order to analyze the stochastic approximations of \lmc~- where only an unbiased estimator for the drift function is known. The bias variance decomposition in Lemma~\ref{lem:bias_variance}, shows that the iterations of $\pmpalg(A,G,\Gamma,\alpha,K)$ can be written in the same form of as the iterations of $S(A,G,\Gamma,\tfrac{\alpha}{K})$: $$\tilde{X}_{tK+i+1} = A_{\tfrac{\alpha}{K}}\tilde{X}_{tK+i} + G_{\tfrac{\alpha}{K}}\left[b(\tilde{X}_{tK+i}) \right] + \Gamma_{\tfrac{\alpha}{K}}\tilde{Z}_{tK+i}.$$\normalsize
    Here $\tilde{Z}_{tK+i} := Z_{tK+i} + B_{tK+i} + S_{tK+i}$, $B_{tK+i}$ is the `bias' with a non-zero conditional mean, and $S_{tK+i}$ is the variance with $0$ conditional mean (conditioned on $\tilde{X}_{tK+i}$). They are independent of $Z_{tK+i}$ conditioned on $\tilde{X}_{tK+i}$. Note that the sequence $(\tilde{Z}_{tK+i})_{t,i}$ is neither i.i.d. nor Gaussian. If it was a sequence of i.i.d. $\cN(0,\vI)$, then this is exactly same as $\alg(A,G,\Gamma,\tfrac{\alpha}{K})$.

The main idea behind the proof of Theorem~\ref{thm:comparison_main} is that due to data-processing inequality, it is sufficient to show that $(\tilde{Z}_{tK+i})_{t,i}$ is close to a sequence of i.i.d. Gaussian random vectors in KL-divergence. The bias term can be shown to lead to an error bounded by $\sum_{t,i}\bE\|B_{tK+i}\|^2$, which roughly corresponds to the KL divergence between $\cN(B_{tK+i},\vI)$ and $\cN(0,\vI)$. We then show that $Z_{tK+i}+S_{tK+i}|\tilde{Z}_{0:tK+i-1}$ is close in distribution to $\cN(0,\vI)$. In order to achieve this, we first modify the Wasserstein CLT established in \cite{zhai2018clt} to show that $Z_{tK+i}+S_{tK+i}$ is close in distribution to $\cN(0,\vI+\Sigma_{t,i})$ when conditioned on $\tilde{Z}_{0:tK+i-1}$ where $\Sigma_{t,i}$ is the conditional covariance of $S_{tK+i}$. This CLT step gives us the error of the form $\sum_{s=0}^{T-1}\sum_{i=0}^{K-1} C\bE\left[ \frac{\beta^{10}_{sK+i}}{K^3}+\frac{\beta^6_{sK+i}}{K^2} + \frac{\beta^{2r}_{sK+i}}{K}\right]$ in Theorem~\ref{thm:comparison_main}. 

We then use the standard formula for KL divergence between Gaussians to bound the distance between $\cN(0,\vI+\Sigma_{t,i})$ to $\cN(0,\vI)$. This accounts for the fact that the Gaussian noise considered has a slightly higher variance than $\vI$. This leads to the leading term $  C\bE\left[\frac{\beta^4_{sK+i}}{K^2} \right]$. 

\paragraph{Sketch for Theorem~\ref{thm:overdamped_main}}
Applying Theorem~\ref{thm:comparison_main} to \olmc~, note that the term $\beta_{tK_i}$ depends on how far the coarse estimate $\hat{\tilde{X}}_{tK+i}$ is from the true value $\tilde{X}_{tK+i}$. Indeed, under the smoothness assumption on $F$ we show that: $\beta_{tK+i}^{2p} \leq (\tfrac{L^2\alpha K}{2})^p \sup_{0\leq j\leq K-1}\|\hat{\tilde{X}}_{tK+j}-\tilde{X}_{tK}\|^{2p}$. Thus: \small $$\bE\|\beta_{tK+i}\|^{2p} \lesssim L^{2p}\alpha^{3p}K^{p}\bE\|\nabla F(\tilde{X}_{tK})\|^{2p} + L^{2p}\alpha^{2p}K^pd^p \,.$$
\normalsize
Therefore, the proof reduces to bounding $\sum_{t=0}^{T-1}\bE \|\nabla F(\tilde{X}_{tK})\|^{2p}$. We observe that $\tilde{X}_{(t+1)K} = \tilde{X}_{tK} - \alpha (\nabla F(\tilde{X}_{tK})+\Delta_t) + \sqrt{2\alpha}Z_t$ where $\Delta_t$ is a small error term appearing due to Poisson Midpoint Method. Notice that this is approximately stochastic gradient descent on $F$ with a large noise $\sqrt{2\alpha}$. Therefore, using the taylor approximation of $F$, we can show that:
\small
$$\bE F(\tilde{X}_{(t+1)K}) - \bE F(\tilde{X}_{tK}) \lesssim -\alpha \|\nabla F(\tilde{X}_{tK})\|^2 + \alpha d + \text{ Lower order terms} $$
$$\implies \sum_{t=0}^{T-1}\bE\|\nabla F(\tilde{X}_{tK})\|^2 \lesssim \frac{F(X_0)-\inf_{\vx}F(\vx)}{\alpha} + LTd + \text{ Lower order terms}.  $$
\normalsize
The following sophisticated bound derived in this work is novel to the best of our knowledge:
\small
 $$\sum_{t=0}^{T-1}\bE\|\nabla F(\tilde{X}_{tK})\|^{2p} \lesssim L^{p-1}\bE\frac{(F(X_0)-\inf_{\vx}F(\vx))^p}{\alpha} + TL^pd^p(1+(\alpha L T)^{p-1}) + \text{Lower order terms}.$$
\normalsize

\paragraph{Sketch for Theorem~\ref{thm:comparison_main}}
This is similar Theorem~\ref{thm:overdamped_main}, but requires us to bound $\bE \sum_t \|\tilde{V}_{tK}\|^{2p}$ and $\bE \sum_t \|\nabla F(\tilde{U}_{tK})\|^{2p}$. We track the decay of two different entities across time: (1) $\|\tilde{V}_{tK}\|^{2}$ and (2) $F(\tilde{U}_{tK}+\tfrac{\tilde{V}_{tK}}{\gamma})$. Our proof shows via a similar taylor series based argument that \plmc~does not allow either $\|\tilde{V}_{tK}\|^2$ or $F(\tilde{U}_{tK}+\tfrac{\tilde{V}_{tK}}{\gamma})$ to grow too large. Letting $\Psi_t := \tilde{U}_{tK}+\tfrac{\tilde{V}_{tK}}{\gamma}$ we show (roughly):
\small
    \begin{align*}\sum_{t=0}^{T-1}\bE\|\nabla F(\tilde{U}_{tK})\|^{2p} &\lesssim \tfrac{\gamma^{2p}}{\gamma\alpha}\left[\bE \|\tilde{V}_0\|^{2p} + \bE |F(\Psi_0)-F(\vx^{*})|^p \right] +   T\left[\tfrac{\gamma^{4p}}{L^p} + (\gamma\alpha T)^{p-1}\gamma^{2p}\right]d^p\end{align*}
\begin{align*}\sum_{t=0}^{T-1}\bE \|\tilde{V}_{tK}\|^{2p} &\lesssim \tfrac{1}{\gamma\alpha}\left[\bE \|\tilde{V}_0\|^{2p} + \bE |(F(\Psi_0)-F(\vx^{*})|^p \right] +  T\left[\tfrac{\gamma^{2p}}{L^p} + (\gamma\alpha T)^{p-1}\right]d^p.\end{align*}
\normalsize

\section{Experiments}
We now present experiments to evaluate Poisson Midpoint Method as a training-free scheduler for diffusion models.  We consider the Latent Diffusion Model (LDM) \cite{Rombach_2022_CVPR} for CelebAHQ 256, LSUN Churches, LSUN Bedrooms and FFHQ datasets using the official (PyTorch) codebase and checkpoints.  We compare the sample quality of the Poisson Midpoint Method against established methods such as DDPM, DDIM and DPMSolver, varying the number of neural network calls (corresponding to the drift $b(\vx,t)$) used to generate a single image. 

To evaluate the quality, we generate $50$k images for each method and number of neural network calls and compare it with the training dataset. We use Fréchet Inception Distance (FID) \cite{heusel17} metric for LSUN Churches and LSUN Bedrooms. For CelebAHQ 256 and FFHQ, we use Clip-FID, a more suitable metric as it is known that FID may exhibit inconsistencies with human evaluations datasets outside of Imagenet \cite{KynkaanniemiKAA23}. We refer to Section~\ref{sec:FID_details} in the appendix for further details.

 We refer to the ODE based sampler with $\eta = 0$ (see \cite{song2020denoising}) setting as DDIM and use the implementation in \cite{Rombach_2022_CVPR}. We generate images for number of neural network calls ranging from 20 to 500. For DPMSolver, we port the official codebase of \cite{lu2022dpm} to generate images for different numbers of neural network calls ranging from 10 to 100 using MultistepDPMSolver and tune the hyperparameter `order' over $\{2,3\}$ and `skip\_type' over $\{\text{`logSNR'},\text{`time\_uniform'},\text{`time\_quadratic'}\}$ for each instance to obtain the best possible FID score. This ensures that the baseline is competitive.

For the sake of clarity, we will call all SDE based methods, including DDIM with $\eta > 0$ (see \cite{song2020denoising}) as DDPM. The DDPM scheduler has many different proposals for coefficients $a_t,b_t,c_t$ (see Section~\ref{sec:problem_setup},\cite{song2020denoising,bao2021analytic}), apart from the original proposal in the work of \cite{ho2020denoising} . Based on these proposals, we consider three different variants of DDPM in our experiments (see Section~\ref{sec:ddpm_variants} for exact details). This choice can have a significant impact on the performance (See Figure~\ref{fig:ddpmvariants} in the Appendix) for a given number of denoising diffusion steps.  For the Poisson Midpoint Method, we implement the algorithm shown in Section~\ref{pseudocode} for number of diffusion steps ranging from 20 to 500, corresponding to 40 to 750 neural network calls (see Section~\ref{sec:nn_calls}). This approximates $K$ steps of the 1000 step DDPM with a single step. For both Poisson Midpoint Method and DDPM, we plot the results from the best variant in Table~\ref{tab:main_expts} for a given number of neural network calls and refer to Section~\ref{sec:ddpm_variants} for the numbers of all variants. Poisson Midpoint Method incurs additional noise in each iteration due to the randomness introduced by $H_i$. This can lead to a large error when $K$ is large. When $K$ is large, we reduce the variance of the Gaussian noise to compensate as suggested in the literature (see Covariance correction in \cite{das2022utilising} and \cite[Equation 9]{ma2015complete}). We refer to Section~\ref{sec:ddpm_variants} for full details.

\begin{table}[ht] \caption{Empirical Results for the Latent Diffusion Model \cite{Rombach_2022_CVPR}, comparing the Poisson midpoint method with various SDE and ODE based methods.} 
\label{tab:main_expts}
	\begin{tabular}{>{\centering \arraybackslash}c c c}
		\toprule
		Dataset & vs. SDE Based Methods &  vs. ODE Based Methods\\ 
		\midrule
		 \begin{tabular}[l]{@{}l@{}} CelebAHQ 256\end{tabular}  & \begin{tabular}[l]{@{}l@{}} \includegraphics[width = 5.5cm,height = 4.5cm ]{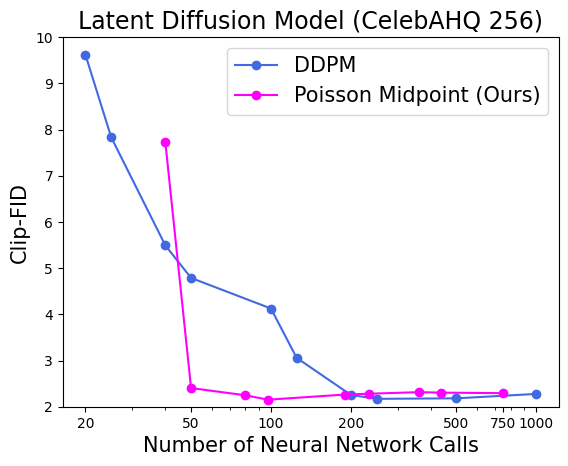} \end{tabular} & \begin{tabular}[l]{@{}l@{}} \includegraphics[width = 5.5cm, height = 4.5cm ]{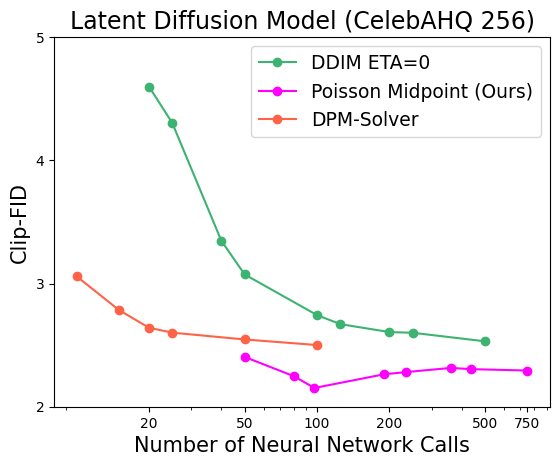} \end{tabular} \\ 
            \midrule
		 \begin{tabular}[l]{@{}l@{}} LSUN \\ Churches\end{tabular}  & \begin{tabular}[l]{@{}l@{}} \includegraphics[width = 5.5cm,height = 4.5cm ]{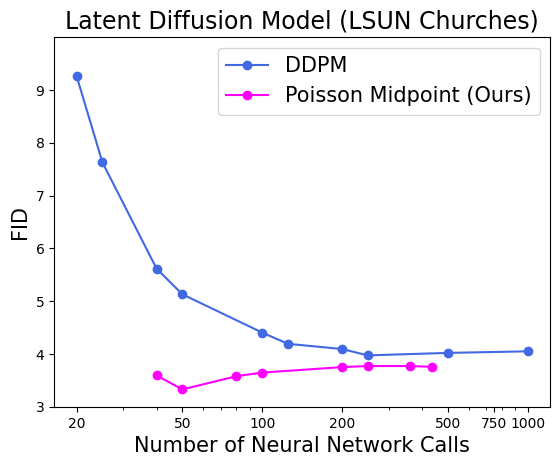} \end{tabular} & \begin{tabular}[l]{@{}l@{}} \includegraphics[width = 5.5cm, height = 4.5cm ]{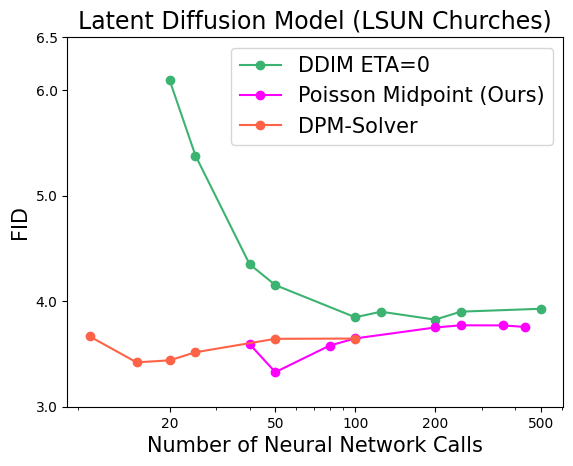} \end{tabular} \\
        \midrule
		 \begin{tabular}[l]{@{}l@{}} LSUN \\ Bedrooms\end{tabular}  & \begin{tabular}[l]{@{}l@{}} \includegraphics[width = 5.5cm,height = 4.5cm ]{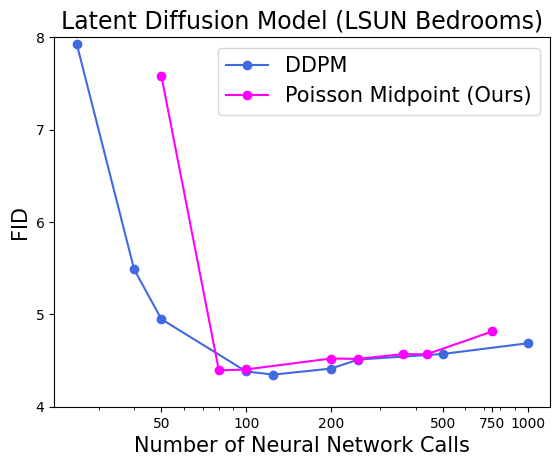} \end{tabular} & \begin{tabular}[l]{@{}l@{}} \includegraphics[width = 5.5cm, height = 4.5cm ]{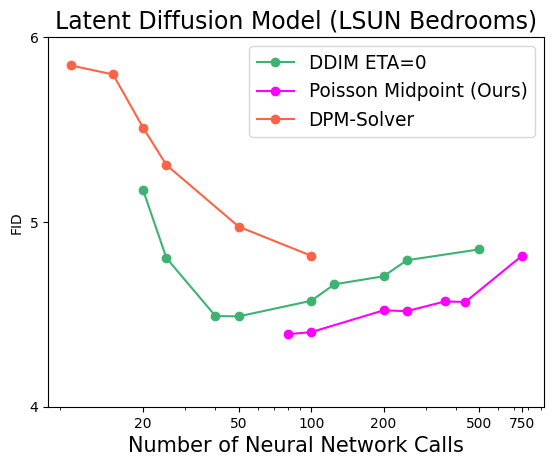} \end{tabular} \\ 
            \midrule
        \begin{tabular}[l]{@{}l@{}} FFHQ 256 \\ \end{tabular}  & \begin{tabular}[l]{@{}l@{}} \includegraphics[width = 5.5cm,height = 4.5cm ]{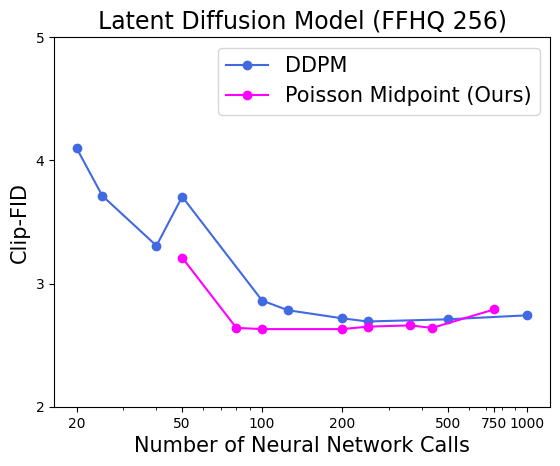} \end{tabular} & \begin{tabular}[l]{@{}l@{}} \includegraphics[width = 5.5cm, height = 4.5cm ]{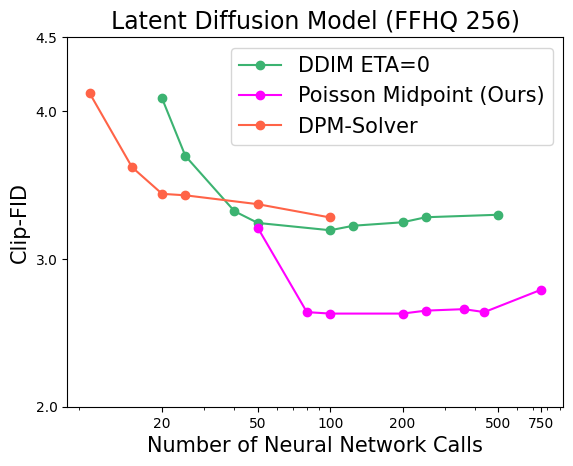} \end{tabular} \\ 
            \bottomrule
	\end{tabular} 
\end{table}

\vspace{-3mm}
\subsection{Results}
We refer to the outcome of our empirical evaluations in Table~\ref{tab:main_expts}. The first column compares the performance against DDPM. We see that for all the datasets considered, Poisson Midpoint Method can match the quality of the DDPM sampler with 1000 neural network calls with just 40-80 neural network calls. Observe that for CelebA, LSUN-Church and FFHQ datasets, the performance of DDPM degrades rapidly with lower number of steps, showing the advantage our method in this regime.  However, a limitation of our work is that the quality of our method degrades rapidly at around 40-50 neural network calls. We believe this is because our stochastic approximation breaks down with larger step-sizes and further research is needed to mitigate this. 

The second column compares the performance of our method against ODE based methods. It is known in the literature that DDPM with 1000 steps outperforms DDIM and DPM-Solver in terms of the quality for a large number of models and datasets  \cite{shih2024parallel,song2020denoising}. Thus, in terms of quality, Poisson midpoint method with just 50-80 neural network calls outperforms ODE based methods with a similar amount of compute. Note that we optimize the performance of DPM-Solver over 6 different variants as mentioned above to maintain a fair comparison. However, in the very low compute regime (\textasciitilde10 steps), DPM-Solver remains the best choice.

\section{Conclusion}
 We introduce the Poisson Midpoint Method, which efficiently discretizes Langevin Dynamics and theoretically demonstrates quadratic speed up over Euler-Maruyama discretization under general conditions. We apply our method to diffusion models for image generation, and show that our method maintains the quality of 1000 step DDPM with just 50-80 neural network calls. This outperforms ODE based methods such as DPM-Solver in terms of quality, with a similar amount of compute. Future work can explore variants of Poisson midpoint method with better performance when fewer than 50 neural network calls are used. 
\vspace{-3mm}
\section{Societal Impact}
Our work considers an efficient numerical discretization schemes for making diffusion model inference more efficient. Publicly available, pre-trained diffusion models are very impactful and have significant risk of abuse. In addition to theoretical guarantees, our work considers empirical experiments to evaluate the inference efficiency on publicly available, widely used diffusion models over curated datasets. We do not foresee any significant positive or negative social impact of our work. 
\begin{ack}
We thank Prateek Jain for helpful comments during the course of this research work. 
\end{ack}
\clearpage

\bibliography{refs}
\bibliographystyle{plain}

\newpage
\appendix
\section{Details of Empirical Evaluations}
\subsection{Pseudocode} \label{pseudocode}
\paragraph{The Algorithm}\sloppy
Let $a_t$, $b_t$ and $\sigma_t$ be the coefficients of the DDPM denoising step at time $t$ as per Section~\ref{sec:problem_setup}. Let $N$ denote the number of train steps. The main paper used notations and conventions based on the LMC literature. Here we follow different conventions and notations to connect with the diffusion model literature.
\begin{enumerate}
    \item We take the dynamics to go backward in time (i.e, compute $X_{t-1}$ from $X_t$).
    \item We use $b(X_t,t)$ instead of $b(X_t,\alpha t)$ to denote the neural network estimate of $\nabla \log p_{\tau_{N-t-1}}(X_t)$.
\end{enumerate}
DDPM scheduler samples $X_{N-1} \sim \cN(0,\vI)$ and iteratively computes $X_0$ as:
\begin{align*}
    X_{t-1} = a_{t} X_{t} + b_t b(X_t,t) + \sigma_t Z_t;\quad Z_t \sim \cN(0,\vI).
\end{align*}

 We take $N = 1000$ as is standard in the literature. The Poisson Midpoint Method approximates $K$ steps of the iteration above, with a single step. That is, it obtains an approximation for $X_{t-K}$ directly from $X_t$. The number of iterations deployed by Poisson Midpoint Method to sample $X_0$ is $N/K$. We compute the number of neural network calls required in Section~\ref{sec:nn_calls}. The exact description of Poisson Midpoint Method is given below. 

\textbf{PoissonMidpointMethod}($\tilde{X}_t,t,K,b(,),\text{OPTION}$):
\begin{enumerate}
    \item Let $(A_k, B_k, C_k) \gets \textbf{InterpolationConstants}(t, k)$, for every $k \in [K]$.
    \item If OPTION = $1$: 
    \begin{enumerate} 
    \item[] Define $H_i \sim \ber(1/K)$ i.i.d for every $i \in [K-1]$. 
    \item[] Define $p = K$.
    \end{enumerate}
    If OPTION = $2$:
    \begin{enumerate}
        \item[] Define $H_i \gets \mathbbm{1}(u = i)$ where $u \sim \unif(\{1,\dots,K-1\})$ i.i.d.
        \item[] Define $p = K-1$.
    \end{enumerate}
    \item For every $i \in [K]$, $Z_i \sim \cN(0,\vI)$.
    \item Perform the update:
    \begin{multline*}
    \tilde{X}_{t-K} = A_K \tilde{X}_t + \left(\sum_{i=1}^{K} B_{K,i}\right) b(\tilde{X}_t, t)
    + \left(\sum_{i=1}^{K} C_{K,i} Z_i \right) \\
    + \sum_{i=1}^{K} p H_i B_{K,i} \big( b(\widehat{X}_{t-i+1}, t-i+1) - b(\tilde{X}_t, t) \big)
    \end{multline*}
    where $$\widehat{X}_{t-\tau} = A_{\tau} \tilde{X}_t + \left(\sum_{i=1}^\tau B_{\tau,i} \right) b(\tilde{X}_t, t) + \sum_{j=1}^{\tau} C_{\tau,K-j+1} Z[j], \quad \forall \tau \in [K-1].$$
    \item \textbf{Return} $\tilde{X}_{t-K}$.
\end{enumerate}

\textbf{InterpolationConstants($t, k$):}
    \begin{enumerate}
        \item $A_k \gets a_t \cdot a_{t-1} \cdot a_{t-k+1}$.
        \item $B_{k,i} = \begin{cases}
            a_{t-k+1} \cdots a_{t-i} \cdot b_{t-i+1}, & \text{if } i < k \\
            b_{t-k+1}, & \text{if } i = k
        \end{cases}$  \hfill for each $i \in [k]$.
        \item $C_{k,i} = \begin{cases}
            a_{t-k+1} \cdots a_{t-i} \cdot \sigma_{t-i+1}, & \text{if } i < k \\
            \sigma_{t-k+1}, & \text{if } i = k
        \end{cases}$ \hfill for each $i \in [k]$.
        \item \textbf{Return} $(A_k, B_{k}, C_{k})$.
    \end{enumerate}

\begin{remark}
    In the theoretically analyzed algorithm given in Section~\ref{pmidpoint-method}, we take the midpoints to be $\{0,\frac{1}{K},\dots,\frac{K-1}{K}\}$. But here, we take the midpoints to be $\{\frac{1}{K},\dots,\frac{K-1}{K}\}$ since the $\hat{X}_t = \tilde{X}_t$, making the correction term $p H_i B_{K,i} \big( b(\widehat{X}_{t-i+1}, t-i+1) - b(\tilde{X}_t, t) \big) = 0$ when $i = 0$. 
\end{remark}
\subsection{Number of Neural Network Calls}
\label{sec:nn_calls}
While each step of Poisson Midpoint Method approximates $K$ steps of DDPM, it can use more than one neural network calls. Since neural networks calls are the most computationally expensive parts of diffusion models ($4$ orders of magnitude larger FLOPs compared to other computations), we compute the number of neural network calls for the Poisson Midpoint Method given $K,N$ for both \textbf{OPTION 1} and \textbf{OPTION 2}. 

For \textbf{OPTION 1}, $b(\tilde{X}_t,t)$ is always computed. $b(\hat{X}_{t-i+1},t-i+1)$ needs to be computed iff $H_i = 1$. It is easy to show that $\bE|\{i \in [K-1] : H_i = 1\}| = \frac{K-1}{K}$. Thus, the expected number of neural network calls per step is $ 2 -\frac{1}{K}$. The total number number of neural network calls is $\frac{N}{K}(2-\frac{1}{K})$.

\textbf{OPTION 2} always computes evaluates $b()$ twice per step. Therefore, the number of neural network calls is $\frac{2N}{K}$.

\subsection{FID Evaluation Details}
\label{sec:FID_details}
FID scores between two image datasets can vary greatly depending on the image processing details which do not have much bearing on visual characteristics (such as png vs jpeg) \cite{parmar22}. Thus, we utilize the standardized Clean-FID codebase \cite{parmar22} to compute the scores for all the datasets. As is standard, we generate 50k images for all our evaluations.  For CelebAHQ 256, we evaluate the CLIP-FID scores against the combined 30k training and validation samples of the CelebA 1024x1024 dataset \cite{Karras2017}, where the CLIP-FID is invoked by using the flags \emph{model\_name = clip\_vit\_b\_32} and \emph{mode = clean} \cite{parmar22}.  For FFHQ 256, we utilize the precomputed statistics of \cite{parmar22} on the 1024 resolution {trainval70k} split with the flags \emph{dataset\_split = trainfull} and \emph{mode=clean}.  As for LSUN Churches \cite{Yu2015}, we utilize the precomputed statistics of \cite{parmar22} with the flags \emph{dataset\_split = trainfull} and \emph{mode=clean} to calculate the FID.  And for LSUN Bedrooms \cite{Yu2015}, we downloaded the training split of \cite{Rombach_2022_CVPR}  which consists of 3,033,042 images and used the codebase of \cite{parmar22} to compute the FID between two folders.

\subsection{Hardware Description and Execution Time}
For our experiments, we utilized a high-performance computational setup consisting of 8 NVIDIA A100-SXM4 GPUs, each with 40 GB of VRAM, an Intel Xeon CPU with 96 cores operating at 2.2 GHz, and 1.3 TiB of RAM. The experiments take about 16 hours to generate 50k images with 1000 neural network calls per image and batch size of 50. The time is proportionally lower when fewer neural network calls are used. 

\subsection{DDPM Variant Details}
\label{sec:ddpm_variants}

We describe 3 different variants for DDPM, based on 3 different choices of coefficients $a_t,b_t,\sigma_t$ used in Section~\ref{pseudocode}. Our observations indicate that each variant performs optimally for different datasets and for different ranges of the diffusion steps.  For example, consider the DDPM variants of LSUN Churches in Figure~\ref{fig:church_ddpmvariants}.  Note that, Variant~3 outperforms Variant~2 for all steps less than or equal to 125 and vice versa for larger steps.  Similar phenomenon can be observed for CelebAHQ~256 as shown in Figure~\ref{fig:celeba_ddpmvariants}, Variant~2 outperforms the rest of the variants for steps smaller than 125, while Variant~1 achieves the best performance for larger steps.  For FFHQ 256, similar phase transitions can be observed in Figure~\ref{fig:ffhq_ddpmvariants} at around 50-100 steps.  For LSUN Bedrooms, we observed that Variant~2 outperforms 1 and 3 on all the regimes.  Our experiments were carried out on all the variants and on all the steps, and the best scores are considered for comparison with Poisson Midpoint Method in Table~\ref{tab:main_expts}. Let $\alpha_t,\beta_t,\bar{\alpha}_t$ be as defined in \cite{ho2020denoising}.

\textbf{Variant 1:}  The first variant uses the following closed form expression to perform the update.
\begin{align} \label{eqn:1}
X_{t+1} = {\dfrac{1}{\sqrt{\alpha_t}}} X_t - 2 \cdot {\frac{({1 - \sqrt{\alpha_t}})}{\sqrt{\alpha_t} (1-\overline{\alpha_t})}} \cdot b(X_t,t) + \sigma_t \cdot Z_t  \quad \text{    where    } \sigma^2_t = \frac{\beta_t}{1-\beta_t}.
\end{align}
\textbf{Variant 2:} The second variant of DDPM uses the default coefficients of DDIM as provided in the codebase of \cite{Rombach_2022_CVPR}, with $\eta=1$ and $\sigma^2_{t} = \frac{(1 - \overline{\alpha}_{t}) \cdot \beta_{t+1}} {(1 - \overline{\alpha}_{t+1})}.$

\textbf{Variant 3:} The third variant of DDPM also uses the default coefficients of DDIM with $\eta=1$, but uses a modified lower variance $\sigma^2_{t} = (1 - \overline{\alpha}_{t}) \cdot \beta_{t+1}$.

\begin{figure}[htbp]
    \centering
    \begin{subfigure}[b]{0.40\textwidth}
        \centering
        \includegraphics[width=\textwidth]{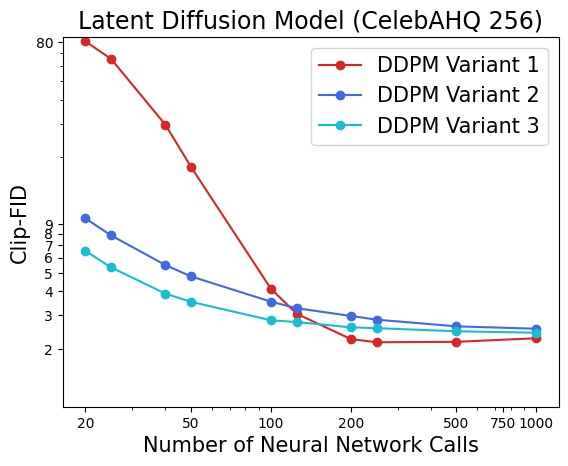}
        \caption{CelebAHQ 256 DDPM Variants}
        \label{fig:celeba_ddpmvariants}
    \end{subfigure}
    \hfill
    \begin{subfigure}[b]{0.40\textwidth}
        \centering
        \includegraphics[width=\textwidth]{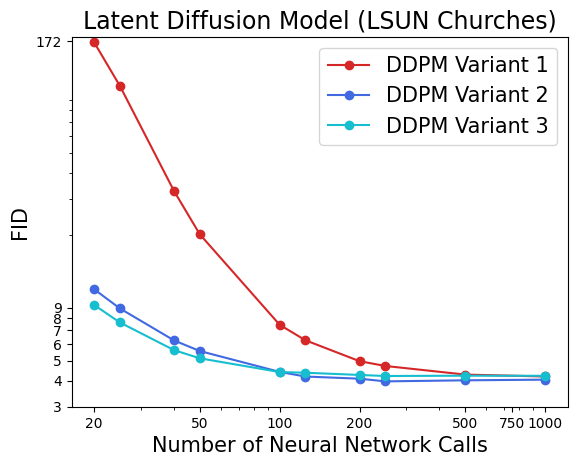}
        \caption{LSUN Churches DDPM Variants}
        \label{fig:church_ddpmvariants}
    \end{subfigure}
    \begin{subfigure}[b]{0.40\textwidth}
        \centering
        \includegraphics[width=\textwidth]{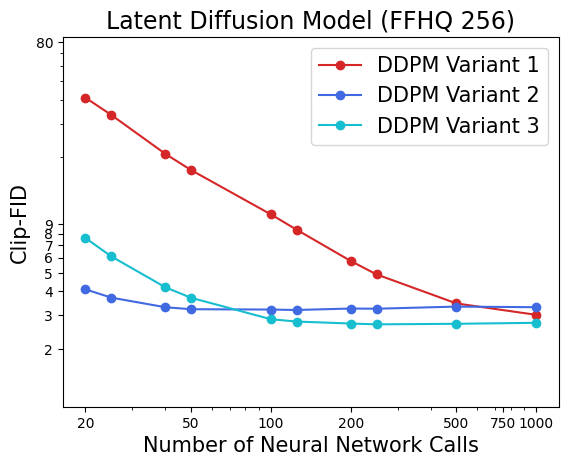}
        \caption{FFHQ 256 DDPM Variants}
        \label{fig:ffhq_ddpmvariants}
    \end{subfigure}
    \caption{Comparison of different variants of DDPM}
    \label{fig:ddpmvariants}
\end{figure}

\begin{table}[ht]
\centering
\begin{tabular}{@{}>{\centering\arraybackslash}p{2cm}*{9}{>{\centering\arraybackslash}p{0.8cm}}@{}}
\toprule
\textbf{Poisson Midpoint Variant} & \multicolumn{9}{c}{\textbf{Number of Diffusion Steps / Number of Neural Network Calls}} \\ \cmidrule(lr){2-10}
 & \multicolumn{3}{c}{\textbf{OPTION 2}} & \multicolumn{4}{c}{\textbf{OPTION 1}} \\ \cmidrule(lr){2-4} \cmidrule(lr){5-10}
 & 20 / 40 & 25 / 50 & 40 / 80 & 50 / 97.5 & 100 / 190 & 125 / 234.375 & 200 / 360 & 250 / 437.5 & 500 / 750 \\ \midrule
Variant 1a & 14.57 & 3.29 & \textbf{2.24} & 2.21  & \textbf{2.26} & \textbf{2.28} & \textbf{2.31} & \textbf{2.30} & \textbf{2.29} \\ 
Variant 1b & 11.85 & 2.75 & 2.47 & \textbf{2.15} & 2.48 & 2.54 & 2.52 & 2.50 & 2.44  \\ 
Variant 1c & 9.65  & 2.44 & 2.74 & 2.27 & 2.76 & 2.80 & 2.85 & 2.86 & 2.77 \\ 
Variant 1d & \textbf{7.72} & \textbf{2.40} & 3.13 & 2.45 & 3.11 & 3.15 & 3.21 & 3.19 & 3.12 \\ 
Variant 2 & 12.34 & 2.90 & 2.53 & 2.23 & 2.52 & 2.55 & 2.61 & 2.59 & 2.47 \\ 
Variant 3 & 12.99 & 3.27 & 2.56 & 2.60 & 2.55 & 2.51 & 2.49 & 2.47 & 2.38 \\ 
\bottomrule
\end{tabular}
\vspace{0.3cm} 
\caption{CelebAHQ 256 -- Comparison of Poisson Midpoint Variants}
\label{table:celeba_variants}
\end{table}

\begin{table}[ht]
\centering
\begin{tabular}{@{}>{\centering\arraybackslash}p{2cm}*{9}{>{\centering\arraybackslash}p{0.8cm}}@{}}
\toprule
\textbf{Poisson Midpoint Variant} & \multicolumn{9}{c}{\textbf{Number of Diffusion Steps / Number of Neural Network Calls}} \\ \cmidrule(lr){2-10}
 & \multicolumn{6}{c}{\textbf{OPTION 2}} & \multicolumn{3}{c}{\textbf{OPTION 1}} \\ \cmidrule(lr){2-7} \cmidrule(lr){8-10}
 & 20 / 40 & 25 / 50 & 40 / 80 & 50 / 100 & 100 / 200 & 125 / 250 & 200 / 360 & 250 / 437.5 & 500 / 750 \\ \midrule
Variant 1a & 12.44 & 9.18 & 6.04 & 5.30 & 4.12 & 4.14 & 4.07 & 4.05 & 4.41  \\ 
Variant 1b & 12.60 & 9.52 & 6.38 & 5.50 & 4.47 & 4.31 & 4.24 & 4.27 & 4.69  \\ 
Variant 1c & 12.66 & 9.92 & 6.76 & 5.99 & 4.79 & 4.61 & 4.61 & 4.58 & 4.94  \\ 
Variant 1d &13.04 & 10.39 & 7.22 &  6.45 &  5.09 &  4.91 &  4.93 &  4.85 &  5.27  \\ 
Variant 2 & 11.18 & 8.21 & 5.45 & 4.75 & 3.81 & {3.77} & {3.77} & \textbf{3.75}  & \textbf{4.22} \\ 
Variant 3 &  \textbf{3.59} & \textbf{3.33} & \textbf{ 3.58} & \textbf{3.65} & \textbf{3.75} &  \textbf{3.71} & \textbf{3.77} & 3.84 & 4.45 \\ 
\bottomrule
\end{tabular}
\vspace{0.3cm} 
\caption{LSUN Churches -- Comparison of Poisson Midpoint Variants}
\label{table:church_variants}
\end{table}

\begin{table}[ht]
\centering
\begin{tabular}{@{}>{\centering\arraybackslash}p{2cm}*{9}{>{\centering\arraybackslash}p{0.8cm}}@{}}
\toprule
\textbf{Poisson Midpoint Variant} & \multicolumn{9}{c}{\textbf{Number of Diffusion Steps / Number of Neural Network Calls}} \\ \cmidrule(lr){2-10}
 & \multicolumn{6}{c}{\textbf{OPTION 2}} & \multicolumn{3}{c}{\textbf{OPTION 1}} \\ \cmidrule(lr){2-7} \cmidrule(lr){8-10}
 & 20 / 40 & 25 / 50 & 40 / 80 & 50 / 100 & 100 / 200 & 125 / 250 & 200 / 360 & 250 / 437.5 & 500 / 750 \\ \midrule
Variant 1a & 54.92 & 16.87 & 6.01 & 5.31 & 5.14  & 5.21 & 5.36 & 5.33 & 5.61 \\
Variant 1b & 47.03& 13.46& 4.98& 4.78& 4.69& 4.79& 4.72& 4.81& 4.97 \\
Variant 1c & 38.58 & 9.67 & 4.73 & 4.73 & 4.87 & 4.91 & 4.85 & 4.87 & 4.96 \\ 
Variant 1d &\textbf{31.33} & \textbf{7.57} &5.07 & 5.47 &5.58 &5.65 &5.50 & 5.58 & 5.57   \\ 
Variant 2 & 46.52  & 12.47 & 4.69 & 4.46 & \textbf{4.52} & \textbf{4.51} & \textbf{4.56} & \textbf{4.56} & \textbf{4.81} \\ 
Variant 3 & 39.82 & 9.17 & \textbf{4.39} & \textbf{4.40} & 4.55 & 4.60 & 4.66 & 4.65 & 4.98 \\
\bottomrule
\end{tabular}
\vspace{0.3cm} 
\caption{LSUN Bedrooms -- Comparison of Poisson Midpoint Variants}
\label{table:bed_variants}
\end{table}

\begin{table}[ht]
\centering
\begin{tabular}{@{}>{\centering\arraybackslash}p{2cm}*{9}{>{\centering\arraybackslash}p{0.8cm}}@{}}
\toprule
\textbf{Poisson Midpoint Variant} & \multicolumn{9}{c}{\textbf{Number of Diffusion Steps / Number of Neural Network Calls}} \\ \cmidrule(lr){2-10}
 & \multicolumn{6}{c}{\textbf{OPTION 2}} & \multicolumn{3}{c}{\textbf{OPTION 1}} \\ \cmidrule(lr){2-7} \cmidrule(lr){8-10}
 & 20 / 40 & 25 / 50 & 40 / 80 & 50 / 100 & 100 / 200 & 125 / 250 & 200 / 360 & 250 / 437.5 & 500 / 750 \\ \midrule
Variant 1a & 16.08 & 5.27 & 2.96 & 2.82 & 2.83  & 2.82 & 2.89 & 2.90 & 2.97 \\
Variant 1b & {13.26}& 4.31& 2.79& 2.77& 2.81& 2.80& 2.82& 2.85& 2.91 \\
Variant 1c & 11.07 & 3.53 & 2.80 & 2.88 & 2.90 & 2.95 & 2.94 & 2.95 & 3.01 \\
Variant 1d & \textbf{9.39} & \textbf{3.21} & 3.04 & 3.15 & 3.22 & 3.24 & 3.21 & 3.18 & 3.24 \\ 
Variant 2 & 13.79 & 4.44 & 2.66 & \textbf{2.63} & \textbf{2.63} & \textbf{2.65} & \textbf{2.66} & \textbf{2.64} & \textbf{2.79} \\ 
Variant 3 & 13.65 & {4.11} & \textbf{2.64} & 2.65 & 2.76 & 2.78 & 2.80 & 2.82 & 2.98\\
\bottomrule
\end{tabular}
\vspace{0.3cm} 
\caption{FFHQ 256 -- Comparison of Poisson Midpoint Variants}
\label{table:ffhq_variants}
\end{table}

\subsection{Poisson Midpoint Variant Details}
Similar to DDPM, we perform our experiments for different choice of coefficients $a_t,b_t,\sigma_t$.  Variants~2 and 3 of Poisson Midpoint are defined with the same coefficients of the corresponding DDPM variants~2 and 3. Poisson Midpoint Method introduces additional noise due to the randomness present in $H_i$. This becomes very large whenever $K$ is large. Hence we reduce this randomness by reducing the variance of the Gaussian as suggested by \cite{ma2015complete,das2022utilising}. Specifically, for Variant~1, we consider four sub-variants 1a, 1b, 1c, 1d that respectively correspond to variances $\sigma^2_{t} = {\frac{\beta_t}{1+ i \cdot \beta_t}}$ for $i = \{-1,0,1,2\}$, with the rest of the coefficients $a_t,b_t$ defined as in \eqref{eqn:1}.  Note that the variance is inversely proportional to $i$. 

We observe that lower variance leads to faster convergence when the number of diffusion steps is small.  However, higher variances yield slightly better scores at higher steps, better quality at the expense of increased neural network evaluations.  This phenomenon can be observed in Tables~\ref{table:celeba_variants},\ref{table:church_variants}, \ref{table:bed_variants} and \ref{table:ffhq_variants} where the optimal score for each step-size/neural network calls is highlighted in bold.  Therefore, we evaluate samples on all of the above mentioned variants and choose the best variant for every number of steps/neural network calls. We see that whenever $K$ is small (2-4), then \textbf{OPTION 1} uses fewer neural network calls than \textbf{OPTION 2}. However, the without replacement sampling technique in \textbf{OPTION 2} incurs lower variance error in the updates, allowing better convergence with lower number of steps (See Section~\ref{sec:nn_calls}). Thus, we use \textbf{OPTION 2} for smaller number of steps and \textbf{OPTION 1} for larger number of steps.

\begin{table}[h!]
\centering
\caption{Comparison of DPM Solver Variants}
\label{table:dpm_solvercomparison}

\begin{subtable}{\textwidth}
\centering
\caption*{CelebAHQ 256}
\begin{tabularx}{\textwidth}{c|X|X|X|X|X|X}
\hline
 & \multicolumn{3}{c|}{\textbf{Order = 2}} & \multicolumn{3}{c}{\textbf{Order = 3}} \\
\hline
\textbf{Neural Network} \\ \textbf{Calls} & \textbf{Uniform} & \textbf{logSNR} & \textbf{Quadratic} & \textbf{Uniform} & \textbf{logSNR} & \textbf{Quadratic} \\
\hline
10 & \textbf{3.06} & 4.05 & 4.58 & 3.07 & 3.44 & 4.31 \\
15 & 2.90 & 3.13 & 3.35 & \textbf{2.79} & 2.82 & 3.16 \\
20 & 2.78 & 2.83 & 2.99 & \textbf{2.64} & 2.66 & 2.84 \\
25 & 2.72 & 2.70 & 2.83 & \textbf{2.60} & 2.62 & 2.72 \\
50 & 2.57 & 2.56 & 2.60 & 2.57 & \textbf{2.55} & 2.56 \\
100 & 2.53 & 2.53 & 2.55 & 2.55 & \textbf{2.50} & 2.55 \\
\hline
\end{tabularx}
\end{subtable}

\vspace{0.3cm}

\begin{subtable}{\textwidth}
\centering
\caption*{LSUN Churches 256}
\begin{tabularx}{\textwidth}{c|X|X|X|X|X|X}
\hline
 & \multicolumn{3}{c|}{\textbf{Order = 2}} & \multicolumn{3}{c}{\textbf{Order = 3}} \\
\hline
\textbf{Neural Network} \\ \textbf{Calls} & \textbf{Uniform} & \textbf{logSNR} & \textbf{Quadratic} & \textbf{Uniform} & \textbf{logSNR} & \textbf{Quadratic} \\
\hline
10 & 3.72 & 8.92 & 7.62 & \textbf{3.67} & 7.31 & 7.60 \\
15 & \textbf{3.42} & 5.99 & 5.60 & 3.48 & 5.24 & 5.33 \\
20 & \textbf{3.44} & 5.09 & 4.84 & 3.55 & 4.56 & 4.64 \\
25 & \textbf{3.52} & 4.64 & 4.62 & 3.63 & 4.29 & 4.33 \\
50 & 3.88 & 4.16 & 4.19 & \textbf{3.64} & 4.06 & 3.97 \\
100 & 4.03 & 4.07 & 4.06 & \textbf{3.65} & 3.99 & 4.04 \\
\hline
\end{tabularx}
\end{subtable}

\vspace{0.3cm}

\begin{subtable}{\textwidth}
\centering
\caption*{LSUN Bedrooms 256}
\begin{tabularx}{\textwidth}{c|X|X|X|X|X|X}
\hline
 & \multicolumn{3}{c|}{\textbf{Order = 2}} & \multicolumn{3}{c}{\textbf{Order = 3}} \\
\hline
\textbf{Neural Network} \\ \textbf{Calls} & \textbf{Uniform} & \textbf{logSNR} & \textbf{Quadratic} & \textbf{Uniform} & \textbf{logSNR} & \textbf{Quadratic} \\
\hline
10 & \textbf{5.85} & 10.90 & 9.58 & 6.40 & 9.50 & 9.36 \\
15 & \textbf{5.80} & 7.07 & 6.82 & 5.91 & 6.44 & 6.50 \\
20 & 5.78 & 6.07 & 5.90 & \textbf{5.51} & 5.63 & 5.64 \\
25 & 5.65 & 5.68 & 5.55 & 5.35 & \textbf{5.31} & 5.33 \\
50 & 5.40 & 5.07 & 5.08 & 5.02 & \textbf{4.95} & 5.01 \\
100 & 5.13 & 4.98 & 4.98 & \textbf{4.82} & 4.93 & 4.93 \\
\hline
\end{tabularx}
\end{subtable}

\vspace{0.3cm}

\begin{subtable}{\textwidth}
\centering
\caption*{FFHQ 256}
\begin{tabularx}{\textwidth}{c|X|X|X|X|X|X}
\hline
 & \multicolumn{3}{c|}{\textbf{Order = 2}} & \multicolumn{3}{c}{\textbf{Order = 3}} \\
\hline
\textbf{Neural Network} \\ \textbf{Calls} & \textbf{Uniform} & \textbf{logSNR} & \textbf{Quadratic} & \textbf{Uniform} & \textbf{logSNR} & \textbf{Quadratic} \\
\hline
10 & 4.41 & 4.62 & 6.05 & 4.19 & \textbf{4.12} & 5.73 \\
15 & 4.23 & 3.77 & 4.48 & 3.80 & \textbf{3.62} & 4.26 \\
20 & 4.03 & 3.55 & 3.98 & 3.61 & \textbf{3.44} & 3.78 \\
25 & 3.93 & 3.49 & 3.77 & 3.53 & \textbf{3.43 }& 3.61 \\
50 & 3.68 & 3.40 & 3.47 & \textbf{3.37} & 3.38 & 3.43 \\
100 & 3.50 & 3.37 & 3.40 & \textbf{3.28} & 3.37 & 3.37 \\
\hline
\end{tabularx}
\end{subtable}
\end{table}
\section{DPM-Solver Variant Details}
For unconditional image generation of high-resolution images, third order (order=3) Multistep DPM-Solvers with uniform time steps are recommended.  To ensure a competitive benchmark, we evaluate the DPM-Solver on six different settings by tuning the parameters `order' and `skip\_type'.  Our evaluations are shown in Table~\ref{table:dpm_solvercomparison}, where the best score for every setting, highlighted in bold, are chosen in our plots for comparison.  We use the official code of \cite{parmar22} with parameters `algorithm\_type = dpmsolver' `method = multistep' and  vary the parameters `order' and `skip\_type' accordingly.
\section{Proof of Theorem~\ref{thm:comparison_main}}
\label{sec:main_proof}
The proof relies on the Wasserstein CLT based approach introduced in \cite{das2022utilising} to compare two discrete time stochastic processes. For the sake of clarity we consider the drift function $b(x,t)$ to be $b(x)$ (i.e, time invariant). However, the proof goes through for time varying drifts as well.
\subsection{The Bias-Variance Decomposition:}

In the Lemma below, we will rewrite the update equations for $\tilde{X}_{tK+i}$ given in Equation~\eqref{eq:stoc_approx} in the same form as the update equations for $\alg(A,G,\Gamma,\tfrac{\alpha}{K})$ given in Equation~\eqref{eq:dyn_def}. The lemma follows by re-arranging the terms in Equation~\eqref{eq:stoc_approx}.

\begin{lemma}\label{lem:bias_variance}
    We can write $$\tilde{X}_{tK+i+1} = A_{\tfrac{\alpha}{K}}\tilde{X}_{tK+i} + G_{\tfrac{\alpha}{K}}\left[b(\tilde{X}_{tK+i}) \right] + \Gamma_{\tfrac{\alpha}{K}}\tilde{Z}_{tK+i}$$
    where $\tilde{Z}_{tK+i} := Z_{tK+i} + B_{tK+i} + S_{tK+i}$ such that `bias' $B_{tK+i}$ is defined as
$$B_{tK+i} := \Gamma_{\tfrac{\alpha}{K}}^{-1} G_{\tfrac{\alpha}{K}}[b(\hat{\tilde{X}}_{tK+i})-b(\tilde{X}_{tK+i})]$$
and the variance $S_{tK+i}$ is defined as:
$$S_{tK+i} := K(H_{t,i}-\tfrac{1}{K})\Gamma_{\tfrac{\alpha}{K}}^{-1}G_{\tfrac{\alpha}{K}}[b(\hat{\tilde{X}}_{tK+i})-b(\tilde{X}_{tK})].$$
\end{lemma}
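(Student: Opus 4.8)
The plan is to verify the claimed decomposition by a direct algebraic rearrangement of the refined update in Equation~\eqref{eq:stoc_approx}, exactly as the lemma's wording (``we can write'') suggests. The only structural facts I need are that $\Gamma_{\tfrac{\alpha}{K}}$ is invertible (it is $\sqrt{2\alpha/K}\,\vI$ for \olmc, and $\Gamma_{\tfrac{\alpha}{K}}^2 \succ 0$ for $\alpha/K>0$ in the \ulmc~case) and the elementary splitting $K H_{t,i} = 1 + K\!\left(H_{t,i}-\tfrac1K\right)$.

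First I would match Equation~\eqref{eq:stoc_approx} against the $\alg(A,G,\Gamma,\tfrac{\alpha}{K})$ form from Equation~\eqref{eq:dyn_def}, namely $A_{\tfrac{\alpha}{K}}\tilde{X}_{tK+i} + G_{\tfrac{\alpha}{K}}b(\tilde{X}_{tK+i}) + \Gamma_{\tfrac{\alpha}{K}}\tilde{Z}_{tK+i}$. Since the $A_{\tfrac{\alpha}{K}}\tilde{X}_{tK+i}$ terms already coincide, it suffices to show that the drift-plus-noise contribution of \eqref{eq:stoc_approx} equals $G_{\tfrac{\alpha}{K}}b(\tilde{X}_{tK+i}) + \Gamma_{\tfrac{\alpha}{K}}\!\left(Z_{tK+i}+B_{tK+i}+S_{tK+i}\right)$; cancelling $\Gamma_{\tfrac{\alpha}{K}}Z_{tK+i}$, this reduces to the identity
$$\Gamma_{\tfrac{\alpha}{K}}\!\left(B_{tK+i}+S_{tK+i}\right) = G_{\tfrac{\alpha}{K}}\Big[ b(\tilde{X}_{tK}) - b(\tilde{X}_{tK+i}) + K H_{t,i}\big( b(\hat{\tilde{X}}_{tK+i}) - b(\tilde{X}_{tK}) \big)\Big].$$
Now I substitute $K H_{t,i} = 1 + K\!\left(H_{t,i}-\tfrac1K\right)$ inside the bracket: the ``$1$'' piece contributes $b(\hat{\tilde{X}}_{tK+i}) - b(\tilde{X}_{tK})$, which telescopes against $b(\tilde{X}_{tK}) - b(\tilde{X}_{tK+i})$ to leave $b(\hat{\tilde{X}}_{tK+i}) - b(\tilde{X}_{tK+i})$, while the leftover is $K\!\left(H_{t,i}-\tfrac1K\right)\big(b(\hat{\tilde{X}}_{tK+i}) - b(\tilde{X}_{tK})\big)$. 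Applying $\Gamma_{\tfrac{\alpha}{K}}^{-1}$ and comparing term-by-term with the stated definitions of $B_{tK+i}$ and $S_{tK+i}$ closes the identity, and defining $\tilde{Z}_{tK+i}:=Z_{tK+i}+B_{tK+i}+S_{tK+i}$ gives the lemma.

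For completeness I would also record the two facts used later in Section~\ref{sec:main_proof}: since $\bE H_{t,i}=\tfrac1K$ under both \textbf{OPTION 1} and \textbf{OPTION 2}, the factor $H_{t,i}-\tfrac1K$ has mean zero, so $S_{tK+i}$ has zero conditional mean given $(\hat{\tilde{X}}_{tK+i},\tilde{X}_{tK})$; and since $\hat{\tilde{X}}_{tK+i},\tilde{X}_{tK},\tilde{X}_{tK+i}$ are measurable with respect to $\sigma(\tilde{X}_{tK},Z_{tK},\dots,Z_{tK+i-1},H_{t,0},\dots,H_{t,i})$ by Equation~\eqref{eq:cheap_interpolation}, the vectors $B_{tK+i},S_{tK+i}$ are independent of the fresh noise $Z_{tK+i}$. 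There is no real obstacle here --- the lemma is a bookkeeping identity, and the only thing to be careful about is keeping track of which iterate (the coarse interpolation $\hat{\tilde{X}}_{tK+i}$ from \eqref{eq:cheap_interpolation} versus the refined $\tilde{X}_{tK+i}$ from \eqref{eq:stoc_approx}) each evaluation of $b$ is attached to, plus the trivial invertibility of $\Gamma_{\tfrac{\alpha}{K}}$. The substance of the statement is not in the derivation but in the observation that this particular regrouping isolates a conditionally unbiased ``variance'' term $S_{tK+i}$ and a small ``bias'' term $B_{tK+i}$, which is precisely what makes the subsequent CLT-based comparison work.
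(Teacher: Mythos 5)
Your proof is correct and matches the paper's approach: the paper simply notes that the lemma "follows by re-arranging the terms in Equation~\eqref{eq:stoc_approx}," and your splitting $KH_{t,i} = 1 + K(H_{t,i}-\tfrac{1}{K})$ followed by the telescoping of the $b(\tilde{X}_{tK})$ terms is exactly that rearrangement carried out explicitly. The additional observations about invertibility of $\Gamma_{\tfrac{\alpha}{K}}$, the zero conditional mean of $S_{tK+i}$, and independence from $Z_{tK+i}$ are accurate and consistent with how the lemma is used later.
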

\subsection{Random Function Representation}
By Equation~\eqref{eq:dyn_def}, the iterates of $\alg(A,G,\Gamma,\tfrac{\alpha}{K})$ satisfy:
\begin{equation}\label{eq:orig_dyn}
X_{tK+i+1} = A_{\tfrac{\alpha}{K}}X_{tK+i} + G_{\tfrac{\alpha}{K}}\left[b(X_{tK+i}) \right] + \Gamma_{\tfrac{\alpha}{K}}Z_{tK+i}.
\end{equation}
By Lemma~\ref{lem:bias_variance}, we can write down the interpolating process as: 
\begin{equation}\label{eq:mod_dyn}
\tilde{X}_{tK+i+1} = A_{\tfrac{\alpha}{K}}\tilde{X}_{tK+i} + G_{\tfrac{\alpha}{K}}\left[b(\tilde{X}_{tK+i}) \right] + \Gamma_{\tfrac{\alpha}{K}}\tilde{Z}_{tK+i}.
\end{equation}
Note that while $Z_{tK+i}$ are i.i.d. $\cN(0,\vI)$ vectors, $\tilde{Z}_{tK+i}$ can be non-Gaussian and non-i.i.d. Below we will show that the KL divergence between the joint laws of $(X_{tK+i})_{t,i}$ and $(\tilde{X}_{tK+i})_{t,i}$ can be bounded by bounding the KL divergence between $(Z_{tK+i})_{t,i}$ and $(\tilde{Z}_{tK+i})_{t,i}$. We now state the following standard results from information theory. 

\begin{lemma}[Chain Rule for KL divergence]
\label{lem:chain_rule}
Let $p,q$ be two probability distributions over $\mathcal{X}\times \mathcal{Y}$ where $\mathcal{X}$ and $\mathcal{Y}$ are polish spaces. Let $p_x,q_x$ denote their respective marginals over $\mathcal{X}$ and let $p_{y|x},q_{y|x}$ denote the conditional distribution over $\mathcal{Y}$ conditioned on $x \in \mathcal{X}$. Then, 

$$\KL{p}{q} = \KL{p_x}{q_x} + \bE_{x \sim p_x} \KL{p_{y|x}}{q_{y|x}}.$$
\end{lemma}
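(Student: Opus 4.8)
The plan is to reduce the identity to the multiplicativity of Radon--Nikodym derivatives under disintegration. First I would dispose of the degenerate cases. If $p$ is not absolutely continuous with respect to $q$, then $\KL{p}{q}=\infty$ by definition; I would then argue that in this case either $p_x \not\ll q_x$ (so $\KL{p_x}{q_x}=\infty$) or, on a set of positive $p_x$-measure, $p_{y|x}\not\ll q_{y|x}$ (so $\bE_{x\sim p_x}\KL{p_{y|x}}{q_{y|x}}=\infty$); either way the right-hand side is $+\infty$ and the identity holds in $[0,\infty]$. So from now on assume $p \ll q$, which also forces $p_x \ll q_x$ and $p_{y|x}\ll q_{y|x}$ for $p_x$-a.e.\ $x$.

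Next, since $\mathcal{X},\mathcal{Y}$ are Polish, the disintegration theorem guarantees that the conditional distributions $p_{y|x}$ and $q_{y|x}$ exist as genuine regular conditional probability kernels, so that $p(dx\,dy)=p_x(dx)\,p_{y|x}(dy)$ and $q(dx\,dy)=q_x(dx)\,q_{y|x}(dy)$. The key step is to verify the factorization of the density: for $q$-almost every $(x,y)$,
\begin{equation}
\frac{dp}{dq}(x,y) \;=\; \frac{dp_x}{dq_x}(x)\cdot \frac{dp_{y|x}}{dq_{y|x}}(y).
\end{equation}
I would prove this by checking that the right-hand side, call it $g(x,y)$, satisfies $\int_{A\times B} g\, dq = p(A\times B)$ for all measurable rectangles $A\times B$ (using Tonelli to split the integral into the $x$- and $y$-integrations and the defining property of each Radon--Nikodym derivative), and then extending from rectangles to the product $\sigma$-algebra by a standard $\pi$--$\lambda$ argument; uniqueness of the Radon--Nikodym derivative then gives the claimed equality $q$-a.e.

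Finally, taking logarithms in the factorization and integrating against $p$,
\begin{equation}
\KL{p}{q} = \bE_{(x,y)\sim p}\!\left[\log \frac{dp_x}{dq_x}(x)\right] + \bE_{(x,y)\sim p}\!\left[\log \frac{dp_{y|x}}{dq_{y|x}}(y)\right].
\end{equation}
The first term equals $\KL{p_x}{q_x}$ since the integrand depends only on $x$ and the $x$-marginal of $p$ is $p_x$. For the second term I would apply Fubini/Tonelli to write it as $\bE_{x\sim p_x}\big[\bE_{y\sim p_{y|x}}\log\frac{dp_{y|x}}{dq_{y|x}}(y)\big] = \bE_{x\sim p_x}\KL{p_{y|x}}{q_{y|x}}$, where integrability is controlled because the negative part of $\log\frac{dp}{dq}$ is $p$-integrable whenever $\KL{p}{q}<\infty$ (and the infinite case was handled above). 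This yields the stated identity. The main obstacle is the measure-theoretic bookkeeping: ensuring the conditional kernels exist (this is exactly where the Polish hypothesis is used), that the density factorization holds on the full product $\sigma$-algebra and not merely on rectangles, and that the splitting of the integral via Fubini is justified given the possible lack of integrability of the positive part — all of which are standard but need to be handled with care.
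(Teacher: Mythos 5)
Your proof is correct: the disintegration of $p$ and $q$ into marginal and conditional kernels (valid on Polish spaces), the $q$-a.e.\ factorization $\frac{dp}{dq}=\frac{dp_x}{dq_x}\cdot\frac{dp_{y|x}}{dq_{y|x}}$ verified on rectangles and extended by a $\pi$--$\lambda$ argument, and the subsequent splitting of $\bE_p\log\frac{dp}{dq}$ is the standard argument, and you correctly flag the two delicate points (the infinite case and the integrability needed to split the integral, which is guaranteed since the negative parts of both log-densities are always $p$-integrable). The paper does not prove this lemma itself but defers to the cited reference, whose proof is essentially the one you give.
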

We refer to \cite[Lemma 4.18]{van2014probability} for a proof.
\begin{lemma}[Data Processing Inequality]
\label{lem:dpi}
Let $F:\bR^{k_1} \to \bR^{k_2}$ be any measurable function. Let $P,Q$ be any probability distributions over $\bR^{k_1}$. Then, the following inequality holds:
$$ \KL{F_{\#}P}{F_{\#}Q}\leq \KL{P}{Q}.$$    
\end{lemma}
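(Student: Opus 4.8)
The plan is to obtain the data-processing inequality as an immediate consequence of the chain rule for KL divergence (Lemma~\ref{lem:chain_rule}), so that no new machinery is needed beyond what is already stated. First I would dispose of the trivial case: if $P$ is not absolutely continuous with respect to $Q$ then $\KL{P}{Q} = +\infty$ and the inequality holds vacuously, so assume $P \ll Q$; then also $F_{\#}P \ll F_{\#}Q$, so the left-hand side is finite and well defined.

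Next I would lift the problem to the product space $\bR^{k_1}\times\bR^{k_2}$. Define the coupled laws $\tilde P := \mathsf{Law}(X, F(X))$ with $X \sim P$ and $\tilde Q := \mathsf{Law}(Y, F(Y))$ with $Y \sim Q$; both are Borel probability measures (by measurability of $F$), supported on the graph of $F$, with first marginals $P,Q$ and second marginals $F_{\#}P, F_{\#}Q$ respectively. Now apply Lemma~\ref{lem:chain_rule} to $\tilde P$ and $\tilde Q$ in the two possible orders. Decomposing along the first coordinate: conditioned on the first coordinate being $x$, the second coordinate is the point mass $\delta_{F(x)}$ under both $\tilde P$ and $\tilde Q$, so the conditional-KL term is zero and $\KL{\tilde P}{\tilde Q} = \KL{P}{Q}$. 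Decomposing along the second coordinate: the marginals there are $F_{\#}P$ and $F_{\#}Q$, so
\[
\KL{\tilde P}{\tilde Q} \;=\; \KL{F_{\#}P}{F_{\#}Q} \;+\; \bE_{y\sim F_{\#}P}\,\KL{\tilde P_{x|y}}{\tilde Q_{x|y}} \;\geq\; \KL{F_{\#}P}{F_{\#}Q},
\]
using nonnegativity of KL divergence. Equating the two expressions for $\KL{\tilde P}{\tilde Q}$ gives the claim.

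The only genuinely delicate point — and the one I would expect to spend words on — is the measure theory underlying the chain rule: the existence of the regular conditional distributions used above (guaranteed since $\bR^{k_1},\bR^{k_2}$ are Polish) and the identification of the conditional law of the second coordinate given the first as $\delta_{F(x)}$ for both measures. This is a ``soft'' step with no computational obstacle. An alternative that sidesteps Lemma~\ref{lem:chain_rule} is to express $\tfrac{d(F_{\#}P)}{d(F_{\#}Q)}(y)$ as the $Q$-conditional expectation of $\tfrac{dP}{dQ}$ given $\{F = y\}$ and apply Jensen's inequality to the convex map $t \mapsto t\log t$; I would use whichever formulation meshes more cleanly with the conventions already fixed in the paper.
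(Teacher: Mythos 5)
Your proof is correct. The paper itself does not prove this lemma at all --- it is stated as a standard fact from information theory with no argument or citation attached --- so there is nothing to compare against except the standard textbook route, which is exactly what you give. Your derivation from Lemma~\ref{lem:chain_rule} is sound: lifting to $\tilde P = \mathsf{Law}(X,F(X))$ and $\tilde Q = \mathsf{Law}(Y,F(Y))$, the decomposition along the first coordinate gives $\KL{\tilde P}{\tilde Q} = \KL{P}{Q}$ because the conditional law of the second coordinate given the first is the same point mass $\delta_{F(x)}$ under both measures, and the decomposition along the second coordinate gives $\KL{\tilde P}{\tilde Q} \geq \KL{F_{\#}P}{F_{\#}Q}$ by nonnegativity of the conditional term. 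The measure-theoretic caveats you flag (existence of regular conditional distributions on Polish spaces, handling the case $P \not\ll Q$) are the right ones and are satisfied here since $\bR^{k_1}$ and $\bR^{k_2}$ are Polish. The alternative via Jensen applied to $t \mapsto t\log t$ and the conditional expectation of $\frac{dP}{dQ}$ given $F$ is equally standard; either would serve as a self-contained proof consistent with the conventions the paper has already fixed.
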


The lemma below follows from the fact that Equation~\eqref{eq:orig_dyn} and Equation~\eqref{eq:mod_dyn} have the same functional form. 

\begin{lemma}\label{lem:rf_rep}
Suppose $\alg(A,G,\Gamma,\tfrac{\alpha}{K})$  and $\pmpalg(A,G,\Gamma,\alpha,K)$ are initialized at the same point $X_0$. That is, $X_0 = \tilde{X}_0$. Then, there exists a measurable function $F_T$ such that the following hold almost surely. 
 $$(X_{\tau})_{0\leq \tau\leq T} = F_T(X_0,Z_0,\dots,Z_{T-1})$$ and $$(\tilde{X}_{\tau})_{0\leq \tau\leq T} = F_T(X_0,\tilde{Z}_0,\dots,\tilde{Z}_{T-1}).$$    
\end{lemma}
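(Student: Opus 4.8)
\textbf{Proof plan for Lemma~\ref{lem:rf_rep}.} The plan is to construct the map $F_T$ by unrolling the recursion that both processes share and then argue measurability and the two claimed identities coordinate by coordinate. First I would observe that both Equation~\eqref{eq:orig_dyn} and Equation~\eqref{eq:mod_dyn} are instances of the single recursion
\begin{equation*}
x_{\tau+1} = A_{\tfrac{\alpha}{K}} x_\tau + G_{\tfrac{\alpha}{K}} b(x_\tau) + \Gamma_{\tfrac{\alpha}{K}} z_\tau,
\end{equation*}
which, given an initial point $x_0$ and an input sequence $(z_0,\dots,z_{\tau-1})$, determines $x_\tau$ deterministically. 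So I would \emph{define} a family of functions $f_\tau : \bR^d \times (\bR^d)^{\tau} \to \bR^d$ recursively by $f_0(x_0) = x_0$ and $f_{\tau+1}(x_0,z_0,\dots,z_\tau) = A_{\tfrac{\alpha}{K}} f_\tau(x_0,z_0,\dots,z_{\tau-1}) + G_{\tfrac{\alpha}{K}} b(f_\tau(x_0,z_0,\dots,z_{\tau-1})) + \Gamma_{\tfrac{\alpha}{K}} z_\tau$, and then set $F_T(x_0,z_0,\dots,z_{T-1}) := (f_0(x_0), f_1(x_0,z_0), \dots, f_T(x_0,z_0,\dots,z_{T-1}))$, which takes values in $(\bR^d)^{T+1}$ — this is the single function $F_T$ that will serve for both processes.

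Next I would verify measurability of $F_T$ by induction on $\tau$: $f_0$ is the identity (continuous), and if $f_\tau$ is measurable then $f_{\tau+1}$ is a composition and sum of measurable maps — here I need $b$ to be measurable, which is part of the standing hypotheses on the drift (it is Lipschitz under Assumption~\ref{as:smooth}, and in any case measurability of $b$ is implicit in the setup of Equation~\eqref{eq:dyn_def}), while $A_{\tfrac{\alpha}{K}}, G_{\tfrac{\alpha}{K}}, \Gamma_{\tfrac{\alpha}{K}}$ are fixed matrices acting linearly. Hence $F_T$ is measurable.

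Then I would establish the two almost-sure identities by induction on $\tau$. For the first, $X_0 = f_0(X_0)$ trivially, and if $X_\tau = f_\tau(X_0,Z_0,\dots,Z_{\tau-1})$ a.s., then plugging into Equation~\eqref{eq:orig_dyn} and the defining recursion for $f_{\tau+1}$ gives $X_{\tau+1} = f_{\tau+1}(X_0,Z_0,\dots,Z_\tau)$ a.s.; collecting these over $0 \le \tau \le T$ yields $(X_\tau)_{0\le\tau\le T} = F_T(X_0,Z_0,\dots,Z_{T-1})$. For the second identity, I would run the identical induction using Equation~\eqref{eq:mod_dyn} in place of Equation~\eqref{eq:orig_dyn} and the sequence $(\tilde Z_\tau)$ in place of $(Z_\tau)$, starting from $\tilde X_0 = X_0$; since Equation~\eqref{eq:mod_dyn} has exactly the same functional form, the \emph{same} functions $f_\tau$ (and hence the same $F_T$) appear, giving $(\tilde X_\tau)_{0\le\tau\le T} = F_T(X_0,\tilde Z_0,\dots,\tilde Z_{T-1})$.

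This lemma is essentially bookkeeping and I do not expect a genuine obstacle; the only point requiring a little care is being explicit that the \emph{same} function $F_T$ works for both processes — that is the whole content — which hinges on the bias-variance rewriting of Lemma~\ref{lem:bias_variance} having matched the functional form of $\pmpalg$ to that of $\alg$ with the noise $Z_\tau$ replaced by $\tilde Z_\tau$ and nothing else changed (in particular the drift argument is $\tilde X_{tK+i}$, not $\hat{\tilde X}_{tK+i}$, which is exactly why Lemma~\ref{lem:bias_variance} is stated the way it is). One should also note that $(\tilde Z_\tau)$ is not assumed independent of $X_0$ here, but that is irrelevant for this lemma since it asserts only a deterministic functional relationship; the independence/distributional comparison is handled subsequently via Lemmas~\ref{lem:chain_rule} and~\ref{lem:dpi}.
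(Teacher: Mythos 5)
Your proposal is correct and is exactly the argument the paper intends: the paper's entire proof is the one-line observation that Equations~\eqref{eq:orig_dyn} and~\eqref{eq:mod_dyn} share the same functional form (via Lemma~\ref{lem:bias_variance}), and you have simply made the unrolling, the measurability check, and the induction explicit.
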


By Lemma~\ref{lem:dpi} and Lemma~\ref{lem:rf_rep}, we have:
\begin{align}
&\KL{\mathsf{Law}((\tilde{X}_{\tau})_{0\leq \tau\leq T})}{\mathsf{Law}((X_{\tau})_{0\leq \tau\leq T})} \nonumber \\
&\leq \KL{\mathsf{Law}(X_0,\tilde{Z}_{0:T-1})}{\mathsf{Law}(X_0,Z_{0:T-1})} \nonumber \\
&= \bE_{X_0}\KL{\mathsf{Law}(\tilde{Z}_0|X_0)}{\mathsf{Law}(Z_0)} + \nonumber \\&\quad \sum_{\tau=1}^{T-1} \bE_{(X_0,\tilde{Z}_{0:\tau-1})} \KL{\mathsf{Law}(\tilde{Z_{\tau}}|\tilde{Z}_{0:\tau-1} ,X_0)}{\mathsf{Law}(Z_{\tau})}\label{eq:kl_clain_main}.
\end{align}
In the last step, we have used the chain rule (Lemma~\ref{lem:chain_rule}).

\subsection{Controlling KL Divergence via Wasserstein Distances}

We now seek to control the individual KL-divergences in the RHS of Equation~\eqref{eq:kl_clain_main} via Wasserstein distances. We re-state \cite[Lemma 26]{das2022utilising}:

\begin{lemma}\label{lem:rev_t_2}
    Suppose $\vZ \sim \cN(0,\sigma^2 \vI)$. Let $\vA$ and $\vB$ be random variables independent of $\vZ$. Then, $$\KL{\mathsf{Law}(\vZ+\vA)}{\mathsf{Law}(\vZ+\vB)} \leq \frac{1}{2\sigma^2}\mathcal{W}_2^{2}(\vA,\vB)\,.$$

\end{lemma}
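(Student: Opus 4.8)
The plan is to prove Lemma~\ref{lem:rev_t_2} by a direct coupling argument combined with the convexity of KL divergence. First I would recall that for any coupling $(\vA,\vB)$ of the given marginals that is independent of $\vZ$, the data processing inequality (Lemma~\ref{lem:dpi}) applied to the map $(\vZ,\vA,\vB)\mapsto (\vZ+\vA,\vZ+\vB)$ does not immediately help, so instead I would condition on the coupled pair. Concretely, fix the optimal $\mathcal{W}_2$ coupling $\gamma$ of $\mathsf{Law}(\vA)$ and $\mathsf{Law}(\vB)$, so that $\bE\|\vA-\vB\|^2 = \mathcal{W}_2^2(\vA,\vB)$. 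Then $\mathsf{Law}(\vZ+\vA) = \bE_{(a,b)\sim\gamma}[\cN(a,\sigma^2\vI)]$ and $\mathsf{Law}(\vZ+\vB) = \bE_{(a,b)\sim\gamma}[\cN(b,\sigma^2\vI)]$ as mixtures over the \emph{same} index $(a,b)$.

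Next I would invoke joint convexity of the KL divergence in its pair of arguments: for mixtures $\mu = \int \mu_\theta\, d\rho(\theta)$ and $\nu = \int \nu_\theta\, d\rho(\theta)$ over a common mixing measure $\rho$, one has $\KL{\mu}{\nu} \le \int \KL{\mu_\theta}{\nu_\theta}\, d\rho(\theta)$. Applying this with $\mu_\theta = \cN(a,\sigma^2\vI)$, $\nu_\theta = \cN(b,\sigma^2\vI)$ and $\rho = \gamma$ gives
\begin{align*}
\KL{\mathsf{Law}(\vZ+\vA)}{\mathsf{Law}(\vZ+\vB)} &\le \bE_{(a,b)\sim\gamma}\left[\KL{\cN(a,\sigma^2\vI)}{\cN(b,\sigma^2\vI)}\right].
\end{align*}
Then I would plug in the closed-form KL between two Gaussians with equal covariance $\sigma^2\vI$, namely $\KL{\cN(a,\sigma^2\vI)}{\cN(b,\sigma^2\vI)} = \frac{\|a-b\|^2}{2\sigma^2}$, and take the expectation to obtain the bound $\frac{1}{2\sigma^2}\bE\|\vA-\vB\|^2 = \frac{1}{2\sigma^2}\mathcal{W}_2^2(\vA,\vB)$, using that the chosen coupling is optimal.

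The main obstacle, such as it is, is justifying the joint convexity step at the level of generality needed (general Polish-space-valued $\vA,\vB$, possibly non-absolutely-continuous laws) and making sure the optimal $\mathcal{W}_2$ coupling exists and is measurable; both are standard (joint convexity of $f$-divergences follows from the Donsker--Varadhan variational formula or from the log-sum inequality, and optimal couplings exist for the quadratic cost on $\bR^k$), so I would cite these rather than reprove them. An alternative, slicker route that avoids explicitly invoking convexity: condition directly, writing $\KL{\mathsf{Law}(\vZ+\vA)}{\mathsf{Law}(\vZ+\vB)} \le \bE\,\KL{\mathsf{Law}(\vZ+\vA\mid \vA,\vB)}{\mathsf{Law}(\vZ+\vB\mid \vA,\vB)}$ by the chain rule (Lemma~\ref{lem:chain_rule}) applied to the joint law of $(\vA,\vB,\vZ+\cdot)$ together with the data processing inequality to drop the conditioning variables on the left, and then the conditional divergence is exactly the Gaussian formula above. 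Either way the computation is a one-liner once the reduction to the Gaussian case is in place, so I expect the write-up to be short.
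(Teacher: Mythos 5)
Your proof is correct: the optimal $\mathcal{W}_2$ coupling, joint convexity of KL over a common mixing measure, and the closed-form $\KL{\cN(a,\sigma^2\vI)}{\cN(b,\sigma^2\vI)} = \tfrac{\|a-b\|^2}{2\sigma^2}$ give exactly the claimed bound. The paper does not prove this lemma itself but cites it from prior work, and your argument is the standard proof of that cited result, so there is nothing further to compare.
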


The Lemma below follows from the triangle inequality for Wasserstein distance. 

\begin{lemma}\label{lem:wass_triangle}
    Let $\vA,\vB,\vC$ be random vectors over $\bR^k$ with finite second moments. Then,
$$\mathcal{W}_2(\mathsf{Law}(\vA),\mathsf{Law}(\vB+\vC)) \leq \mathcal{W}_2(\mathsf{Law}(\vA),\mathsf{Law}(\vB)) + \sqrt{\bE\|\vC\|^2}.$$
\end{lemma}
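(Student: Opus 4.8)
The plan is to reduce the claim to the ordinary triangle inequality for the $2$-Wasserstein metric, inserting the intermediate measure $\mathsf{Law}(\vB)$. First I would note that $\vB+\vC$ has finite second moment—so that all the Wasserstein distances in sight are finite and the metric axioms apply—since by Minkowski's inequality in $L^2$ we have $\sqrt{\bE\|\vB+\vC\|^2}\leq \sqrt{\bE\|\vB\|^2}+\sqrt{\bE\|\vC\|^2}<\infty$. Then, because $\mathcal{W}_2$ is a genuine metric on Borel probability measures on $\bR^k$ with finite second moment,
$$\mathcal{W}_2(\mathsf{Law}(\vA),\mathsf{Law}(\vB+\vC)) \leq \mathcal{W}_2(\mathsf{Law}(\vA),\mathsf{Law}(\vB)) + \mathcal{W}_2(\mathsf{Law}(\vB),\mathsf{Law}(\vB+\vC)).$$

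The second and only substantive step is to bound the last term by exhibiting an explicit coupling. The pair $(\vB,\vB+\vC)$, taken with the joint distribution with which $\vB$ and $\vC$ are already given, is a valid coupling of $\mathsf{Law}(\vB)$ and $\mathsf{Law}(\vB+\vC)$. Since $\mathcal{W}_2^2$ is the infimum of $\bE\|X-Y\|^2$ over all couplings $(X,Y)$ of the two marginals, this particular coupling gives
$$\mathcal{W}_2^2(\mathsf{Law}(\vB),\mathsf{Law}(\vB+\vC)) \leq \bE\|(\vB+\vC)-\vB\|^2 = \bE\|\vC\|^2.$$
Taking square roots and substituting into the triangle inequality above yields the stated bound.

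There is essentially no hard step; the only points requiring mild care are (i) checking finiteness of the second moment of $\vB+\vC$ up front so that the metric triangle inequality is legitimately invoked, and (ii) observing that the inequality only involves the marginal laws of $\vA$, $\vB$ and the joint law of $(\vB,\vC)$, so that no assumption relating $\vA$ to $(\vB,\vC)$ on a common probability space is needed.
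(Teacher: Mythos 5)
Your proof is correct and matches the paper's (implicit) argument: the paper simply notes that the lemma "follows from the triangle inequality for Wasserstein distance," and your two steps—the metric triangle inequality through $\mathsf{Law}(\vB)$ plus the identity coupling $(\vB,\vB+\vC)$ to bound the second leg by $\sqrt{\bE\|\vC\|^2}$—are exactly the intended route.
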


From Lemma~\ref{lem:bias_variance}, we show that $\tilde{Z}_{\tau} = Z_{\tau} + B_{\tau} + S_{\tau}$.  Conditioned on $\tilde{Z}_{0:\tau-1},X_0$, $Z_{\tau} \sim \cN(0,\vI)$ and is independent of $B_{\tau},S_{\tau}$. We now use Gaussian splitting: if $A_1,A_2 \sim \cN(0,\vI)$ i.i.d, then $\frac{A_1+A_2}{\sqrt{2}} \sim \cN(0,\vI)$. Therefore, letting $Z_{\tau,1},Z_{\tau,2}$ be i.i.d from $\cN(0,\vI)$ independent of $\tilde{Z}_{0:\tau-1},X_0,S_\tau,B_\tau$, we can write:

 \begin{align}
 &\KL{\mathsf{Law}(\tilde{Z_{\tau}}|\tilde{Z}_{0:\tau-1} ,X_0)}{\mathsf{Law}(Z_\tau)} \nonumber \\
&\leq \mathcal{W}_2^{2}\left(\mathsf{Law}(\tfrac{Z_{\tau,2}}{\sqrt{2}}),\mathsf{Law}(\tfrac{Z_{\tau,2}}{\sqrt{2}}+S_\tau+B_\tau|\tilde{Z}_{0:\tau-1} ,X_0)\right) \nonumber \\
&=  \frac{1}{2}\mathcal{W}_2^{2}\left(\mathsf{Law}(Z_{\tau,2}),\mathsf{Law}(Z_{\tau,2}+\sqrt{2}(S_\tau+B_\tau)|\tilde{Z}_{0:\tau-1},X_0)\right)  \nonumber \\ &\leq \bE\mathcal{W}_2^{2}\left(\mathsf{Law}(Z_{\tau,2}),\mathsf{Law}(Z_{\tau,2}+\sqrt{2}S_\tau|\tilde{Z}_{0:\tau-1},X_0)\right) + \bE[\|B_\tau\|^2|\tilde{Z}_{0:\tau-1} ,X_0]. \label{eq:wass_decomp}
\end{align}

In the second step, we have used the inequality in Lemma~\ref{lem:rev_t_2}. In the third step we have used Lemma~\ref{lem:wass_triangle}.

\subsection{Bounding the Error Along the Trajectory:}

In Equation~\eqref{eq:wass_decomp}, we note that $Z_{\tau,2}$ is Gaussian but $S_{\tau}$ need not be Gaussian (but has zero mean). In the lemma stated below, we show that $Z_{\tau,2} + \sqrt{2}S_{\tau}|\tilde{Z}_{0:\tau-1},X_0$ is close to a Gaussian of the same variance by adapting the arguments given in the proof of \cite[Lemma 1.6]{zhai2018clt}. The proof is defered to Section~\ref{sec:wass_CLT}
\begin{lemma}\label{lem:sharp_wass_bound}
Suppose $\vN$ is random vector such that the following conditions are satisfied:
\begin{enumerate}
\item $\|\vN\| \leq \beta$ almost surely, $\mathbb{E}\vN = 0$ and $\mathbb{E}\vN\vN^{\intercal} = \Sigma$.
\item $\vN$ takes its value in a one dimensional sub-space almost surely.
\end{enumerate}   
Suppose $\vZ\sim \cN(0,\vI)$ is independent of $\vN$. Let $P = \mathsf{Law}(\sqrt{\vI+\Sigma}\vZ)$ and $Q = \mathsf{Law}(\vZ +\vN)$. Denoting $\nu:= \tr(\Sigma)$. Then,
\begin{align}
\left(\wass{2}{P}{Q}\right)^2 &\leq  3(1+\nu)\left[\beta^4\nu^3 + \beta^2\nu^2\right]\exp(\tfrac{3\beta^2}{2}).\label{eq:main_wass_clt}
\end{align}
We also have the crude bound:
\begin{equation}\label{eq:crude_wass_clt}
\left(\wass{2}{P}{Q}\right)^2 \leq 2\nu.
\end{equation}
\end{lemma}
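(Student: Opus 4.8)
The plan is to dispatch \eqref{eq:crude_wass_clt} with a trivial coupling, reduce \eqref{eq:main_wass_clt} to a one-dimensional problem using the one-dimensional-subspace hypothesis, and then obtain the sharp bound by adapting the argument in the proof of \cite[Lemma~1.6]{zhai2018clt}.

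For \eqref{eq:crude_wass_clt}: writing $\sqrt{\vI+\Sigma}\,\vZ' =_d \vZ + \vM$ with $\vM\sim\cN(0,\Sigma)$ independent of $\vZ$ (both sides are mean-zero Gaussians with covariance $\vI+\Sigma$), the coupling that keeps $\vZ$ shared and draws $(\vN,\vM)$ independently has transport cost $\bE\|\vN-\vM\|^2 = 2\tr\Sigma = 2\nu$. For \eqref{eq:main_wass_clt}, since $\vN$ lies almost surely on a fixed line, a rotation of coordinates (which leaves $\cN(0,\vI)$ invariant) puts $\vN = \xi\,\ve_1$ with $|\xi|\le\beta$, $\bE\xi=0$, $\bE\xi^2=\nu$, and $\Sigma = \nu\,\ve_1\ve_1^{\intercal}$; then $P$ and $Q$ are product measures agreeing on the last $d-1$ coordinates, so
\[
\left(\wass{2}{P}{Q}\right)^2 \;\le\; \left(\wass{2}{\cN(0,1+\nu)}{\mathsf{Law}(Z_1+\xi)}\right)^2,\qquad Z_1\sim\cN(0,1)\text{ independent of }\xi,
\]
reducing everything to a scalar smoothing estimate (the rank-one structure is also what keeps the elementary bound $(\sqrt{1+\nu}-1)^2\le \nu^2/4$ available below).

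The scalar estimate exploits that $\cN(0,1+\nu)$ has been tuned to match the variance of $Z_1+\xi$, so the leading $O(\nu)$ non-Gaussian correction of $\mathsf{Law}(Z_1+\xi)$ is exactly the one produced by inflating the variance from $1$ to $1+\nu$, and the two cancel. I would make this explicit through the one-dimensional quantile formula $\big(\wass{2}{\cN(0,1+\nu)}{\mathsf{Law}(Z_1+\xi)}\big)^2 = \int_{\bR}\big(x-\Phi_{1+\nu}^{-1}(F(x))\big)^2\,dF(x)$, where $F(x)=\bE_\xi[\Phi(x-\xi)]$ is the cdf of $Z_1+\xi$ and $\Phi_{1+\nu}(x)=\Phi(x/\sqrt{1+\nu})$: Taylor-expanding $\Phi(x-\xi)$ to third order in $\xi$ (the first moment vanishes) and $\Phi_{1+\nu}$ to second order in $\nu$, the $\tfrac{\nu}{2}\varphi'(x)$ terms cancel, leaving a residual controlled at leading order by $\bE|\xi|^3\le\beta\nu$ and at the next order by $\bE\xi^4\le\beta^2\nu$ and lower-order pieces. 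Passing from the cdf gap to the transport displacement (division by the Gaussian density) and integrating the resulting polynomial$\times$Gaussian terms gives a bound of the form $(1+\nu)\big(\beta^2\nu^2+\beta^4\nu^3\big)$ up to an absolute constant; the factor $\exp(\tfrac{3\beta^2}{2})$ enters because the Taylor remainders involve $\sup_{|t|\le\beta}|\varphi^{(k)}(x-t)|$ and $\sup_{|t|\le\beta}e^{-(x-t)^2/2}\le e^{\beta^2/2}\,e^{-(|x|-\beta)^2/2}$, i.e.\ the remainders stay Gaussian-tailed but with an $e^{O(\beta^2)}$ loss. The probabilistic route of \cite{zhai2018clt} — interpolating between $\mathsf{Law}(Z_1+\xi)$ and $\cN(0,1+\nu)$ and coupling along the interpolation — realizes the same cancellation and would serve equally well; I would use whichever book-keeping is cleaner.

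The main obstacle is precisely beating the crude $O(\nu)$: every naive coupling — the independent one above, coupling $Z_1+\xi$ to a dilation of itself, coupling on a common Brownian path via a Skorokhod embedding of $\xi$, or conditioning on $\xi$ and transporting the resulting Gaussian to $\cN(0,1+\nu)$ — pays an $\Omega(\bE\xi^2)$-scale cost and only reproduces $O(\nu)$ (or $O(\beta\sqrt{\nu})$). Extracting the extra factor of $\nu$ requires genuinely using that after smoothing by a full unit of Gaussian noise and matching both mean and variance the perturbation is invisible through second order, so the discrepancy is set by $\bE|\xi|^3$; making this rigorous while controlling all remainder terms uniformly — the source of the exponential factor — is the technical heart, and is the part inherited from \cite[Lemma~1.6]{zhai2018clt}.
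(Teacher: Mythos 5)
Your reduction is exactly the paper's: the crude bound via the independent coupling of $\vN$ against $\vM\sim\cN(0,\Sigma)$, and the rank-one structure used to tensorize $\mathcal{W}_2^2$ over coordinates so that only the first coordinate contributes, leaving the scalar problem $\wass{2}{\cN(0,1+\nu)}{\mathsf{Law}(Z_1+\xi)}$ (the paper phrases this as $\sqrt{1+\nu}$ times the distance between $\cN(0,1)$ and the rescaled law, which is the same by homogeneity). Where you diverge is the scalar core. The paper follows Zhai's route: it bounds $\mathcal{W}_2^2$ against the standard Gaussian by $2(\bE e^{Q_1}-1)$ via Talagrand's $T_2$ inequality, where $e^{Q_1}$ is built from the squared likelihood ratio and involves two independent copies $N_1,N_1'$ of the perturbation; expanding $e^{Q_1}=1+Q_1+\tfrac12 Q_1^2+R(Q_1)$, the mean-and-variance matching makes $\bE Q_1+\tfrac12\bE Q_1^2=O(\nu^2\beta^2)$, and the a.s.\ bound $|Q_1|\le\tfrac{3\beta^2}{2}$ on the remainder is precisely where $\exp(\tfrac{3\beta^2}{2})$ enters. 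Your primary proposal — the quantile formula plus Taylor expansion of $F$ and $\Phi_{1+\nu}$ — aims at the same second-order cancellation, but it has a genuine technical hazard the Talagrand route avoids: converting the cdf gap into a transport displacement requires dividing by the Gaussian density and justifying the linearization of $\Phi_{1+\nu}^{-1}\circ F$ uniformly in the tails, where the cdf gap decays like $e^{-(|x|-\beta)^2/2}$ but the reference density decays like $e^{-x^2/(2(1+\nu))}$; making the resulting ratio integrable against $dF$ with only an $e^{O(\beta^2)}$ loss needs real work and is not obviously cleaner than the paper's moment computation. Since you explicitly offer the Zhai/Talagrand route as an equivalent fallback and correctly identify both the source of the cancellation and of the exponential factor, the proposal is sound, but if you carry out the quantile version you must supply the tail control; otherwise you should commit to the likelihood-ratio argument, which is what the paper actually does.
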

The next lemma demonstrates the convexity of the Wasserstein distance, which is a straightforward consequence of the Kantorovich duality \cite{villani2009optimal}.

\begin{lemma}[Convexity of Wasserstein Distance]
\label{lem:wass_convex}
Suppose $\mu$ is a measure over $\bR^d$ and $Q(\cdot,\cdot)$ be a kernel over $\bR^d$ with respect to some arbitrary measurable space $\Omega$ and let $M$ be a probability measure over $\Omega$. That is $Q(\cdot,\omega)$ is a probability distribution over $\bR^d$ for every $\omega \in \Omega$. 

 $$\wass{2}{\mu}{\int Q(\cdot,\omega)dM(\omega)} \leq \int\wass{2}{\mu}{Q(\cdot,\omega)}dM(\omega).$$
 \end{lemma}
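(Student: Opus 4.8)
The plan is to prove the convexity of $\mathcal{W}_2$ in the unsquared form $\wass{2}{\mu}{\int Q(\cdot,\omega)dM(\omega)} \leq \int\wass{2}{\mu}{Q(\cdot,\omega)}dM(\omega)$ via Kantorovich duality, exactly as the lemma is worded. First I would fix notation: write $\nu := \int Q(\cdot,\omega)dM(\omega)$ for the mixture measure, and recall the dual representation
\begin{equation*}
\wass{2}{\mu}{\nu} = \sup_{(\varphi,\psi)}\left\{ \int \varphi\, d\mu + \int \psi\, d\nu \right\},
\end{equation*}
where the supremum is over all pairs of bounded continuous (or integrable) functions satisfying $\varphi(x) + \psi(y) \leq \|x-y\|^2$ for all $x,y \in \bR^d$. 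The key point is that this is a supremum of \emph{affine} functionals of the second argument $\nu$, hence the map $\nu \mapsto \wass{2}{\mu}{\nu}$ is a supremum of affine functions and therefore convex in $\nu$; I would then lift this pointwise convexity to the integral (mixture) by a standard exchange-of-sup-and-integral argument.

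Concretely, the key steps in order are: (1) State Kantorovich duality in the form above, citing \cite{villani2009optimal}; measurability/integrability of $\omega \mapsto \wass{2}{\mu}{Q(\cdot,\omega)}$ should be noted (it follows since the dual functionals are continuous in the weak topology and $Q$ is a kernel, so this map is measurable). (2) Fix any admissible dual pair $(\varphi,\psi)$ with $\varphi(x)+\psi(y) \leq \|x-y\|^2$. Then by Fubini/Tonelli, using that $\nu = \int Q(\cdot,\omega)dM(\omega)$,
\begin{align*}
\int \varphi\, d\mu + \int \psi\, d\nu &= \int \varphi\, d\mu + \int \left( \int \psi\, dQ(\cdot,\omega)\right) dM(\omega) \\
&= \int \left( \int \varphi\, d\mu + \int \psi\, dQ(\cdot,\omega)\right) dM(\omega) \\
&\leq \int \wass{2}{\mu}{Q(\cdot,\omega)}\, dM(\omega),
\end{align*}
where the last inequality holds because $(\varphi,\psi)$ is admissible for each $Q(\cdot,\omega)$, so the bracketed integrand is bounded by $\wass{2}{\mu}{Q(\cdot,\omega)}$ pointwise in $\omega$. (3) Since the right-hand side does not depend on $(\varphi,\psi)$, take the supremum over all admissible pairs on the left; by duality this supremum equals $\wass{2}{\mu}{\nu}$, which gives the claim.

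The main obstacle I anticipate is not the convexity inequality itself, which is a short duality argument, but the measure-theoretic hygiene: ensuring the integral $\int Q(\cdot,\omega)dM(\omega)$ is well-defined as a probability measure (requires $\omega \mapsto Q(B,\omega)$ measurable for measurable $B$, i.e. $Q$ is a genuine Markov kernel), that Fubini applies (legitimate since $\varphi,\psi$ can be taken bounded continuous, and $\mu$, $M$ are probability measures), and that $\omega \mapsto \wass{2}{\mu}{Q(\cdot,\omega)}$ is measurable so the right-hand integral makes sense — this last point follows because $\wass{2}{\mu}{\cdot}$ is lower semicontinuous for weak convergence and $Q$ is a kernel. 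A secondary subtlety is handling the case where the right-hand integral is infinite, in which case the inequality is trivial, so we may assume it finite and in particular $\nu$ has finite second moment. None of these require real work, but they should be stated to make the proof rigorous. Once they are in place, steps (2)–(3) deliver the unsquared bound directly.
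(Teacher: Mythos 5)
Your plan follows the same route the paper gestures at (the paper gives no proof, only the remark that the lemma is a straightforward consequence of Kantorovich duality), but as written it contains a genuine error: the dual you state, $\sup\{\int\varphi\,d\mu+\int\psi\,d\nu\}$ over pairs with $\varphi(x)+\psi(y)\leq\|x-y\|^2$, is the Kantorovich dual for the cost $\|x-y\|^2$ and therefore equals the \emph{squared} distance $\mathcal{W}_2^{2}(\mu,\nu)$, not $\mathcal{W}_2(\mu,\nu)$. Consequently, in your step (2) the bracketed quantity is only bounded by $\mathcal{W}_2^{2}(\mu,Q(\cdot,\omega))$, and taking the supremum in step (3) yields
\begin{equation*}
\mathcal{W}_2^{2}\Bigl(\mu,\int Q(\cdot,\omega)\,dM(\omega)\Bigr)\;\leq\;\int \mathcal{W}_2^{2}\bigl(\mu,Q(\cdot,\omega)\bigr)\,dM(\omega),
\end{equation*}
i.e.\ convexity of $\mathcal{W}_2^{2}$ under mixtures, not the unsquared inequality you set out to prove.

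This is not a repairable slip, because the unsquared statement (which is also how the lemma is printed in the paper) is false in general: take $d=1$, $\mu=\delta_0$, $\Omega=\{1,2\}$ with $M$ uniform, $Q(\cdot,1)=\delta_2$, $Q(\cdot,2)=\delta_0$. Then the left side is $\mathcal{W}_2(\delta_0,\tfrac12\delta_2+\tfrac12\delta_0)=\sqrt{2}$ while the right side is $\tfrac12\cdot 2+\tfrac12\cdot 0=1$; the map $\nu\mapsto\mathcal{W}_2(\mu,\nu)$ is the square root of a mixture-convex function and is not itself mixture-convex. The honest conclusion is that your duality computation, with the quadratic-cost dual correctly identified as $\mathcal{W}_2^{2}$ on both sides, proves the squared convexity, and that this is the statement the paper actually needs: in the proof of Lemma~\ref{lem:variance_bounds} the lemma is invoked together with Jensen's inequality precisely to reach $\mathcal{W}_2^{2}(\bar{P},\bar{Q})\leq \bE[\mathcal{W}_2^{2}(\bar{P},Q)\mid\cdots]$, which the squared convexity gives directly (one could equally obtain it by averaging optimal couplings $\pi_\omega$ of $(\mu,Q(\cdot,\omega))$ over $M$). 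Your measure-theoretic remarks (measurability of $\omega\mapsto\mathcal{W}_2(\mu,Q(\cdot,\omega))$, Fubini for bounded dual pairs, the trivial infinite case) are fine, but the headline inequality needs the squares, both in your write-up and in the lemma statement itself.
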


\begin{lemma}[Wasserstein Distance Between Gaussians, \cite{olkin1982distance}]\label{lem:gauss_wass}
    $$\mathcal{W}_2^{2}(\mathcal{N}(0,\Sigma_1),\mathcal{N}(0,\Sigma_2)) = \tr(\Sigma_1+\Sigma_2 - 2(\Sigma_2^{\tfrac{1}{2}}\Sigma_1\Sigma_2^{\tfrac{1}{2}})^{\tfrac{1}{2}}).$$
\end{lemma}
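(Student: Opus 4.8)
The plan is to evaluate $\mathcal{W}_2^2(\cN(0,\Sigma_1),\cN(0,\Sigma_2))$ directly from the Kantorovich definition $\mathcal{W}_2^2(\mu,\nu)=\inf_\pi \bE_\pi\|X-Y\|^2$ over couplings $\pi$, and to show the infimum is attained by a jointly Gaussian coupling. First I would note that for \emph{any} coupling $\pi$ with $X\sim\cN(0,\Sigma_1)$ and $Y\sim\cN(0,\Sigma_2)$, writing $C:=\bE_\pi[XY^{\intercal}]$ for the cross-covariance, centeredness of the marginals gives $\bE_\pi\|X-Y\|^2=\tr(\Sigma_1)+\tr(\Sigma_2)-2\tr(C)$. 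Thus the cost depends on $\pi$ only through $\tr(C)$, and the sole constraint on $C$ is that the joint covariance $\left[\begin{smallmatrix}\Sigma_1 & C\\ C^{\intercal} & \Sigma_2\end{smallmatrix}\right]$ be positive semidefinite; conversely every such $C$ is realized by an actual (Gaussian) coupling. Hence the computation reduces exactly to the semidefinite program of maximizing $\tr(C)$ subject to $\left[\begin{smallmatrix}\Sigma_1 & C\\ C^{\intercal} & \Sigma_2\end{smallmatrix}\right]\succeq 0$, with $\mathcal{W}_2^2=\tr(\Sigma_1)+\tr(\Sigma_2)-2\,(\text{optimal value})$.

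Second, I would solve this program, assuming first $\Sigma_1,\Sigma_2\succ 0$ and recovering the general case afterwards by replacing $\Sigma_i$ with $\Sigma_i+\delta\vI$ and letting $\delta\downarrow 0$, using continuity of the matrix square root and of $\mathcal{W}_2$. By the Schur complement criterion the PSD constraint is equivalent to $\Sigma_1\succeq C\Sigma_2^{-1}C^{\intercal}$. The substitution $C=\Sigma_1^{1/2}M\Sigma_2^{1/2}$ (a bijection, since $\Sigma_i^{1/2}$ are invertible) turns the constraint into $MM^{\intercal}\preceq\vI$, i.e. $\|M\|_{\mathrm{op}}\le 1$, and turns the objective into $\tr(M\,\Sigma_2^{1/2}\Sigma_1^{1/2})$. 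The maximum of $\tr(MB)$ over contractions $M$ is the nuclear norm $\|B\|_*=\tr((B^{\intercal}B)^{1/2})$: write $B=U\Lambda V^{\intercal}$ by SVD, observe $\tr(MB)=\sum_i(V^{\intercal}MU)_{ii}\Lambda_{ii}$, and note the diagonal entries of the contraction $V^{\intercal}MU$ are at most $1$ in absolute value, with equality at $M=VU^{\intercal}$. Taking $B=\Sigma_2^{1/2}\Sigma_1^{1/2}$, so $B^{\intercal}B=\Sigma_1^{1/2}\Sigma_2\Sigma_1^{1/2}$, gives optimal value $\tr\bigl((\Sigma_1^{1/2}\Sigma_2\Sigma_1^{1/2})^{1/2}\bigr)$.

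Finally, I would reconcile this with the form in the statement: with $A:=\Sigma_2^{1/2}\Sigma_1^{1/2}$ one has $A^{\intercal}A=\Sigma_1^{1/2}\Sigma_2\Sigma_1^{1/2}$ and $AA^{\intercal}=\Sigma_2^{1/2}\Sigma_1\Sigma_2^{1/2}$, which have identical eigenvalues counted with multiplicity, so $\tr((\Sigma_1^{1/2}\Sigma_2\Sigma_1^{1/2})^{1/2})=\tr((\Sigma_2^{1/2}\Sigma_1\Sigma_2^{1/2})^{1/2})$; substituting back yields $\mathcal{W}_2^2=\tr\bigl(\Sigma_1+\Sigma_2-2(\Sigma_2^{1/2}\Sigma_1\Sigma_2^{1/2})^{1/2}\bigr)$. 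As a cross-check that the optimum is genuinely achieved, the linear coupling $Y=TX$ with symmetric $T=\Sigma_1^{-1/2}(\Sigma_1^{1/2}\Sigma_2\Sigma_1^{1/2})^{1/2}\Sigma_1^{-1/2}$ satisfies $T\Sigma_1 T=\Sigma_2$ and $\tr(\Sigma_1 T)=\tr((\Sigma_1^{1/2}\Sigma_2\Sigma_1^{1/2})^{1/2})$. The only genuine obstacle is the matrix-optimization step — identifying the SDP value with a nuclear norm, and observing that restricting to Gaussian couplings loses nothing (immediate here, since the cost sees only the second moments of the coupling); the singular-covariance case is then dispatched by the short continuity argument mentioned above.
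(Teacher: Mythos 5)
The paper does not prove this lemma; it is quoted verbatim from the cited reference \cite{olkin1982distance}, so there is no in-paper argument to compare against. Your derivation is correct and complete: the reduction of the coupling problem to maximizing $\tr(C)$ over cross-covariances $C$ with $\bigl[\begin{smallmatrix}\Sigma_1 & C\\ C^{\intercal} & \Sigma_2\end{smallmatrix}\bigr]\succeq 0$ is exactly right (the quadratic cost sees only second moments, and every admissible second-moment matrix is realized by a Gaussian coupling), the Schur-complement substitution $C=\Sigma_1^{1/2}M\Sigma_2^{1/2}$ correctly converts the constraint to $\|M\|_{\mathrm{op}}\le 1$, the identification of $\max_{\|M\|_{\mathrm{op}}\le 1}\tr(MB)$ with the nuclear norm via the SVD is sound, and the passage from $\tr((\Sigma_1^{1/2}\Sigma_2\Sigma_1^{1/2})^{1/2})$ to $\tr((\Sigma_2^{1/2}\Sigma_1\Sigma_2^{1/2})^{1/2})$ via the equality of spectra of $A^{\intercal}A$ and $AA^{\intercal}$ is the standard way to reconcile the two symmetric forms of the formula. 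The perturbation $\Sigma_i\mapsto\Sigma_i+\delta\vI$ for the singular case and the explicit optimal map $T$ as a sanity check are both handled correctly; this is essentially the classical argument of the cited work.
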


\begin{lemma}\label{lem:variance_bounds}
Let $$\bar{P}:= \mathsf{Law}(Z_{Kt+i,2}) = \cN(0,\vI)\,,$$ $$\bar{Q} := \mathsf{Law}(Z_{Kt+i,2}+\sqrt{2}(S_{Kt+i})|X_0,\tilde{Z}_{0:Kt+i-1}). $$

Define the random variable  $$\beta = \|K\Gamma_{\tfrac{\alpha}{K}}^{-1}G_{\tfrac{\alpha}{K}}[b(\hat{\tilde{X}}_{tK+i})-b(\tilde{X}_{tK})] \|$$
    \begin{equation}
    \mathcal{W}_2^2(\bar{P},\bar{Q}) \leq C\bE\left[\tfrac{\beta^4}{K^2} + \tfrac{\beta^{10}}{K^3}+\tfrac{\beta^6}{K^2} + \tfrac{\beta^{2r}}{K}\bigr|\tilde{Z}_{0:Kt+i-1},X_0\right].
\end{equation}

\end{lemma}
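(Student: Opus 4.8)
\textbf{Proof proposal for Lemma~\ref{lem:variance_bounds}.} The plan is to express $\bar Q$ as a mixture over the binary random variable $H_{t,i}$, apply the convexity of the Wasserstein distance (Lemma~\ref{lem:wass_convex}) to reduce to the two branches $H_{t,i}=0$ and $H_{t,i}=1$, and then use the sharp Wasserstein CLT bound (Lemma~\ref{lem:sharp_wass_bound}) together with the Gaussian formula (Lemma~\ref{lem:gauss_wass}) on each branch. Throughout, everything is conditioned on $\tilde Z_{0:Kt+i-1},X_0$, so $\hat{\tilde X}_{tK+i}$, $\tilde X_{tK}$ and hence $\beta$ are deterministic after conditioning; the only remaining randomness in $S_{tK+i}$ is $H_{t,i}\sim\ber(1/K)$, and $Z_{Kt+i,2}$ is an independent standard Gaussian.

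First I would record the structure of $S_{tK+i} = K(H_{t,i}-\tfrac1K)\,V$ where $V := \Gamma_{\alpha/K}^{-1}G_{\alpha/K}[b(\hat{\tilde X}_{tK+i})-b(\tilde X_{tK})]$ is a fixed vector with $\|KV\| = \beta$ (after conditioning), so $\|V\| = \beta/K$. Conditioned on $H_{t,i}=0$ (probability $1-\tfrac1K$), $S_{tK+i} = -V$, a deterministic shift of size $\beta/K$; conditioned on $H_{t,i}=1$ (probability $\tfrac1K$), $S_{tK+i} = (K-1)V$, a deterministic shift of size $(K-1)\beta/K \le \beta$. Hence $\bar Q = (1-\tfrac1K)\,\mathsf{Law}(Z_{Kt+i,2}-\sqrt2 V) + \tfrac1K\,\mathsf{Law}(Z_{Kt+i,2}+\sqrt2(K-1)V)$. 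By Lemma~\ref{lem:wass_convex} applied with $\mu = \bar P$ and the two-point mixing measure,
\begin{align*}
\mathcal{W}_2^2(\bar P,\bar Q) &\le \Big[(1-\tfrac1K)\,\mathcal{W}_2(\bar P,\mathsf{Law}(Z-\sqrt2 V)) + \tfrac1K\,\mathcal{W}_2(\bar P,\mathsf{Law}(Z+\sqrt2(K-1)V))\Big]^2\\
&\le 2(1-\tfrac1K)^2\,\mathcal{W}_2^2(\bar P,\mathsf{Law}(Z-\sqrt2 V)) + \tfrac{2}{K^2}\,\mathcal{W}_2^2(\bar P,\mathsf{Law}(Z+\sqrt2(K-1)V)).
\end{align*}
The second term is crudely bounded: $\mathcal{W}_2^2(\bar P,\mathsf{Law}(Z+\sqrt2(K-1)V)) \le 2\|{\sqrt2(K-1)V}\|^2 \lesssim \beta^2$, contributing $\lesssim \beta^2/K^2$; but I actually want a term proportional to $\beta^{2r}/K$, so instead I would keep the factor $\tfrac1K$ from the mixture weight and use the crude CLT bound \eqref{eq:crude_wass_clt} only after noting $\nu = \tr(\Sigma)$ where $\Sigma$ is the (conditional) covariance of $\sqrt 2 S_{tK+i}$, giving $\nu = 2K(K-1)\|V\|^2 \le 2\beta^2$; this yields the $\beta^{2r}/K$-type term once the correct splitting is done against the un-centered Gaussian $\bar P = \cN(0,\vI)$, not $\cN(0,\vI+\Sigma)$.

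The key technical point — and the main obstacle — is reconciling the reference measure: Lemma~\ref{lem:sharp_wass_bound} compares $\mathsf{Law}(Z+N)$ to $\mathsf{Law}(\sqrt{\vI+\Sigma}\,Z)$, whereas here we need the distance to $\cN(0,\vI)$. So I would insert the triangle inequality $\mathcal{W}_2(\bar P,\bar Q) \le \mathcal{W}_2(\cN(0,\vI),\cN(0,\vI+\Sigma)) + \mathcal{W}_2(\cN(0,\vI+\Sigma),\bar Q)$, bound the first piece by Lemma~\ref{lem:gauss_wass} (which gives $\tr(2\vI+\Sigma-2(\vI+\Sigma)^{1/2}) \le \tfrac14\tr(\Sigma^2) \lesssim \nu^2 \lesssim \beta^4/K^2$ since $\Sigma$ has rank one with eigenvalue $\nu$), and bound the second piece by Lemma~\ref{lem:sharp_wass_bound} with $\vN = \sqrt2\,S_{tK+i}$ conditioned on $\tilde Z_{0:Kt+i-1},X_0$: here $N$ lies in a one-dimensional subspace (it is a scalar multiple of $V$), $\|N\| \le \sqrt2(K-1)\|V\|/\sqrt{\phantom{x}} \le \sqrt2\,\beta$ hmm — more precisely $\|N\|$ is bounded by $\sqrt2\max(\|V\|, (K-1)\|V\|) = \sqrt2(K-1)\beta/K \le \sqrt2\beta$, so one takes $\beta_{\text{Lem}} = \sqrt2\beta$, and $\nu_{\text{Lem}} = 2K(K-1)\|V\|^2/K^2\cdot K \ldots$ — I would carefully compute $\nu = \bE\|\sqrt2 S\|^2 = 2\cdot K^2\cdot\mathrm{Var}(H_{t,i})\|V\|^2 = 2(K-1)\|V\|^2 = 2(K-1)\beta^2/K^2 \le 2\beta^2/K$. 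Plugging $\beta_{\text{Lem}}^2 = 2\beta^2$ and $\nu = 2\beta^2/K$ into \eqref{eq:main_wass_clt} gives, up to constants, $(1+\beta^2/K)[\beta^2\cdot\beta^6/K^3 + \beta^2\cdot\beta^4/K^2]\exp(3\beta^2)$; discarding the exponential is illegitimate, so one must instead invoke the crude bound \eqref{eq:crude_wass_clt}, $\mathcal{W}_2^2 \le 2\nu \le 4\beta^2/K$, whenever $\beta$ is not small, and use \eqref{eq:main_wass_clt} when $\beta$ is small. Taking the minimum of the two bounds, and using $\min(a\exp(b\beta^2), c\beta^2/K) \lesssim (\text{polynomial in }\beta)/K^{\text{power}}$ via the elementary inequality $\min(x e^{y}, z) \le z^{1-1/r} (xe^y)^{1/r}$ hmm, actually more simply: for any $r>1$, $\min(A e^{c\beta^2}, B) \le A^{1-1/r} B^{1/r} e^{c\beta^2(1-1/r)}$, and choosing the split so the exponential is absorbed — this is exactly where the free parameter $r>1$ in the statement enters, and where the $\beta^{2r}/K$ term is born. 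Assembling: the $\mathcal{W}_2^2(\cN(0,\vI),\cN(0,\vI+\Sigma))$ piece gives $\lesssim \beta^4/K^2$, the small-$\beta$ regime of the CLT gives $\lesssim \beta^{10}/K^3 + \beta^6/K^2$, and interpolating with the crude bound gives $\lesssim \beta^{2r}/K$; taking expectations over the conditioning (treating $\beta$ as a random variable now) yields the stated bound. The main obstacle is the bookkeeping to get the exact powers of $\beta$ and $K$ to match the theorem statement — in particular correctly tracking that $\Sigma$ is rank one with trace $\Theta(\beta^2/K)$, and handling the $\exp(3\beta^2/2)$ factor by interpolating against the crude bound with the free exponent $r$.
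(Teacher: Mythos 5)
Your proposal is correct and follows essentially the same route as the paper: triangle inequality through the intermediate law $\cN(0,\vI+\Sigma)$, Lemma~\ref{lem:gauss_wass} for the rank-one variance mismatch (giving $\beta^4/K^2$), Lemma~\ref{lem:sharp_wass_bound} with $\nu\le 2\beta^2/K$ in the small-$\beta$ regime (giving $\beta^{10}/K^3+\beta^6/K^2$), and the crude bound $2\nu$ for large $\beta$ (giving $\beta^{2r}/K$), all averaged over the conditioning via convexity of $\mathcal{W}_2$. One small note: the geometric interpolation $\min(Ae^{c\beta^2},B)\le A^{1-1/r}B^{1/r}e^{c\beta^2(1-1/r)}$ you float still carries an exponential and is unnecessary --- the paper simply splits on $\mathbbm{1}(\beta\le 1)$ versus $\mathbbm{1}(\beta>1)$ and uses $\beta^2\mathbbm{1}(\beta>1)\le\beta^{2r}$, which is the cleaner version of the case split you also describe.
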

\begin{proof}
    Note that 
    $$\bE[\sqrt{2}S_{tK+i}|\tilde{Z}_{0:Kt+i-1},Y_{0:Kt+i-1},X_0] = 0.$$
    Now, define the random variable $\beta$
    \begin{align}\beta :=  \|K\Gamma_{\tfrac{\alpha}{K}}^{-1}G_{\tfrac{\alpha}{K}}[b(\hat{\tilde{X}}_{tK+i})-b(\tilde{X}_{tK})] \|.
\end{align}

Clearly, $\beta$ is measurable with respect to the sigma algebra of $\tilde{Z}_{0:Kt+i-1},Z_{0:Kt+i-1},X_0$. By the definition of $S_{tK+i}$, it is clear that $\|S_{tK+i}\| \leq \beta$ almost surely. Define the conditional covariance (only in this proof) to be:
\begin{equation}\label{eq:var_clt}
\Sigma := 2\bE[S_{tK+i}S_{tK+i}^{\intercal}|\tilde{Z}_{0:Kt+i-1},Z_{0:Kt+i-1}X_0]. \end{equation} 

Thus, almost surely: $$\tr(\Sigma) \leq \frac{2\beta^2}{K}$$

$\sqrt{2}S_{tK+i}$ takes its values in a one-dimensional sub-space almost surely when conditioned on $\tilde{Z}_{0:Kt+i-1},Z_{0:Kt+i-1},X_0$ . It is independent of $Z_{tK+i}$ conditioned on $\tilde{Z}_{0:Kt+i-1},Z_{0:Kt+i-1},X_0$.

Let us now define the following random probability distributions measurable with respect to the sigma algebra of $\tilde{Z}_{0:Kt+i-1},X_0$:
\begin{enumerate}
    \item $Q := \mathsf{Law}(Y_{Kt+i,2}+\sqrt{2}S_{Kt+i}|\tilde{Z}_{0:Kt+i-1},Z_{0,Kt+i-1} ,X_0)$.
    \item $P := \mathsf{Law}(\sqrt{I+\Sigma}Z_{Kt+i,2}|\tilde{Z}_{0:Kt+i-1},Z_{0,Kt+i-1} ,X_0)$.
\end{enumerate}

    First, note that by Lemma~\ref{lem:wass_convex} and Jensen's inequality, we have:
\begin{align}
&\mathcal{W}_2^{2}\left(\bar{P},\bar{Q}\right) \leq 
\bE[\mathcal{W}_2^{2}\left(\bar{P},Q\right)|\tilde{Z}_{0:Kt+i-1} ,X_0]. \nonumber\\
&\leq  2\bE[\mathcal{W}_2^{2}\left(\bar{P},P\right)+\mathcal{W}_2^{2}\left(P,Q\right)|\tilde{Z}_{0:Kt+i-1} ,X_0].
\end{align}

First consider $\mathcal{W}_2^{2}\left(\bar{P},P\right)$. Conditioned on $\tilde{Z}_{0:Kt+i-1},Z_{0,Kt+i-1} ,X_0$, $\Sigma$ has at-most one non-zero eigenvalue. Without loss of generality, we can take $\Sigma = \nu e_1e_1^{\intercal}$ for the calculations below. From Lemma~\ref{lem:gauss_wass}, we conclude that: 

\begin{align}\mathcal{W}_2^{2}\left(\bar{P},P\right)
&\leq \tr(2\vI +\Sigma - 2\sqrt{\vI+\Sigma} ) \nonumber \\
&= 2 + \nu - 2\sqrt{1+\nu} \leq \frac{\nu^2}{4}.\label{eq:var_mismatch}
\end{align}
In the last step, we have used the following inequality which follows from the mean-value theorem: $\sqrt{1+\nu} \geq 1 + \frac{\nu}{2} - \frac{\nu^2}{8}$. We check that the conditions for Lemma~\ref{lem:sharp_wass_bound} hold for $P,Q$ (almost surely conditioned on $\tilde{Z}_{0:Kt+i-1},X_0$)  and $\nu \leq \frac{\beta^2}{K}$ to conclude: 

\begin{align}\mathcal{W}_2^{2}\left(P,Q\right) &\leq C\left[\tfrac{\beta^{10}}{K^3}+\tfrac{\beta^6}{K^2}\right] \exp(\tfrac{3\beta^2}{2})\mathbbm{1}(\beta \leq 1) + \frac{2\beta^2}{K}\mathbbm{1}(\beta > 1)\nonumber \\
&\leq C \left[\left[\tfrac{\beta^{10}}{K^3}+\tfrac{\beta^6}{K^2}\right]\mathbbm{1}(\beta \leq 1) + \frac{\beta^2}{K}\mathbbm{1}(\beta > 1)\right].
\end{align}

Combining this with Equation~\eqref{eq:var_mismatch}, we conclude that for any $r \geq 1$:

\begin{align}
    \mathcal{W}_2^2(\bar{P},\bar{Q}) &\leq C\left[\tfrac{\beta^4}{K^2} + \left[\tfrac{\beta^{10}}{K^3}+\tfrac{\beta^6}{K^2}\right]\mathbbm{1}(\beta \leq 1) + \tfrac{\beta^2}{K}\mathbbm{1}(\beta > 1)\right] \nonumber \\
&\leq C\left[\tfrac{\beta^4}{K^2} + \tfrac{\beta^{10}}{K^3}+\tfrac{\beta^6}{K^2} + \tfrac{\beta^2}{K}\mathbbm{1}(\beta > 1)\right] \nonumber \\
&\leq C\left[\tfrac{\beta^4}{K^2} + \tfrac{\beta^{10}}{K^3}+\tfrac{\beta^6}{K^2} + \tfrac{\beta^{2r}}{K}\right].
\end{align}

\end{proof}

\subsection{Finishing The Proof}
We combine Equations~\eqref{eq:kl_clain_main} and ~\eqref{eq:wass_decomp} with Lemma~\ref{lem:variance_bounds} to conclude the result of Theorem~\ref{thm:comparison_main}.


\section{Proof of Lemma~\ref{lem:sharp_wass_bound}}
\label{sec:wass_CLT}

The crude bound in Equation~\eqref{eq:crude_wass_clt} follows via a naive coupling argument. We will now sketch a proof Equation~\eqref{eq:main_wass_clt} in Lemma~\ref{lem:sharp_wass_bound} by showcasing how to modify the proof of \cite[Lemma 1.6]{zhai2018high}. Our proof deals with a specialized case compared to \cite[Lemma 1.6]{zhai2018clt}. This allows us to derive a stronger result. In this section, by `original proof', we refer to the proof in \cite{zhai2018high}.

Since $\vN$ is supported on a single dimensional sub-space almost surely, we take this direction to be $e_1$ almost surely without loss of generality. We thus take the covariance matrix of $\vN$ to be $\Sigma = \nu e_1e_1^{\intercal}$.

We generate jointly distributed random vectors $\vZ,\vZ^{\prime} \in \bR^d$ as follows: Let $\langle \vZ,e_j\rangle$ are i.i.d. standard normal random variables for $j = 2,\dots,d$ and $\langle \vZ^{\prime},e_j\rangle = \langle \vZ,e_j\rangle$ almost surely. We generate $\langle \vZ^{\prime},e_1\rangle$ to be standard normal independent of  $(\langle \vZ,e_j\rangle)_{j\geq 2}$. Let  $\langle\vN,e_1\rangle$ be independent of $\vZ^\prime$. We draw $\langle \vZ,e_1\rangle$ to be standard normally distributed and Wasserstein-2 optimally coupled to $\frac{\langle \vZ^{\prime},e_1\rangle + \langle\vN,e_1\rangle}{\sqrt{1+\nu}}$ and independent of all other random variables mentioned above. We can easily check that $(\sqrt{\vI+\Sigma}\vZ,\vZ^{\prime} + \vN)$ as defined above is a coupling between $P,Q$. 

Via this coupling, we conclude that:
\begin{align}
    \wass{2}{P}{Q} &\leq \sqrt{\bE (\sqrt{1+\nu}\langle\vZ,e_1\rangle - \langle \vZ^\prime,e_1\rangle - \langle \vN,e_1\rangle)^2} \nonumber\\
    &= \sqrt{1+\nu}\wass{2}{\langle\vZ,e_1\rangle}{\frac{\langle\vZ^{\prime},e_1\rangle + \langle\vN,e_1\rangle}{\sqrt{1+\nu}}}.\label{eq:one_d_coupling}
\end{align}

Let $Z_1 := \langle \vZ,e_1\rangle$ and $Z_1^{\prime} := \langle \vZ^{\prime},e_1\rangle$, $N_1 := \langle \vN,e_1\rangle$.
We define $m = 1+\tfrac{1}{\nu}$ (and note throughout the proof that a value of infinity can be easily handled). We see that $\frac{Z_1^{\prime} + N_1}{\sqrt{1+\nu}} = \sqrt{1-\frac{1}{m}}Z_1^{\prime} + \sqrt{ 1-\frac{1}{m}}N_1$

Note that $\sqrt{1-\frac{1}{m}}\vN$ is denoted by $Y$ in the original proof and the bound is $\|Y\|\leq \tfrac{\beta}{\sqrt{n}}$ instead of $\|\vN\| \leq \beta$ in this proof. 

Consider the function $f(x)$, which denotes the ratio of density function of $\frac{Z_1^{\prime}+N_1}{\sqrt{1+\nu}}$ to that of $Z_1$ at the point $x$. 

\begin{equation}\bE f(Z_1)^2 =
\bE \left[\exp\left(\frac{-(N_1^2 + (N_1^{\prime})^2) + 2mN_1 N_1^{\prime}}{2(m+1)} + \frac{1}{2(m^2-1)} -r(m)\right)\right]
\end{equation} 

where $N_1^{\prime}$ is an i.i.d copy of $N_1$ and $r(n) := \frac{1}{2(n^2-1)} - \frac{1}{2}\log(1+ \frac{1}{n^2-1})$. 

Define $Q_1 = \frac{-(N_1^2 + (N_1^{\prime})^2) + 2mN_1 N_1^{\prime}}{2(m+1)}+\frac{1}{2(m^2-1)} -r(m)$. We modify the estimates in Lemma 4.4 and Lemma 4.5 in the original proof in the following:

\begin{lemma}\label{lem:some_bounds}
\leavevmode
\begin{enumerate}
    \item $|Q_1| \leq \frac{m|N_1N_1^{\prime}|}{m+1} + \frac{\beta^2}{m+1} + \frac{1}{2(m^2-1)}$ almost surely. 
    \item $\bE[Q_1] = -\frac{1}{2(m^2-1)} - r(m)$.
    \item $\bE[Q_1^2] \leq \frac{m\beta^2 + 2m^2+1}{2(m^2-1)^2} $.
\end{enumerate}
\end{lemma}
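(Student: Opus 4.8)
# Proof Proposal for Lemma~\ref{lem:some_bounds}

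The plan is to establish the three bounds by direct computation, using the almost-sure bound $\|\vN\| \le \beta$ (hence $|N_1| \le \beta$), the independence of $N_1$ and $N_1'$, and the facts $\bE N_1 = 0$, $\bE N_1^2 = \nu$, together with the relation $m = 1 + \tfrac{1}{\nu}$, i.e. $\nu = \tfrac{1}{m-1}$ and $m^2 - 1 = (m-1)(m+1) = \tfrac{m+1}{\nu}$.

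\textbf{Part 1 (almost-sure bound on $|Q_1|$).} I would simply apply the triangle inequality to the definition $Q_1 = \tfrac{-(N_1^2 + (N_1')^2) + 2m N_1 N_1'}{2(m+1)} + \tfrac{1}{2(m^2-1)} - r(m)$. The term $\tfrac{2m N_1 N_1'}{2(m+1)}$ contributes $\tfrac{m|N_1 N_1'|}{m+1}$. For the remaining terms, I would bound $|N_1^2 + (N_1')^2| \le 2\beta^2$, giving $\tfrac{2\beta^2}{2(m+1)} = \tfrac{\beta^2}{m+1}$ — but the claimed bound has $\tfrac{\beta^2}{m+1}$, so in fact I should notice that one can absorb $r(m) \ge 0$ (it is nonnegative since $\tfrac{x}{1} \ge \log(1+x)$, applied with $x = \tfrac{1}{m^2-1}$) so that $|{+}\tfrac{1}{2(m^2-1)} - r(m)| \le \tfrac{1}{2(m^2-1)}$; combining these three pieces yields the stated bound. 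The minor point to get right is the factor on $\beta^2$: using $N_1^2 + (N_1')^2 \le 2\beta^2$ gives exactly $\tfrac{\beta^2}{m+1}$, matching the claim.

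\textbf{Part 2 (mean of $Q_1$).} Here I would take expectations term by term. By independence and $\bE N_1 = 0$, $\bE[2m N_1 N_1'] = 0$. Also $\bE[N_1^2 + (N_1')^2] = 2\nu = \tfrac{2}{m-1}$. Hence $\bE Q_1 = \tfrac{-2/(m-1)}{2(m+1)} + \tfrac{1}{2(m^2-1)} - r(m) = \tfrac{-1}{m^2-1} + \tfrac{1}{2(m^2-1)} - r(m) = -\tfrac{1}{2(m^2-1)} - r(m)$, which is the claimed identity.

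\textbf{Part 3 (second moment of $Q_1$).} I would expand $\bE Q_1^2$. Writing $Q_1 = \tfrac{1}{2(m+1)}\big({-}N_1^2 - (N_1')^2 + 2m N_1 N_1'\big) + c$ with $c = \tfrac{1}{2(m^2-1)} - r(m) = \bE Q_1 + \tfrac{1}{m^2-1}$... actually the cleanest route is to bound rather than compute exactly: use $Q_1^2 \le $ a sum of squares via $(a+b)^2 \le$ expansion, and control each piece using $|N_1|, |N_1'| \le \beta$ and $\bE N_1^2 = \nu$. The leading contribution will come from the cross term: $\bE\big[\big(\tfrac{2m N_1 N_1'}{2(m+1)}\big)^2\big] = \tfrac{m^2}{(m+1)^2}\nu^2 = \tfrac{m^2}{(m+1)^2}\cdot\tfrac{1}{(m-1)^2} = \tfrac{m^2}{(m^2-1)^2}$. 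The $N_1^2$, $(N_1')^2$ and constant terms contribute lower-order pieces of size $O\big(\tfrac{\beta^2 \nu}{(m+1)^2}\big)$ and $O\big(\tfrac{1}{(m^2-1)^2}\big)$, which (using $\nu = \tfrac{1}{m-1} \le 1$ when $m \ge 2$, or handling the general case with the bound $m\beta^2$ in the numerator) assemble into the claimed $\tfrac{m\beta^2 + 2m^2 + 1}{2(m^2-1)^2}$. I expect \textbf{this third part to be the main obstacle}, since matching the exact constants $m\beta^2 + 2m^2 + 1$ requires careful bookkeeping of the cross-moments (in particular using $|N_1| \le \beta$ to trade a factor of $\nu$ for $\tfrac{\beta^2}{m-1}$ where needed, and $2 \bE[N_1^2]\bE[(N_1')^2] = 2\nu^2$ terms), rather than any conceptual difficulty; I would follow the structure of Lemma 4.4–4.5 of \cite{zhai2018clt} and only deviate in that $\|\vN\| \le \beta$ replaces their $\|Y\| \le \tfrac{\beta}{\sqrt{n}}$ scaling.
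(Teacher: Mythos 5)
Your parts 1 and 2 are correct and match the paper's proof exactly: the triangle inequality with $N_1^2+(N_1')^2\le 2\beta^2$ and $0\le r(m)\le \tfrac{1}{2(m^2-1)}$ for part 1, and the direct computation using $\bE[N_1N_1']=0$ and $\bE[N_1^2+(N_1')^2]=2\nu=\tfrac{2}{m-1}$ for part 2.

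For part 3 your ingredients are right (the cross term $\tfrac{m^2}{(m+1)^2}\nu^2=\tfrac{m^2}{(m^2-1)^2}$, the trade $\bE N_1^4\le\beta^2\nu=\tfrac{\beta^2}{m-1}$, and the $2\nu^2$ term), but you leave the one load-bearing step unspecified, and a generic expansion would not deliver the stated constant. Writing $c=\tfrac{1}{2(m^2-1)}-r(m)\ge 0$, a bound of the form $(a+b)^2\le 2a^2+2b^2$ loses a factor of $2$ and an extra $2c^2$, which does not reduce to $\tfrac{m\beta^2+2m^2+1}{2(m^2-1)^2}$. The step the paper uses (following Zhai's Lemma 4.5) is the one-sided comparison $\bE Q_1^2\le \bE(Q_1-c)^2$, valid because $\bE Q_1^2=\bE(Q_1-c)^2+2c\,\bE Q_1-c^2$ and both $\bE Q_1\le 0$ and $c\ge 0$; after centering, the odd cross-moments vanish by independence and $\bE N_1=0$, and the remaining two terms are exactly $\tfrac{m^2}{(m^2-1)^2}+\tfrac{1}{4(m+1)^2}\bE(N_1^2+(N_1')^2)^2\le \tfrac{2m^2+(m-1)\beta^2+1}{2(m^2-1)^2}\le\tfrac{m\beta^2+2m^2+1}{2(m^2-1)^2}$. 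Once you make that centering inequality explicit, your proof of part 3 closes and coincides with the paper's.
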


\begin{proof} \leavevmode
\begin{enumerate}
    \item Note that $r(m)\leq \frac{1}{2(m^2-1)}$. 
By triangle inequality, we have:
    \begin{align}
    |Q_1| &\leq \frac{m|N_1N_1^{\prime}|}{m+1} + \frac{\beta^2}{m+1} + \frac{1}{2(m^2-1)}.
    \end{align}

\item A direct calculation shows the identity for $\bE Q_1$.
\item 
 Now, consider $\bE Q_1^2$. We follow the proof of \cite[Lemma 4.5]{zhai2018high}, to conclude the following inequalities. 
Since $\bE Q_1 \leq 0$ and $\frac{1}{2(m^2-1)} - r(m) \geq 0$, we have:
\begin{align}
    \bE Q_1^2 &\leq \bE (Q_1-\frac{1}{2(m^2-1)} + r(m))^2 \nonumber \\
    &= \frac{m^2}{(m+1)^2}\bE N_1^2(N_1^{\prime})^2 + \frac{1}{4(m+1)^2}\bE (N_1^2 + (N_1^{\prime})^2)^2 \nonumber \\
&= \frac{m^2}{(m^2-1)^2} + \frac{1}{4(m+1)^2}\bE (N_1^2 + (N_1^{\prime})^2)^2 \nonumber \\
&\leq \frac{m\beta^2 + 2m^2+1}{2(m^2-1)^2}.
\end{align}

\end{enumerate}

\end{proof}

 Note that the parameter $\sigma_i$ found in the original proof satisfies $\sigma_i = 1$ in our proof. We now proceed with the proof of Lemma~\ref{lem:sharp_wass_bound}. Define $R(Q) = \exp(Q) - 1 - Q - \frac{Q^2}{2}$.
From the original proof, we conclude via the Talagrand transport inequality that:

\begin{equation}\label{eq:main_talagrand}\left[\wass{2}{\langle\vZ,e_1\rangle}{\frac{\langle\vZ^{\prime},e_1\rangle + \langle\vN,e_1\rangle}{\sqrt{1+\nu}}}\right]^2 \leq 2[\bE e^{Q_1} - 1] = 2\left[ \bE[Q_1] + \frac{1}{2}\bE[Q_1^2] + \bE[R(Q_1)]\right].\end{equation}

From Lemma~\ref{lem:some_bounds}, and using the fact that $r(m) \geq 0$, we note that: $$\bE[Q_1] + \frac{1}{2}\bE[Q_1^2] \leq \frac{3+m\beta^2}{4(m^2-1)^2}. $$

Now, let us bound $\bE R(Q_1)$. Via a straightforward application of the taylor series, we have almost surely:

\begin{equation}\label{eq:residue_1}
R(Q_1) \leq \frac{|Q_1|^3\exp(|Q_1|)}{6}. \end{equation}

From Lemma~\ref{lem:some_bounds}, we conclude that $|Q_1| \leq \beta^2 + \frac{1}{2(m^2-1)}$. Since $\nu \leq \beta^2$, we must have $m = 1 + \frac{1}{\nu} \geq 1+ \frac{1}{\beta^2} $. Thus, we have $|Q_1| \leq \frac{3\beta^2}{2}$ almost surely. Using this in Equation~\eqref{eq:residue_1}, we have:

\begin{align}
\bE R(Q_1) &\leq \bE \tfrac{|Q_1|^3}{6}\exp\left(\tfrac{3\beta^2}{2}\right) \nonumber\\
&\leq \frac{\beta^2}{4}\exp(\tfrac{3\beta^2}{2}) \bE |Q_1|^2 \nonumber\\
&\leq \left[\frac{m\beta^4 + m^2\beta^2}{4(m^2-1)^2}\right]\exp(\tfrac{3\beta^2}{2}).
\end{align}

In the last step, we have used item 3 of Lemma~\ref{lem:some_bounds} along with the fact that $m\beta^2 \geq 1$.

Combining these estimates with Equation~\eqref{eq:main_talagrand}, we conclude:

\begin{equation*}\left[\wass{2}{\langle\vZ,e_1\rangle}{\frac{\langle\vZ^{\prime},e_1\rangle + \langle\vN,e_1\rangle}{\sqrt{1+\nu}}}\right]^2 \leq \left[\frac{m\beta^4 + m^2\beta^2}{2(m^2-1)^2}\right]\exp(\tfrac{3\beta^2}{2}) + \frac{3+m\beta^2}{2(m^2-1)^2}.
\end{equation*}

Now, we use the fact that $m = 1+\frac{1}{\nu}$, which implies $\frac{1}{m^2-1} \leq \frac{1}{(m-1)^2} \leq \nu^2$. Thus, we have:

\begin{equation}
    \left[\wass{2}{\langle\vZ,e_1\rangle}{\frac{\langle\vZ^{\prime},e_1\rangle + \langle\vN,e_1\rangle}{\sqrt{1+\nu}}}\right]^2 \leq \left[\frac{\beta^4\nu^3}{2} + \frac{\beta^2\nu^2}{2}\right]\exp(\tfrac{3\beta^2}{2}) + \frac{3\nu^4 + \beta^2\nu^3}{2}.
\end{equation}

Using the fact that $\beta^2 \geq \nu$, we have:
$\frac{3\nu^4 + \beta^2\nu^3}{2} \leq 2\beta^2\nu^3$. Thus, $2(1+\nu)\beta^2\nu^3 \leq 2\beta^{2}\nu^3+2\beta^4\nu^3 \leq 2(1+\nu)(\beta^2\nu^2 + \beta^4\nu^3)$. Plugging this into Equation~\eqref{eq:one_d_coupling}, we conclude the result. 
\section{Overdamped Langevin Dynamics}
\label{sec:overdamped}

In this section, we will prove Theorem~\ref{thm:overdamped_main} after developing some key results regarding \olmc. Recall that in this case $b(\vx,\tau) = -\nabla F(\vx)$ for some $F:\bR^d \to \bR$, $G_{\tfrac{\alpha}{K}} = \frac{\alpha}{K} \vI$ and $\Gamma_{\tfrac{\alpha}{K}} = \sqrt{\frac{2\alpha}{K}}\vI$. Let $N_t := \sum_{j=0}^{K-1}H_{t,j}$. Throughout this section, we assume that $\nabla F$ is $L$-Lipschitz. We will assume that $T$ is a power of $2$ in this entire section, which useful to apply Lemma~\ref{lem:norm-norm}. The results hold for any $T$ by considering the closest power of $2$ above $T$ instead. 

Recall $B_{tK+i}$ from Theorem~\ref{thm:comparison_main}. Instantiating this for Overdamped Langevin Dynamics, we conclude that for any $t,i \in \bN\cup\{0\}$ and $0\leq  i \leq K-1$, we must have:

\begin{align}
    \|B_{tK+i}\|^2 \leq \frac{L^2\alpha}{2K}\sup_{0\leq j\leq K-1} \|\hat{\tilde{X}}_{tK+j}-\tilde{X}_{tK+j}\|^2. \label{eq:bias_ub_old}
\end{align}
Here, we have used the fact that $\nabla F$ is $L$-Lipschitz.

Now, consider $\beta_{tK+i}$ in Theorem~\ref{thm:comparison_main}. For any $p \geq 1$, $t,i \in \bN\cup\{0\}$ and $0\leq  i \leq K-1$, we have:
\begin{align}
    \beta_{tK+i}^{2p} \leq (\tfrac{L^2\alpha K}{2})^{p}\sup_{0\leq j \leq K-1}\|\hat{\tilde{X}}_{tK+j}-\tilde{X}_{tK}\|^{2p}. \label{eq:var_ub_old}
\end{align}

In order to apply Theorem~\ref{thm:comparison_main}, we will now proceed to bound the quantities in the RHS of Equations~\eqref{eq:bias_ub_old} and~\eqref{eq:var_ub_old}
\subsection{Bounding the Moments}

The following lemma gives us almost sure control over quantities of interest. We refer to Section~\ref{subsec:useful_bounds_aux} for the proof. 

\begin{lemma}\label{lem:useful_bounds_aux}
Suppose the stepsize $\alpha L < 1$. Let $M_{t,k} := \sqrt{\frac{2\alpha}{K}}\sup_{0\leq i \leq k-1}\bigr\|\sum_{j=0}^{i}Z_{tK+j}\bigr\|$. Then the following hold almost surely for every $0\leq k \leq K$.
\begin{enumerate}
    \item
    $$ \sup_{0\leq i\leq k-1}\|\hat{\tilde{X}}_{tK+i}-\tilde{X}_{tK}\| \leq \alpha \|\nabla F(\tilde{X}_{tK})\| + M_{t,k}.$$
    
    \item 
    $$ \sup_{0\leq i\leq k}\|\hat{\tilde{X}}_{tK+i} - \tilde{X}_{tK+i}\| \leq \alpha L N_t\sup_{i\leq k-1}\|\hat{\tilde{X}}_{tK+i}-\tilde{X}_{tK}\|.$$

    \item $$\bE[M_{t,K}^{p}] \leq C(p)(\alpha d)^{\frac{p}{2}}. $$ 

\end{enumerate}

\end{lemma}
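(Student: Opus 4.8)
The plan is to prove all three items by directly unrolling the Overdamped iteration, for which $A_h = \vI$, $G_h = h\vI$, $\Gamma_h = \sqrt{2h}\,\vI$, and $b(\vx,\cdot) = -\nabla F(\vx)$. For item 1, specializing the cheap interpolation~\eqref{eq:cheap_interpolation} gives $\hat{\tilde{X}}_{tK+i} - \tilde{X}_{tK} = -\tfrac{\alpha i}{K}\nabla F(\tilde{X}_{tK}) + \sqrt{\tfrac{2\alpha}{K}}\sum_{j=0}^{i-1}Z_{tK+j}$; since $\tfrac{\alpha i}{K}\le \alpha$ for $i\le k-1\le K-1$ and $\sqrt{\tfrac{2\alpha}{K}}\|\sum_{j=0}^{i-1}Z_{tK+j}\| \le M_{t,k}$ (the empty sum at $i=0$ being $0$), the triangle inequality yields item 1. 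For item 2, I would unroll the refined recursion~\eqref{eq:stoc_approx}: using $G_{\alpha/K}=\tfrac{\alpha}{K}\vI$ and tracking the sign in $b=-\nabla F$, the iterates telescope to
\[
\tilde{X}_{tK+i} = \tilde{X}_{tK} - \tfrac{\alpha i}{K}\nabla F(\tilde{X}_{tK}) - \alpha\sum_{j=0}^{i-1}H_{t,j}\bigl(\nabla F(\hat{\tilde{X}}_{tK+j}) - \nabla F(\tilde{X}_{tK})\bigr) + \sqrt{\tfrac{2\alpha}{K}}\sum_{j=0}^{i-1}Z_{tK+j}.
\]
Subtracting this from the formula for $\hat{\tilde{X}}_{tK+i}$ above, the $\nabla F(\tilde{X}_{tK})$ and Gaussian terms cancel, leaving $\hat{\tilde{X}}_{tK+i} - \tilde{X}_{tK+i} = \alpha\sum_{j=0}^{i-1}H_{t,j}\bigl(\nabla F(\hat{\tilde{X}}_{tK+j}) - \nabla F(\tilde{X}_{tK})\bigr)$; applying $L$-Lipschitzness, then $\sum_{j=0}^{i-1}H_{t,j}\le N_t$ and bounding each $\|\hat{\tilde{X}}_{tK+j}-\tilde{X}_{tK}\|$ by $\sup_{0\le i\le k-1}\|\hat{\tilde{X}}_{tK+i}-\tilde{X}_{tK}\|$ gives item 2.

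For item 3, which concerns only the $\bR^d$-valued random walk $S_i := \sum_{j=0}^{i}Z_{tK+j}$, I would observe that $(S_i)$ is a martingale and $\|\cdot\|$ is convex, so $\|S_i\|$ is a nonnegative submartingale; Doob's $L^p$ maximal inequality gives $\bE\sup_{0\le i\le K-1}\|S_i\|^p \le (\tfrac{p}{p-1})^p\,\bE\|S_{K-1}\|^p$ for $p>1$ (for $p=1$, first pass to the $p=2$ bound by Jensen). Since $S_{K-1}\sim\cN(0,K\vI_d)$, one has $\bE\|S_{K-1}\|^p = K^{p/2}\,\bE\|G\|^p$ with $G\sim\cN(0,\vI_d)$, and the standard Gaussian moment bound $\bE\|G\|^p\le C(p)d^{p/2}$ then yields $\bE[M_{t,K}^p] = (2\alpha/K)^{p/2}\,\bE\sup_i\|S_i\|^p \le C(p)(\alpha d)^{p/2}$.

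None of the three parts is deep; the points requiring care are the off-by-one bookkeeping between the sums $\sum_{j=0}^{i-1}$ appearing in $\hat{\tilde{X}}_{tK+i}$ and $\sum_{j=0}^{i}$ in the definition of $M_{t,k}$, the sign convention $b=-\nabla F$ when unrolling~\eqref{eq:stoc_approx}, and invoking the right maximal inequality (with $p=1$ handled separately). The one genuinely useful structural observation is the cancellation in item 2, which identifies the discretization error $\hat{\tilde{X}}_{tK+i}-\tilde{X}_{tK+i}$ with exactly the accumulated $H_{t,j}$ correction terms, and this is what will later feed the bounds in Equations~\eqref{eq:bias_ub_old} and~\eqref{eq:var_ub_old}.
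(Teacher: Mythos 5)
Your proposal is correct and follows essentially the same route as the paper: item 1 by the triangle inequality applied to the specialization of the cheap interpolation, item 2 by identifying $\hat{\tilde{X}}_{tK+i}-\tilde{X}_{tK+i}$ with the accumulated $H_{t,j}$ correction terms and then using $L$-Lipschitzness of $\nabla F$, and item 3 by Doob's $L^p$ maximal inequality applied to the submartingale $\|\sum_{j\le i}Z_{tK+j}\|$ together with Gaussian moment bounds. Your version simply makes explicit the telescoping and the off-by-one bookkeeping that the paper leaves implicit.
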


 We will now prove the following growth estimate for the trajectory $\tilde{X}_{tK}$. We refer to Section~\ref{subsec:p_moment_control} for its proof.

\begin{lemma}\label{lem:p_moment_control}
Let $p\geq 1$ be fixed. There exists a large enough constant $\bar{C}_p$ which depends only on $p$ such that whenever $s-t \leq \frac{1}{\alpha L \bar{C}_p}$, we have:

\begin{align*}
    \sup_{t\leq h \leq s}\left[\bE\|\tilde{X}_{hK} - \tilde{X}_{tK}\|^p|\tilde{X}_{tK}\right]^{\tfrac{1}{p}} 
&\leq 3\alpha (s-t)\|\nabla F(\tilde{X}_{tK})\| + C_p\sqrt{\alpha d(s-t)}.
\end{align*}

\end{lemma}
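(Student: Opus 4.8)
The plan is to derive a one-step contraction-type bound for the increments of $\tilde{X}_{tK}$ and then sum it up via a discrete Gr\"onwall/Minkowski argument. Recall from the finishing sketch of Theorem~\ref{thm:overdamped_main} that the coarse iterates satisfy $\tilde{X}_{(h+1)K} = \tilde{X}_{hK} - \alpha(\nabla F(\tilde{X}_{hK}) + \Delta_h) + \sqrt{2\alpha}Z_h$, where $Z_h := \sqrt{\tfrac{K}{2\alpha}}\,\Gamma_{\alpha/K}^{-1}(M_{N+1})$ is standard Gaussian (from the scaling relations, $M_{N+1}\sim\cN(0,\tfrac{2\alpha}{K}\vI\cdot\tfrac{K}{1})$, i.e. $\cN(0,2\alpha\vI)$) and $\Delta_h$ is the extra drift error coming from the $\mathsf{Corr}$ term. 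The first step is to get an almost sure or $L^p$ bound on $\|\Delta_h\|$: by the definition of $\mathsf{Corr}$ together with $L$-smoothness, $\|\Delta_h\| \lesssim L\,N_t\sup_{j}\|\hat{\tilde{X}}_{hK+j}-\tilde{X}_{hK}\|$, and then Lemma~\ref{lem:useful_bounds_aux} (items 1 and 3) gives $[\bE\|\Delta_h\|^p \mid \tilde{X}_{hK}]^{1/p} \lesssim \alpha L\|\nabla F(\tilde{X}_{hK})\| + L\sqrt{\alpha d}$, using that $\bE N_t = (K-1)/K \le 1$ and $N_t$ concentrates (Poisson-like tails). Thus the effective drift at step $h$ is $\alpha\nabla F(\tilde{X}_{hK})$ up to a multiplicative $O(\alpha L)$ perturbation plus an $O(\alpha L\sqrt{\alpha d})$ additive term, which is lower order.

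The second step is the telescoping bound. Write $\tilde{X}_{hK}-\tilde{X}_{tK} = -\alpha\sum_{\ell=t}^{h-1}(\nabla F(\tilde{X}_{\ell K}) + \Delta_\ell) + \sqrt{2\alpha}\sum_{\ell=t}^{h-1}Z_\ell$. Apply Minkowski's inequality in $L^p$ (conditioned on $\tilde{X}_{tK}$): the Gaussian sum contributes $\sqrt{2\alpha}\cdot\sqrt{h-t}\cdot C_p\sqrt{d}$ by standard Gaussian moment bounds; the $\Delta_\ell$ sum contributes $\lesssim \alpha L (h-t)\sup_{t\le \ell\le h}[\bE\|\nabla F(\tilde{X}_{\ell K})\|^p]^{1/p} + \alpha L (h-t)\sqrt{\alpha d}$; and the main drift term is handled by $\|\nabla F(\tilde{X}_{\ell K})\| \le \|\nabla F(\tilde{X}_{tK})\| + L\|\tilde{X}_{\ell K}-\tilde{X}_{tK}\|$ (again $L$-smoothness), so that $\alpha\sum_{\ell=t}^{h-1}[\bE\|\nabla F(\tilde{X}_{\ell K})\|^p]^{1/p} \le \alpha(h-t)\|\nabla F(\tilde{X}_{tK})\| + \alpha L\sum_{\ell} \Phi_\ell$ where $\Phi_\ell := [\bE\|\tilde{X}_{\ell K}-\tilde{X}_{tK}\|^p \mid \tilde{X}_{tK}]^{1/p}$. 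Collecting terms, with $\Psi_h := \sup_{t\le \ell\le h}\Phi_\ell$, one obtains an inequality of the form
\begin{equation*}
\Psi_s \le \alpha(s-t)\|\nabla F(\tilde{X}_{tK})\| + C_p\sqrt{\alpha d (s-t)} + C\alpha L(s-t)\Psi_s + (\text{lower order}).
\end{equation*}

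The third step is to absorb the $C\alpha L(s-t)\Psi_s$ term on the left: provided $s-t \le \tfrac{1}{\bar C_p \alpha L}$ with $\bar C_p$ chosen so that $C\alpha L(s-t) \le \tfrac12$ (and also large enough to kill the lower-order multiplicative feedback terms involving $\alpha L(s-t)\cdot\alpha(s-t)\|\nabla F\|$, which fold into the factor $3$), we get $\Psi_s \le 3\alpha(s-t)\|\nabla F(\tilde{X}_{tK})\| + C_p\sqrt{\alpha d(s-t)}$, as claimed. The main obstacle I anticipate is bookkeeping the conditional-moment dependence cleanly: the $\Delta_\ell$ and the Gaussian increments at step $\ell$ are not independent of $\tilde{X}_{\ell K}$, so one must consistently condition on $\tilde{X}_{tK}$, take $L^p$ norms, and use the tower property together with Lemma~\ref{lem:useful_bounds_aux} at each step rather than naively pulling expectations through — and one must verify the $N_t$ moments and the Gaussian-sum moments at the right order $p$ (the hypothesis $T$ a power of two and Lemma~\ref{lem:norm-norm}, referenced in the section preamble, is presumably what controls $\bE\sup_{t\le\ell\le h}\|\sum Z\|^p$ uniformly). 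The smoothness-only toolkit (no isoperimetry, no convexity) is exactly what makes this a Gr\"onwall argument rather than anything subtler.
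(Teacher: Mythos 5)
Your proposal is correct and follows essentially the same route as the paper's proof: rewrite the $K$-step block update as $\tilde{X}_{(h+1)K}=\tilde{X}_{hK}+\alpha b(\tilde{X}_{hK})+\alpha\Delta_h+\sqrt{2\alpha}\bar Y_h$, telescope, apply Minkowski for the conditional $L^p$ norm, bound $\cM_p(\Delta_h)$ via $L$-Lipschitzness, the independence and bounded moments of $N_h$ (Lemma~\ref{lem:binom_moments}), and Lemma~\ref{lem:useful_bounds_aux}, control $\nabla F(\tilde X_{hK})$ by $\nabla F(\tilde X_{tK})$ plus $L$ times the increment, and absorb the resulting $O(\alpha L(s-t))\sup$ term under $s-t\le \tfrac{1}{\bar C_p\alpha L}$. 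The only cosmetic difference is that the Gaussian sum is itself Gaussian, so its $p$-th moment is bounded directly without any appeal to Lemma~\ref{lem:norm-norm}.
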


We apply Lemma~\ref{lem:norm-norm} along with Lemma~\ref{lem:p_moment_control} to conclude the following result which is proved in Section~\ref{subsec:p_moment_second_moment}.  
\begin{lemma}\label{lem:p_moment_second_moment}
There exists a constant $c_p$ such that whenever $\alpha L \leq c_p$, we have: 
\begin{equation}
  \sum_{t=1}^{T}\bE\|\nabla F(\tilde{X}_{tK})\|^{2p} \leq C_p L^p d^p T + C_p (\alpha L)^{p-1} \bE (\sum_{t=1}^{T}\|\nabla F(\tilde{X}_{tK})\|^{2})^p.
\end{equation}

\end{lemma}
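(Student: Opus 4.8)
The plan is to run a discrete analogue of a Lyapunov / descent argument for $F$ along the trajectory $(\tilde{X}_{tK})_{t\geq 0}$, using that, as noted in the proof sketch, one step of \plmc~behaves like a step of stochastic gradient descent on $F$ with large noise $\sqrt{2\alpha}$ plus a small error term $\Delta_t$ coming from the Poisson midpoint correction. Concretely, I would write $\tilde{X}_{(t+1)K} = \tilde{X}_{tK} - \alpha(\nabla F(\tilde{X}_{tK}) + \Delta_t) + \sqrt{2\alpha} Z_t$ with $Z_t \sim \cN(0,\vI)$, where $\Delta_t$ captures the difference between $\sum_{i} G_{\alpha/K} b(\hat{\tilde X}_{tK+i})$-type terms and $\alpha \nabla F(\tilde X_{tK})$; by $L$-smoothness and Lemma~\ref{lem:useful_bounds_aux} (items 1 and 2), $\|\Delta_t\|$ is controlled by $\alpha L N_t (\alpha \|\nabla F(\tilde{X}_{tK})\| + M_{t,K})$, which is lower-order in $\alpha L$ relative to $\nabla F(\tilde X_{tK})$.

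Next I would apply the second-order Taylor expansion of $F$ (valid since $\nabla F$ is $L$-Lipschitz):
\begin{align*}
F(\tilde{X}_{(t+1)K}) \leq F(\tilde{X}_{tK}) + \langle \nabla F(\tilde{X}_{tK}), \tilde{X}_{(t+1)K} - \tilde{X}_{tK}\rangle + \tfrac{L}{2}\|\tilde{X}_{(t+1)K} - \tilde{X}_{tK}\|^2.
\end{align*}
Taking conditional expectation given $\tilde X_{tK}$, the Gaussian term contributes $-\alpha\|\nabla F(\tilde X_{tK})\|^2$ from the inner product (the noise being mean zero) plus $\tfrac{L}{2}\cdot 2\alpha d = \alpha L d$ from the quadratic term, the cross terms with $\Delta_t$ are handled by Cauchy–Schwarz and absorbed (using $\alpha L$ small) into a fraction of $-\alpha\|\nabla F\|^2$ plus $O(\alpha L d)$ and higher moments of $M_{t,K}$ bounded via Lemma~\ref{lem:useful_bounds_aux}(3). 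This yields, after rearranging and summing over $t=1,\dots,T$ and telescoping, the $p=1$ bound $\sum_t \bE\|\nabla F(\tilde X_{tK})\|^2 \lesssim \tfrac{F(X_0)-\inf F}{\alpha} + LTd + (\text{l.o.t.})$. For general $p$, I would instead track $(F(\tilde X_{tK}) - \inf F)^p$ or, more in the spirit of the stated inequality, bound $\bE \|\nabla F(\tilde X_{tK})\|^{2p}$ directly: from the per-step descent inequality, $\|\nabla F(\tilde X_{tK})\|^{2p}$ is controlled by $\big(\tfrac{1}{\alpha}(F(\tilde X_{tK}) - F(\tilde X_{(t+1)K})) + Ld + \text{noise}\big)^p$; raising to the $p$-th power, using $\|\nabla F(\tilde X_{(t+1)K}) - \nabla F(\tilde X_{tK})\| \le L\|\tilde X_{(t+1)K}-\tilde X_{tK}\|$ together with Lemma~\ref{lem:p_moment_control} to relate consecutive gradient norms, and summing, produces a term $L^p d^p T$, a term $(\alpha L)^{p-1}$ times $\bE(\sum_t \|\nabla F(\tilde X_{tK})\|^2)^p$ (this is where Lemma~\ref{lem:norm-norm}, a norm-of-norms / maximal-type inequality, converts a sum of $2p$-th powers into a $p$-th power of a sum of squares at the cost of the factor $(\alpha L)^{p-1}$), and lower-order terms in $\alpha L$ that are absorbed when $\alpha L \le c_p$.

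The main obstacle is the bookkeeping in the general-$p$ step: one must carefully propagate the "noise" contributions $M_{t,K}$ and the midpoint error $\Delta_t$ through the $p$-th power without losing the factor $(\alpha L)^{p-1}$, and the application of Lemma~\ref{lem:norm-norm} requires writing $\sum_t \|\nabla F(\tilde X_{tK})\|^{2p}$ in a form where a recursive/dyadic maximal inequality applies (this is why $T$ is assumed to be a power of $2$). I would expect the cross-term estimates — bounding $\bE[\langle \nabla F(\tilde X_{tK}), \Delta_t\rangle \cdot (\cdots)^{p-1}]$ and similar mixed moments — to be the delicate part, relying on Hölder's inequality in $p$ together with the almost-sure bounds of Lemma~\ref{lem:useful_bounds_aux} and the moment bound on $M_{t,K}$, and on choosing $c_p$ small enough that all the error terms with positive powers of $\alpha L$ can be absorbed into the main terms.
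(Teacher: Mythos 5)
Your proposal mixes two separate results of the paper, and the route you put front and center (the Taylor/descent argument on $F$) is not the one that proves this lemma --- in fact it cannot yield the stated bound. A descent argument telescopes to a term of the form $\alpha^{-p}\,\bE|F(X_0)-F(\tilde X_{TK})|^p$, which appears nowhere in the statement of Lemma~\ref{lem:p_moment_second_moment}; that argument is the content of the companion Lemma~\ref{lem:lmc_pth}, which is what actually controls $\bE\bigl(\sum_t\|\nabla F(\tilde X_{tK})\|^2\bigr)^p$ in terms of $F(X_0)-\inf F$. The present lemma is a purely ``self-bounding'' moment-interpolation statement: it converts the sum of $2p$-th moments of the gradients into the $p$-th power of the sum of second moments, plus a $L^pd^pT$ term, with no reference to the initialization or to any decrease of $F$. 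No Lyapunov function, no telescoping, and no handling of the cross terms $\langle\nabla F(\tilde X_{tK}),\Delta_t\rangle$ is needed here.

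The actual proof is the mechanism you only gesture at in your final paragraph: apply Lemma~\ref{lem:norm-norm} directly with $a_t:=\nabla F(\tilde X_{(t-1)K})$ and block length $N=\Theta\bigl(\tfrac{1}{\alpha L}\bigr)$ (a power of $2$). The within-block differences are bounded by $\|a_j-a_{j'}\|\le L\|\tilde X_{jK}-\tilde X_{j'K}\|$ together with the growth estimate of Lemma~\ref{lem:p_moment_control}, which gives
$\tfrac{2}{N}\sum_{j,j'\in\mathcal T_k}\bE\|a_j-a_{j'}\|^{2p}\lesssim L^{2p}(\alpha d N)^pN + (\alpha LN)^{2p}\sum_{j\in\mathcal T_k}\bE\|a_j\|^{2p}$.
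With $N\approx \tfrac{1}{\alpha L\bar C_{2p}}$ the coefficient $(\alpha LN)^{2p}$ is small enough to absorb the second term into the left-hand side, the first term becomes $C_pL^pd^pT$, and the remaining $N^{1-p}\bE(\sum_j\|a_j\|^2)^p$ term from Lemma~\ref{lem:norm-norm} is exactly $C_p(\alpha L)^{p-1}\bE(\sum_t\|\nabla F(\tilde X_{tK})\|^2)^p$. If you strip out the descent machinery and execute just this block-decomposition step, your argument matches the paper's; as written, the proposal would prove a different (and for this purpose wrong-shaped) inequality.
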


While Lemma~\ref{lem:p_moment_second_moment} controlled $\sum_{t=1}^{T}\bE\|\nabla F(\tilde{X}_{tK})\|^{2p}$ in terms of $\sum_{t=1}^{T}\|\nabla F(\tilde{X}_{tK})\|^{2}$, in the Lemma below we control $\sum_{t=1}^{T}\|\nabla F(\tilde{X}_{tK})\|^{2}$. We refer to Section~\ref{subsec:lmc_pth} for its proof.
 \begin{lemma}\label{lem:lmc_pth}
Suppose $p \geq 1$ be arbitrary. There exists $c_p > 0$ small enough such that whenever $\alpha L < c_p$, we have for some constant $C_p > 0$ depending only on $p$:
 \begin{align}
 \bE(\sum_{t=0}^{T-1}\|\nabla F(\tilde{X}_{tK})\|^2)^p &\leq \frac{C_p}{\alpha^p}\bE|(F(X_0)-F(\tilde{X}_{KT}))^{+}|^{p} + C_p T^{p-1}L^{2p}\alpha^{2p}\bE\sum_{t=0}^{T-1}\|\nabla F (\tilde{X}_{tK})\|^{2p} \nonumber \\
&\quad+ C_p L^{p}d^p T^p  + \frac{C_p}{\alpha^p}.
\end{align}

\end{lemma}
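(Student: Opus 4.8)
\textbf{Proof plan for Lemma~\ref{lem:lmc_pth}.}

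The plan is to exploit the descent structure of the trajectory. Recall that unrolling Equation~\eqref{eq:stoc_approx} over one outer step gives a one-step update of the form $\tilde{X}_{(t+1)K} = \tilde{X}_{tK} - \alpha(\nabla F(\tilde{X}_{tK}) + \Delta_t) + \sqrt{2\alpha} Z_t$, where $Z_t \sim \cN(0,\vI)$ and $\Delta_t$ is the Poisson Midpoint correction error, which by Lemma~\ref{lem:useful_bounds_aux} is controlled in terms of $\alpha L N_t(\alpha\|\nabla F(\tilde{X}_{tK})\| + M_{t,K})$. First I would write the Taylor expansion of $F$ along this update: using $L$-smoothness, $F(\tilde{X}_{(t+1)K}) \leq F(\tilde{X}_{tK}) + \langle \nabla F(\tilde{X}_{tK}), \tilde{X}_{(t+1)K}-\tilde{X}_{tK}\rangle + \tfrac{L}{2}\|\tilde{X}_{(t+1)K}-\tilde{X}_{tK}\|^2$. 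The inner product produces the key term $-\alpha\|\nabla F(\tilde{X}_{tK})\|^2$, plus $-\alpha\langle\nabla F,\Delta_t\rangle$ and $\sqrt{2\alpha}\langle\nabla F, Z_t\rangle$; the quadratic term contributes $O(\alpha Ld)$ in expectation plus lower-order pieces involving $\alpha^2\|\nabla F\|^2$, $\|\Delta_t\|^2$, and cross terms.

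Next I would telescope. Summing the one-step bound from $t=0$ to $T-1$ and rearranging yields (schematically) $\alpha\sum_{t=0}^{T-1}\|\nabla F(\tilde{X}_{tK})\|^2 \lesssim (F(X_0) - F(\tilde{X}_{KT})) + (\text{martingale terms}) + \alpha L d T + (\text{lower order})$. To get the $p$-th moment bound I would raise this to the power $p$, so the main task becomes bounding $\bE|(F(X_0)-F(\tilde{X}_{KT}))^{+}|^p$ (which appears directly in the statement as the leading term, divided by $\alpha^p$), the $p$-th moment of the martingale sum $\sum_t \sqrt{2\alpha}\langle\nabla F(\tilde{X}_{tK}),Z_t\rangle$ via Burkholder--Davis--Gundy (this gives $\lesssim (\alpha \sum_t \|\nabla F(\tilde{X}_{tK})\|^2)^{p/2}$, which is subsumed into the left side by Young's inequality since it has a smaller power), and the error terms from $\Delta_t$. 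For the $\Delta_t$ contributions I would use the almost-sure bounds of Lemma~\ref{lem:useful_bounds_aux}, take $p$-th powers, use Hölder across the sum over $t$ to pull out the $T^{p-1}$ factor, and apply the moment bound $\bE[M_{t,K}^p]\leq C(p)(\alpha d)^{p/2}$; this produces the terms $C_p T^{p-1} L^{2p}\alpha^{2p}\bE\sum_t\|\nabla F(\tilde{X}_{tK})\|^{2p}$ and $C_p L^p d^p T^p$ (and the additive $C_p/\alpha^p$ absorbs stray constants). Throughout, the condition $\alpha L < c_p$ is used to ensure the $-\alpha\|\nabla F\|^2$ term dominates all the $O(\alpha^2\|\nabla F\|^2)$ pieces so they can be absorbed to the left-hand side after taking $p$-th powers and applying Young's inequality.

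The main obstacle I anticipate is bookkeeping the cross terms carefully when taking the $p$-th power: $(a_1 + \cdots + a_m)^p \leq m^{p-1}\sum a_i^p$, and each $a_i$ is itself a sum over $t$ requiring another Hölder step, so one must be careful to not lose more than a $T^{p-1}$ factor and to verify every term either matches one of the four terms in the statement or is of strictly smaller order in $\alpha$ (hence absorbable). A secondary subtlety is that $F(\tilde{X}_{KT})$ is not lower bounded a priori in a way that makes $(F(X_0)-F(\tilde{X}_{KT}))$ automatically controllable — but since only the positive part $(\cdot)^+$ appears in the statement, and since $F(X_0) - F(\tilde{X}_{KT}) \leq F(X_0) - \inf_\vx F(\vx)$ is finite under Assumption~\ref{as:smooth} (as $\vx^*$ is a global minimizer), this term is handled directly by keeping it on the right-hand side as stated rather than attempting to bound it further here.
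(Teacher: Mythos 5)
Your proposal is correct and follows essentially the same route as the paper: a one-step $L$-smoothness descent inequality, telescoping, absorbing the $\alpha^2\|\nabla F\|^2$ and cross terms via AM--GM under $\alpha L < c_p$, raising to the $p$-th power, handling the Gaussian martingale term with a BDG-type moment bound (the paper's Lemma~\ref{lem:mart_conc}) plus Young's inequality to absorb it into the left side, and bounding the $\Delta_t$ contribution via Lemma~\ref{lem:useful_bounds_aux}, Jensen/H\"older over $t$ (yielding the $T^{p-1}$ factor), and the moments of $M_{t,K}$ and $N_t$. No substantive differences.
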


We combine the results of Lemma~\ref{lem:p_moment_second_moment} and Lemma~\ref{lem:lmc_pth} to conclude the following result:

\begin{lemma}
Given $p \geq 1$ arbitrary, there exists a constant $c_p > 0$ depending only $p$ such that whenever $\alpha L < c_p$ and $\alpha^{3p-1} L^{3p-1} T^{p-1} < c_p$, we must have:
$$ \sum_{t=1}^{T}\bE\|\nabla F(\tilde{X}_{tK})\|^{2p} \leq C_pL^pd^p T(1+(\alpha L T)^{p-1}) + \frac{C_pL^{p-1}}{\alpha}\left[\bE|(F(X_0)-F(\tilde{X}_{KT}))^{+}|^{p}+1\right].$$

\end{lemma}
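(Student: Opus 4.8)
The plan is to combine Lemmas~\ref{lem:p_moment_second_moment} and~\ref{lem:lmc_pth} into a single self-bounding inequality for $\mathcal{S} := \sum_{t=1}^{T}\bE\|\nabla F(\tilde{X}_{tK})\|^{2p}$ and then absorb the $\mathcal{S}$ that appears on the right-hand side back into the left. Throughout I treat the mismatch between the windows $\{1,\dots,T\}$ and $\{0,\dots,T-1\}$ occurring in the two lemmas as harmless: the two sums differ by a single endpoint term, and $\bE\|\nabla F(X_0)\|^{2p}\le L^{2p}\bE\|X_0-\vx^{*}\|^{2p}\lesssim_p L^{2p}C_{\mathsf{init}}^{2p}d^{p}$ is finite and of the right order by Assumptions~\ref{as:smooth} and~\ref{as:init_moment} (using $\nabla F(\vx^{*})=0$ and $L$-Lipschitzness of $\nabla F$), so it folds into the final bound as a lower-order term.

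Next I would substitute the estimate of Lemma~\ref{lem:lmc_pth} on $\bE\big(\sum_{t=0}^{T-1}\|\nabla F(\tilde{X}_{tK})\|^2\big)^p$ into the second summand on the right-hand side of Lemma~\ref{lem:p_moment_second_moment}, which gives
\[
\mathcal{S} \;\le\; C_p L^p d^p T \;+\; C_p(\alpha L)^{p-1}\!\left[\tfrac{C_p}{\alpha^p}\bE|(F(X_0)-F(\tilde{X}_{KT}))^{+}|^p + C_p T^{p-1}L^{2p}\alpha^{2p}\,\mathcal{S}' + C_p L^p d^p T^p + \tfrac{C_p}{\alpha^p}\right],
\]
where $\mathcal{S}'=\bE\sum_{t=0}^{T-1}\|\nabla F(\tilde{X}_{tK})\|^{2p}\le \mathcal{S}+L^{2p}C_{\mathsf{init}}^{2p}d^{p}$ by the previous paragraph. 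The crucial observation is that the coefficient multiplying $\mathcal{S}'$ equals $C_p^2(\alpha L)^{p-1}T^{p-1}L^{2p}\alpha^{2p}=C_p^2\,\alpha^{3p-1}L^{3p-1}T^{p-1}$ --- precisely the quantity the hypothesis constrains --- so choosing $c_p$ small enough (say $c_p\le 1/(2C_p^2)$, and also below the thresholds required by Lemmas~\ref{lem:p_moment_second_moment} and~\ref{lem:lmc_pth} to apply) makes it at most $\tfrac12$, allowing $\tfrac12\mathcal{S}$ to be moved to the left-hand side.

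After this absorption I would simplify the surviving terms using $\alpha L<1$: the identity $(\alpha L)^{p-1}\alpha^{-p}=L^{p-1}/\alpha$ turns the first and last bracketed terms into $\tfrac{C_pL^{p-1}}{\alpha}\big(\bE|(F(X_0)-F(\tilde{X}_{KT}))^{+}|^p+1\big)$, while $(\alpha L)^{p-1}L^p d^p T^p = L^p d^p T\,(\alpha L T)^{p-1}$ combines with the leading $C_pL^pd^pT$ to produce $C_pL^pd^pT\big(1+(\alpha LT)^{p-1}\big)$; the stray $L^{2p}C_{\mathsf{init}}^{2p}d^p$ from the window reconciliation is of lower order and folds into these, and relabelling constants (still depending only on $p$) gives the claim. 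I expect the only delicate point to be this bookkeeping --- confirming that the coefficient of the self-term is exactly $C_p^2\alpha^{3p-1}L^{3p-1}T^{p-1}$, so that the hypothesis is stated with the correct exponents, and tracking how the compounded $C_p$'s remain $p$-dependent only --- while everything else is the routine absorption step.
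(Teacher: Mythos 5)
Your proposal is correct and is exactly the paper's argument: the paper simply substitutes the bound of Lemma~\ref{lem:lmc_pth} into the self-referential term of Lemma~\ref{lem:p_moment_second_moment}, notes that the resulting coefficient of the $2p$-th moment sum is $C_p\alpha^{3p-1}L^{3p-1}T^{p-1}$, and absorbs it under the stated smallness hypothesis, with precisely the algebraic simplifications $(\alpha L)^{p-1}\alpha^{-p}=L^{p-1}/\alpha$ and $(\alpha L)^{p-1}L^pd^pT^p=L^pd^pT(\alpha LT)^{p-1}$ that you identify. The only cosmetic point is that the index-window mismatch involves both endpoints ($t=0$ and $t=T$), not just $t=0$, but this is harmless for the same reason you give and the paper glosses over it entirely.
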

\begin{lemma}\label{lem:bias_variance_moment_bounds} \leavevmode
\begin{enumerate}
    \item     $$\bE \|B_{tK+i}\|^2 \leq \frac{CL^4\alpha^5}{K}\bE\|\nabla F(\tilde{X}_{tK})\|^2 + \frac{CL^4\alpha^4}{K}d. $$
    \item $$\bE \beta_{tK+i}^{2p} \leq C_p\left[L^{2p}\alpha^{3p}K^p \bE\|\nabla F(\tilde{X}_{tK})\|^{2p} + L^{2p}\alpha^{2p}K^p d^p\right].$$
\end{enumerate}

\end{lemma}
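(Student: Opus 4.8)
The plan is to chain together the pointwise bounds already recorded in Equations~\eqref{eq:bias_ub_old} and~\eqref{eq:var_ub_old} (which encode the $L$-Lipschitzness of $\nabla F$ together with the \olmc~values $G_{\tfrac{\alpha}{K}}=\tfrac{\alpha}{K}\vI$, $\Gamma_{\tfrac{\alpha}{K}}=\sqrt{2\alpha/K}\,\vI$) with the almost-sure estimates of Lemma~\ref{lem:useful_bounds_aux}, and then take expectations. The only probabilistic input is that under \textbf{OPTION 1} the number of active midpoints $N_t := \sum_{j=0}^{K-1}H_{t,j}$ is $\bin(K,\tfrac{1}{K})$ and independent of $\tilde{X}_{tK}$ and of the Gaussians $Z_{tK},\dots,Z_{tK+K-1}$.

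For item~(2) I would start from~\eqref{eq:var_ub_old}, bound the supremum $\sup_{0\le j\le K-1}\|\hat{\tilde{X}}_{tK+j}-\tilde{X}_{tK}\|$ by $\alpha\|\nabla F(\tilde{X}_{tK})\|+M_{t,K}$ using item~1 of Lemma~\ref{lem:useful_bounds_aux}, expand the $2p$-th power using $(a+b)^{2p}\le 2^{2p-1}(a^{2p}+b^{2p})$, take expectations, and close with item~3 of Lemma~\ref{lem:useful_bounds_aux} at exponent $2p$, i.e.\ $\bE M_{t,K}^{2p}\le C(2p)(\alpha d)^{p}$. This produces $\bE\beta_{tK+i}^{2p}\le C_p(L^2\alpha K)^p\bigl(\alpha^{2p}\bE\|\nabla F(\tilde{X}_{tK})\|^{2p}+(\alpha d)^p\bigr)$, which is exactly the stated bound.

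For item~(1) I would start from~\eqref{eq:bias_ub_old}, apply item~2 of Lemma~\ref{lem:useful_bounds_aux} to pass from $\sup_j\|\hat{\tilde{X}}_{tK+j}-\tilde{X}_{tK+j}\|$ to $\alpha L N_t\sup_j\|\hat{\tilde{X}}_{tK+j}-\tilde{X}_{tK}\|$, then item~1 to control the remaining supremum by $\alpha\|\nabla F(\tilde{X}_{tK})\|+M_{t,K}$; squaring yields, almost surely,
\[
\|B_{tK+i}\|^2 \le \frac{2L^4\alpha^3}{K}\,N_t^2\bigl(\alpha^2\|\nabla F(\tilde{X}_{tK})\|^2+M_{t,K}^2\bigr).
\]
The one step worth stating carefully — and essentially the only nonmechanical point in the whole argument — is the expectation: since the cheap interpolation $\hat{\tilde{X}}_{tK+j}$ of~\eqref{eq:cheap_interpolation} is a function of $\tilde{X}_{tK}$ and $Z_{tK},\dots,Z_{tK+j-1}$ only, neither $M_{t,K}$ nor $\nabla F(\tilde{X}_{tK})$ depends on $H_{t,\cdot}$, so $N_t$ factors out of the expectation with $\bE N_t^2 = \tfrac{K-1}{K}+1\le 2$. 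Using $\bE M_{t,K}^2\le C(2)\,\alpha d$ from item~3 of Lemma~\ref{lem:useful_bounds_aux} then gives $\bE\|B_{tK+i}\|^2\le \tfrac{2L^4\alpha^5}{K}\bE\|\nabla F(\tilde{X}_{tK})\|^2+\tfrac{CL^4\alpha^4}{K}d$, as claimed. I do not anticipate any genuine obstacle: the result is a short assembly of Lemma~\ref{lem:useful_bounds_aux} together with this independence observation.
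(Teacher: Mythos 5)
Your proposal is correct and follows essentially the same route as the paper: Equations~\eqref{eq:bias_ub_old}--\eqref{eq:var_ub_old} chained with items 1--3 of Lemma~\ref{lem:useful_bounds_aux}, factoring out $N_t^2$ via its independence from $\sup_j\|\hat{\tilde{X}}_{tK+j}-\tilde{X}_{tK}\|$ and $\bE N_t^2\le 2$. The independence observation you flag as the one nonmechanical step is indeed the same one the paper uses implicitly when it replaces $\bE\bigl[N_t^2\sup_j\|\cdot\|^2\bigr]$ by $2\,\bE\sup_j\|\cdot\|^2$.
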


\begin{proof}
\begin{enumerate}
    \item Using Equation~\eqref{eq:bias_ub_old}, we have:
    \begin{align}
        \bE \|B_{tK+i}\|^2 &\leq \frac{L^2\alpha}{2K}\bE \sup_{0\leq j \leq K-1}\|\hat{\tilde{X}}_{tK+j}-\tilde{X}_{tK+j}\|^2 \nonumber \\
&\leq \frac{L^4\alpha^3}{2K}\bE N_t^2 \sup_{0\leq j \leq K-1}\|\hat{\tilde{X}}_{tK+j}-\tilde{X}_{tK}\|^2 \nonumber \\ 
&\leq \frac{L^4\alpha^3}{K}\bE \sup_{0\leq j \leq K-1}\|\hat{\tilde{X}}_{tK+j}-\tilde{X}_{tK}\|^2 \nonumber \\
&\leq \frac{CL^4\alpha^3}{K}\left[\alpha^2\bE\|\nabla F(\tilde{X}_{tK})\|^2 + \alpha d\right].
    \end{align}

    \item We use Equation~\eqref{eq:var_ub_old} and proceed as in item 1. 
\end{enumerate}
\end{proof}

\subsection{Finishing The Proof}
For $p \geq 1$, we define $\Delta^{(p)} := \bE [(F(X_0)-F(\vx^{*}))^p]$. Let $\lesssim$ denote $\leq$ up to a universal positive multiplicative constant on the RHS. We now combine Lemma~\ref{lem:bias_variance_moment_bounds}, Lemma~\ref{lem:lmc_pth} with Theorem~\ref{thm:comparison_main} (taking $r = 7$) to conclude the following bound:

\begin{align}&\KL{\mathsf{Law}((X^{\mathsf{P}}_{t})_{0\leq t \leq T})}{\mathsf{Law}((X_{Kt})_{0\leq t \leq T})} \nonumber \\ &\lesssim   L^4\alpha^4 (\Delta^{(1)} + 1) + L^5\alpha^5K(\Delta^{(2)} + 1)+ L^8\alpha^8K^2(\Delta^{(3)} + 1) + L^{14}\alpha^{14} K^3 (\Delta^{(5)} +1) \nonumber \\ &\quad + L^{20}\alpha^{20}K^7(\Delta^{(7)} + 1)  + L^4\alpha^4Kd^2T + L^7\alpha^7 K d^2 T^2 + L^6\alpha^6 K^2 d^3 T + L^{11}\alpha^{11}K^2d^3T^3 \nonumber \\
&\quad + L^{10}\alpha^{10}K^3 d^5 T + L^{19}\alpha^{19}K^3 d^5 T^5  + L^{14}\alpha^{14}K^7 d^7 T + L^{27}\alpha^{27}K^7d^7 T^7. \label{eq:olmc_full_result}
 \end{align}

\section{Underdamped Langevin Dynamics}
\label{sec:underdamped}
In this section, we will prove Theorem~\ref{thm:underdamped_main} after developing some key results regarding \ulmc. We will assume that $T$ is a power of $2$ in this entire section, which useful to apply Lemma~\ref{lem:norm-norm}. The results hold for any $T$ by considering the closest power of $2$ above $T$ instead. Recall that in the case of \ulmc, we have:

$$A_h := \begin{bmatrix} \vI_d & \tfrac{1}{\gamma}(1-e^{-\gamma h})\vI_d\\ 0 &
e^{-\gamma h}\vI_d\end{bmatrix};\quad G_h := \begin{bmatrix} \tfrac{1}{\gamma}(h-\tfrac{1}{\gamma}(1-e^{-\gamma h}))\vI_d & 0\\ \tfrac{1}{\gamma}(1-e^{-\gamma h})\vI_d & 0 \end{bmatrix};\quad b(X_t  ) := \begin{bmatrix} -\nabla F(U_t) \\ 0\end{bmatrix};$$

$$\Gamma_h^{2} := \begin{bmatrix} \frac{2}{\gamma}\left(h - \frac{2}{\gamma}(1-e^{-\gamma h}) + \frac{1}{2\gamma}(1-e^{-2\gamma h})\right) \vI_d & \tfrac{1}{\gamma}(1-2e^{-\gamma h} + e^{-2\gamma h})\vI_d \\ \tfrac{1}{\gamma}(1-2e^{-\gamma h}+ e^{-2\gamma h})\vI_d & (1-e^{-2\gamma h})\vI_d\end{bmatrix}.$$

Throughout this section, we assume that $\nabla F(\cdot)$ is $L$-Lipshitz. We let $N_t := \sum_{i=0}^{K-1}H_{t,i}$. We let $\Pi$ be the projector to the first $d$-dimensions in the standard basis. That is, $\Pi := \begin{bmatrix}\vI_d & 0 \\ 0 & 0\end{bmatrix}$.

The $2d$ dimensional space can be decomposed into position (sub-space spanned by the first $d$ standard basis vectors) and velocity (sub-space spanned by the last $d$ standard basis vectors) sub-spaces. We use the convention that $X \in \bR^{2d}$ is such that $X = \begin{bmatrix} U \\ V\end{bmatrix}$ where $U \in \bR^d$ is the position and $V \in \bR^{d}$ is the velocity. Throughout this section, we implicity let $\tilde{X}_{\tau} = \begin{bmatrix}\tilde{U}_{\tau} \\ \tilde{V}_{\tau}\end{bmatrix}$. With some abuse of notation, we let $U = \Pi X$ and $V = (\vI-\Pi)X$. 

The following lemma collects some useful bounds on  $A_h,G_h$ and $\Gamma_h$, and is proved in Section~\ref{subsec:metric_ub}

\begin{lemma}\label{lem:metric_ub}
    $$G_{h}^{\intercal}\Gamma_h^{-2}G_h \preceq \begin{bmatrix}
        C\frac{h\exp(2\gamma h)}{\gamma}\vI_d & 0 \\
        0 & 0 
    \end{bmatrix}.$$

For all $0\leq j \leq K-1$, 
    $$\|\Pi A_{\frac{j\alpha}{K}}G_{\frac{\alpha}{K}}\| \leq \frac{3\alpha^2}{2K}\exp(\tfrac{\gamma\alpha}{K})$$
\end{lemma}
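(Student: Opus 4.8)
The plan is to verify the two matrix inequalities in Lemma~\ref{lem:metric_ub} by direct computation, exploiting the block structure of $A_h$, $G_h$, and $\Gamma_h^2$ together with the scalar estimates $1-e^{-x} \le x$, $1-e^{-x} \ge x - x^2/2$ (for $x\ge 0$) and elementary bounds on $h - \tfrac{1}{\gamma}(1-e^{-\gamma h})$. The key observation is that $G_h$ has nonzero entries only in its first block column, so $G_h^{\intercal}\Gamma_h^{-2}G_h$ is supported entirely on the position sub-space; it equals $g(h)^2 \cdot (\Gamma_h^{-2})_{\text{pos-pos scalar}}\,\vI_d$ plus cross terms, where $g(h) := \tfrac1\gamma(1-e^{-\gamma h})$ and the other relevant entry is $\tfrac1\gamma(h - g(h))$.

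First I would compute $\Gamma_h^{-2}$ explicitly. Writing $\Gamma_h^2 = \begin{bmatrix} a\vI_d & b\vI_d \\ b\vI_d & c\vI_d\end{bmatrix}$ with $a = \tfrac2\gamma(h - \tfrac2\gamma(1-e^{-\gamma h}) + \tfrac1{2\gamma}(1-e^{-2\gamma h}))$, $b = \tfrac1\gamma(1-e^{-\gamma h})^2$, $c = 1-e^{-2\gamma h}$, the inverse is $\tfrac{1}{ac-b^2}\begin{bmatrix} c\vI_d & -b\vI_d \\ -b\vI_d & a\vI_d\end{bmatrix}$. Then $G_h^{\intercal}\Gamma_h^{-2}G_h$, using that the first column of $G_h$ is $\begin{bmatrix} \tfrac1\gamma(h-g(h))\vI_d \\ g(h)\vI_d\end{bmatrix}$ and the second column is zero, reduces to a single scalar multiple of $\Pi$, namely $\tfrac{1}{ac-b^2}\big[c\,(\tfrac1\gamma(h-g(h)))^2 - 2b\,\tfrac1\gamma(h-g(h))g(h) + a\,g(h)^2\big]$. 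I would then lower-bound the determinant $ac - b^2$ from below by something like $\tfrac{c_0 h^4}{\gamma^2}$ for small $\gamma h$ (a Taylor expansion gives $ac - b^2 = \tfrac{h^4}{3\gamma^2} + O(h^5)$, which is the standard nondegeneracy of the underdamped Gaussian), and upper-bound the numerator by $O(h^6/\gamma^2 \cdot e^{2\gamma h})$ — the $h^6$ coming from the product of two $O(h^3)$ factors $(h - g(h))$ or the $O(h^2)$ factor $g(h)$ against the entries of $\Gamma_h^{-2}$; carrying the $e^{2\gamma h}$ factor conservatively through the crude bounds on each term yields the claimed $C\tfrac{h\exp(2\gamma h)}{\gamma}$ after the $h^6/h^4$ cancellation gives $h^2$, wait — more carefully, the ratio should scale like $h^2/\gamma \cdot (\text{stuff})$, and one absorbs the discrepancy into the $\exp(2\gamma h)$ slack and the constant $C$; I will need to be careful that the final power of $h$ matches, checking the leading Taylor coefficient of the numerator against $\tfrac{h^4}{3\gamma^2}$.

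For the second inequality, $\Pi A_{j\alpha/K} G_{\alpha/K}$ picks out the top block row of $A_{j\alpha/K}$ times $G_{\alpha/K}$. Since $\Pi A_h = \begin{bmatrix}\vI_d & \tfrac1\gamma(1-e^{-\gamma h})\vI_d \\ 0 & 0\end{bmatrix}$ and $G_{\alpha/K}$ has top-left entry $\tfrac1\gamma(\tfrac\alpha K - \tfrac1\gamma(1-e^{-\gamma\alpha/K}))\vI_d$ and bottom-left entry $\tfrac1\gamma(1-e^{-\gamma\alpha/K})\vI_d$, the product's only nonzero block is $\big[\tfrac1\gamma(\tfrac\alpha K - \tfrac1\gamma(1-e^{-\gamma\alpha/K})) + \tfrac1\gamma(1-e^{-\gamma j\alpha/K})\cdot\tfrac1\gamma(1-e^{-\gamma\alpha/K})\big]\vI_d$. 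I would bound $\tfrac1\gamma(\tfrac\alpha K - \tfrac1\gamma(1-e^{-\gamma\alpha/K})) \le \tfrac{\alpha^2}{2K^2}$ via $1-e^{-x}\ge x - x^2/2$, and $\tfrac1\gamma(1-e^{-\gamma j\alpha/K}) \le \tfrac{j\alpha}{K} \le \alpha$ together with $\tfrac1\gamma(1-e^{-\gamma\alpha/K}) \le \tfrac\alpha K$, giving a total of $\tfrac{\alpha^2}{2K^2} + \tfrac{\alpha^2}{K} \le \tfrac{3\alpha^2}{2K}$ once $K\ge 1$; the $\exp(\gamma\alpha/K)$ factor is extra slack that covers any place where I replaced $1-e^{-x}$ by $x$ too generously. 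The operator norm of a scalar multiple of $\vI_d$ is just the absolute value of that scalar, so this finishes it.

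The main obstacle is the first inequality: getting a clean lower bound on $\det(\Gamma_h^2)/\vI_d$-block $= ac - b^2$ that holds uniformly (not just asymptotically) for $\gamma h$ in the allowed range, and then matching the resulting power of $h$ in the ratio against the stated $C h \exp(2\gamma h)/\gamma$. This requires either a careful Taylor-with-remainder argument showing $ac - b^2 \ge c_0 h^4/\gamma^2$ on the relevant range of $\gamma h \le c_1$, or an explicit factorization; the numerator estimate is then routine term-by-term bounding, but one must track the exponential factors carefully enough that they telescope into a single $\exp(2\gamma h)$. The second inequality is entirely elementary by comparison.
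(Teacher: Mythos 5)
Your proof of the second inequality is correct and complete: the only nonzero block of $\Pi A_{j\alpha/K}G_{\alpha/K}$ is the scalar you compute, the bound $\tfrac{\alpha}{K}-\tfrac1\gamma(1-e^{-\gamma\alpha/K})\le\tfrac{\gamma\alpha^2}{2K^2}$ follows from $1-e^{-x}\ge x-x^2/2$, and $\tfrac{\alpha^2}{2K^2}+\tfrac{\alpha^2}{K}\le\tfrac{3\alpha^2}{2K}$; the $e^{\gamma\alpha/K}$ is indeed slack. The paper does not even spell this part out.

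For the first inequality your route is genuinely different from the paper's, and it is where the trouble lies. You invert $\Gamma_h^2$ explicitly, which forces you to lower-bound the determinant $ac-b^2$ uniformly — the step you yourself flag as the main obstacle and do not carry out. The paper avoids the inverse entirely: writing $\rho=b/\sqrt{ac}$, it uses the exact series $\gamma h=\sum_{i\ge1}(e^{-\gamma h})^i/i$ to show $\rho\le\sqrt{6/7}$ for \emph{all} $h>0$, hence $\Gamma_h^2\succeq(1-\rho)\,\mathrm{diag}(a\vI_d,c\vI_d)\succeq c\,\mathrm{diag}\bigl(\tfrac{2}{3\gamma^2}(1-e^{-\gamma h})^3\vI_d,(1-e^{-2\gamma h})\vI_d\bigr)$, and then only the diagonal inverse is needed, reducing the claim to two one-line scalar estimates. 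This buys uniformity in $h$ (the lemma is stated without a smallness restriction on $\gamma h$) and sidesteps the determinant. Separately, your order-of-magnitude bookkeeping is wrong: with $a\sim\tfrac{2\gamma h^3}{3}$, $b\sim\gamma h^2$, $c\sim2\gamma h$ one gets $ac-b^2=\tfrac{\gamma^2h^4}{3}+O(h^5)$, not $\tfrac{h^4}{3\gamma^2}$, and the numerator $c\bigl(\tfrac{h-g(h)}{\gamma}\bigr)^2-2b\tfrac{h-g(h)}{\gamma}g(h)+a\,g(h)^2$ is $\tfrac{\gamma h^5}{6}+O(h^6)$, not $O(h^6/\gamma^2)$; the correct ratio is $\sim h/(2\gamma)$, consistent with the claim, whereas the powers you wrote down give $h^2$ and you leave the discrepancy unresolved. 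So the plan is viable in principle, but as written it contains both an acknowledged gap (the uniform determinant lower bound) and incorrect leading-order constants that would need to be fixed before the ratio matches the stated $Ch e^{2\gamma h}/\gamma$.
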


Analogous to the analysis of Overdamped Langevin Dynamics, we consider the following dynamics.

\subsection{Bounding the Moments:}

Using the scaling relations in Section~\ref{subsec:scaling_relations}, we write down the iteration of 
$\pmpalg(A,G,\Gamma,b,\alpha,K)$ for Underdamped Langevin Dynamics as:
\begin{equation}
    \tilde{X}_{(t+1)K} = A_{\alpha}\tilde{X}_{tK} + G_{\alpha} b(\tilde{X}_{tK}) + \alpha\Delta_t +\Gamma_{\alpha}\bar{Y}_t
\end{equation}

Where, $\alpha\Delta_t := \sum_{i=0}^{K-1}KH_iA_{\tfrac{\alpha(K-i-1)}{K}}G_{\tfrac{\alpha}{K}}\left[b(\hat{\tilde{X}}_{tK+i})-b(\tilde{X}_{tK})\right]$ and $\Gamma_{\alpha}\bar{Y}_t := \sum_{j=0}^{K-1}A_{\tfrac{\alpha(K-1-j)}{K}}\Gamma_{\tfrac{\alpha}{K}}Z_{tK+j}$.

Applying the triangle inequality, we conclude the following lemma.
\begin{lemma}\label{lem:useful_bounds_vel}
Suppose that $\normany{\cdot}$ is any semi-norm over $\bR^d$.  Then the following hold almost surely for every $0\leq k \leq K$:
\begin{enumerate}
\item
    \begin{align*} \normany{\hat{\tilde{X}}_{tK+i}-\tilde{X}_{tK}} &\leq \normany{(A_{\frac{i\alpha}{K}}-\vI)\tilde{X}_{tK}} +  \normany{G_{\tfrac{i\alpha}{K}}b(\tilde{X}_{tK})}\nonumber \\ &\quad + \normany{\sum_{j=0}^{i-1}A_{\tfrac{\alpha(i-1-j)}{K}}\Gamma_{\tfrac{\alpha}{K}}Y_{tK+j}}
    \end{align*}
    
    \item 
    $$ \normany{\hat{\tilde{X}}_{tK+i} - \tilde{X}_{tK+i}} \leq \sum_{j=0}^{i-1}KH_j\normany{A_{\tfrac{(i-1-j)\alpha}{K}}G_{\tfrac{\alpha}{K}}(b(\hat{\tilde{X}}_{tK+i})-b(\tilde{X}_{tK}))} $$

\end{enumerate}

\end{lemma}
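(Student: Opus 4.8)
The plan is to obtain both inequalities pathwise, directly from the defining recursions and the triangle inequality for the semi-norm $\normany{\cdot}$; no probabilistic input is needed.

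For item~1, I would start from the definition of the cheap interpolation in Equation~\eqref{eq:cheap_interpolation}, which (suppressing the time argument of $b$) reads $\hat{\tilde{X}}_{tK+i} = A_{\frac{i\alpha}{K}}\tilde{X}_{tK} + G_{\frac{i\alpha}{K}}b(\tilde{X}_{tK}) + \sum_{j=0}^{i-1}A_{\frac{(i-1-j)\alpha}{K}}\Gamma_{\frac{\alpha}{K}}Y_{tK+j}$ (the Gaussian increments are the $Z$'s of Equation~\eqref{eq:cheap_interpolation}, written $Y_{tK+j}$ in this section). Subtracting $\tilde{X}_{tK}=\vI\,\tilde{X}_{tK}$ from both sides and then applying the triangle inequality for $\normany{\cdot}$ term by term gives exactly the claimed bound. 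This part is immediate.

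For item~2, the key observation is that the cheap interpolation itself satisfies a one-step recursion of the same shape as the refined iterates in Equation~\eqref{eq:stoc_approx}, but with the \emph{frozen} drift $b(\tilde{X}_{tK})$ and no correction term. Concretely, I would invoke the scaling relations of Section~\ref{subsec:scaling_relations}: from $A_{\frac{\alpha}{K}}^{m}=A_{\frac{m\alpha}{K}}$ and $\bigl(\sum_{l=0}^{m-1}A_{\frac{\alpha}{K}}^{l}\bigr)G_{\frac{\alpha}{K}}=G_{\frac{m\alpha}{K}}$ one gets $G_{\frac{(i+1)\alpha}{K}}=A_{\frac{\alpha}{K}}G_{\frac{i\alpha}{K}}+G_{\frac{\alpha}{K}}$, and the noise sum splits as $\sum_{j=0}^{i}A_{\frac{(i-j)\alpha}{K}}\Gamma_{\frac{\alpha}{K}}Y_{tK+j}=A_{\frac{\alpha}{K}}\sum_{j=0}^{i-1}A_{\frac{(i-1-j)\alpha}{K}}\Gamma_{\frac{\alpha}{K}}Y_{tK+j}+\Gamma_{\frac{\alpha}{K}}Y_{tK+i}$; substituting these into Equation~\eqref{eq:cheap_interpolation} yields $\hat{\tilde{X}}_{tK+i+1}=A_{\frac{\alpha}{K}}\hat{\tilde{X}}_{tK+i}+G_{\frac{\alpha}{K}}b(\tilde{X}_{tK})+\Gamma_{\frac{\alpha}{K}}Y_{tK+i}$, with base case $\hat{\tilde{X}}_{tK}=\tilde{X}_{tK}$ (using $A_0=\vI$, $G_0=0$).

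Then, setting $E_i:=\hat{\tilde{X}}_{tK+i}-\tilde{X}_{tK+i}$ and subtracting this recursion from Equation~\eqref{eq:stoc_approx}, the Gaussian increments cancel, the frozen-drift terms cancel, and what remains is the linear recursion $E_{i+1}=A_{\frac{\alpha}{K}}E_i-KH_{t,i}\,G_{\frac{\alpha}{K}}\bigl(b(\hat{\tilde{X}}_{tK+i})-b(\tilde{X}_{tK})\bigr)$ with $E_0=0$. Unrolling and using $A_{\frac{\alpha}{K}}^{i-1-j}=A_{\frac{(i-1-j)\alpha}{K}}$ gives $E_i=-\sum_{j=0}^{i-1}A_{\frac{(i-1-j)\alpha}{K}}G_{\frac{\alpha}{K}}\,KH_{t,j}\bigl(b(\hat{\tilde{X}}_{tK+j})-b(\tilde{X}_{tK})\bigr)$, and the triangle inequality for $\normany{\cdot}$ together with $H_{t,j}\ge 0$ produces the stated bound (note the index on $\hat{\tilde{X}}$ inside the sum should read $j$ rather than $i$). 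There is no genuine obstacle here; the only care needed is the bookkeeping with the scaling relations to derive the cheap-interpolation recursion correctly and to confirm $E_0=0$.
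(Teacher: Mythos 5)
Your proof is correct and matches the paper's (essentially unstated) argument: the paper simply asserts the lemma follows "by applying the triangle inequality," and your derivation — item 1 directly from the definition of the cheap interpolation, item 2 by showing $\hat{\tilde{X}}$ satisfies the frozen-drift one-step recursion via the scaling relations, subtracting it from Equation~\eqref{eq:stoc_approx}, and unrolling the resulting linear recursion from $E_0=0$ — is exactly the bookkeeping the authors omit (and mirrors their proof of the analogous Lemma~\ref{lem:useful_bounds_aux} for \olmc). Your observation that the index on $\hat{\tilde{X}}$ inside the sum in item 2 should be $j$ rather than $i$ is also right; the statement has a typo, and the later uses of the lemma (which take a supremum over the midpoint index) are consistent with your corrected version.
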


Define $\psi_t := \tilde{U}_{Kt} + \frac{\tilde{V}_{Kt}}{\gamma}$. The following Lemma, proved in Section~\ref{subsec:look_ahead_evo}, gives the time evolution of $\psi_t$.
\begin{lemma}\label{lem:look_ahead_evo}

    $$\psi_{t+1} - \psi_t = -\frac{\alpha}{\gamma}\nabla F(\tilde{U}_{tK}) - \sum_{i=0}^{K-1}\frac{H_i \alpha}{\gamma}\left[\nabla F(\hat{\tilde{U}}_{tK+i})-\nabla F(\tilde{U}_{tK})\right] + \tilde{\Psi}_t $$
where $\tilde{\Psi}_t \sim \cN(0,\tfrac{2\alpha}{\gamma}\vI_d)$ is independent of $\tilde{X}_{tK}$ but not necessarily indepependent of  $\hat{\tilde{U}}_{tK+i}$ for $i >0$.
\end{lemma}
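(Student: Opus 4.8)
The plan is to exploit a left-eigenvector structure of the underdamped update matrix. Introduce the $d\times 2d$ block matrix $L := \begin{bmatrix} \vI_d & \tfrac{1}{\gamma}\vI_d \end{bmatrix}$, so that $\psi_t = L\tilde{X}_{tK} = \tilde{U}_{tK} + \tfrac{\tilde{V}_{tK}}{\gamma}$. The key algebraic observation, verified by a one-line block multiplication, is that $L A_h = L$ for every $h \geq 0$: the position block of $L A_h$ is $\vI_d$, and the velocity block is $\tfrac{1}{\gamma}(1-e^{-\gamma h})\vI_d + \tfrac{1}{\gamma}e^{-\gamma h}\vI_d = \tfrac{1}{\gamma}\vI_d$. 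Thus $L$ kills the contractive component of the drift, which is precisely why $\psi_t$ is the natural coordinate to track. I will also record the two companion identities $L G_h = \begin{bmatrix} \tfrac{h}{\gamma}\vI_d & 0\end{bmatrix}$ (the position block simplifies since $\tfrac{1}{\gamma}(h-\tfrac{1}{\gamma}(1-e^{-\gamma h})) + \tfrac{1}{\gamma^2}(1-e^{-\gamma h}) = \tfrac{h}{\gamma}$) and $L \Gamma_h^2 L^{\intercal} = \tfrac{2h}{\gamma}\vI_d$.

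Next I apply $L$ to the compact one-step recursion $\tilde{X}_{(t+1)K} = A_\alpha \tilde{X}_{tK} + G_\alpha b(\tilde{X}_{tK}) + \alpha\Delta_t + \Gamma_\alpha\bar{Y}_t$ stated just before the lemma (which is legitimate thanks to the scaling relations of Section~\ref{subsec:scaling_relations}). The term $L A_\alpha \tilde{X}_{tK}$ collapses to $L\tilde{X}_{tK} = \psi_t$. Since $b(X) = \begin{bmatrix} -\nabla F(U) & 0\end{bmatrix}^{\intercal}$, the term $L G_\alpha b(\tilde{X}_{tK})$ equals $-\tfrac{\alpha}{\gamma}\nabla F(\tilde{U}_{tK})$. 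For $\alpha\Delta_t = \sum_{i=0}^{K-1} K H_i A_{\frac{\alpha(K-i-1)}{K}} G_{\frac{\alpha}{K}}[b(\hat{\tilde{X}}_{tK+i})-b(\tilde{X}_{tK})]$, moving $L$ inside the sum and using $L A_h = L$ followed by $L G_{\alpha/K} = \begin{bmatrix}\tfrac{\alpha}{K\gamma}\vI_d & 0\end{bmatrix}$ reduces each summand to $\tfrac{H_i\alpha}{\gamma}(\nabla F(\tilde{U}_{tK}) - \nabla F(\hat{\tilde{U}}_{tK+i}))$, which is exactly the midpoint-correction term in the statement.

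For the noise term, set $\tilde{\Psi}_t := L\Gamma_\alpha\bar{Y}_t = \sum_{j=0}^{K-1} L A_{\frac{\alpha(K-1-j)}{K}}\Gamma_{\frac{\alpha}{K}} Z_{tK+j}$, and apply $L A_h = L$ once more to get $\tilde{\Psi}_t = \sum_{j=0}^{K-1} L\Gamma_{\frac{\alpha}{K}} Z_{tK+j}$. Being a finite sum of independent centered Gaussian vectors, $\tilde{\Psi}_t$ is centered Gaussian, and it is independent of $\tilde{X}_{tK}$ because every $Z_{tK+j}$ is. Its covariance is $K \cdot L\Gamma_{\frac{\alpha}{K}}^2 L^{\intercal}$; writing $\Gamma_h^2 = \begin{bmatrix} a\vI_d & b\vI_d\\ b\vI_d & c\vI_d\end{bmatrix}$ with $a,b,c$ read off from the definition in Section~\ref{sec:problem_setup}, one has $L\Gamma_h^2 L^{\intercal} = (a + \tfrac{2b}{\gamma} + \tfrac{c}{\gamma^2})\vI_d$, and substituting the explicit formulas the $e^{-\gamma h}$ and $e^{-2\gamma h}$ contributions cancel identically, leaving $\tfrac{2h}{\gamma}\vI_d$. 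With $h=\alpha/K$ this gives total covariance $K\cdot\tfrac{2\alpha}{K\gamma}\vI_d = \tfrac{2\alpha}{\gamma}\vI_d$, i.e. $\tilde{\Psi}_t \sim \cN(0,\tfrac{2\alpha}{\gamma}\vI_d)$. Finally, $\tilde{\Psi}_t$ need not be independent of $\hat{\tilde{U}}_{tK+i}$ for $i>0$, since the cheap interpolation $\hat{\tilde{X}}_{tK+i}$ depends on $Z_{tK+j}$ with $j<i$, which also appear in $\tilde{\Psi}_t$. Adding the four contributions yields the claimed identity.

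This argument is essentially a sequence of matrix identities and has no conceptual obstacle; the only point requiring care is the algebraic verification that $L\Gamma_{\alpha/K}^2 L^{\intercal} = \tfrac{2\alpha}{K\gamma}\vI_d$ (the cancellation of all exponential terms), together with making sure the scaling relations of Section~\ref{subsec:scaling_relations} are invoked correctly when passing to the one-step form of the $K$-step update.
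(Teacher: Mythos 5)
Your proposal is correct and is essentially the paper's own argument: the paper likewise applies the row matrix $\begin{bmatrix}\vI_d & \tfrac{1}{\gamma}\vI_d\end{bmatrix}$ to the unrolled one-step recursion, uses the identities $LA_h = L$ and $LG_h = \begin{bmatrix}\tfrac{h}{\gamma}\vI_d & 0\end{bmatrix}$, and computes the noise covariance as $\tfrac{2\alpha}{\gamma}\vI_d$. Your write-up simply spells out the block multiplications and the cancellation in $L\Gamma_h^2L^{\intercal}$ that the paper dismisses as "explicit computations."
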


\begin{lemma}\label{lem:pot_evol}

Consider the seminorm given by $\|x\|^2_{\Pi} = x^{\intercal}\Pi x$. Let $M_{t,K} := \sup_{j\leq K-1}\|\sum_{j=0}^{i-1}A_{\tfrac{\alpha(i-1-j)}{K}}\Gamma_{\tfrac{\alpha}{K}}Z_{tK+j}\|_{\Pi}$. For any $0\leq i\leq K-1$. 

\begin{enumerate}
    \item \begin{align}
    \|\hat{\tilde{U}}_{tK+i}-\tilde{U}_{tK}\|^2 \leq C\alpha^2\|\tilde{V}_{tK}\|^2 + C\alpha^4 \|\nabla F(\tilde{U}_{tK})\|^2 + CM_{t,K}^2.\label{eq:as_bound_deviation} 
\end{align}

\item \begin{align}
    F(\psi_{t+1}) - F(\psi_t) &\leq -\frac{\alpha}{4\gamma}(1-\tfrac{6\alpha L}{\gamma})\|\nabla F(\tilde{U}_{tK})\|^2 + \langle \nabla F(\psi_t)-\nabla F(\tilde{U}_{tK}),\tilde{\Psi}_t\rangle \nonumber \\ &\quad + \langle \nabla F(U_t),\tilde{\Psi}_t\rangle  
 + \frac{3L}{2}\|\tilde{\Psi}_t\|^2 + \frac{3\alpha L^2\|\tilde{V}_{tK}\|^2}{2\gamma^3} \nonumber \\ &\quad + \frac{N^2_t\alpha L^2}{\gamma}\left(1+\frac{3\alpha L}{2\gamma}\right) \sup_{0\leq i\leq K-1}\|\hat{\tilde{U}}_{tK+i}-\tilde{U}_{tK}\|^2.
\end{align} 
\item For any $p \geq 1$, we have: $$\bE M_{t,K}^p \leq C\exp(\frac{p\gamma \alpha}{2})\gamma^{\tfrac{p}{2}}\alpha^{\tfrac{3p}{2}}(d+\log K)^{\frac{p}{2}}.$$
\end{enumerate}

\end{lemma}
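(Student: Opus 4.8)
The plan is to dispatch the three items separately; items~1 and~3 are short consequences of earlier results, while item~2 is the substantive descent-type estimate.

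\emph{Item 1.} I would apply Lemma~\ref{lem:useful_bounds_vel}(1) with the seminorm $\normany{\cdot}=\|\cdot\|_{\Pi}$, so that $\|\hat{\tilde{X}}_{tK+i}-\tilde{X}_{tK}\|_{\Pi}=\|\hat{\tilde{U}}_{tK+i}-\tilde{U}_{tK}\|$. Using the explicit block forms of $A_h,G_h$, one has $\|(A_{i\alpha/K}-\vI)\tilde{X}_{tK}\|_{\Pi}=\tfrac1\gamma(1-e^{-\gamma i\alpha/K})\|\tilde{V}_{tK}\|\le\alpha\|\tilde{V}_{tK}\|$ since $1-e^{-x}\le x$, and $\|G_{i\alpha/K}b(\tilde{X}_{tK})\|_{\Pi}=\tfrac1\gamma\bigl(\tfrac{i\alpha}{K}-\tfrac1\gamma(1-e^{-\gamma i\alpha/K})\bigr)\|\nabla F(\tilde{U}_{tK})\|\le\tfrac{\alpha^2}{2}\|\nabla F(\tilde{U}_{tK})\|$ by the elementary bound $h-\tfrac1\gamma(1-e^{-\gamma h})\le\tfrac{\gamma h^2}{2}$; the remaining noise term is $\le M_{t,K}$ by definition. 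Squaring the sum of the three terms (Cauchy--Schwarz) gives \eqref{eq:as_bound_deviation}.

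\emph{Item 3.} Set $W_i:=\sum_{j=0}^{i-1}A_{\alpha(i-1-j)/K}\Gamma_{\alpha/K}Z_{tK+j}$, so that $M_{t,K}=\sup_{0\le i\le K-1}\|W_i\|_{\Pi}$ and $W_{i+1}=A_{\alpha/K}W_i+\Gamma_{\alpha/K}Z_{tK+i}$ with $W_0=0$. The scaling relations of Section~\ref{subsec:scaling_relations} give $\mathrm{Cov}(W_i)=\sum_{j=0}^{i-1}(A_{\alpha/K})^j\Gamma_{\alpha/K}^2(A_{\alpha/K}^{\intercal})^j=\Gamma_{i\alpha/K}^2$, so $W_i\sim\cN(0,\Gamma_{i\alpha/K}^2)$; since the position--position block of $\Gamma_h^2$ is $a_h\vI_d$ with $a_h=\tfrac{2}{\gamma^2}\phi(\gamma h)$, $\phi(x):=x-\tfrac32+2e^{-x}-\tfrac12e^{-2x}$, it follows that $\|W_i\|_{\Pi}^2/a_{i\alpha/K}\sim\chi^2_d$. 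As $\phi'(x)=(1-e^{-x})^2\le x^2$, integrating gives $a_h\le\tfrac{2\gamma h^3}{3}$, hence $a_{i\alpha/K}\le\tfrac{2\gamma\alpha^3}{3}$ for every $i\le K-1$. A union bound over $i$ together with the standard $\chi^2_d$ tail yields $\bP\bigl(M_{t,K}^2>C\gamma\alpha^3(d+\log K+s)\bigr)\le e^{-s}$ for $s\ge0$, and integrating this tail gives $\bE M_{t,K}^{2p}\le C_p(\gamma\alpha^3)^p(d+\log K)^p$, from which the claimed bound follows by Cauchy--Schwarz (in fact with the factor $\exp(p\gamma\alpha/2)$ to spare).

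\emph{Item 2.} Here I would start from the $L$-smoothness descent inequality $F(\psi_{t+1})\le F(\psi_t)+\langle\nabla F(\psi_t),\psi_{t+1}-\psi_t\rangle+\tfrac L2\|\psi_{t+1}-\psi_t\|^2$ and substitute Lemma~\ref{lem:look_ahead_evo}, writing $\psi_{t+1}-\psi_t=-\tfrac\alpha\gamma\nabla F(\tilde{U}_{tK})-\mathcal{E}_t+\tilde{\Psi}_t$ with $\mathcal{E}_t:=\sum_{i}\tfrac{H_i\alpha}{\gamma}\bigl(\nabla F(\hat{\tilde{U}}_{tK+i})-\nabla F(\tilde{U}_{tK})\bigr)$. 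By $L$-Lipschitzness of $\nabla F$ and $\sum_iH_i=N_t$, $\|\mathcal{E}_t\|\le\tfrac{\alpha LN_t}{\gamma}\sup_i\|\hat{\tilde{U}}_{tK+i}-\tilde{U}_{tK}\|$, and $\|\nabla F(\psi_t)-\nabla F(\tilde{U}_{tK})\|\le\tfrac L\gamma\|\tilde{V}_{tK}\|$ since $\psi_t-\tilde{U}_{tK}=\tilde{V}_{tK}/\gamma$. Expanding $\tfrac L2\|\psi_{t+1}-\psi_t\|^2\le\tfrac{3L}{2}\bigl(\tfrac{\alpha^2}{\gamma^2}\|\nabla F(\tilde{U}_{tK})\|^2+\|\mathcal{E}_t\|^2+\|\tilde{\Psi}_t\|^2\bigr)$ produces the $\tfrac{3L}{2}\|\tilde{\Psi}_t\|^2$ term and, via its first summand, the additional $\tfrac{3\alpha^2L}{2\gamma^2}\|\nabla F(\tilde{U}_{tK})\|^2=-(-\tfrac{6\alpha L}{\gamma})\tfrac{\alpha}{4\gamma}\|\nabla F(\tilde{U}_{tK})\|^2$ that is exactly the $(1-\tfrac{6\alpha L}{\gamma})$ correction. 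Splitting the inner product as $\langle\nabla F(\tilde{U}_{tK}),\psi_{t+1}-\psi_t\rangle+\langle\nabla F(\psi_t)-\nabla F(\tilde{U}_{tK}),\psi_{t+1}-\psi_t\rangle$ isolates the two linear-in-$\tilde{\Psi}_t$ terms kept in the statement, the main term $-\tfrac\alpha\gamma\|\nabla F(\tilde{U}_{tK})\|^2$, and the cross terms $-\tfrac\alpha\gamma\langle\nabla F(\psi_t)-\nabla F(\tilde{U}_{tK}),\nabla F(\tilde{U}_{tK})\rangle$ and $-\langle\nabla F(\psi_t),\mathcal{E}_t\rangle$; each of the latter I control by a weighted Young inequality, calibrating the weights so that the total debit against $\|\nabla F(\tilde{U}_{tK})\|^2$ is at most $\tfrac{3\alpha}{4\gamma}$ (leaving $-\tfrac\alpha{4\gamma}\|\nabla F(\tilde{U}_{tK})\|^2$), the total credit to $\|\tilde{V}_{tK}\|^2$ is at most $\tfrac{3\alpha L^2}{2\gamma^3}$, and the total credit to $\sup_i\|\hat{\tilde{U}}_{tK+i}-\tilde{U}_{tK}\|^2$ is at most $\tfrac{N_t^2\alpha L^2}{\gamma}(1+\tfrac{3\alpha L}{2\gamma})$.

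The main obstacle is precisely this bookkeeping in item~2: the correction $\mathcal{E}_t$ couples through $\langle\nabla F(\psi_t),\mathcal{E}_t\rangle$ to a quantity whose norm is comparable to $\|\nabla F(\tilde{U}_{tK})\|$ times $\sup_i\|\hat{\tilde{U}}_{tK+i}-\tilde{U}_{tK}\|$ (after further splitting $\nabla F(\psi_t)=\nabla F(\tilde U_{tK})+(\nabla F(\psi_t)-\nabla F(\tilde U_{tK}))$ and using $\|\nabla F(\psi_t)-\nabla F(\tilde U_{tK})\|\le\tfrac L\gamma\|\tilde V_{tK}\|$), and one must absorb it without spending more than the available $\tfrac{3\alpha}{4\gamma}\|\nabla F(\tilde{U}_{tK})\|^2$ of descent; this is what dictates the regime $\gamma\gtrsim\sqrt L$, $\alpha\gamma$ small of Theorem~\ref{thm:underdamped_main}. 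By contrast, item~3 is routine once one notices via the scaling relations that $W_i$ is exactly $\cN(0,\Gamma_{i\alpha/K}^2)$, which reduces the maximal estimate to a union bound over $K$ chi-squared variables.
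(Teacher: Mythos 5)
Your proposal is correct and follows essentially the same route as the paper's proof: item 1 by the triangle inequality applied to the explicit block forms of $A_h$ and $G_h$ under the $\Pi$-seminorm, item 2 by the $L$-smoothness descent inequality combined with Lemma~\ref{lem:look_ahead_evo} and AM--GM splits calibrated to the stated coefficients, and item 3 by identifying the covariance $\Gamma^2_{i\alpha/K}$ via the scaling relations followed by a union bound over $K$ chi-squared variables and tail integration. Your covariance bound in item 3 is in fact slightly sharper than the paper's (which carries an extra $e^{\gamma\alpha}$ factor), but this only strengthens the claim.
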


We refer to Section~\ref{subsec:pot_evol} for the proof. Analogous to Section~\ref{sec:overdamped}, for any $p \geq 1$, we define $\mathcal{S}_{2p}(V):= \sum_{t=0}^{T-1}\|\tilde{V}_{tK}\|^{2p}$, $\mathcal{S}_{2p}(\nabla F) := \sum_{t=0}^{T-1}\|\nabla F(\tilde{U}_{tK})\|^{2p}$. The following lemma, proved in Section~\ref{subsec:energy_moment_bound}, bounds the moments of these quantities.

\begin{lemma}\label{lem:energy_moment_bound}
    There exists a constant $c_0$ such that whenever $\gamma \alpha < 1$, $\frac{\alpha L}{\gamma} < c_0$. 
    Then the following relationships hold:
\begin{enumerate}
    \item \begin{align}
    [\bE (\mathcal{S}_2(V))^p]^{\tfrac{1}{p}} 
&\leq \frac{C}{\gamma\alpha}[\bE \|\tilde{V}_0\|^{2p}]^{\tfrac{1}{p}} + \frac{6}{\gamma^2}[\bE (\mathcal{S}_2(\nabla F))^p]^{\tfrac{1}{p}} +  C_p (T(d+\log K) + \tfrac{1}{\gamma\alpha}) \nonumber \\
&\quad + \frac{L^2T^{1-\tfrac{1}{p}}\alpha^{2}}{\gamma^2}[\bE \mathcal{S}_{2p}(V)]^{\tfrac{1}{p}}+ \frac{L^2T^{1-\tfrac{1}{p}}\alpha^{4}}{\gamma^2}[\bE \mathcal{S}_{2p}(\nabla F)]^{\tfrac{1}{p}}. \end{align}

\item 
\begin{align}
[\bE (\mathcal{S}_2(\nabla F))^p]^{\tfrac{1}{p}} &\leq \frac{\gamma}{\alpha}[\bE |(F(\Psi_0)-F(\Psi_T))^{+}|^p]^{\tfrac{1}{p}}  \nonumber \\
&\quad + \frac{
CL^2}{\gamma^2}[\bE (\mathcal{S}_2(V))^p]^{\tfrac{1}{p}}  +C_p \left[L (d+\log K)T +\tfrac{\gamma}{\alpha}\right] \nonumber \\
&\quad + C_pT^{1-\tfrac{1}{p}}L^2\alpha^2[\bE \mathcal{S}_{2p}(V)]^{\tfrac{1}{p}} + C_pT^{1-\tfrac{1}{p}}L^2\alpha^4[\bE \mathcal{S}_{2p}(\nabla F)]^{\tfrac{1}{p}}.
\end{align}

\end{enumerate}

Suppose additionally that $\gamma \geq C_0\sqrt{L}$ for some large enough universal constant $C_0$. Then, the bounds above imply the following inequalities:

\begin{enumerate}
    \item \begin{align}
    [\bE (\mathcal{S}_2(V))^p]^{\tfrac{1}{p}} 
&\leq \frac{C}{\gamma\alpha}[\bE \|\tilde{V}_0\|^{2p}]^{\tfrac{1}{p}} + \frac{C}{\gamma\alpha}[\bE |(F(\Psi_0)-F(\Psi_T))^{+}|^p]^{\tfrac{1}{p}} \nonumber \\
&\quad  +  C_p (T(d+\log K) + \tfrac{1}{\gamma\alpha})  + \frac{L^2T^{1-\tfrac{1}{p}}\alpha^{2}}{\gamma^2}[\bE \mathcal{S}_{2p}(V)]^{\tfrac{1}{p}} \nonumber \\
&\quad + \frac{L^2T^{1-\tfrac{1}{p}}\alpha^{4}}{\gamma^2}[\bE \mathcal{S}_{2p}(\nabla F)]^{\tfrac{1}{p}}. \end{align}

\item 
\begin{align}
[\bE (\mathcal{S}_2(\nabla F))^p]^{\tfrac{1}{p}} &\leq \frac{C\gamma}{\alpha}[\bE \|\tilde{V}_0\|^{2p}]^{\tfrac{1}{p}} + \frac{C\gamma}{\alpha}[\bE |(F(\Psi_0)-F(\Psi_T))^{+}|^p]^{\tfrac{1}{p}}  \nonumber \\
&\quad +C_p \left[L (d+\log K)T +\tfrac{\gamma}{\alpha}\right] \nonumber \\
&\quad + C_pT^{1-\tfrac{1}{p}}L^2\alpha^2[\bE \mathcal{S}_{2p}(V)]^{\tfrac{1}{p}} + C_pT^{1-\tfrac{1}{p}}L^2\alpha^4[\bE \mathcal{S}_{2p}(\nabla F)]^{\tfrac{1}{p}}.
\end{align}

\end{enumerate}

\end{lemma}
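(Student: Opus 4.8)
The plan is to control $\mathcal{S}_2(V)$ and $\mathcal{S}_2(\nabla F)$ by tracking two Lyapunov-type quantities along the trajectory: the velocity energy $\|\tilde V_{tK}\|^2$ and the look-ahead potential $F(\psi_t)$ with $\psi_t=\tilde U_{tK}+\tilde V_{tK}/\gamma$ (what the statement calls $\Psi_t$). First I would extract the velocity component of the $\pmpalg$ iteration $\tilde X_{(t+1)K}=A_\alpha\tilde X_{tK}+G_\alpha b(\tilde X_{tK})+\alpha\Delta_t+\Gamma_\alpha\bar Y_t$, which reads $\tilde V_{(t+1)K}=e^{-\gamma\alpha}\tilde V_{tK}-\tfrac{1-e^{-\gamma\alpha}}{\gamma}\nabla F(\tilde U_{tK})+(\vI-\Pi)\alpha\Delta_t+\xi_t$, where $\xi_t:=(\vI-\Pi)\Gamma_\alpha\bar Y_t\sim\cN(0,(1-e^{-2\gamma\alpha})\vI_d)$ is independent of $\tilde X_{tK}$. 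Expanding $\|\tilde V_{(t+1)K}\|^2$, using $e^{-2\gamma\alpha}\le 1-\gamma\alpha$ for small $\gamma\alpha$, bounding every cross term that involves $\nabla F(\tilde U_{tK})$ or $(\vI-\Pi)\alpha\Delta_t$ by Young's inequality (so the $\gamma\alpha\|\tilde V_{tK}\|^2$ thereby produced is reabsorbed), and telescoping over $t=0,\dots,T-1$ gives $\gamma\alpha\,\mathcal{S}_2(V)\le\|\tilde V_0\|^2+\sum_t\mathrm{stuff}_t$; dividing by $\gamma\alpha$ and taking $p$-th roots via Minkowski produces the skeleton of item~1. For item~2 I would telescope Lemma~\ref{lem:pot_evol}(2): since $\alpha L/\gamma<c_0$ makes $1-6\alpha L/\gamma\ge\tfrac12$, rearranging gives $\tfrac{\alpha}{8\gamma}\mathcal{S}_2(\nabla F)\le (F(\psi_0)-F(\psi_T))^{+}+\sum_t[\text{martingale}]_t+\tfrac{3L}{2}\sum_t\|\tilde\Psi_t\|^2+\tfrac{3\alpha L^2}{2\gamma^3}\mathcal{S}_2(V)+\sum_t\tfrac{N_t^2\alpha L^2}{\gamma}(1+\tfrac{3\alpha L}{2\gamma})\sup_i\|\hat{\tilde U}_{tK+i}-\tilde U_{tK}\|^2$, so $\mathcal{S}_2(\nabla F)\le\tfrac{8\gamma}{\alpha}(\cdots)$.

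The next step is to estimate each family of terms. The quadratic force terms, e.g.\ $\tfrac{1-e^{-\gamma\alpha}}{\gamma}\langle\tilde V_{tK},\nabla F(\tilde U_{tK})\rangle\le\tfrac{\gamma\alpha}{8}\|\tilde V_{tK}\|^2+\tfrac{C\alpha}{\gamma}\|\nabla F(\tilde U_{tK})\|^2$, yield after division by $\gamma\alpha$ the cross term $\tfrac{C}{\gamma^2}\mathcal{S}_2(\nabla F)$ in item~1, and symmetrically $\tfrac{3\alpha L^2}{2\gamma^3}\mathcal{S}_2(V)$ divided by $\tfrac{\alpha}{8\gamma}$ gives $\tfrac{CL^2}{\gamma^2}\mathcal{S}_2(V)$ in item~2. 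The Gaussian square terms have $\bE\|\xi_t\|^2\lesssim\gamma\alpha d$ and $\bE\|\tilde\Psi_t\|^{2p}\lesssim_p(\alpha d/\gamma)^p$, so Minkowski turns $\sum_t\|\xi_t\|^2$, $\sum_t\|\tilde\Psi_t\|^2$ into the harmless $C_p(T(d+\log K)+1/(\gamma\alpha))$ and $C_pL(d+\log K)T$ contributions. The martingale cross terms $\sum_t\langle\tilde V_{tK},\xi_t\rangle$ and $\sum_t\langle\nabla F(\psi_t)-\nabla F(\tilde U_{tK}),\tilde\Psi_t\rangle$, $\sum_t\langle\nabla F(\tilde U_{tK}),\tilde\Psi_t\rangle$ are sums of conditionally centered Gaussians whose conditional variances are $\propto\gamma\alpha\|\tilde V_{tK}\|^2$, $\propto(\alpha/\gamma)(L/\gamma)^2\|\tilde V_{tK}\|^2$, $\propto(\alpha/\gamma)\|\nabla F(\tilde U_{tK})\|^2$; the dyadic martingale moment bound (Lemma~\ref{lem:norm-norm}, whence the hypothesis that $T$ is a power of two) controls their $p$-th moments by $(\gamma\alpha)^{1/2}[\bE(\mathcal{S}_2(V))^p]^{1/2p}$ etc., and AM--GM then absorbs a small multiple of $[\bE(\mathcal{S}_2(\cdot))^p]^{1/p}$ back into the left side, leaving only $O_p(1/(\gamma\alpha))$ and $O_p(\gamma/\alpha)$ residuals. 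Finally, for the Poisson correction: Lemma~\ref{lem:metric_ub} together with the matching estimate $\|(\vI-\Pi)A_{j\alpha/K}G_{\alpha/K}\|\le C\alpha/K$ (from the explicit matrices) and Lemma~\ref{lem:useful_bounds_vel} give $\|(\vI-\Pi)\alpha\Delta_t\|\le C\alpha L N_t\sup_i\|\hat{\tilde U}_{tK+i}-\tilde U_{tK}\|$, and plugging Lemma~\ref{lem:pot_evol}(1) yields $\|(\vI-\Pi)\alpha\Delta_t\|^2\lesssim\alpha^2L^2N_t^2(\alpha^2\|\tilde V_{tK}\|^2+\alpha^4\|\nabla F(\tilde U_{tK})\|^2+M_{t,K}^2)$, the same bound without the extra $\alpha^2$ controlling the $\sup_i$-term of item~2. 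Taking $p$-th moments, using that $N_t$ is independent of $\tilde X_{tK}$ with $\bE N_t^{2p}=O_p(1)$, the power-mean inequality $(\sum_t a_t)^p\le T^{p-1}\sum_t a_t^p$, and Lemma~\ref{lem:pot_evol}(3) for $\bE M_{t,K}^{2p}$, produces precisely the stated $T^{1-1/p}\alpha^2[\bE\mathcal{S}_{2p}(V)]^{1/p}$ and $T^{1-1/p}\alpha^4[\bE\mathcal{S}_{2p}(\nabla F)]^{1/p}$ terms, while the $M_{t,K}$ residuals fold into $C_pT(d+\log K)$ because $L^2\alpha^3/\gamma\le c_0^2$.

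To derive the refined inequalities I would substitute the bound of item~1 for $[\bE(\mathcal{S}_2(V))^p]^{1/p}$ into item~2 (and conversely). The only self-referential term is the one arising from the composition of the two cross terms, of size $\tfrac{6}{\gamma^2}\cdot\tfrac{CL^2}{\gamma^2}=\tfrac{6CL^2}{\gamma^4}$ in front of $[\bE(\mathcal{S}_2(\nabla F))^p]^{1/p}$ (and the mirror term in front of $[\bE(\mathcal{S}_2(V))^p]^{1/p}$); taking $\gamma\ge C_0\sqrt{L}$ with $C_0$ a sufficiently large universal constant makes this $<\tfrac12$, so it is reabsorbed, decoupling the two bounds and leaving exactly the claimed forms. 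The genuinely higher-moment terms $[\bE\mathcal{S}_{2p}(V)]^{1/p}$ and $[\bE\mathcal{S}_{2p}(\nabla F)]^{1/p}$ are deliberately retained on the right, to be resolved by a subsequent induction on $p$ together with Lemma~\ref{lem:norm-norm}, exactly as in the \olmc\ analysis of Section~\ref{sec:overdamped}.

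I expect the main obstacle to be bookkeeping rather than any single hard estimate: one must ensure that every contribution of the Poisson correction $\Delta_t$ and of the sub-step Gaussian supremum $M_{t,K}$ comes out with the correct powers of $\alpha,\gamma,L$, so that each is either swallowed into the harmless terms $C_p(T(d+\log K)+1/(\gamma\alpha))$ / $C_p(L(d+\log K)T+\gamma/\alpha)$ or matches the stated $T^{1-1/p}$ higher-moment terms — and, relatedly, the martingale increments have conditional variances that are themselves random multiples of $\mathcal{S}_2(V)$ or $\mathcal{S}_2(\nabla F)$, which is what forces the AM--GM absorption step (and hence the power-of-two hypothesis on $T$ via Lemma~\ref{lem:norm-norm}), so the order in which terms are absorbed versus kept must be chosen carefully to avoid circularity.
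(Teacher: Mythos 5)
Your proposal follows essentially the same route as the paper's proof: the same velocity-component expansion with $g=e^{-\gamma\alpha}$ and Young-inequality absorption for item~1, the same telescoping of Lemma~\ref{lem:pot_evol}(2) for item~2, the same treatment of the Poisson correction via Lemma~\ref{lem:pot_evol}(1), $\bE N_t^{2p}\leq C_p$ and the power-mean inequality, and the same mutual substitution with $\gamma\geq C_0\sqrt{L}$ killing the $L^2/\gamma^4$ self-referential coefficient. One correction: the martingale cross terms are controlled by the supermartingale/Chernoff moment bound of Lemma~\ref{lem:mart_conc} (which gives $\bE|\sum_t\langle g_t,Z_t\rangle|^p\leq C_p\sqrt{\bE(\sum_t\|g_t\|^2)^p}$, followed by AM--GM absorption as you describe), not by Lemma~\ref{lem:norm-norm}, which is a deterministic decomposition of $\sum_t\|a_t\|^{2p}$ and plays no role here --- the power-of-two hypothesis on $T$ is needed for Lemma~\ref{lem:growth_reccurence}, not for this lemma.
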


The following lemma gives a growth bound for Overdamped Langevin dynamics. We give the proof in Section~\ref{subsec:vel_pos_growth_bound}.

\begin{lemma}\label{lem:vel_pos_growth_bound}
 Let $s,t \in \bN\cup\{0\}$ such that $s > t$. There exists a constant $c_p > 0$ such that whenever $\frac{\alpha L(s-t)}{\gamma} \leq c_p$, $\alpha \gamma(s-t) \leq c_p$ and $\alpha^2 L (s-t)\leq c_p$, the following statements hold:

\begin{align}
\sup_{t\leq h \leq s}\left[\bE\|\tilde{U}_{hk}-\tilde{U}_{tK}\|^p\right]^{\tfrac{1}{p}}  &\leq  8\alpha(s-t)\left[\bE \|\tilde{V}_{tK}\|^p\right]^{\tfrac{1}{p}}  + \frac{8\alpha (s-t)}{\gamma}\bE\left[\|\nabla F(\tilde{U}_{tK})\|^p\right]^{\tfrac{1}{p}}\nonumber\\
&\quad + C_p \sqrt{\frac{d+\log K}{L}}.
\end{align}

\begin{align}
\sup_{t\leq h \leq s}\left[\bE\|\tilde{V}_{hk}-\tilde{V}_{tK}\|^p\right]^{\tfrac{1}{p}} &\leq  8\gamma\alpha(s-t)\left[\bE \|\tilde{V}_{tK}\|^p\right]^{\tfrac{1}{p}}  + 8\alpha (s-t)\bE\left[\|\nabla F(\tilde{U}_{tK})\|^p\right]^{\tfrac{1}{p}}\nonumber\\
&\quad + C_p \gamma\sqrt{\frac{d+\log K}{L}}.
\end{align}

\end{lemma}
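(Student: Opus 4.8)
The plan is to mirror the proof of Lemma~\ref{lem:p_moment_control} for the overdamped case, but now tracking the position and velocity components separately and closing a coupled $2\times 2$ Gr\"onwall-type system. First I would use the scaling relations of Section~\ref{subsec:scaling_relations} to write the big-step recursion (as in Section~\ref{sec:underdamped})
\[
\tilde X_{(\ell+1)K} = A_\alpha\tilde X_{\ell K} + G_\alpha b(\tilde X_{\ell K}) + \alpha\Delta_\ell + \Gamma_\alpha\bar Y_\ell ,
\]
and telescope it from $tK$ to $hK$, so that $\tilde X_{hK}-\tilde X_{tK} = \sum_{\ell=t}^{h-1}\big[(A_\alpha-\vI)\tilde X_{\ell K} + G_\alpha b(\tilde X_{\ell K}) + \alpha\Delta_\ell\big] + \sum_{\ell=t}^{h-1}\Gamma_\alpha\bar Y_\ell$. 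Applying $\Pi$ and $\vI-\Pi$ and using the elementary inequalities $\tfrac1\gamma(1-e^{-\gamma\alpha})\le\alpha$, $\alpha-\tfrac1\gamma(1-e^{-\gamma\alpha})\le\tfrac{\gamma\alpha^2}{2}$ and $1-e^{-\gamma\alpha}\le\gamma\alpha$, together with the $L$-Lipschitzness of $\nabla F$, each deterministic per-step increment of the position is bounded by $\alpha\|\tilde V_{\ell K}\| + \tfrac{\alpha^2}{2}\|\nabla F(\tilde U_{\ell K})\|$ and of the velocity by $\gamma\alpha\|\tilde V_{\ell K}\| + \alpha\|\nabla F(\tilde U_{\ell K})\|$.

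Next I would control the noise and the Poisson-midpoint error. Since each $\Gamma_\alpha\bar Y_\ell$ is the fixed linear combination $\sum_{j=0}^{K-1}A_{\frac{\alpha(K-1-j)}{K}}\Gamma_{\frac\alpha K}Z_{\ell K+j}$ of the genuinely independent Gaussians $Z_{\ell K+j}$, the sum $\sum_{\ell=t}^{h-1}\Gamma_\alpha\bar Y_\ell$ is Gaussian with covariance $(h-t)\Gamma_\alpha^2$; since $\tr\Gamma_\alpha^2 = O(d\gamma\alpha)$ and its position block has trace $O(d\gamma\alpha^3)$, standard Gaussian moment bounds give an $L^p$ contribution of order $\sqrt{d\gamma\alpha^3(s-t)}$ to the position and $\sqrt{d\gamma\alpha(s-t)}$ to the velocity; using $\alpha\gamma(s-t)\le c_p$, $\alpha^2L(s-t)\le c_p$ and $\gamma\ge C_1\sqrt L$ these collapse to $\lesssim_p\sqrt{d/L}$ and $\lesssim_p\gamma\sqrt{d/L}$ respectively. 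For the Poisson term $\alpha\Delta_\ell=\sum_i KH_{\ell,i}A_{\frac{\alpha(K-i-1)}{K}}G_{\frac\alpha K}[b(\hat{\tilde X}_{\ell K+i})-b(\tilde X_{\ell K})]$, Lemma~\ref{lem:metric_ub} bounds $\|\Pi A_{\cdot}G_{\frac\alpha K}\|=O(\alpha^2/K)$ and $\|G_{\frac\alpha K}\|=O(\alpha/K)$, while the $L$-Lipschitzness and Lemma~\ref{lem:pot_evol}(1),(3) bound $\sup_i\|\hat{\tilde U}_{\ell K+i}-\tilde U_{\ell K}\|$ (in $L^p$) by $\lesssim_p\alpha\|\tilde V_{\ell K}\|+\alpha^2\|\nabla F(\tilde U_{\ell K})\|+\sqrt{\gamma\alpha^3(d+\log K)}$; summing $KH_{\ell,i}$ over $i$ produces an extra $N_\ell$ (with $\bE N_\ell^p\le C_p$) and one more power of $\alpha$ (or $\gamma\alpha$), so these are strictly higher order in $\alpha$ and contribute only to the lower-order terms and the $\log K$ inside $\sqrt{(d+\log K)/L}$.

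I would then close the system. Writing $A:=\sup_{t\le h\le s}\big[\bE\|\tilde U_{hK}-\tilde U_{tK}\|^p\big]^{1/p}$ and $B:=\sup_{t\le h\le s}\big[\bE\|\tilde V_{hK}-\tilde V_{tK}\|^p\big]^{1/p}$, bounding $\|\tilde V_{\ell K}\|\le\|\tilde V_{tK}\|+\|\tilde V_{\ell K}-\tilde V_{tK}\|$ and $\|\nabla F(\tilde U_{\ell K})\|\le\|\nabla F(\tilde U_{tK})\|+L\|\tilde U_{\ell K}-\tilde U_{tK}\|$, summing over the $\le s-t$ steps and applying Minkowski's inequality term by term yields
\[
A \le (s-t)\alpha\,\|\tilde V_{tK}\|_p + \tfrac{(s-t)\alpha^2}{2}\,\|\nabla F(\tilde U_{tK})\|_p + \tfrac{(s-t)\alpha^2L}{2}\,A + (s-t)\alpha\,B + (\text{l.o.t.}),
\]
\[
B \le (s-t)\gamma\alpha\,\|\tilde V_{tK}\|_p + (s-t)\alpha\,\|\nabla F(\tilde U_{tK})\|_p + (s-t)\alpha L\,A + (s-t)\gamma\alpha\,B + (\text{l.o.t.}).
\]
The $2\times2$ coefficient matrix $M$ has diagonal entries $\le c_p$ (from $\alpha^2L(s-t)\le c_p$ and $\alpha\gamma(s-t)\le c_p$) and off-diagonal product $(s-t)^2\alpha^2L = \big(\tfrac{\alpha L(s-t)}{\gamma}\big)\big(\alpha\gamma(s-t)\big)\le c_p^2$, so $\vI-M$ has nonnegative inverse with operator norm $\le(1-O(c_p))^{-1}$; inverting it moves $A,B$ to the left, and since $\tfrac{\alpha^2}{2}\le\tfrac{8\alpha}{\gamma}$ and the cross contributions from $M^{-1}$ are $O(c_p)$ times the direct ones, a sufficiently small $c_p$ turns the leading constants into the claimed $8$ and absorbs everything else into $C_p\sqrt{(d+\log K)/L}$.

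I expect the main obstacle to be this final step: checking that the three smallness hypotheses are precisely what makes the coupled linear recursion contractive, since the position--velocity coupling terms are not dimensionless and only the right products $\tfrac{\alpha L(s-t)}{\gamma}$, $\alpha\gamma(s-t)$, $\alpha^2L(s-t)$ close the argument, and then book-keeping the constants so that the loose estimates ($\tfrac{\alpha^2}{2}\le\tfrac{8\alpha}{\gamma}$, etc.) give the stated coefficient $8$. The secondary technical point is keeping the Poisson-midpoint error and the internal $K$-step supremum (via Lemma~\ref{lem:pot_evol}) in the lower-order bucket while showing it is exactly what produces the $\log K$.
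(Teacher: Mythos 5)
Your proposal is correct and follows essentially the same strategy as the paper's proof in Section~\ref{subsec:vel_pos_growth_bound}: decompose the multi-step increment into position and velocity components via $\Pi$ and $\vI-\Pi$, bound each per-step contribution using the explicit forms of $A_\alpha,G_\alpha,\Gamma_\alpha$, use Lipschitzness to replace $\nabla F(\tilde U_{\ell K})$ by $\nabla F(\tilde U_{tK})$ plus a position increment, control the Poisson correction $\alpha\Delta_\ell$ via Lemma~\ref{lem:pot_evol} and the moments of $N_\ell$, and close the resulting coupled two-variable recursion (the paper's Equations~\eqref{eq:pos_vel_recursion}--\eqref{eq:vel_pos_recursion}) under the three smallness hypotheses. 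The one technical divergence is in the noise bookkeeping: the paper unrolls the recursion with the semigroup prefactors $A_{\alpha(s-h-1)}$, so the scaling relations identify the accumulated noise as a single Gaussian with covariance $\Gamma^2_{(s-t)\alpha}$, whose position block is $O(d\alpha(s-t)/\gamma)$; you telescope the raw increments, so your explicit noise term has covariance $(s-t)\Gamma_\alpha^2$ with a much smaller position block $O(d\gamma\alpha^3(s-t))$, and the dominant propagated noise instead reaches the position through the $(s-t)\alpha B$ coupling term of your $2\times 2$ system. Both routes yield the stated $C_p\sqrt{(d+\log K)/L}$ term under $\tfrac{\alpha L(s-t)}{\gamma}\le c_p$, and your observation that the off-diagonal product of the coefficient matrix factors as $\bigl(\tfrac{\alpha L(s-t)}{\gamma}\bigr)\bigl(\alpha\gamma(s-t)\bigr)$ is exactly the mechanism the paper uses to substitute one recursion into the other; note also that $\gamma\ge C_1\sqrt L$ is not actually needed here, since the required inequalities follow from the three stated smallness conditions alone.
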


We combine Lemma~\ref{lem:vel_pos_growth_bound} with Lemma~\ref{lem:norm-norm} to conclude:
\begin{lemma}\label{lem:growth_reccurence}

Let $p \geq 1$ be given. There exists $c_p > 0$ such that for any $N$ which satisfies the following conditions:

\begin{enumerate}
    \item $N$ is an integer power of $2$ and $N \leq T$.
    \item $\frac{\alpha LN}{\gamma} \leq c_p$, $\alpha \gamma N \leq c_p$ and $\alpha^2 L N\leq c_p$
\end{enumerate}
The following satements hold:
 \begin{enumerate}
    \item \begin{align}\bE \mathcal{S}_{2p}(\nabla F) \leq C_p(L\alpha N)^{2p}\bE \mathcal{S}_{2p}(V) + C_pTL^{p}\left(d+\log K\right)^p + \frac{2^{p}}{N^{p-1}}\bE (\mathcal{S}_2(\nabla F))^{p}.
\end{align}

    \item \begin{align}
\bE \mathcal{S}_{2p}(V) \leq C_p (\alpha N)^{2p}\bE\mathcal{S}_{2p}(\nabla F) + T  \left[\frac{\gamma^2}{L}(d+\log K)\right]^p  + \frac{2^p}{N^{p-1}}\bE (\mathcal{S}_2(V))^p.
\end{align}
\end{enumerate}
\end{lemma}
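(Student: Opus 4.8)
The plan is to derive both inequalities as two instances of the dyadic ``norm-to-norm'' lemma (Lemma~\ref{lem:norm-norm}), fed with the local growth estimates of Lemma~\ref{lem:vel_pos_growth_bound}. The only additional ingredient is the $L$-Lipschitzness of $\nabla F$, which converts a bound on the displacement $\|\tilde U_{hK}-\tilde U_{tK}\|$ into one on $\|\nabla F(\tilde U_{hK})\|$.

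\textbf{Step 1: put the growth estimates in the required form.} Fix $N$ a power of $2$ with $N\le T$ and $\tfrac{\alpha LN}{\gamma},\,\alpha\gamma N,\,\alpha^2 LN\le c_p$, so that for every $t<s\le t+N$ the hypotheses of Lemma~\ref{lem:vel_pos_growth_bound} are met. For any $t<h\le t+N$, Minkowski's inequality and $L$-Lipschitzness of $\nabla F$ give
$$[\bE\|\nabla F(\tilde U_{hK})\|^p]^{1/p}\le[\bE\|\nabla F(\tilde U_{tK})\|^p]^{1/p}+L\sup_{t\le h'\le t+N}[\bE\|\tilde U_{h'K}-\tilde U_{tK}\|^p]^{1/p},$$
and the first bound of Lemma~\ref{lem:vel_pos_growth_bound} (with $s=t+N$), after shrinking $c_p$ so that $\tfrac{8L\alpha N}{\gamma}\le1$, turns this into
$$[\bE\|\nabla F(\tilde U_{hK})\|^p]^{1/p}\le 2\,[\bE\|\nabla F(\tilde U_{tK})\|^p]^{1/p}+8L\alpha N\,[\bE\|\tilde V_{tK}\|^p]^{1/p}+C_p\sqrt{L(d+\log K)}.$$
Symmetrically, $\|\tilde V_{hK}\|\le\|\tilde V_{tK}\|+\|\tilde V_{hK}-\tilde V_{tK}\|$ together with the second bound of Lemma~\ref{lem:vel_pos_growth_bound} and $8\alpha\gamma N\le1$ gives
$$[\bE\|\tilde V_{hK}\|^p]^{1/p}\le 2\,[\bE\|\tilde V_{tK}\|^p]^{1/p}+8\alpha N\,[\bE\|\nabla F(\tilde U_{tK})\|^p]^{1/p}+C_p\gamma\sqrt{(d+\log K)/L}.$$

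\textbf{Step 2: invoke the norm-to-norm lemma.} Both displays now have exactly the shape Lemma~\ref{lem:norm-norm} consumes: over every block of length $N$ the $L^p$-norm of the sequence is at most twice its value at the left endpoint of the block, plus a ``driving'' contribution equal to a constant times $N$ times a second sequence, plus a deterministic ``noise'' level. Applying Lemma~\ref{lem:norm-norm} to the first display (driving sequence $\|\tilde V_{tK}\|$ with coefficient $8L\alpha N$, noise $C_p\sqrt{L(d+\log K)}$) gives item (1), using $(8L\alpha N)^{2p}\le C_p(L\alpha N)^{2p}$ and $T\,(C_p\sqrt{L(d+\log K)})^{2p}=C_pTL^p(d+\log K)^p$; applying it to the second display (driving sequence $\|\nabla F(\tilde U_{tK})\|$ with coefficient $8\alpha N$, noise $C_p\gamma\sqrt{(d+\log K)/L}$) gives item (2), using $(C_p\gamma\sqrt{(d+\log K)/L})^{2p}=C_p\big(\tfrac{\gamma^2}{L}(d+\log K)\big)^p$. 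In each case the left-endpoint self-coefficient $2$ is what produces the factor $2^p/N^{p-1}$ multiplying $\bE(\mathcal S_2(\cdot))^p$.

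\textbf{Main obstacle.} The substantive work sits inside Lemma~\ref{lem:norm-norm}---the dyadic block argument upgrading a local ``does not grow by more than a constant over $N$ steps'' estimate to a bound on the full $2p$-th moment sum $\mathcal S_{2p}$; granting it, the present proof is bookkeeping. The delicate point is that the single constant $c_p$ must be small enough to do three things simultaneously: license Lemma~\ref{lem:vel_pos_growth_bound} on every sub-block (precisely the listed condition on $N$), force the self-coefficients to be $\le2$ (this needs $\tfrac{8L\alpha N}{\gamma}\le1$ and $8\alpha\gamma N\le1$, both absorbed into $c_p$), and prevent the chained constants $C_p$ from degrading with the number $\log_2 N$ of dyadic scales. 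Since $T$---hence the set of admissible $N$---is a power of $2$, the block partition underlying Lemma~\ref{lem:norm-norm} is exact and no rounding is needed.
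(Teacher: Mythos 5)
Your choice of ingredients (Lemma~\ref{lem:vel_pos_growth_bound} for the within-block growth, Lemma~\ref{lem:norm-norm} to aggregate, and the three smallness conditions on $c_p$) matches the paper, but the way you feed the first into the second contains a genuine gap: you have misread the interface of Lemma~\ref{lem:norm-norm}. That lemma does not consume a bound of the form ``the $L^p$-norm at any point of a block is at most twice its value at the left endpoint plus a driving term.'' Its hypothesis-side input is the sum of $2p$-th moments of \emph{pairwise differences} $\bE\|a_j-a_{j'}\|^{2p}$ within each block, and its $\tfrac{2^p}{N^{p-1}}\bE(\mathcal S_2)^p$ term arises from centering each block at its empirical average $\bar a_k$ and applying Jensen to $\|\bar a_k\|^2\le\frac1N\sum_{j\in\mathcal T_k}\|a_j\|^2$ --- not, as you assert, from a ``left-endpoint self-coefficient $2$.'' The paper's proof accordingly bounds $\sum_{j,j'\in\mathcal T_k}\bE\|a_j-a_{j'}\|^{2p}$ directly from Lemma~\ref{lem:vel_pos_growth_bound} (applied with exponent $2p$, using $\|\nabla F(\tilde U_{jK})-\nabla F(\tilde U_{j'K})\|\le L\|\tilde U_{jK}-\tilde U_{j'K}\|$), picking up the factors $N^{2p+1}$ and $NT$ from the $O(N^2)$ pairs per block, and then divides by $N$ via Lemma~\ref{lem:norm-norm}. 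Your Step~1, which converts the difference bounds into absolute bounds relative to the left endpoint, is a step in the wrong direction.

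The distinction is not cosmetic: if you actually run your Step~2 as written, summing $\bE\|a_h\|^{2p}\le C_p\bE\|a_{t_k}\|^{2p}+(\text{driving})$ over each block produces a self-referential term $C_pN\sum_k\bE\|a_{t_k}\|^{2p}$, and there is no way to dominate this by $\tfrac{2^p}{N^{p-1}}\bE(\mathcal S_2)^p$ in general (take $a_{t_k}$ large at the block left endpoints and small elsewhere: then $N\sum_k\|a_{t_k}\|^{2p}\asymp TA^{2p}$ while $N^{1-p}(\mathcal S_2)^p\asymp T^pA^{2p}N^{1-2p}$, which is smaller unless $T\gtrsim N^2$). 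So the argument does not close from your intermediate displays. Relatedly, your ``main obstacle'' paragraph describes Lemma~\ref{lem:norm-norm} as a multi-scale dyadic argument with $\log_2N$ levels of chained constants; it is in fact a single partition of $\{1,\dots,T\}$ into $T/N$ blocks (the power-of-two assumption is only used so that $N$ divides $T$), so that concern is moot. The fix is simply to skip your Step~1 triangle inequality, keep Lemma~\ref{lem:vel_pos_growth_bound} in difference form with exponent $2p$, and sum over pairs within each block before invoking Lemma~\ref{lem:norm-norm}.
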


\begin{proof}
We consider $a_t := \nabla F(\tilde{U}_{(t-1)K})$, $b_t := \tilde{V}_{(t-1)K}$. Let $N$ satisfy the condition in the Lemma. Since $T$ is a power of $2$, clearly, $N$ divides $T$. Let $\mathcal{T}_k = \{(k-1)N+1,\dots,k N\}$ be as defined in Lemma~\ref{lem:norm-norm}. Consider the following upper bound derived using the results of Lemma~\ref{lem:vel_pos_growth_bound} 

\begin{align}\sum_{k=1}^{T/N}\sum_{\substack{j,j^{\prime}\in \mathcal{T}_k\\ j^{\prime} > j}}
\bE\|a_j-a_j^{\prime}\|^{2p} &\leq C_pL^{2p}\alpha^{2p}N^{2p+1}\bE \mathcal{S}_{2p}(V) + C_p\frac{L^{2p}\alpha^{2p}N^{2p+1}}{\gamma^{2p}}\bE \mathcal{S}_{2p}(\nabla F) \nonumber \\ &\quad+ C_pNTL^{p}\left(d+\log K\right)^p. \end{align}

Applying Lemma~\ref{lem:norm-norm}, we conclude:

\begin{align}\bE \mathcal{S}_{2p}(\nabla F) \leq C_p(L\alpha N)^{2p}\bE \mathcal{S}_{2p}(V) + C_pTL^{p}\left(d+\log K\right)^p + \frac{2^{p}}{N^{p-1}}\bE (\mathcal{S}_2(\nabla F))^{p}.\label{eq:grad_p_moment_recursion}
\end{align}

Similarly, we have:

\begin{align}\sum_{k=1}^{T/N}\sum_{\substack{j,j^{\prime}\in \mathcal{T}_k\\ j^{\prime} > j}}
\bE\|b_j-b_j^{\prime}\|^2 &\leq C_p\gamma^{2p}\alpha^{2p}N^{2p+1}\bE \mathcal{S}_{2p}(V) + C_p\alpha^{2p}N^{2p+1}\bE \mathcal{S}_{2p}(\nabla F) \nonumber \\ &\quad+ C_pNT\frac{\gamma^{2p}}{L^{p}}\left(d+\log K\right)^p. \end{align}

Applying Lemma~\ref{lem:norm-norm}, we conclude:

\begin{align}
\bE \mathcal{S}_{2p}(V) \leq C_p (\alpha N)^{2p}\bE\mathcal{S}_{2p}(\nabla F) + T  \left[\frac{\gamma^2}{L}(d+\log K)\right]^p  + \frac{2^p}{N^{p-1}}\bE (\mathcal{S}_2(V))^p.
\label{eq:vel_p_moment_recursion}
\end{align}

\end{proof}

Combining Lemmas~\ref{lem:growth_reccurence} and~\ref{lem:energy_moment_bound} gives the following theorem.

\begin{theorem}\label{thm:moment_control}
Fix $p \geq 1$. There exist constants $C_p,c_p,\bar{c}_p > 0$ such that whenever:
$\gamma \geq C_p \sqrt{L}$, $\alpha \gamma < c_p$, $\frac{\alpha^{3p-1}T^{p-1}L^{2p}}{\gamma^{p+1}} < \bar{c}_p $, the following results hold:

    \begin{align*}\mathcal{S}_{2p}(\nabla F) &\leq C_p \frac{\gamma^{2p-1}}{\alpha}\left[\bE \|\tilde{V}_0\|^{2p} + \bE |(F(\Psi_0)-F(\Psi_T))^{+}|^p +1\right] + \\&\quad  C_p T\left[\tfrac{\gamma^{4p}}{L^p} + (\gamma\alpha T)^{p-1}\gamma^{2p}\right](d+\log K)^p.\end{align*}

\begin{align*}\mathcal{S}_{2p}(V) &\leq C_p \frac{1}{\gamma\alpha}\left[\bE \|\tilde{V}_0\|^{2p} + \bE |(F(\Psi_0)-F(\Psi_T))^{+}|^p +1\right] \nonumber \\&\quad + C_p T\left[\tfrac{\gamma^{2p}}{L^p} + (\gamma\alpha T)^{p-1}\right](d+\log K)^p.\end{align*}
\end{theorem}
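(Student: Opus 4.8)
The plan is to fuse Lemma~\ref{lem:growth_reccurence}, which controls the $2p$-th moment sums $\bE\mathcal{S}_{2p}(\nabla F)$ and $\bE\mathcal{S}_{2p}(V)$ in terms of the $p$-th powers of the second-moment sums $\bE(\mathcal{S}_2(\nabla F))^p,\bE(\mathcal{S}_2(V))^p$ once the trajectory is cut into length-$N$ blocks, with the ``$\gamma\ge C_0\sqrt L$'' form of Lemma~\ref{lem:energy_moment_bound}, which bounds those second-moment sums back in terms of the $2p$-th moment sums plus the data $\bE\|\tilde{V}_0\|^{2p}$, $\bE|(F(\Psi_0)-F(\Psi_T))^{+}|^p$ and dimension terms. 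Composing the two yields a closed $2\times 2$ linear system in $\mathcal{A}:=\bE\mathcal{S}_{2p}(V)$ and $\mathcal{B}:=\bE\mathcal{S}_{2p}(\nabla F)$, which I then solve and unwind.

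Concretely, I would first fix $N$ to be the largest power of $2$ with $N\le T$, $\alpha\gamma N\le c_0$, $\tfrac{\alpha L N}{\gamma}\le c_0$ and $\alpha^2 L N\le c_0$, where $c_0$ is the constant of Lemma~\ref{lem:growth_reccurence}; since $\gamma\ge C_p\sqrt L$ forces $\tfrac{1}{\alpha\gamma}\le\min\bigl(\tfrac{\gamma}{\alpha L},\tfrac{1}{\alpha^2 L}\bigr)$, the binding constraint is $\alpha\gamma N\le c_0$, so $N\asymp\min\bigl(T,\tfrac{c_0}{\alpha\gamma}\bigr)$ and $\tfrac{1}{N^{p-1}}\asymp\min\bigl(\tfrac{1}{T^{p-1}},(\alpha\gamma)^{p-1}\bigr)$. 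Raising the Lemma~\ref{lem:energy_moment_bound} bounds to the $p$-th power (using $(\sum_i y_i)^p\lesssim_p\sum_i y_i^p$ and $(T^{1-1/p})^p=T^{p-1}$) and substituting into Lemma~\ref{lem:growth_reccurence} gives
\begin{align*}
\mathcal{B} &\le \theta_{BB}\,\mathcal{B} + \theta_{BA}\,\mathcal{A} + \mathcal{R}_{\mathcal{B}},\\
\mathcal{A} &\le \theta_{AA}\,\mathcal{A} + \theta_{AB}\,\mathcal{B} + \mathcal{R}_{\mathcal{A}},
\end{align*}
where, up to $p$-dependent constants, $\theta_{BB}\asymp(\tfrac{T}{N})^{p-1}L^{2p}\alpha^{4p}$, $\theta_{AA}\asymp(\tfrac{T}{N})^{p-1}\tfrac{L^{2p}\alpha^{2p}}{\gamma^{2p}}$, $\theta_{BA}\asymp(L\alpha N)^{2p}+(\tfrac{T}{N})^{p-1}T^{p-1}L^{2p}\alpha^{2p}$, $\theta_{AB}\asymp(\alpha N)^{2p}+(\tfrac{T}{N})^{p-1}\tfrac{T^{p-1}L^{2p}\alpha^{4p}}{\gamma^{2p}}$, and $\mathcal{R}_{\mathcal{B}},\mathcal{R}_{\mathcal{A}}$ collect the data term $\mathcal{E}:=\bE\|\tilde{V}_0\|^{2p}+\bE|(F(\Psi_0)-F(\Psi_T))^{+}|^p+1$ with coefficients $\tfrac{2^p}{N^{p-1}}\tfrac{\gamma^p}{\alpha^p}$ and $\tfrac{2^p}{N^{p-1}(\gamma\alpha)^p}$ respectively, together with the dimension terms $TL^p(d+\log K)^p$, $T\bigl(\tfrac{\gamma^2}{L}\bigr)^p(d+\log K)^p$, $\tfrac{2^p}{N^{p-1}}(L(d+\log K)T)^p$ and $\tfrac{2^p}{N^{p-1}}(T(d+\log K))^p$.

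The crux — and the step I expect to be the main obstacle — is showing $\theta_{BB},\theta_{AA}<\tfrac14$ and $\theta_{AB}\theta_{BA}<\tfrac1{16}$ under the hypotheses. The key algebraic fact is the scaling identity, valid when $N\asymp\tfrac{c_0}{\alpha\gamma}$,
\[
(\alpha\gamma T)^{p-1}L^{2p}\alpha^{4p}\ \asymp\ (\alpha\gamma)^{2p}\,\frac{\alpha^{3p-1}T^{p-1}L^{2p}}{\gamma^{p+1}},
\]
so that $\theta_{BB}\asymp(\alpha\gamma)^{2p}\cdot\tfrac{\alpha^{3p-1}T^{p-1}L^{2p}}{\gamma^{p+1}}<c_p^{2p}\bar c_p$; dividing this identity by suitable powers of $\alpha$ and $\gamma$ handles $\theta_{AA}$ and the second summands of $\theta_{AB},\theta_{BA}$ in the same way, while the first summands obey $(L\alpha N)^{2p}(\alpha N)^{2p}=L^{2p}\alpha^{4p}N^{4p}\lesssim L^{2p}/\gamma^{4p}\le C_p^{-4p}$ from $\gamma\ge C_p\sqrt L$ (and when instead $N\asymp T$, every factor $(\tfrac{T}{N})^{p-1}$ is $O_p(1)$ and the same estimates hold a fortiori using $\alpha^2 L<(c_p/C_p)^2$). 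Choosing $C_p$ large and $c_p,c_0,\bar c_p$ small makes all these quantities $<\tfrac14$, so the $2\times 2$ system inverts to $\mathcal{A},\mathcal{B}\lesssim_p\mathcal{R}_{\mathcal{A}}+\mathcal{R}_{\mathcal{B}}$. Finally I would unwind $\mathcal{R}_{\mathcal{A}},\mathcal{R}_{\mathcal{B}}$: using $\tfrac{1}{N^{p-1}}\asymp(\alpha\gamma)^{p-1}$ the data contributions become $\lesssim\tfrac{\gamma^{2p-1}}{\alpha}\mathcal{E}$ and $\lesssim\tfrac{1}{\gamma\alpha}\mathcal{E}$, and the dimension contributions reassemble — using $\gamma^{2p}\ge L^p$ to absorb $TL^p(d+\log K)^p$ into $T\tfrac{\gamma^{4p}}{L^p}(d+\log K)^p$ — into $T\bigl[\tfrac{\gamma^{4p}}{L^p}+(\gamma\alpha T)^{p-1}\gamma^{2p}\bigr](d+\log K)^p$ for $\mathcal{B}$ and $T\bigl[\tfrac{\gamma^{2p}}{L^p}+(\gamma\alpha T)^{p-1}\bigr](d+\log K)^p$ for $\mathcal{A}$, which are exactly the claimed bounds. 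The $p=1$ case is degenerate ($N^{p-1}=1$, so the blocking carries no gain); there one simply solves the $2\times 2$ system coming from Lemma~\ref{lem:energy_moment_bound} alone, whose relevant coefficients $\tfrac{L^2\alpha^2}{\gamma^2}$, $L^2\alpha^4$, $\tfrac{L^2}{\gamma^2}$ are all small by the same estimates.
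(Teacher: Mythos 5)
Your proposal follows exactly the paper's (one-line) proof: the paper simply asserts that Theorem~\ref{thm:moment_control} is obtained by combining Lemmas~\ref{lem:growth_reccurence} and~\ref{lem:energy_moment_bound}, and your $2\times 2$ system with block length $N\asymp\min(T,1/(\alpha\gamma))$ and the scaling identity reducing $\theta_{BB}$ to $(\alpha\gamma)^{2p}\cdot\alpha^{3p-1}T^{p-1}L^{2p}\gamma^{-(p+1)}$ is the right way to fill in the details. One transcription slip worth fixing: the second summands of $\theta_{BA}$ and $\theta_{AB}$ should be $(T/N)^{p-1}L^{2p}\alpha^{2p}=\theta_{BB}/\alpha^{2p}$ and $(T/N)^{p-1}L^{2p}\alpha^{4p}/\gamma^{2p}=\theta_{BB}/\gamma^{2p}$ with no extra factor of $T^{p-1}$ (the $T^{p-1}$ from $(T^{1-1/p})^p$ and the $1/N^{p-1}$ from the blocking combine into a single $(T/N)^{p-1}$), after which their product is $\theta_{BB}^{2}/(\alpha\gamma)^{2p}$ and the smallness check goes through exactly as you describe.
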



\subsection{Bounding the Bias:}
Now, consider the following term in the statement of Theorem~\ref{thm:comparison_main}:

$$B_{tK+i} := \Gamma_{\tfrac{\alpha}{K}}^{-1} G_{\tfrac{\alpha}{K}}[b(\hat{\tilde{X}}_{tK+i})-b(\tilde{X}_{tK+i})].$$

Using Lemma~\ref{lem:metric_ub}, and under the conditions of Theorem~\ref{thm:moment_control} holding with $p=1$, we have:
\begin{align}
\|B_{tK+i}\|^2 &\leq C\frac{\alpha L^2}{K\gamma}\sup_{0\leq i \leq K-1}\|\hat{\tilde{U}}_{tK+i} - \tilde{U}_{tK+i}\|^2 \nonumber \\
&\leq C\frac{\alpha^5 L^4 N_t^2}{K \gamma} 
 \sup_{0\leq i \leq K-1}\|\hat{\tilde{U}}_{tK+i} - \tilde{U}_{tK}\|^2 .
\end{align}

In the second step, we have used item 2 of Lemma~\ref{lem:useful_bounds_vel} along with the bounds in Lemma~\ref{lem:metric_ub}. 

Applying item 1 from Lemma~\ref{lem:pot_evol} and Theorem~\ref{thm:moment_control} we conclude:
\begin{align}
    &\sum_{t=0}^{T-1}\sum_{i=0}^{K-1}\bE\|B_{tK+i}\|^2 \leq  \frac{C\alpha^5L^4}{\gamma}\sum_{t=0}^{T-1}\bE N_t^2 \bE \sup_{0\leq i \leq K-1}\|\hat{\tilde{U}}_{tK+i} - \tilde{U}_{tK}\|^2  \nonumber \\
&\leq  \frac{C\alpha^5L^4}{\gamma}\sum_{t=0}^{T-1} \bE \sup_{0\leq i \leq K-1}\|\hat{\tilde{U}}_{tK+i} - \tilde{U}_{tK}\|^2 \nonumber \\
&\leq \frac{C\alpha^5L^4}{\gamma}\sum_{t=0}^{T-1}\left[\alpha^2\bE\|\tilde{V}_{tK}\|^2 + \alpha^4 \bE\|\nabla F(\tilde{U}_{tK})\|^2 + \bE M_{t,K}^2\right] \nonumber \\
&\leq \frac{C\alpha^5L^4}{\gamma}\left[\alpha^2\bE \mathcal{S}_2(V) + \alpha^4 \bE \mathcal{S}_2(\nabla F) + T\gamma \alpha^3(d+\log K)\right] \nonumber \\
&\leq C\alpha^7 L^3\gamma T(d+\log K) + 
\frac{C\alpha^6L^4}{\gamma^2}\left[\bE \|\tilde{V}_0\|^{2} + \bE (F(\Psi_0)-F(\Psi_T))^{+} + 1\right] .\label{eq:bias_moment_control}
\end{align}
In the first step we have used the fact that $N_t$ is independent of $\sup_{0\leq i \leq K-1}\|\hat{\tilde{U}}_{tK+i} - \tilde{U}_{tK}\|^2 $ and the second step follows from the fact that $\bE N_t^2 \leq 2$. In the third step we have used item 1 from Lemma~\ref{lem:pot_evol}. In the fourth step, we have used item 3 of Lemma~\ref{lem:pot_evol}. In the final step we have used Theorem~\ref{thm:moment_control}.

 \subsection{Bounding the Variance:}

Now consider $\beta_{tK+i} = \|K\Gamma_{\tfrac{\alpha}{K}}^{-1}G_{\tfrac{\alpha}{K}}[b(\hat{\tilde{X}}_{tK+i})-b(\tilde{X}_{tK})] \|$. Using Lemma~\ref{lem:metric_ub} and item 1 of Lemma~\ref{lem:pot_evol}, we have:

\begin{align}
    &\beta_{tK+i}^{2} \leq \frac{CK\alpha L^2}{\gamma}\sup_{0\leq i\leq K-1}\|\hat{\tilde{U}}_{tK+i}-\tilde{U}_{tK}\|^2 \nonumber\\
&\leq \frac{CK\alpha L^2}{\gamma}\left[\alpha^2\|\tilde{V}_{tK}\|^2 + \alpha^4 \|\nabla F(\tilde{U}_{tK})\|^2 + M_{t,K}^2\right] .
\end{align}

Therefore, we note that for any $p \geq 1$, under the assumptions of Theorem~\ref{thm:moment_control}, we must have:

\begin{align}
    &\sum_{t=0}^{T-1}\sum_{i=0}^{K-1}\bE \beta_{tK+i}^{2p} \leq C_p \frac{K^{p+1}\alpha^p L^{2p}}{\gamma^p}\sum_{t=0}^{T-1}\left[\alpha^{2p}\bE \|\tilde{V}_{tK}\|^{2p} + \alpha^{4p}\bE\|\nabla F(\tilde{U}_{tK})\|^{2p} + \bE M_{t,K}^{2p}\right] \nonumber \\
&\leq C_p \frac{K^{p+1}\alpha^p L^{2p}}{\gamma^p}\left[\alpha^{2p}\bE \mathcal{S}_{2p}(V)+ \alpha^{4p}\bE \mathcal{S}_{2p}(\nabla F) + T\gamma^p\alpha^{3p}(d+\log K)^p\right] \nonumber \\
&\leq C_p \frac{K^{p+1}\alpha^{3p-1} L^{2p}}{\gamma^{p+1}}\left[\bE \|\tilde{V}_0\|^{2p} + \bE |(F(\Psi_0)-F(\Psi_T))^{+}|^p +1\right] \nonumber \\
&\quad +  C_p\frac{K^{p+1}\alpha^{3p} L^{2p} T}{\gamma^p} \left(\frac{\gamma^{2p}}{L^p}+ (\gamma \alpha T)^{p-1} \right) (d+\log K)^p. \label{eq:var_moment_control}
\end{align}

 \subsection{Finishing the Proof}

 For $p \geq 1$, we define $\Delta^{(p)} = \bE (F(U_0 +\tfrac{V_0}{\gamma}) - F(\vx^{*}))^p + \bE\|V_0\|^{2p} + 1$. By $\mathsf{LHS} \lesssim \mathsf{RHS}$ we denote $\mathsf{LHS} \leq C.\mathsf{RHS}$ for some universal positive constant $C$.
 We now apply Theorem~\ref{thm:comparison_main} (with $r = 7$) along with the bounds in Equations~\eqref{eq:bias_moment_control} and~\eqref{eq:var_moment_control}:

\begin{align}&\KL{\mathsf{Law}((X^{\mathsf{P}}_{t})_{0\leq t \leq T})}{\mathsf{Law}((X_{Kt})_{0\leq t \leq T})} \nonumber \\ &\leq  \bE[\|B_{sK+i}\|^2]  + C\bE\left[\frac{\beta^4_{sK+i}}{K^2} + \frac{\beta^{10}_{sK+i}}{K^3}+\frac{\beta^6_{sK+i}}{K^2} + \frac{\beta^{2r}_{sK+i}}{K}\right] \nonumber \\
&\lesssim \frac{C\alpha^6L^4}{\gamma^2}\Delta^{(1)}   + \frac{K\alpha^5 L^4}{\gamma^3}\Delta^{(2)} + \frac{K^2\alpha^8 L^6}{\gamma^4}\Delta^{(3)} + \frac{K^3\alpha^{14} L^{10}}{\gamma^6}\Delta^{(5)}   + \frac{K^7\alpha^{20} L^{14}}{\gamma^8}\Delta^{(7)} \nonumber \\
&\quad  +C\alpha^7 L^3\gamma T(d+\log K) + K\alpha^6 \gamma^2L^2 T (d+\log K)^2 + \frac{K\alpha^7 L^4 T^2}{\gamma}(d+\log K)^2 \nonumber \\
&\quad + K^2\alpha^9 L^3 \gamma^3 T (d+\log K)^3 + \frac{K^2\alpha^{11}L^6 T^3}{\gamma}(d+\log K)^3  + K^3\alpha^{15}L^5\gamma^5 T(d+\log K)^5 \nonumber \\
&\quad+ \frac{K^3\alpha^{19}L^{10}T^5}{\gamma}(d+\log K)^5 + K^7\alpha^{21}L^7\gamma^7 T (d+\log K)^7 \nonumber \\ &\quad + \frac{K^7\alpha^{27}L^{14}T^7}{\gamma} (d+\log K)^7. \label{eq:ulmc_full}
\end{align}

\section{Convergence Under Isoperimetry Assumptions}
\label{sec:isoperimetry}
We now prove the results concerning convergence under the assumption that $\pistar$ satisfies the Logarithmic Sobolev Inequalities (LSI). We first define the LSI by following the discussion in \cite{vempalawibisono19} and refer to this work for further details. 
\begin{definition}
    We say that a measure $\pistar$ over $\bR^d$ satsifies $\lambda$-LSI for some $\lambda > 0$, if for every smooth $g: \bR^d \to \bR$ such that $\bE_{X\sim \pistar} g^2(X) < \infty$, the following inequality is satisfied:
$$\bE_{\pistar} g^2\log g^2 - \bE_{\pistar} g^2 \log \bE_{\pistar} g^2 \leq \frac{2}{\lambda}\bE_{\pistar} \|\nabla g\|^2.$$
\end{definition}

We note that whenever the density $\pistar(\vx) = e^{-F(\vx)}$ and $F$ is $\lambda$ strongly convex, then $\pistar$ satisfies the $\lambda$-LSI. It is well known that $\lambda$-LSI implies the Poincare inequality (PI) with parameter $\lambda$ defined below:
 
\begin{definition}
    We say that a measure $\pistar$ over $\bR^d$ satsifies $\lambda$-PI for some $\lambda > 0$, if for every smooth $g: \bR^d \to \bR$ such that $\bE_{X\sim \pistar} g^2(X) < \infty$, the following inequality is satisfied:
$$\bE_{\pistar} g^2- (\bE_{\pistar} g)^2 \leq \frac{1}{\lambda}\bE_{\pistar} \|\nabla g\|^2.$$
\end{definition}

We note the following useful lemma:
\begin{lemma}
    Let $\pistar(\vx) \propto \exp(-F(\vx))$ satisfy $\lambda$-PI or $\lambda$-LSI and let $F$ be $L$-smooth. Then, $L\geq \lambda$.
\end{lemma}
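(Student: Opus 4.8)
The plan is to reduce the statement to the Poincar\'e inequality and then pit two opposite estimates on $\mathrm{Cov}_{\pistar}(X)$ (for $X\sim\pistar$) against each other. Since $\lambda$-LSI implies $\lambda$-PI (noted above), I may assume that $\pistar$ satisfies $\lambda$-PI. The first, easy, step is an \emph{upper} bound on the covariance: fixing a unit vector $u\in\bR^d$ and a reference point $c$, apply $\lambda$-PI to the truncations $g_M(\vx):=\max(-M,\min(M,\langle u,\vx-c\rangle))$, which are Lipschitz with $\|\nabla g_M\|\le 1$ everywhere; this gives $\mathrm{Var}_{\pistar}(g_M)\le \tfrac1\lambda$ for all $M$, and letting $M\to\infty$ (Fatou) yields that $\langle u,X\rangle$ has finite variance with
\[
u^\top \mathrm{Cov}_{\pistar}(X)\,u\ \le\ \frac1\lambda .
\]

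The substantive step is the matching \emph{lower} bound $\mathrm{Cov}_{\pistar}(X)\succeq \tfrac1L\vI$, a Cram\'er--Rao / inverse-Brascamp--Lieb type inequality. Integrating by parts against the density $e^{-F}$ (legitimate after mollifying $F$, using that a $\lambda$-PI measure has finite polynomial moments) produces the identities $\bE[\nabla F(X)]=0$, $\bE[X_i\,\partial_j F(X)]=\delta_{ij}$, and $\bE[\nabla F(X)\nabla F(X)^\top]=\bE[\nabla^2 F(X)]$; the last one together with $\nabla^2 F\preceq L\vI$ (an a.e.\ consequence of $\nabla F$ being $L$-Lipschitz) shows that $\mathcal{I}:=\bE[\nabla F(X)\nabla F(X)^\top]\preceq L\vI$, and $\mathcal{I}$ is invertible because a null direction of $\mathcal{I}$ would force $F$ to be constant along a line, contradicting $\pistar$ being a probability measure. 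For a unit vector $u$ put $w:=\mathcal{I}^{-1}u$; the identity $\mathrm{Cov}(\langle u,X\rangle,\langle w,\nabla F(X)\rangle)=\langle u,w\rangle$ (from the integration-by-parts identities above) and Cauchy--Schwarz give $(u^\top\mathcal{I}^{-1}u)^2\le \bigl(u^\top\mathrm{Cov}_{\pistar}(X)u\bigr)\bigl(w^\top\mathcal{I} w\bigr)=\bigl(u^\top\mathrm{Cov}_{\pistar}(X)u\bigr)\bigl(u^\top\mathcal{I}^{-1}u\bigr)$, hence $u^\top\mathrm{Cov}_{\pistar}(X)u\ge u^\top\mathcal{I}^{-1}u\ge\tfrac1L$.

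Combining the two displays gives $\tfrac1L\le u^\top\mathrm{Cov}_{\pistar}(X)u\le\tfrac1\lambda$ for every unit $u$, i.e.\ $L\ge\lambda$, as claimed. The only part I expect to require genuine care is the covariance lower bound $\mathrm{Cov}_{\pistar}(X)\succeq \tfrac1L\vI$; everything else is bookkeeping. Within that step the mild obstacles are justifying the integration-by-parts identities and the invertibility of $\mathcal{I}$, both of which follow from standard mollification arguments together with the fact that $\lambda$-PI (hence $\lambda$-LSI) forces $\pistar$ to be a genuine probability measure with finite moments.
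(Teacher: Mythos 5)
Your proof is correct, and at the top level it follows the same strategy as the paper: sandwich the covariance $\Sigma$ of $X\sim\pistar$ between a lower bound coming from $L$-smoothness plus integration by parts and an upper bound coming from the Poincar\'e inequality. The execution of the lower bound differs, though. The paper works with two i.i.d.\ copies $X,X'$ and only the first-order identities $\bE\nabla F(X)=0$, $\bE\langle X,\nabla F(X)\rangle=d$: combining $\bE\langle X-X',\nabla F(X)-\nabla F(X')\rangle=2d$ with the monotonicity bound $\langle\nabla F(x)-\nabla F(y),x-y\rangle\le L\|x-y\|^2$ gives $\tr(\Sigma)\ge d/L$, which is then played against $\tr(\Sigma)\le d/\lambda$. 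You instead prove the stronger matrix inequality $\Sigma\succeq\mathcal{I}^{-1}\succeq\frac{1}{L}\vI$ via a Cram\'er--Rao argument, which requires the additional identity $\bE[\nabla F\nabla F^{\intercal}]=\bE[\nabla^2F]$, the a.e.\ bound $\nabla^2F\preceq L\vI$, and the invertibility of $\mathcal{I}$. Your route yields a sharper directional conclusion but carries more regularity bookkeeping (mollification, second derivatives, non-degeneracy of $\mathcal{I}$); the paper's symmetrization is more elementary and needs only the first-order integration-by-parts identities. Both are valid proofs of the claim.
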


\begin{proof}
  Let $X,X^{\prime} \sim \pistar$ be i.i.d. Applying integration by parts, we have:
$\bE \nabla F(X) = 0 $ and $\bE \langle X,\nabla F(X) \rangle = d$. Note that LSI implies PI and under $\lambda$-PI, we must have for any unit norm vector $v \in \bR^d$:

$$\mathsf{var}(\langle X,v\rangle) \leq \frac{1}{\lambda}.$$

Let $\Sigma$ be the covariance of $X$ (it exists due to the assumption of PI). Therefore, \begin{align}
   \frac{1}{L} &= \frac{1}{2dL} \bE \langle X-X^{\prime},\nabla F(X) -\nabla F(X^{\prime}) \rangle \nonumber \\
&\leq \frac{1}{2d}\bE\|X-X^{\prime}\|^2 = \frac{\tr(\Sigma)}{d} \leq \frac{1}{\lambda}.
\end{align}
In the second step we have used the fact that whenever $F$ is $L$-smooth, then $F(x)-F(y)\leq \langle\nabla F(y),x-y\rangle +  \frac{L}{2}\|x-y\|^2$ for all $x,y \in \bR^d$. 
\end{proof}
We now state \cite[Theorem 1]{vempala2019rapid} (adapting to our notation) which gives convergence guarantees for \olmc~under the assumption of $\lambda$-LSI. 
\begin{theorem}\label{thm:lsi_lmc}
    Suppose $\pi^{*}(\vx)\propto \exp(-F(\vx))$ satisfies the $\lambda$-LSI and that $F$ is $L$-smooth. Consider $\olmc$ with set size $\eta$ which satsifies $0 < \eta \leq \frac{\lambda}{2L^2}$. Then, LMC satisfies:
$$\KL{\mathsf{Law}(X_N)}{\pistar} \leq e^{-\lambda\eta N}\KL{\mathsf{Law}(X_0)}{\pistar} + \frac{8dL^2\eta}{\lambda}\,.$$

Where $X_0,\dots,X_N$ are the iterates of \olmc.
\end{theorem}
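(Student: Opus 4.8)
The plan is to establish a one–step contraction for $\KL{\mathsf{Law}(X_k)}{\pistar}$ via a continuous–time interpolation of each \olmc~step, and then unroll the recursion over $k=0,\dots,N-1$. Fix a step and, on $\tau\in[0,\eta]$, run the interpolating diffusion $dX_\tau = -\nabla F(X_0)\,d\tau + \sqrt{2}\,dB_\tau$ with $X_0$ equal to the current iterate; then $X_\eta$ has the law of the next iterate. Writing $\rho_\tau$ for the law of $X_\tau$ and $h_\tau := \rho_\tau/\pistar$, the marginal solves the Fokker--Planck equation $\partial_\tau\rho_\tau = \nabla\cdot\big(\rho_\tau\,\bE[\nabla F(X_0)\mid X_\tau=\cdot]\big) + \Delta\rho_\tau$. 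Differentiating the relative entropy, using $\int\partial_\tau\rho_\tau=0$, integrating by parts, and the identity $\nabla\log\rho_\tau = \nabla\log h_\tau - \nabla F$, one gets
$$\frac{d}{d\tau}\KL{\rho_\tau}{\pistar} = -\bE\big\langle \nabla\log h_\tau(X_\tau),\ \nabla\log h_\tau(X_\tau) + \bE[\nabla F(X_0)\mid X_\tau] - \nabla F(X_\tau)\big\rangle .$$

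Next I would apply $\langle a,a+w\rangle \ge \tfrac12\|a\|^2 - \tfrac12\|w\|^2$ with $w := \bE[\nabla F(X_0)\mid X_\tau] - \nabla F(X_\tau)$, followed by Jensen's inequality on the conditional expectation, to obtain $\frac{d}{d\tau}\KL{\rho_\tau}{\pistar} \le -\tfrac12\,\bE\|\nabla\log h_\tau(X_\tau)\|^2 + \tfrac12\,\bE\|\nabla F(X_0) - \nabla F(X_\tau)\|^2$. For the discretization error, $L$-smoothness gives $\bE\|\nabla F(X_0)-\nabla F(X_\tau)\|^2 \le L^2\bE\|X_\tau-X_0\|^2$, and from $X_\tau - X_0 = -\tau\nabla F(X_0) + \sqrt2(B_\tau-B_0)$ one has $\bE\|X_\tau-X_0\|^2 \le \tau^2\bE\|\nabla F(X_0)\|^2 + 2d\tau \le \eta^2\bE\|\nabla F(X_0)\|^2 + 2d\eta$. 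The one genuine input needed here is a uniform–in–$k$ bound $\sup_k\bE\|\nabla F(X_k)\|^2 = O(dL)$ along the chain: at stationarity $\bE_{\pistar}\|\nabla F\|^2 = \bE_{\pistar}\Delta F \le dL$ by integration by parts, and one shows the chain's value stays of the same order (e.g.\ via a second–moment recursion for $\bE\|X_k-\vx^{*}\|^2$ and $\bE\|\nabla F\|^2 \le L^2\bE\|X-\vx^{*}\|^2$). Using $\eta\le\lambda/(2L^2)$ and $L\ge\lambda$ (so $L\eta\le\tfrac12$), the error term is then at most $O(dL^2\eta)$, say $\le 4dL^2\eta$.

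Now invoke $\lambda$-LSI, which with the test function $g=\sqrt{h_\tau}$ is exactly the statement that the relative Fisher information dominates the relative entropy: $\bE\|\nabla\log h_\tau(X_\tau)\|^2 \ge 2\lambda\,\KL{\rho_\tau}{\pistar}$. Substituting gives $\frac{d}{d\tau}\KL{\rho_\tau}{\pistar} \le -\lambda\,\KL{\rho_\tau}{\pistar} + 4dL^2\eta$ on $[0,\eta]$, and Gr\"onwall (using $\tfrac{1-e^{-\lambda\eta}}{\lambda}\le\eta$) yields $\KL{\rho_\eta}{\pistar} \le e^{-\lambda\eta}\KL{\rho_0}{\pistar} + 4dL^2\eta^2$. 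Since $\rho_0$ and $\rho_\eta$ are the laws of consecutive \olmc~iterates, iterating this over $k=0,\dots,N-1$ and summing the geometric series $\sum_{j\ge0}e^{-\lambda\eta j} = (1-e^{-\lambda\eta})^{-1}\le 2/(\lambda\eta)$ (valid since $\lambda\eta\le\tfrac12$) gives $\KL{\mathsf{Law}(X_N)}{\pistar} \le e^{-\lambda\eta N}\KL{\mathsf{Law}(X_0)}{\pistar} + \tfrac{8dL^2\eta}{\lambda}$.

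\textbf{Main obstacle.} The differential–inequality bookkeeping is routine; the delicate parts are (i) making the Fokker--Planck / entropy–dissipation computation rigorous — enough regularity and tail decay of $\rho_\tau$ to justify the integration by parts, and correct handling of the conditional drift $\bE[\nabla F(X_0)\mid X_\tau]$ — and (ii) the uniform moment bound $\sup_k\bE\|\nabla F(X_k)\|^2 = O(dL)$, which is what actually controls the discretization error and requires its own (routine but non-trivial) argument rather than merely the stationary estimate $\bE_{\pistar}\|\nabla F\|^2\le dL$.
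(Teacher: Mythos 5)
First, note that the paper does not prove this theorem itself: it is quoted verbatim (up to notation) from \cite[Theorem 1]{vempala2019rapid}, and your interpolation/entropy-dissipation architecture — Fokker--Planck for the interpolated step, the identity for $\frac{d}{d\tau}\KL{\rho_\tau}{\pistar}$, the $\langle a,a+w\rangle$ split, LSI as $J_{\pistar}(\rho_\tau)\ge 2\lambda\,\KL{\rho_\tau}{\pistar}$, Gr\"onwall, and the geometric sum — is exactly the argument used there. Up to the point where you must control $\bE\|\nabla F(X_0)\|^2$ per step, your computation is correct.

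The genuine gap is the claimed uniform bound $\sup_k\bE\|\nabla F(X_k)\|^2=O(dL)$. This is false as stated: the theorem makes no assumption on the initialization beyond finiteness of $\KL{\mathsf{Law}(X_0)}{\pistar}$, and for a deterministic $X_0=x_0$ with $\|\nabla F(x_0)\|$ huge the bound already fails at $k=0$; any correct bound must depend on the law of $X_k$, not only on $d$ and $L$. Moreover, the ``routine second-moment recursion'' you invoke does not close under $\lambda$-LSI alone: without convexity (and LSI is explicitly meant to cover non-log-concave targets here) the cross term $-2\eta\,\bE\langle X_k-\vx^{*},\nabla F(X_k)\rangle$ in the recursion for $\bE\|X_k-\vx^{*}\|^2$ has no sign, so there is no contraction to exploit. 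The cited proof avoids this entirely with the inequality
$$\bE_{\rho}\|\nabla F\|^2\;\le\;\frac{4L^2}{\lambda}\KL{\rho}{\pistar}+2dL,$$
obtained from Talagrand's inequality $\mathcal{W}_2^2(\rho,\pistar)\le\frac{2}{\lambda}\KL{\rho}{\pistar}$ (a consequence of LSI) together with the stationary estimate $\bE_{\pistar}\|\nabla F\|^2\le dL$; the resulting $\KL{\rho_k}{\pistar}$-dependent contribution is then absorbed into the exponential contraction factor using $\eta\le\frac{\lambda}{2L^2}$. (An equivalent route bounds $\bE_\rho\|\nabla F\|^2\le J_{\pistar}(\rho)+2dL$ and absorbs it into the Fisher-information term.) Replacing your moment claim by this step repairs the proof; as written, the proposal does not establish the theorem.
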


Whenever $\epsilon \leq 1$, we take $\frac{\alpha}{K} = \frac{\lambda \epsilon^2}{16 dL^2}$. $T = \frac{1}{\alpha \lambda}\log\left(\frac{2\KL{\mathsf{Law}(X_0)}{\pistar}}{\epsilon^2}\right)$. With the suggested initialization in \cite{vempala2019rapid} i.e $X_0 \sim \cN(\vx,\frac{\vI}{L})$ for any stationary point $\vx$ of $F$ we have $\KL{\mathsf{Law}(X_0)}{\pistar} \leq \tilde{O}(d)$. Now, take $\alpha = c_0\min\left(\frac{\epsilon \sqrt{\lambda}}{L^{\frac{3}{2}}d^{\frac{3}{4}}},\frac{\lambda^{\frac{7}{27}}\epsilon^{\frac{2}{27}}}{L^{\frac{20}{27}}d^{\frac{7}{54}}}\right)$, $ T = \tilde{O}(\frac{C_1}{\lambda \alpha})$ and $K = \frac{C_2}{\alpha L \sqrt{d}}$. 

Let $X_0,\dots,X_{TK}$ be the iterates of \olmc~with step-size $\frac{\alpha}{K}$ with the parameters as chosen above. Applying Theorem~\ref{thm:lsi_lmc},  we conclude: $\KL{\mathsf{Law}(X_{TK})}{\pistar} \leq \frac{\epsilon^2}{8}$. 

Let $X_0^{P},\dots,X_T^{P}$ be the iterates of $\pmpalg(A,G,\Gamma,\alpha,K)$ with $A,G,\Gamma$ corresponding to \olmc. Applying Theorem~\ref{thm:underdamped_main} (with the explicit lower order terms given in Equation~\eqref{eq:olmc_full_result}), we conclude that:

$\KL{\mathsf{Law}(X^{P}_{T})}{\mathsf{Law}(X_{TK})} \leq \frac{\epsilon^2}{8}$.

Now, by Pinsker's inequality and the triangle inequality for TV distance, we have:
\begin{align}
    \mathsf{TV}(\mathsf{Law}(X^{P}_T),\pistar) \leq \sqrt{2\KL{\mathsf{Law}(X^{P}_{T})}{\mathsf{Law}(X_{TK})}} + \sqrt{2\KL{\mathsf{Law}(X_{TK})}{\pistar}} \leq \epsilon.
\end{align}
This proves the result in Table~\ref{tab:results_non_convex} for \olmc.

We now consider the analogous result for \ulmc~below. The following result is a re-statement of \cite[Theorem 7]{zhang2023improved}. We note that $C_{\mathsf{LSI}}$ in this work is $\frac{1}{\lambda}$ in our work.  
\begin{theorem}\label{thm:lsi_ulmc}
Let $X_0,\dots,X_N$ be the iterates of \ulmc~with step-size $\eta$. We make the following assumptions:
\begin{enumerate}
    \item $F$ is $L$-smooth and that $\pistar$ satisfies $\lambda$-LSI.
    \item Suppose $\vx$ be any stationary point of $F$. Initialize $X_0$ such that $U_0,V_0$ are independent with $U_0 \sim \cN(\vx,\zeta \vI)$ ($\zeta$ as given in \cite{zhang2023improved}) and $V_0 \sim \cN(0,\vI)$.
    \item $F(\vx)-F(\vx^{*}) = \tilde{O}(d)$ and $\bE_{U,V \sim \pistar}\|U\| \leq \tilde{O}(d)$.
\end{enumerate}
Let $\epsilon \leq 1$.
Then, we let
 $\eta = \tilde{\Theta}(\frac{\epsilon\sqrt{\lambda}}{L\sqrt{d}})$, $c\sqrt{L} \leq \gamma \leq C\sqrt{L}$,
    $T = \tilde{\Theta}(\frac{L^{\frac{3}{2}}\sqrt{d}}{\lambda^{\frac{3}{2}}\epsilon})$ then,
$$\mathsf{TV}(\mathsf{Law}(U_{N}),\pistar)\leq \epsilon.$$
\end{theorem}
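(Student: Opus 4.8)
This statement is a transcription of \cite[Theorem~7]{zhang2023improved} into our notation (recall that $C_{\mathsf{LSI}}$ there equals $1/\lambda$ here), so strictly speaking it is a citation rather than a new result; I outline below how the argument of \cite{zhang2023improved} proceeds. The plan is the standard template of \emph{geometric ergodicity of the SDE} combined with a \emph{one-step discretization error}, carried out in a metric adapted to the hypocoercive (degenerate-noise) nature of kinetic Langevin dynamics. First, the continuous-time kinetic Langevin SDE has stationary law $\pistar(U,V)\propto e^{-F(U)-\|V\|^2/2}$; since $\pistar_U\propto e^{-F}$ satisfies $\lambda$-LSI and $F$ is $L$-smooth, its generator is hypocoercive. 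One therefore works with a \emph{twisted} relative entropy --- obtained, morally, by composing $F$ with the shift $U\mapsto U+V/\gamma$ (the same change of variables as $\psi_t$ in Section~\ref{sec:underdamped}) and adding a quadratic form in $V$ --- whose dissipation along the flow is at least a constant multiple of itself. With the choice $\gamma\asymp\sqrt L$ this yields exponential decay of the (twisted) KL divergence at rate $\asymp\lambda/\sqrt L$, which is the origin of the acceleration over overdamped LMC.

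Next one needs uniform-in-time moment bounds for the ULMC iterates: using $L$-smoothness and the Lyapunov function $F(U+V/\gamma)+a\|V\|^2$, one shows $\sup_n\bE\bigl[\|V_n\|^{2p}+(F(U_n)-F(\vx^{*}))^p\bigr]\lesssim d^p$ whenever $\eta\lesssim 1/\sqrt L$ --- the discrete analogue of Lemma~\ref{lem:pot_evol} and Theorem~\ref{thm:moment_control} of the present paper. With these in hand the one-step discretization error is controlled by a Girsanov/interpolation estimate: over one step ULMC integrates the Ornstein--Uhlenbeck part exactly and only freezes $\nabla F$ at the left endpoint, so the per-step contribution to the twisted KL is of order $\eta^3 L^2 d$ (modulo $\gamma$-factors), one power of $\eta$ smaller than for overdamped LMC. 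Unrolling the resulting recursion $\mathsf{KL}_{n+1}\le e^{-c\lambda\eta/\sqrt L}\mathsf{KL}_n + O(\eta^3 L^2 d)$ gives
$$\KL{\mathsf{Law}(U_N,V_N)}{\pistar}\ \lesssim\ e^{-c\lambda\eta N/\sqrt L}\,\KL{\mathsf{Law}(U_0,V_0)}{\pistar}\ +\ \frac{L^2\eta^2 d}{\lambda}.$$

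It remains to set the parameters. The stated Gaussian initialization, together with $F(\vx)-F(\vx^{*})=\tilde{O}(d)$ and $\bE_{\pistar}\|U\|\le\tilde{O}(d)$, gives $\KL{\mathsf{Law}(U_0,V_0)}{\pistar}=\tilde{O}(d)$ (one controls the initial KL by taking the variance $\zeta$ comparably to $1/L$, exactly as in the overdamped case preceding Theorem~\ref{thm:lsi_lmc}). Choosing $\eta=\tilde{\Theta}\bigl(\epsilon\sqrt\lambda/(L\sqrt d)\bigr)$ makes the second term at most $\epsilon^2/8$; choosing $N=T=\tilde{\Theta}\bigl(L^{3/2}\sqrt d/(\lambda^{3/2}\epsilon)\bigr)$ --- so that the horizon $\eta N\asymp(\sqrt L/\lambda)\log(\tilde{O}(d)/\epsilon^2)$ and hence $\lambda\eta N/\sqrt L\gtrsim\log(\tilde{O}(d)/\epsilon^2)$ --- makes the first term at most $\epsilon^2/8$. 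Finally Pinsker's inequality gives $\mathsf{TV}(\mathsf{Law}(U_N,V_N),\pistar)\le\epsilon$, and since projecting onto the position coordinate is a deterministic map it does not increase total variation, so $\mathsf{TV}(\mathsf{Law}(U_N),\pistar)\le\epsilon$.

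The principal obstacle throughout is the degeneracy of the diffusion: the noise acts only on the velocity block, so neither the exponential decay in the first step nor the per-step KL bound follows from a naive LSI/Girsanov argument on the joint law. Both require the hypocoercive twist above together with careful accounting of how the error from freezing $\nabla F$ propagates, through the coupling $dU=V\,d\tau$, into the position coordinate; establishing the twisted decay rate $\asymp\lambda/\sqrt L$ and the improved $\eta^3$ discretization scaling simultaneously is precisely the technical content of \cite{zhang2023improved}.
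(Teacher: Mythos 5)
Your proposal matches the paper exactly: the paper offers no proof of this statement either, presenting it verbatim as a restatement of \cite[Theorem 7]{zhang2023improved} with the translation $C_{\mathsf{LSI}} = 1/\lambda$, which is precisely how you treat it. Your accompanying sketch of the hypocoercive-entropy argument from that reference is a reasonable reconstruction but is not required, since the statement is invoked here purely as an external convergence guarantee to be combined with Theorem~\ref{thm:underdamped_main}.
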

Let $X_0,\dots,X_{TK}$ be the iterates of \ulmc~with step-size $\frac{\alpha}{K}$. We take:
$\alpha = \tilde{\Theta}(\sqrt{\frac{\epsilon}{L}}(\frac{\lambda}{Ld})^{\frac{5}{12}})$, $K = \tilde{\Theta}(\frac{1}{\alpha} \frac{\lambda^{\frac{1}{3}}}{d^{\frac{1}{3}}L^{\frac{5}{6}}})$ and $N = \tilde{\Theta}(\frac{\sqrt{L}}{\lambda \alpha})$. We now consider $X_{0}^{P},\dots,X_T^{P}$ to be the iterates of 
$\pmpalg(A,G,\Gamma,\alpha,K)$ with $A,G,\Gamma$ corresponding to \ulmc, such that $X_0^{P}$ and $F$ satisfy the same assumptions in Theorem~\ref{thm:lsi_ulmc}. Applying Theorem~\ref{thm:underdamped_main} along with Pinsker's inequality, we conclude that:
$$\mathsf{TV}(\mathsf{Law}(X_T^{(P)}),\mathsf{Law}(X_{TK})) \leq \frac{\epsilon}{2}\,.$$ Now, using Theorem~\ref{thm:lsi_ulmc}, along with the triangle inequality for TV, we conclude:

$$\mathsf{TV}(\mathsf{Law}(X_T^{(P)}),\pistar) \leq \epsilon.$$

This proves the bounds for \ulmc~in Table~\ref{tab:results_non_convex}.
\section{Some Technical Results}
We state the following technical lemma which will be proved in Section~\ref{subsec:norm-norm}.
\begin{lemma}\label{lem:norm-norm}
    Let $a_1,\dots,a_T$ be random vectors in $\bR^d$. Consider the partitioning $\{1,\dots,T\} = \cup_{k=1}^{\lceil T/N\rceil}\mathcal{T}_k$ where $ \mathcal{T}_k := \{(k-1)N+1,\dots,\min(kN,T)\}$. Assume that $N$ divides $T$ and $T \geq N$.  Then, we have:

$$ \sum_{j=1}^{T}\bE\|a_j\|^{2p} \leq \frac{2^{p}}{N}\sum_{k=1}^{\frac{T}{N}}\sum_{j\in \mathcal{T}_k}\sum_{\substack{j^{\prime}\in \mathcal{T}_k\\j^{\prime} > j}} \bE \|a_j - a_{j^{\prime}}\|^{2p} + \left(\frac{2}{N}\right)^{p-1}\bE(\sum_{j=1}^{T}\|a_j\|^2)^{p}.$$

\end{lemma}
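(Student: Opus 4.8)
The plan is to expand each $\|a_j\|^{2p}$ by inserting a "reference point" from within the same block $\mathcal{T}_k$ and averaging over the choice of reference point. Fix a block $\mathcal{T}_k$ and an index $j \in \mathcal{T}_k$. For any other index $j' \in \mathcal{T}_k$ we have the triangle inequality $\|a_j\| \leq \|a_j - a_{j'}\| + \|a_{j'}\|$, hence $\|a_j\|^{2p} \leq 2^{2p-1}(\|a_j - a_{j'}\|^{2p} + \|a_{j'}\|^{2p})$. Averaging this over all $j' \in \mathcal{T}_k$ (there are $N$ of them) gives
\begin{align}
\|a_j\|^{2p} \leq \frac{2^{2p-1}}{N}\sum_{j'\in\mathcal{T}_k}\|a_j-a_{j'}\|^{2p} + \frac{2^{2p-1}}{N}\sum_{j'\in\mathcal{T}_k}\|a_{j'}\|^{2p}.
\end{align}
Summing over $j \in \mathcal{T}_k$ and then over $k$, the first term contributes a sum over ordered pairs in each block, which after symmetrizing (each unordered pair $\{j,j'\}$ with $j\neq j'$ appears twice, and the $j=j'$ term vanishes) becomes $\tfrac{2^{2p-1}}{N}\cdot 2\sum_{k}\sum_{j\in\mathcal{T}_k}\sum_{j'\in\mathcal{T}_k, j'>j}\|a_j-a_{j'}\|^{2p} = \tfrac{2^{2p}}{N}\sum_{k}\sum_{j\in\mathcal{T}_k}\sum_{j'>j}\|a_j-a_{j'}\|^{2p}$. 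This is already slightly worse than the claimed $2^p/N$, so I would instead use the tighter split $\|a_j\|^{2} \leq 2\|a_j-a_{j'}\|^{2} + 2\|a_{j'}\|^2$ first and only then raise to the $p$-th power, or directly use the inequality $(x+y)^p \le 2^{p-1}(x^p+y^p)$ applied once at the level of squared norms — I will revisit the exact constant bookkeeping below.

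The cleaner route: for each block, $\sum_{j\in\mathcal{T}_k}\|a_j\|^{2}$ is fixed; write $\|a_j\|^2 \le 2\|a_j - a_{j'}\|^2 + 2\|a_{j'}\|^2$, average over $j'$ to get $\|a_j\|^2 \le \tfrac{2}{N}\sum_{j'}\|a_j-a_{j'}\|^2 + \tfrac{2}{N}\sum_{j'}\|a_{j'}\|^2$. Raising to the $p$-th power and using $(u+v)^p \le 2^{p-1}(u^p+v^p)$,
\begin{align}
\|a_j\|^{2p} \le 2^{p-1}\Bigl(\tfrac{2}{N}\Bigr)^p\Bigl(\sum_{j'\in\mathcal{T}_k}\|a_j-a_{j'}\|^2\Bigr)^p + 2^{p-1}\Bigl(\tfrac{2}{N}\Bigr)^p\Bigl(\sum_{j'\in\mathcal{T}_k}\|a_{j'}\|^2\Bigr)^p.
\end{align}
For the first term apply the power-mean (Jensen) inequality $\bigl(\sum_{j'}c_{j'}\bigr)^p \le N^{p-1}\sum_{j'}c_{j'}^p$ with $c_{j'}=\|a_j-a_{j'}\|^2$, giving $\le 2^{p-1}\tfrac{2^p}{N}\sum_{j'\in\mathcal{T}_k}\|a_j-a_{j'}\|^{2p}$. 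For the second term simply bound $\sum_{j'\in\mathcal{T}_k}\|a_{j'}\|^2 \le \sum_{j=1}^{T}\|a_j\|^2$. Now sum over $j\in\mathcal{T}_k$ and over $k$: the first contribution, after symmetrizing the double sum over pairs and dropping $j=j'$, is at most $\tfrac{2^{p}}{N}\sum_{k}\sum_{j\in\mathcal{T}_k}\sum_{j'>j}\|a_j-a_{j'}\|^{2p}$ once I absorb the factor $2^{p-1}$ — I will need to track whether the symmetrization factor $2$ combines with $2^{p-1}2^p/N$ to land exactly on $2^p/N$; if the constant comes out as $2^{2p-1}/N$ I will restate with that constant, but the structure is what matters. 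The second contribution summed over all $j\in\cup_k\mathcal{T}_k = \{1,\dots,T\}$ gives $T \cdot 2^{p-1}\tfrac{2^p}{N}(\sum_j\|a_j\|^2)^p / $ — wait, there is no extra $T$ because the bound $\bigl(\sum_{j'\in\mathcal{T}_k}\|a_{j'}\|^2\bigr)^p \le \bigl(\sum_{j=1}^T\|a_j\|^2\bigr)^p$ is used once per $j$, so summing over the $T$ values of $j$ yields a factor $T/N^p \cdot 2^{2p-1}$; to recover the clean $(2/N)^{p-1}$ I should instead NOT expand all the way and keep $\sum_{j\in\mathcal{T}_k}\bigl(\sum_{j'\in\mathcal{T}_k}\|a_{j'}\|^2\bigr)^p = N\bigl(\sum_{j'\in\mathcal{T}_k}\|a_{j'}\|^2\bigr)^p$, then $\sum_k N (\cdot)^p$, and bound $\sum_k\bigl(\sum_{j'\in\mathcal{T}_k}\|a_{j'}\|^2\bigr)^p \le \bigl(\sum_{j=1}^T\|a_j\|^2\bigr)^p$ since the blocks partition $\{1,\dots,T\}$ and $x\mapsto x^p$ is superadditive-free for nonnegative terms — actually $\sum_k y_k^p \le (\sum_k y_k)^p$ for $y_k\ge 0$, $p\ge 1$. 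That gives $N\cdot 2^{p-1}(2/N)^p (\sum_j\|a_j\|^2)^p = 2^{2p-1}N^{1-p}(\sum_j\|a_j\|^2)^p$, and the target exponent $N^{-(p-1)}=N^{1-p}$ matches; the constant is $2^{2p-1}$ versus the claimed $2^{p-1}$, so I will double-check whether the first split should be $\|a_j\|^2 \le (1+\delta)\|a_j-a_{j'}\|^2 + (1+1/\delta)\|a_{j'}\|^2$ with a well-chosen $\delta$ to sharpen constants.

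The main obstacle is purely constant-tracking: every naive application of $(u+v)^p\le 2^{p-1}(u^p+v^p)$ and of triangle inequality inflates the constant, and getting exactly $2^p/N$ and $(2/N)^{p-1}$ rather than something like $2^{2p}/N$ requires care about when to apply Jensen versus superadditivity and whether to symmetrize before or after raising to the $p$-th power. I expect the intended proof averages $\|a_j\|^2$ over the reference point \emph{inside} the block first (producing the $2/N$ factors), handles the cross term by Jensen to pull out one block sum into the pairwise-difference sum, and uses $\sum_k(\sum_{j\in\mathcal{T}_k}\|a_j\|^2)^p \le (\sum_j\|a_j\|^2)^p$ for the diagonal term; the displayed constants in the statement are presumably the result of a slightly more economical bookkeeping (e.g.\ using $\|a_j\|^2 \le \tfrac{1}{N}\sum_{j'}(\|a_j-a_{j'}\|+\|a_{j'}\|)^2$ and then the scalar inequality $(\tfrac1N\sum_{j'} s_{j'})^p \le \tfrac1N\sum_{j'} s_{j'}^p$ directly). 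I would write the proof along these lines and, if the constant legitimately comes out as $2^{2p-1}$ rather than $2^p$, note that the weaker constant still suffices for every downstream use in the paper since all such terms are absorbed into $C_p$.
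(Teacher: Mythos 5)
Your argument is essentially the paper's: the paper centers each block at its mean $\bar a_k=\frac{1}{N}\sum_{j\in\mathcal{T}_k}a_j$, splits $\|a_j\|^{2p}$ into $\|a_j-\bar a_k\|^{2p}+\|\bar a_k\|^{2p}$ terms, controls $\sum_{j}\|a_j-\bar a_k\|^{2p}$ by the pairwise differences via Jensen, and uses $\sum_k y_k^p\le(\sum_k y_k)^p$ together with $\|\bar a_k\|^2\le\frac1N\sum_{j'\in\mathcal{T}_k}\|a_{j'}\|^2$ for the second term --- your averaging over reference indices $j'$ is the same decomposition after one extra application of Jensen. Your worry about the constants is well-founded but harmless: the paper's own first step invokes $\|x+y\|^{2p}\le 2^{p-1}(\|x\|^{2p}+\|y\|^{2p})$, which should be $2^{2p-1}$, so the displayed constants are optimistic by a factor of $2^p$ there as well, and as you correctly note this is absorbed into $C_p$ in every downstream use.
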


\begin{lemma}\label{lem:mart_conc}
    Let $Z_1,\dots,Z_T$ be i.i.d. standard Gaussian random variables, and a sequence of random vectors $g_1,\dots,g_T$ such that $g_t$ is independent of $Z_t,\dots,Z_{T}$. Then,  

    $$\bE |\sum_{s=1}^{T}\langle g_s,Z_s\rangle|^{p} \leq C_p \sqrt{\bE (\sum_{s=1}^{T}\|g_s\|^2)^p}.$$
\end{lemma}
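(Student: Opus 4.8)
The plan is to prove this as a Burkholder–Davis–Gundy-type inequality for discrete martingales, exploiting the fact that $S_T := \sum_{s=1}^T \langle g_s, Z_s\rangle$ is a martingale with respect to the filtration $\mathcal{F}_s := \sigma(Z_{s+1},\dots,Z_T, g_1,\dots,g_T)$ running \emph{backward} in time — indeed, since $g_s$ is independent of $Z_s,\dots,Z_T$, conditioning on $\mathcal{F}_s$ we have $\bE[\langle g_s,Z_s\rangle \mid \mathcal{F}_s] = \langle g_s, \bE[Z_s\mid\mathcal{F}_s]\rangle$, and $Z_s$ is independent of $\mathcal{F}_s$ so this is $0$. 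Thus the partial sums form a martingale and the predictable quadratic variation is $\sum_{s=1}^T \|g_s\|^2$ (here using $\bE[\langle g_s, Z_s\rangle^2 \mid \mathcal{F}_s] = \|g_s\|^2$ since $Z_s \sim \cN(0,\vI)$ is independent of $\mathcal{F}_s$ and of $g_s$). The BDG inequality then gives $\bE|S_T|^p \le C_p\, \bE\big(\sum_{s=1}^T \|g_s\|^2\big)^{p/2} = C_p\sqrt{\bE\big(\sum_{s=1}^T\|g_s\|^2\big)^p}$ after one application of Jensen's inequality (to move the power inside, if $p/2 \le 1$; or directly if $p \ge 2$, noting $\bE(\cdot)^{p/2} \le (\bE(\cdot)^p)^{1/2}$ by Jensen when $p/2 \le p$, which always holds).

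\textbf{Key steps in order.} First, I would fix the backward filtration $\mathcal{F}_s = \sigma(g_1,\dots,g_T, Z_{s+1},\dots,Z_T)$ and verify that $M_s := \sum_{j=s}^T \langle g_j, Z_j\rangle$ (or its forward reindexing) is a martingale: the independence hypothesis on $g_t$ is exactly what is needed. Second, since the $Z_s$ are scalar-weighted by $g_s$, I would reduce to a scalar martingale by writing $\langle g_s, Z_s\rangle = \|g_s\| \cdot \langle g_s/\|g_s\|, Z_s\rangle$ and noting that conditionally on $\mathcal{F}_s$ and $g_s$, the increment $\langle g_s, Z_s\rangle$ is $\cN(0,\|g_s\|^2)$. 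Third, I would either invoke the standard discrete-time BDG inequality for martingales with the conditional quadratic variation $\langle M\rangle_T = \sum_{s=1}^T \|g_s\|^2$, or — to keep things self-contained for Gaussian increments — prove it directly: condition on all the $g_s$'s, so $S_T \mid (g_s)_s \sim \cN(0, \sum_s \|g_s\|^2)$, hence $\bE[|S_T|^p \mid (g_s)_s] = C_p (\sum_s \|g_s\|^2)^{p/2}$ with $C_p$ the $p$-th absolute moment of a standard Gaussian. Fourth, take the outer expectation and apply Jensen: $\bE(\sum_s\|g_s\|^2)^{p/2} \le (\bE(\sum_s\|g_s\|^2)^p)^{1/2}$, giving the claim.

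\textbf{On the main obstacle.} There is actually a subtlety I should flag: even conditionally on all the $g_s$, the $\langle g_s, Z_s\rangle$ need \emph{not} be jointly Gaussian unless the $g_s$ are deterministic, because $g_s$ may depend on $Z_1,\dots,Z_{s-1}$, creating dependence among the conditioned increments. So the cleanest route is \emph{not} "condition on all $g_s$" but rather the genuine martingale argument: $\bE[\langle g_s, Z_s\rangle \mid \mathcal{F}_s] = 0$ and $\bE[\langle g_s, Z_s\rangle^2 \mid \mathcal{F}_s] = \|g_s\|^2$ hold because $Z_s \perp \mathcal{F}_s$, and crucially $g_s$ \emph{is} $\mathcal{F}_s$-measurable (it depends only on $g_1,\dots,g_T$ and the later $Z$'s — wait, this needs care: $g_s$ is required to be independent of $Z_s,\dots,Z_T$, so $g_s$ is a function of $Z_1,\dots,Z_{s-1}$ and auxiliary randomness, hence $g_s$ is \emph{not} $\mathcal{F}_s$-measurable under the backward filtration). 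The correct filtration is therefore the \emph{forward} one $\mathcal{G}_s := \sigma(Z_1,\dots,Z_{s-1}, \text{aux})$ enlarged so that $g_s$ is $\mathcal{G}_s$-measurable and $Z_s \perp \mathcal{G}_s$; then $\sum_{s\le T}\langle g_s, Z_s\rangle$ is a genuine forward martingale and the discrete BDG inequality (e.g. \cite[Theorem 3.2]{hall2014martingale} or the Marcinkiewicz–Zygmund inequality for martingales) applies verbatim. This filtration bookkeeping — getting the measurability direction right so that the martingale property genuinely holds — is the main thing to get right; everything after is the standard BDG + Jensen computation.
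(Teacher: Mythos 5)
Your route is genuinely different from the paper's. The paper does not invoke BDG at all: it forms the exponential supermartingale $M_t = \exp\bigl(\sum_{s\le t}\lambda\langle g_s,Z_s\rangle - \tfrac{\lambda^2}{2}\|g_s\|^2\bigr)$, deduces $\bE M_T\le 1$, extracts a Freedman-type sub-Gaussian tail bound $\bP\bigl(|\sum_s\langle g_s,Z_s\rangle| > \tfrac{\lambda}{2}\sum_s\|g_s\|^2 + \tfrac{h}{\lambda}\bigr)\le 2e^{-h}$, integrates the tail to get $\bE|\sum_s\langle g_s,Z_s\rangle|^p \le C_p\lambda^p\,\bE(\sum_s\|g_s\|^2)^p + C_p\lambda^{-p}$, and finally optimizes over the \emph{deterministic} parameter $\lambda$ — which is exactly why the right-hand side ends up as $\sqrt{\bE(\sum_s\|g_s\|^2)^p}$ rather than the stronger $\bE(\sum_s\|g_s\|^2)^{p/2}$. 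Your martingale/BDG route, done correctly, yields that stronger intermediate bound and then weakens it by Jensen, so it proves (slightly more than) the stated lemma. Your diagnosis of the two pitfalls — that one cannot condition on all the $g_s$ and pretend the increments are jointly Gaussian, and that the filtration must run forward with $g_s$ measurable and $Z_s$ independent of the past — is exactly right and matches the implicit setup in the paper.

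One imprecision to fix: the "standard BDG inequality" controls $\bE|M_T|^p$ by the \emph{optional} quadratic variation $\bE[M]_T^{p/2} = \bE(\sum_s\langle g_s,Z_s\rangle^2)^{p/2}$, not by the \emph{predictable} bracket $\langle M\rangle_T = \sum_s\|g_s\|^2$ that you want. The inequality $\bE|M_T|^p\le C_p\,\bE\langle M\rangle_T^{p/2}$ is false in general for $p>2$ without an additional term; the correct tool is the Burkholder--Rosenthal inequality, $\bE|M_T|^p\le C_p\bigl(\bE\langle M\rangle_T^{p/2} + \sum_s\bE|d_s|^p\bigr)$. In your setting this extra term is harmless: conditionally on the past, $d_s=\langle g_s,Z_s\rangle\sim\cN(0,\|g_s\|^2)$, so $\bE|d_s|^p = m_p\,\bE\|g_s\|^p$ and $\sum_s\bE\|g_s\|^p\le \bE(\sum_s\|g_s\|^2)^{p/2}$ for $p\ge 2$ (the case $1\le p<2$ follows directly from $\bE|M_T|^p\le(\bE M_T^2)^{p/2}$). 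With that patch the argument closes.
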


\begin{proof}
    Let $\lambda \in \bR$. Consider the random variable $M_t = \exp(\sum_{s=1}^{t}\lambda\langle g_s,Z_s\rangle - \frac{\lambda^2\|g_s\|^2}{2}) $. $M_t$ is a super martingale and hence, we have:
$\bE[M_T] \leq 1$. Applying the Chernoff bound, we conclude that for anly $\lambda,\lambda > 0$ we must have:
 $$\bP( |\sum_{s=1}^{T}\langle g_s,Z_s\rangle| >\frac{\lambda}{2}\sum_{s=1}^{T}\|g_s\|^2 + \frac{h}{\lambda} ) \leq 2\exp(-h)\,.$$ 
$$\implies \bP( |\sum_{s=1}^{T}\langle g_s,Z_s\rangle|^p >C_p\lambda^p(\sum_{s=1}^{T}\|g_s\|^2)^p + h ) \leq 2\exp(-C_p \lambda h^{\tfrac{1}{p}})$$

Now, for any variable $X$, $\bP(X^{+} > h) = \bP(X > h) $ whenever $h \geq 0$. Now using the fact that:
$\bE X \leq \bE X^{+} = \int_{0}^{\infty}\bP(X^{+} > h)dh $, and taking $X = |\sum_{s=1}^{T}\langle g_s,Z_s\rangle|^p -C_p\lambda^p(\sum_{s=1}^{T}\|g_s\|^2)^p $:

$$\bE|\sum_{s=1}^{T}\langle g_s,Z_s\rangle|^p \leq C_p\lambda^p\bE(\sum_{s=1}^{T}\|g_s\|^2)^p + \int_{0}^{\infty}2\exp(-c_p\lambda h^{\tfrac{1}{p}})dh \leq C_p\lambda^p\bE(\sum_{s=1}^{T}\|g_s\|^2)^p + \frac{C_p}{\lambda^p} $$

We conclude the result by choosing $\lambda^{2p} = \frac{1}{\bE(\sum_{s=1}^{T}\|g_s\|^2)^p}$.
\end{proof}
\begin{lemma}\label{lem:binom_moments}
    Let $K$ be any positive integer. Consider $G \sim \bin(K,\frac{1}{K})$. Then for any $p \geq 1$, we must have: $\bE G^{p} \leq C_p$ for some constant depending only on $p$.
\end{lemma}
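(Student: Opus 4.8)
The plan is to reduce to integer exponents and then compute moments exactly. Since $G \ge 0$ and $p \le \lceil p\rceil =: m$, one has the pointwise bound $G^p \le 1 + G^m$ (check the cases $G\le 1$ and $G>1$ separately), so it suffices to show $\bE[G^m] \le C_m$ for every positive integer $m$, with a constant independent of $K$.

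For integer $m$ I would use the factorial moments of the binomial. Writing $x^{\underline{j}} := x(x-1)\cdots(x-j+1)$, the Stirling identity $x^m = \sum_{j=0}^{m} S(m,j)\,x^{\underline{j}}$, with the Stirling numbers of the second kind $S(m,j)\ge 0$, is a polynomial identity and hence holds for the random variable $G$. Taking expectations and using the standard formula $\bE[G^{\underline{j}}] = K^{\underline{j}}(1/K)^{j} = \prod_{i=0}^{j-1}(1-i/K)$ for $j \le K$ (and $\bE[G^{\underline{j}}]=0$ for $j>K$, since $G\le K$ a.s.), we see $\bE[G^{\underline{j}}] \le 1$ in every case. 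Therefore
\[
\bE[G^m] = \sum_{j=0}^{m} S(m,j)\,\bE[G^{\underline{j}}] \;\le\; \sum_{j=0}^{m} S(m,j) \;=\; B_m,
\]
the $m$-th Bell number, which depends only on $m$. Combined with the reduction above this gives the claim with, e.g., $C_p = 1 + B_{\lceil p\rceil}$. An equally short alternative avoids Stirling numbers entirely: for any $\lambda>0$, writing $G=\sum_{i=1}^K X_i$ with $X_i\sim\ber(1/K)$ i.i.d., the moment generating function is $\bE[e^{\lambda G}] = (1 + (e^{\lambda}-1)/K)^K \le e^{\,e^{\lambda}-1}$ by $(1+y/K)^K \le e^y$; since $e^{\lambda x}\ge (\lambda x)^m/m!$ for $x\ge 0$, this gives $\bE[G^m]\le m!\,\lambda^{-m}e^{\,e^{\lambda}-1}$, and taking $\lambda=1$ yields $\bE[G^m]\le m!\,e^{\,e-1}$, with the non-integer case handled as before.

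There is no genuine obstacle in this lemma; the only point requiring a moment's attention is that the constant must not depend on $K$, which is ensured in both routes by the elementary inequalities $K^{\underline{j}}/K^{j}\le 1$ and $(1+y/K)^{K}\le e^{y}$, together with the reduction $G^p\le 1+G^{\lceil p\rceil}$ for the general exponent $p\ge 1$.
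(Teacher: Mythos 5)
Your proof is correct, and both of your routes differ from the one in the paper. The paper's argument is a pointwise comparison of probability mass functions: it shows $n!\,\bP(G=n) \leq (1-\tfrac{1}{K})^{K-1} \leq \tfrac{2}{e}$ for $K\geq 2$, so that $\bP(G=n) \leq 2\,\bP(\mathsf{Poi}(1)=n)$, and then reads off $\bE G^{p} \leq 2\,\bE[\mathsf{Poi}(1)^{p}] \leq C_p$ directly for all real $p\geq 1$ (handling $K=1$ separately). Your first route instead bounds the factorial moments, $\bE[G^{\underline{j}}] = K^{\underline{j}}/K^{j} \leq 1$, and converts to ordinary moments via the Stirling identity, giving the clean explicit constant $B_m$; your second route dominates the moment generating function by that of $\mathsf{Poi}(1)$ via $(1+y/K)^K \leq e^{y}$ and extracts $\bE[G^m] \leq m!\,e^{e-1}$. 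All three arguments are really exploiting the same fact — that $\bin(K,\tfrac{1}{K})$ is dominated, in one sense or another, by a unit-rate Poisson uniformly in $K$ — but yours avoid the case split at $K=1$ and produce explicit constants, at the cost of the extra (correct) reduction $G^{p} \leq 1 + G^{\lceil p\rceil}$ to pass from integer to real exponents, a step the paper's pmf-domination approach does not need.
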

\begin{proof}
The proof is simple for $K = 1$. Assume $K \geq 2$. For any $0\leq n \leq K$, let $p_n := \bP(G = n)$. Then, 
\begin{align}
    n!p_n &= (1-\tfrac{1}{K})^{K-n}\frac{K!}{K^n(K-n)!} \nonumber \\
&\leq (1-\tfrac{1}{K})^{K-1} \leq \frac{2}{e}. \nonumber 
\end{align}
In the second step we have used the inequality $(1-\frac{1}{n})^{n} \leq \frac{1}{e}$ and the fact that $K \geq 2$. 

Note that $\frac{1}{en!}$ is the probability that the standard poisson random variable is $n$ (denote this by $p^{*}_n$). Therefore, we must have: $p_n \leq 2p^{*}_n$. Thus, we must have:
$$\bE G^p \leq \sum_{n\in \bN} 2 n^p p^{*}_n \leq C_p.$$
\end{proof}
\section{Proofs of Technical Lemmas}

\subsection{Proof of Lemma~\ref{lem:useful_bounds_aux}}
\label{subsec:useful_bounds_aux}
\begin{proof} \leavevmode
    \begin{enumerate}
        \item This follows by applying the triangle inequality to the definition of $\hat{\tilde{X}}_{tK+i}$.
        \item Consider:
\begin{align}
    \|\hat{\tilde{X}}_{tK+i}-\tilde{X}_{tK+i}\| &\leq \alpha\sum_{j=0}^{i-1}H_{t,j}\|\nabla F(\hat{\tilde{X}}_{tK+j})-\nabla F(\tilde{X}_{tK})\| \nonumber \\
&\leq \sum_{j=0}^{i-1}H_{t,j} L\|\hat{\tilde{X}}_{tK+j}-\tilde{X}_{tK}\| \nonumber \\
&\leq \alpha N_t L\sup_{0\leq j\leq K-1}\|\hat{\tilde{X}}_{tK+j}-\tilde{X}_{tK}\|.
\end{align}

        \item This follows by a direct application of Doob's inequality.
    \end{enumerate}
\end{proof}

\subsection{Proof of Lemma~\ref{lem:p_moment_control}}
\label{subsec:p_moment_control}
\begin{proof}
We will use $b()$ and $\nabla F()$ interchangeably in the proof. 
Consider the update $$\tilde{X}_{(t+1)K} = \tilde{X}_{tK} + \alpha b(\tilde{X}_{tK}) + \alpha \Delta_t +  \sqrt{2\alpha}\bar{Y}_t \,.$$

Here $\bar{Y}_t = \frac{1}{\sqrt{K}}\sum_{i=0}^{K-1} Z_{tK+i}$ and $\Delta_t = \sum_{i=0}^{K-1}H_i(b(\hat{\tilde{X}}_{tK+i}) - b(\tilde{X}_{tK}))$.

Therefore, we have for any $s > t$:

\begin{align}\tilde{X}_{sK} - \tilde{X}_{tK} &= \sum_{h=t}^{s-1} \alpha b(\tilde{X}_{hK}) + \alpha \Delta_h +  \sqrt{2\alpha}\tilde{Y}_h \nonumber \\
&= \alpha (s-t)b(\tilde{X}_{tK}) + \sum_{h=t}^{s-1} \alpha (b(\tilde{X}_{hK})-b(\tilde{X}_{tK})) + \alpha \Delta_h +  \sqrt{2\alpha}\tilde{Y}_h.
\end{align}

In this proof only, for any random variable $U$, we let $\cM_p(U) := (\bE  [\|U\|^p|\tilde{X}_{tK}])^{\tfrac{1}{p}}$.
Using the triangle inequality for $\cM_p$, we conclude:

\begin{align}
    \cM_p(\tilde{X}_{sK} - \tilde{X}_{tK}) &\leq \alpha (s-t)\cM_p(b(\tilde{X}_{tK})) + \sum_{h=t}^{s-1}[\alpha\cM_p(b(\tilde{X}_{hK})-b(\tilde{X}_{tK}))  + \alpha \cM_p(\Delta_h)] \nonumber \\
&\quad + \sqrt{2\alpha}\cM_p(\sum_{h=t}^{s-1}\tilde{Y}_h) \nonumber \\
&\leq \alpha(s-t)\|b(\tilde{X}_{tK})\|+ \alpha \sum_{h=t}^{s-1}\left[L\cM_p(\tilde{X}_{hK}-\tilde{X}_{tK}) +  \cM_p(\Delta_h)\right] \nonumber \\
&\quad + C_p\sqrt{2\alpha d(s-t)}. \label{eq:ell_p_triangle}
\end{align}

Now, consider $\Delta_h = \sum_{i=0}^{K-1}H_i(b(\hat{\tilde{X}}_{hK+i}) - b(\tilde{X}_{hK})) $. We must have \begin{align}\cM_p(\Delta_h) &\leq \cM_p(\sum_{i=0}^{K-1}LH_i\|\hat{\tilde{X}}_{hK+i}-\tilde{X}_{hK}\|) \leq \cM_p(LN_h\sup_{0\leq i\leq K-1}\|\hat{\tilde{X}}_{hK+i}-\tilde{X}_{hK}\|) \nonumber \\
&= L \cM_p(N_h)\cM_p(\sup_{0\leq i \leq K-1}\|\hat{\tilde{X}}_{hK+i}-\tilde{X}_{hK}\|) \nonumber \\
&\leq LC_p\alpha \cM_p(\|b(\tilde{X}_{hK})\|) + LC_p\cM_p(M_{h,K}) \nonumber \\
 &\leq LC_p\alpha\|b(\tilde{X}_{tK})\| + L^2C_p\alpha\cM_p(\tilde{X}_{hK}-\tilde{X}_{tK}) + LC_p\sqrt{\alpha d}.
\end{align}
In the second line, we have used the fact that $N_h$ is independent of $\sup_{0\leq i\leq K-1}\|\hat{\tilde{X}}_{hK+i}-\tilde{X}_{hK}\|$ and the fact that $\cM_p(N_t) \leq C_p$ for some constant $C_p$ (Lemma~\ref{lem:binom_moments}
). In the third step we have applied item 1 from Lemma~\ref{lem:useful_bounds_aux}.

Putting this back in Equation~\ref{eq:ell_p_triangle}, we conclude:

\begin{align}
    \cM_p(\tilde{X}_{sK} - \tilde{X}_{tK}) 
&\leq \alpha(s-t)(1+\alpha L C_p)\|b(\tilde{X}_{tK})\|+ \alpha L(1+\alpha L C_p) \sum_{h=t}^{s-1}\left[\cM_p(\tilde{X}_{hK}-\tilde{X}_{tK})\right] \nonumber \\
&\quad + C_p(L\alpha^{\tfrac{3}{2}}\sqrt{d}(s-t)+\sqrt{2\alpha d(s-t)}).
\end{align}

Therefore, we must have:
\begin{align}
    \sup_{t\leq h \leq s}\cM_p(\tilde{X}_{hK} - \tilde{X}_{tK}) 
&\leq \alpha L(s-t)(1+\alpha L C_p)\left[\tfrac{\|b(\tilde{X}_{tK})\|}{L}+  \sup_{t\leq h\leq s}\cM_p(\tilde{X}_{hK}-\tilde{X}_{tK})\right] \nonumber \\&\quad+ C_p(L\sqrt{d}\alpha^{\tfrac{3}{2}}(s-t)+\sqrt{2\alpha d (s-t)}).
\end{align}

Now, supposing $s-t \leq \frac{1}{\bar{C}_p\alpha L}$ for some large enough constant $\bar{C}_p$ which depends only on $p$. This ensures that $\alpha L(s-t)(1+\alpha L C_p) \leq \tfrac{1}{2}$ in the equation above. Thus, we have:
\begin{align*}
    \sup_{t\leq h \leq s}\cM_p(\tilde{X}_{hK} - \tilde{X}_{tK}) 
&\leq 3\alpha L(s-t)\frac{\|b(\tilde{X}_{tK})\|}{L}\nonumber + C_p\sqrt{d\alpha(s-t)}.
\end{align*}

\end{proof}

\subsection{Proof of Lemma~\ref{lem:norm-norm}}
\label{subsec:norm-norm}
\begin{proof}

$\bar{a}_k := \frac{1}{|\mathcal{T}_k|}\sum_{j\in \mathcal{T}_k}a_j$
Consider the following quantity:

\begin{align}
    \sum_{t=1}^{T}\|a_t\|^{2p} &= \sum_{k}\sum_{j\in \mathcal{T}_k}\|a_j\|^{2p} \nonumber \\
&\leq  \sum_{k=1}^{\frac{T}{N}}\sum_{j\in \mathcal{T}_k}(2^{p-1}\|a_j-\bar{a}_k\|^{2p} + 2^{p-1}\|\bar{a}_k\|^{2p}) \nonumber\\
&= 2^{p-1}\sum_{k}\sum_{j\in \mathcal{T}_k}(\|a_j-\bar{a}_k\|^{2p}) + 2^{p-1}N\sum_{k}\|\bar{a}_k\|^{2p} \nonumber \\
&\leq 2^{p-1}\sum_{k}\sum_{j\in \mathcal{T}_k}(\|a_j-\bar{a}_k\|^{2p}) + 2^{p-1}N(\sum_{k}\|\bar{a}_k\|^{2})^p. \label{eq:base_decomp}
\end{align}

Now, by Jensen's inequality, we must have:
$\|\bar{a}_k\|^2 \leq \frac{1}{|\mathcal{T}_k|}\sum_{j\in \mathcal{T}_k}\|a_j\|^2$. Therefore, we conclude:

\begin{align}
    \sum_{t=1}^{T}\|a_t\|^{2p} 
&\leq 2^{p-1}\sum_{k}\sum_{j\in \mathcal{T}_k}(\|a_j-\bar{a}_k\|^{2p}) + 2^{p-1}N^{1-p}(\sum_{j=1}^{T}\|a_j\|^{2})^p. \label{eq:base_decomp_1}
\end{align}

Now, consider \begin{align}
\sum_{j\in \mathcal{T}_k}\bE\|a_j-\bar{a}_k\|^{2p} &\leq \frac{1}{N}\sum_{j\in \mathcal{T}_k}\sum_{j^{\prime}\in \mathcal{T}_k} \bE\|a_j-a_{j^{\prime}}\|^{2p} \nonumber \\
&= \frac{2}{N}\sum_{j\in \mathcal{T}_k}\sum_{j^{\prime}>j} \bE\|a_j-a_{j^{\prime}}\|^{2p}.
\end{align}
Using this with Equation~\eqref{eq:base_decomp_1}, we conclude the statement of the lemma.

\end{proof}

\subsection{Proof of Lemma~\ref{lem:p_moment_second_moment}}
\label{subsec:p_moment_second_moment}
\begin{proof}
We now apply Lemma~\ref{lem:norm-norm} with $a_t := b(\tilde{X}_{(t-1)K})$. Whenever $\alpha L \leq c_p$ for some small enough constant $c_p$, we can take $N$ to be the largest integer power of $2$ lower than $\lceil\frac{1}{\alpha L \bar{C}_{2p}}\rceil$  for a large enough constant $\bar{C}_{2p}$ such that the result of Lemma~\ref{lem:p_moment_control} with $s-t \leq N$. Applying Lemma~\ref{lem:p_moment_control}, we conclude:

\begin{align}
\frac{2}{N}\sum_{j\in \mathcal{T}_k}\sum_{j^{\prime}>j}\bE\|a_j-a^{\prime}_j\|^{2p} &\leq 2 \sum_{j\in \mathcal{T}_k} \bE(3\alpha L N \|a_j\| + C_p L \sqrt{\alpha d N})^{2p} \nonumber \\
&\leq C_p L^{2p}(\alpha d)^p N^{p+1} + C_p (\alpha L N)^{2p}\sum_{j\in \mathcal{T}_k}\bE\|a_j\|^{2p} . 
\end{align}

Applying Lemma~\ref{lem:norm-norm}
\begin{align}
    \sum_{t=1}^{T}\bE\|a_t\|^{2p} 
&\leq 2^{p-1}C_p L^{2p}(\alpha d N)^p T + C_p (\alpha L N)^{2p}\sum_{k}\sum_{j\in \mathcal{T}_k}\bE\|a_j\|^{2p}    + 2^{p-1}N^{1-p}(\sum_{j=1}^{T}\|a_j\|^{2})^p
\end{align}

Note that whenever the constant $\bar{C}_{2p}$ defining $N$ above is large enough, this implies

\begin{equation}
  \sum_{t=1}^{T}\bE\|a_t\|^{2p} \leq C_p L^p d^p T + C_p (\alpha L)^{p-1}  (\sum_{j=1}^{T}\|a_j\|^{2})^p
\end{equation}

\end{proof}

\subsection{Proof of Lemma~\ref{lem:lmc_pth}}
\label{subsec:lmc_pth}
\begin{proof}

Since $\nabla F$ is $L$ Lipschitz, we have:
\begin{equation}\label{eq:pot_decay}
    F(\tilde{X}_{(t+1)K}) - F(\tilde{X}_{tK})\leq \langle \nabla F(\tilde{X}_{tK}),\tilde{X}_{(t+1)K}-\tilde{X}_{tK}\rangle + \frac{L}{2}\|\tilde{X}_{(t+1)K}-\tilde{X}_{tK}\|^2\,.
\end{equation}

Also note that: 
$$\tilde{X}_{(t+1)K} = \tilde{X}_{tK} - \alpha \nabla F(\tilde{X}_{tK}) + \alpha \Delta_t +  \sqrt{2\alpha}\bar{Y}_t \,.$$

Here $\bar{Y}_t = \frac{1}{\sqrt{K}}\sum_{i=0}^{K-1} Z_{tK+i}$ and $\Delta_t = -\sum_{i=0}^{K-1}H_{t,i}(\nabla F(\hat{\tilde{X}}_{tK+i}) - \nabla F(\tilde{X}_{tK}))$.

Thus, summing Equation~\eqref{eq:pot_decay} from $t = 0$ to $t = T-1$, we conclude:

    \begin{align}
    &F(\tilde{X}_{TK}) - F(\tilde{X}_{0}) \leq  \sum_{t=0}^{T-1}\langle \nabla F(\tilde{X}_{tK}),\tilde{X}_{(t+1)K}-\tilde{X}_{tK}\rangle + \sum_{t=0}^{T-1}\frac{L}{2}\|\tilde{X}_{(t+1)K}-\tilde{X}_{tK}\|^2 \nonumber \\
&\leq \sum_{t=0}^{T-1}\langle \nabla F(\tilde{X}_{tK}),\tilde{X}_{(t+1)K}-\tilde{X}_{tK}\rangle + \sum_{t=0}^{T-1}\frac{3L\alpha^2}{2}[\|\nabla F(\tilde{X}_{tK})\|^2 +\|\Delta_t\|^2] + 3L\alpha \|\bar{Y}_t\|^2 \nonumber \\
&\leq \sum_{t=0}^{T-1}-(\alpha -\tfrac{3\alpha^2 L}{2})\|\nabla F(\tilde{X}_{tK})\|^2+ \alpha\langle \nabla F(\tilde{X}_{tK}),\Delta_t\rangle + \tfrac{3L\alpha^2}{2}\|\Delta_t\|^2 + 3L\alpha \|\bar{Y}_t\|^2 \nonumber \\&\quad + \sqrt{2\alpha}\langle\nabla F(\tilde{X}_{tK}),\bar{Y}_t\rangle \nonumber \\
&\leq \sum_{t=0}^{T-1} -\tfrac{\alpha}{4}\|\nabla F(\tilde{X}_{tK})\|^2+ 3\alpha\|\Delta_t\|^2 + 3L\alpha \|\bar{Y}_t\|^2 + \sqrt{2\alpha}\langle\nabla F(\tilde{X}_{tK}),\bar{Y}_t\rangle \,.
\end{align}
In the last step, we have used the fact that $2|\langle \nabla F(\tilde{X}_{tK}),\Delta_t \rangle |\leq \|\nabla F(\tilde{X}_{tK})\|^2+\|\Delta_t\|^2$ and the assumption that $\alpha L \leq c_p$ for some small enough constant $c_p$. 

Therefore, we conclude:
\begin{align}
    (\sum_{t=0}^{T-1}\|\nabla F(\tilde{X}_{tK})\|^2)^p &\leq \frac{C_p}{\alpha^p}|(F(X_0)-F(\tilde{X}_{KT}))^{+}|^{p} + C_p (\sum_{t=0}^{T-1}\|\Delta_t\|^2)^p + C_p L^{p}(\sum_{t=0}^{T-1}\|\bar{Y}_t\|^2)^p \nonumber \\
&\quad + \frac{C_p}{\alpha^{\frac{p}{2}}}\bigr|\sum_{t=0}^{T-1}\langle\nabla F(\tilde{X}_{tK}),\bar{Y}_t\rangle\bigr|^p\,. \label{eq:p-moment_init}
\end{align}

The properties of Gaussians show that: $\bE (\sum_{t=0}^{T-1}\|\bar{Y}_t\|^2)^p \leq C_pT^pd^p$.

By Lemma~\ref{lem:mart_conc} and the AM-GM inequality, we conclude that for any $\kappa > 0$ arbitrary, we have:



\begin{align}\frac{1}{\alpha^{p/2}}\bE\bigr|\sum_{t=0}^{T-1}\langle\nabla F(\tilde{X}_{tK}),\bar{Y}_t\rangle\bigr|^p &\leq \frac{C_p}{\alpha^{p/2}}\sqrt{\bE(\sum_{t=0}^{T-1}\|\nabla F(\tilde{X}_{tK})\|^2)^p}\nonumber \\
 &\leq \frac{\kappa C_p}{2\alpha^{p}} + \frac{C_p}{2\kappa}\bE(\sum_{t=0}^{T-1}\|\nabla F(\tilde{X}_{tK})\|^2)^p\,.
 \label{eq:quad_var_bound}
 \end{align}

\begin{align}
 \bE(\sum_{t=0}^{T-1}\|\Delta_t\|^2)^p &\leq T^{p-1}\sum_{t=0}^{T-1}\bE\|\Delta_t\|^{2p} \nonumber\\
&\leq T^{p-1}\sum_{t=0}^{T-1}\bE(N_t)^{2p}L^{2p}\bE(\sup_{0\leq i\leq K-1}\|\hat{\tilde{X}}_{tK+i}-\tilde{X}_{tK}\|)^{2p} \nonumber\\
&\leq C_pT^{p-1}\sum_{t=0}^{T-1}L^{2p}\bE(\sup_{0\leq i\leq K-1}\|\hat{\tilde{X}}_{tK+i}-\tilde{X}_{tK}\|)^{2p}  \,.
\end{align}
In the first step, we have used Jensen's inequality. In the second step we have used the fact that $\nabla F$ is $L$-Lipshcitz and that $N_t$ is independent of $\hat{\tilde{X}}_{tK+i}$ for $0\leq i\leq K-1$. In the last step, we have used the fact that $\bE(N_t)^{2p} \leq C_p$ for some constant $C_p$ (Lemma~\ref{lem:binom_moments}).

By Lemma~\ref{lem:useful_bounds_aux}, we have:

$$ \bE\sup_{0\leq i\leq k}\|\hat{\tilde{X}}_{tK+i}-\tilde{X}_{tK}\|^{2p} \leq C_p\alpha^{2p} \bE\|\nabla F(\tilde{X}_{tK})\|^{2p} + C_p \alpha^p d^p\,.$$

Plugging all of these bounds, and taking $\kappa$ in Equation~\eqref{eq:quad_var_bound} to be large enough, we conclude:

\begin{align}
 \bE(\sum_{t=0}^{T-1}\|\nabla F(\tilde{X}_{tK})\|^2)^p &\leq \frac{C_p}{\alpha^p}\bE|(F(X_0)-F(\tilde{X}_{KT}))^{+}|^{p} + C_p T^{p-1}L^{2p}\alpha^{2p}\bE\sum_{t=0}^{T-1}\|\nabla F (\tilde{X}_{tK})\|^{2p} \nonumber \\
&\quad+ C_p L^{p}d^p T^p  + \frac{C_p}{\alpha^p}\,.
\end{align}
\end{proof}

\subsection{Proof of Lemma~\ref{lem:metric_ub}}
\label{subsec:metric_ub}
\begin{proof}
    For the sake of clarity, define $x = 1-\exp(-\gamma h)$. $\rho = \frac{(1-x)^2}{\gamma}\frac{1}{\sqrt{\frac{2}{\gamma}(h - \frac{2(1-x)}{\gamma} + \frac{(1-x^2)}{2\gamma})(1-x^2)}}$

    Algebraic manipulations show that:
    \begin{equation}
        \rho = \sqrt{\frac{(1-x)^3}{2(1+x)\left(\gamma h - (1-x) - \tfrac{(1-x)^2}{2}\right)}}
    \end{equation}

    Now note that $\gamma h = - \log(1-(1-x)) = \sum_{i=1}^{\infty} \frac{(1-x)^i}{i}$. Therefore, we conclude: $\gamma h - (1-x)-\frac{(1-x)^2}{2} \geq \frac{(1-x)^3}{3} + \frac{(1-x)^4}{4}$. 
Therefore,
    \begin{align}
        \rho &\leq \sqrt{\frac{1}{2(1+x)\left(\tfrac{1}{3}+\tfrac{1-x}{4}\right)}} \nonumber \\
&\leq \sqrt{\frac{1}{2\inf_{t\in [0,1]}(1+t)\left(\tfrac{1}{3}+\tfrac{1-t}{4}\right)}} \leq \sqrt{\tfrac{6}{7}}
    \end{align}

Now, we note that for scalars $A,B > 0$ and $\rho \in [0,1]$, we have: 
$$\begin{bmatrix}
    A^2 \vI_d & \rho A B \vI_d \\
\rho A B \vI_d & B^2 \vI_d
\end{bmatrix}  \succeq \begin{bmatrix}
    A^2(1-\rho)\vI_d & 0 \\
    0 & B^2(1-\rho)\vI_d
\end{bmatrix}$$

Thus, we conclude from the above computations that:
\begin{align}\Gamma_h^{2} &\succeq c\begin{bmatrix} \frac{2}{\gamma}\left(h - \frac{2}{\gamma}(1-\exp(-\gamma h)) + \frac{1}{2\gamma}(1-\exp(-2\gamma h))\right) \vI_d & 0 \\ 0 & (1-\exp(-2\gamma h))\vI_d\end{bmatrix} \nonumber \\
&\succeq c\begin{bmatrix} \frac{2}{3\gamma^2}(1-\exp(-\gamma h))^3 \vI_d & 0 \\ 0 & (1-\exp(-2\gamma h))\vI_d\end{bmatrix} =: \Gamma_{\mathsf{ub}}^{2}
\end{align}
We have used the fact that:
$$\frac{2}{\gamma}\left(h - \frac{2}{\gamma}(1-\exp(-\gamma h)) + \frac{1}{2\gamma}(1-\exp(-2\gamma h))\right) \geq \frac{2}{3\gamma^2}(1-\exp(-\gamma h))^3$$

Using the fact that for PSD matrices $A,B$ $A \preceq B$ implies $B^{-1} \preceq A^{-1}$, we have:

\begin{align}
    G_h^{\intercal} \Gamma_h^{-2} G_h \preceq G_h^{\intercal} \Gamma_{\mathsf{ub}}^{-2} G_h \preceq \begin{bmatrix}
     C\frac{ h \exp(2\gamma h)}{\gamma}\vI_d   & 0 \\ 0 & 0
    \end{bmatrix}
\end{align}

The second inequality follows easily by elementary manipulations.

\end{proof}

\subsection{Proof of Lemma~\ref{lem:look_ahead_evo}}
\label{subsec:look_ahead_evo}
\begin{proof}
Explicit computations show that:
$$\psi_{t+1} = \psi_t - \frac{\alpha}{\gamma}\nabla F(\tilde{U}_{tK}) + \alpha \begin{bmatrix}\vI_d & \frac{\vI_d}{\gamma}\end{bmatrix}\Delta_t + \begin{bmatrix}\vI_d & \frac{\vI_d}{\gamma}\end{bmatrix}\Gamma_\alpha \tilde{Y}_t $$

Now, note that $\begin{bmatrix}\vI_d & \frac{\vI_d}{\gamma}\end{bmatrix}A_h = \begin{bmatrix}\vI & \frac{\vI_d}{\gamma}\end{bmatrix}$ and $\begin{bmatrix}\vI_d & \frac{\vI_d}{\gamma}\end{bmatrix} G_h = \begin{bmatrix}\frac{h}{\gamma}\vI_d & 0\end{bmatrix}$. Therefore, from the definition of $\Delta_t$, we conclude that:

$$\alpha \begin{bmatrix}\vI_d & \frac{\vI_d}{\gamma}\end{bmatrix}\Delta_t  = - \sum_{i=0}^{K-1}\frac{H_i \alpha}{\gamma}\left[\nabla F(\hat{\tilde{U}}_{tK+i})-\nabla F(\tilde{U}_{tK})\right] $$

A straight forward calculation shows that:
$\tilde{\Psi}_t:=\begin{bmatrix}\vI_d & \frac{\vI_d}{\gamma}\end{bmatrix}\Gamma_\alpha \tilde{Y}_t \sim \cM(0,\tfrac{2h}{\gamma}\vI_d)$.

\end{proof}

\subsection{Proof of Lemma~\ref{lem:pot_evol}}
\label{subsec:pot_evol}
\begin{proof}
\leavevmode
 \begin{enumerate}
 \item 

 Note that $$\hat{\tilde{U}}_{tK+i}-\tilde{U}_{tK} = \Pi(A_{\frac{i\alpha}{K}}-\vI)\tilde{X}_{tK} + \Pi G_{\tfrac{i\alpha}{K}}b(\tilde{X}_{tK}) + \sum_{j=0}^{i-1}\Pi A_{\tfrac{\alpha(i-1-j)}{K}}\Gamma_{\tfrac{\alpha}{K}}Z_{tK+j} $$

 We bound each of the terms separately. 
 $$\|(A_{\frac{i\alpha}{K}}-\vI)\tilde{X}_{tK}\|_{\Pi} \leq \frac{i\alpha}{K}\|\tilde{V}_{tK}\| \leq \alpha \|\tilde{V}_{tK}\| $$
$$\|G_{\tfrac{i\alpha}{K}}b(\tilde{X}_{tK})\|_{\Pi} \leq \frac{\alpha^2i^2}{2K^2}\|\nabla F(\tilde{U}_{tK})\| \leq \frac{\alpha^2}{2}\|\nabla F(\tilde{U}_{tK})\|$$

Plugging these inequalities gives the result. 
\item 

Since $\nabla F$ is $L$-Lipschitz, we have:
\begin{align}
    &F(\psi_{t+1}) - F(\psi_t) \leq \langle \nabla F(\psi_t),\psi_{t+1}-\psi_t\rangle + \frac{L}{2}\|\psi_{t+1}-\psi_t\|^2 \nonumber \\
&= \langle \nabla F(\psi_t)-\nabla F(\tilde{U}_{tK}),\psi_{t+1}-\psi_t\rangle + \langle\nabla F(\tilde{U}_{tK}),\psi_{t+1}-\psi_t\rangle + \frac{L}{2}\|\psi_{t+1}-\psi_t\|^2 \label{eq:pot_decay_decomp}
\end{align}

Using Lemma~\ref{lem:look_ahead_evo} to evaluate $\psi_{t+1}-\psi_t$, we conclude:
\begin{align}
    \langle\nabla F(\tilde{U}_{tK}),\psi_{t+1}-\psi_t\rangle &= -\frac{\alpha}{\gamma}\|\nabla F(\tilde{U}_{tK})\|^2 + \langle \nabla F(U_t),\tilde{\Psi}_t\rangle \nonumber \\
&\quad - \sum_{i=0}^{K-1}\frac{H_{t,i}\alpha}{\gamma}\langle \nabla F(\tilde{U}_{tK}),\nabla F(\hat{\tilde{U}}_{tK+i})-\nabla F(\tilde{U}_{tK})\rangle \nonumber \\
&\leq -\frac{\alpha}{\gamma}\|\nabla F(\tilde{U}_{tK})\|^2 + \langle \nabla F(U_t),\tilde{\Psi}_t\rangle \nonumber \\
&\quad + \frac{N_t\alpha L}{\gamma} \|\nabla F(\tilde{U}_{tK})\|\sup_{0\leq i\leq K-1}\|\hat{\tilde{U}}_{tK+i}-\tilde{U}_{tK}\| \nonumber \\
&\leq  -\frac{\alpha}{\gamma}\|\nabla F(\tilde{U}_{tK})\|^2 + \langle \nabla F(U_t),\tilde{\Psi}_t\rangle \nonumber \\
&\quad + \frac{\alpha}{2\gamma}\|\nabla F(\tilde{U}_{tK})\|^2 + \frac{N^2_t\alpha L^2}{2\gamma} \sup_{0\leq i\leq K-1}\|\hat{\tilde{U}}_{tK+i}-\tilde{U}_{tK}\|^2 \nonumber \\
&= -\frac{\alpha}{2\gamma}\|\nabla F(\tilde{U}_{tK})\|^2 + \langle \nabla F(U_t),\tilde{\Psi}_t\rangle \nonumber \\
&\quad  + \frac{N^2_t\alpha L^2}{2\gamma} \sup_{0\leq i\leq K-1}\|\hat{\tilde{U}}_{tK+i}-\tilde{U}_{tK}\|^2
\end{align}

In the second step, we have use the fact that $\nabla F$ is $L$-Lipschitz along with the Cauchy-Schwarz inequality. In the third step, we have used the AM-GM inequality which states that for any $a,b \geq 0$, we have $2ab \leq a^2+ b^2 $

Similarly, we note that:
\begin{align}
  &\langle \nabla F(\psi_t)-\nabla F(\tilde{U}_{tK}),\psi_{t+1}-\psi_t\rangle \leq \frac{\alpha L}{\gamma^2}\|\nabla F(\tilde{U}_{tK})\|\|\tilde{V}_{tK}\|  + \langle \nabla F(\psi_t)-\nabla F(\tilde{U}_{tK}),\tilde{\Psi}_t\rangle \nonumber \\ &\qquad\qquad \qquad\qquad\qquad\qquad\qquad\qquad+ \frac{N_t\alpha L^2}{\gamma^2} \|\tilde{V}_{tK}\|\sup_{0\leq i\leq K-1}\|\hat{\tilde{U}}_{tK+i}-\tilde{U}_{tK}\| \nonumber \\
&\leq \frac{\alpha}{4\gamma}\|\nabla F(\tilde{U}_{tK})\|^2+ \frac{3\alpha L^2\|\tilde{V}_{tK}\|^2}{2\gamma^3}  + \langle \nabla F(\psi_t)-\nabla F(\tilde{U}_{tK}),\tilde{\Psi}_t\rangle \nonumber \\
&\quad  + \frac{N_t^2\alpha L^2}{2\gamma} \sup_{0\leq i\leq K-1}\|\hat{\tilde{U}}_{tK+i}-\tilde{U}_{tK}\|^2 
\end{align}

We have again used the AM-GM inequality in the second step above. Convexity of $x \to x^2$ implies:
\begin{align}
    \frac{L}{2}\|\psi_{t+1}-\psi_t\|^2 = \frac{3L}{2}\left[\frac{\alpha^2}{\gamma^2}\|\nabla F(\tilde{U}_{tK})\|^2 + \frac{N_t^2\alpha^2L^2}{\gamma^2} \sup_{0\leq i\leq K-1}\|\hat{\tilde{U}}_{tK+i}-\tilde{U}_{tK}\|^2+\|\tilde{\Psi}_t\|^2 \right]
\end{align}

Plugging these upper bounds into Equation~\eqref{eq:pot_decay_decomp}, we have:

\begin{align}
    F(\psi_{t+1}) - F(\psi_t) &\leq -\frac{\alpha}{4\gamma}(1-\tfrac{6\alpha L}{\gamma})\|\nabla F(\tilde{U}_{tK})\|^2 + \langle \nabla F(\psi_t)-\nabla F(\tilde{U}_{tK}),\tilde{\Psi}_t\rangle \nonumber \\
&\quad + \langle \nabla F(U_t),\tilde{\Psi}_t\rangle  + \frac{3L}{2}\|\tilde{\Psi}_t\|^2 + \frac{3\alpha L^2\|\tilde{V}_{tK}\|^2}{2\gamma^3} \nonumber \\
&\quad + \frac{N^2_t\alpha L^2}{\gamma}\left(1+\frac{3\alpha L}{2\gamma}\right) \sup_{0\leq i\leq K-1}\|\hat{\tilde{U}}_{tK+i}-\tilde{U}_{tK}\|^2
\end{align}

\item
Using the scaling relations (Section~\ref{subsec:scaling_relations}), it is easy to show that for any $i$ such that $0\leq i \leq K-1$,
$\sum_{j=0}^{i-1}\Pi A_{\tfrac{\alpha(i-1-j)}{K}}\Gamma_{\tfrac{\alpha}{K}}Z_{tK+j}$ is a Gaussian with covariance matrix $\Sigma_i$ such that $\Sigma_i \preceq \frac{2\exp(\gamma\alpha)\gamma \alpha^3}{3}\vI_d$.

Applying Gaussian concentration for $\sum_{j=0}^{i-1}\Pi A_{\tfrac{\alpha(i-1-j)}{K}}\Gamma_{\tfrac{\alpha}{K}}Z_{tK+j}$ along with the union bound over all $0\leq i \leq K-1$, we conclude:
$$\bP(M_{t,K} > C\exp(\tfrac{\gamma\alpha}{2})\sqrt{\gamma\alpha^3}(\sqrt{d}+\beta+\sqrt{\log K})) \leq \exp(-\beta^2/2)$$

We conclude the moment bounds by integrating tail probabilities. 
\end{enumerate}

\end{proof}

\subsection{Proof of Lemma~\ref{lem:energy_moment_bound}}
\label{subsec:energy_moment_bound}
\begin{proof}
\begin{enumerate}
\item
Let $ g := e^{-\gamma\alpha}$. Then, we have:
\begin{align}
    \tilde{V}_{(t+1)K} &= g\tilde{V}_{tK} -\frac{1-g}{\gamma}\nabla F(\tilde{U}_{tK}) + (\vI-\Pi)(\alpha \Delta_t + \Gamma_{\alpha}\tilde{Y}_t)
\end{align}

Now, consider:

\begin{align}
    \|\tilde{V}_{(t+1)K}\|^2 &= g^2 \|\tilde{V}_{tK}\|^2 + \frac{(1-g)^2}{\gamma^2}\|\nabla F(\tilde{U}_{tK})\|^2 + \|(\vI-\Pi)(\alpha \Delta_t + \Gamma_{\alpha}\tilde{Y}_t)\|^2 \nonumber \\
&\quad - 2\frac{g(1-g)}{\gamma}\langle\nabla F(\tilde{U}_{tK}),\tilde{V}_{tK}\rangle + 2g \langle \tilde{V}_{tK},(\vI-\Pi)(\alpha \Delta_t + \Gamma_{\alpha}\tilde{Y}_t)\rangle \nonumber \\ &\quad -2\frac{(1-g)}{\gamma} \langle \nabla F(\tilde{U}_{tK}),(\vI-\Pi)(\alpha \Delta_t + \Gamma_{\alpha}\tilde{Y}_t)\rangle\nonumber\\
&\leq g \|\tilde{V}_{tK}\|^2 + \frac{(1-g)}{\gamma^2}\|\nabla F(\tilde{U}_{tK})\|^2 + \|(\vI-\Pi)(\alpha \Delta_t + \Gamma_{\alpha}\tilde{Y}_t)\|^2 \nonumber \\
&\quad  + 2g \langle \tilde{V}_{tK},(\vI-\Pi)(\alpha \Delta_t + \Gamma_{\alpha}\tilde{Y}_t)\rangle \nonumber \\
&\quad -2\frac{(1-g)}{\gamma} \langle \nabla F(\tilde{U}_{tK}),(\vI-\Pi)(\alpha \Delta_t + \Gamma_{\alpha}\tilde{Y}_t)\rangle \nonumber \\
&\leq \sqrt{g} \|\tilde{V}_{tK}\|^2 + \frac{2(1-g)}{\gamma^2}\|\nabla F(\tilde{U}_{tK})\|^2 + \frac{4}{1-\sqrt{g}}\|(\vI-\Pi)(\alpha \Delta_t)\|^2 \nonumber \\
&\quad  +2\|(\vI-\Pi) \Gamma_{\alpha}\tilde{Y}_t)\|^2  + 2g \langle \tilde{V}_{tK},(\vI-\Pi)( \Gamma_{\alpha}\tilde{Y}_t)\rangle \nonumber \\
&\quad   -2\frac{(1-g)}{\gamma} \langle \nabla F(\tilde{U}_{tK}),(\vI-\Pi)( \Gamma_{\alpha}\tilde{Y}_t)\rangle \label{eq:vel_evolution}
\end{align}

In the second step, we have use the fact that $|\frac{2g(1-g)}{\gamma}\langle\nabla F(\tilde{U}_{tK}) ,\tilde{V}_{tK}\rangle| \leq g(1-g)\frac{\|\nabla F(\tilde{U}_{tK})\|^2}{\gamma^2} + g(1-g)\|\tilde{V}_{tK}\|^2$. In the third step, we have used the fact that:
$2g \langle \tilde{V}_{tK},(\vI-\Pi)\alpha \Delta_t\rangle \leq  (\sqrt{g}-g)\|\tilde{V}_{tK}\|^2 + \frac{\|(\vI-\Pi)\alpha \Delta_t\|^2}{1-\sqrt{g}} $, $-\frac{2(1-g)}{\gamma}\langle \nabla F(\tilde{U}_{tK}),(\vI-\Pi)\alpha \Delta_t\rangle \leq \frac{1-g}{\gamma^2}\|\nabla F(\tilde{U}_{tK})\|^2 + (1-g)\|(\vI-\Pi)\alpha \Delta\|^2$ and $\|(\vI-\Pi)(\alpha \Delta_t + \Gamma_{\alpha}\tilde{Y}_t)\|^2  \leq 2\|(\vI-\Pi)(\alpha \Delta_t)\|^2+2\|(\vI-\Pi) \Gamma_{\alpha}\tilde{Y}_t\|^2 $.

With extention of notation, let $\mathcal{S}_2((\vI-\Pi)\alpha \Delta):= \sum_{t=0}^{T-1}\|(\vI-\Pi)\Delta_t\|^2$, $\mathcal{S}_2((\vI-\Pi)\Gamma_{\alpha}\tilde{Y}) := \sum_{t=0}^{T-1}\|(\vI-\Pi)\Gamma_{\alpha}\tilde{Y}_t\|^2$. Using Equation~\eqref{eq:vel_evolution}, we conclude:

\begin{align}
    \mathcal{S}_2(V) &\leq \frac{\|\tilde{V}_0\|^2-\|\tilde{V}_{tK}\|^2}{1-\sqrt{g}} + \frac{4}{\gamma^2} \mathcal{S}_2(\nabla F) + \frac{4}{(1-\sqrt{g})^2}\mathcal{S}_2((\vI-\Pi)\alpha\Delta) \nonumber \\
&\quad+ \frac{2\mathcal{S}_2((\vI-\Pi)\Gamma_{\alpha}\tilde{Y})}{1-\sqrt{g}}  + \sum_{t=0}^{T-1}2\frac{g}{1-\sqrt{g}} \langle \tilde{V}_{tK},(\vI-\Pi)( \Gamma_{\alpha}\tilde{Y}_t)\rangle  \nonumber \\ &\quad - 2\frac{(1-g)}{\gamma(1-\sqrt{g})} \langle \nabla F(\tilde{U}_{tK}),(\vI-\Pi)( \Gamma_{\alpha}\tilde{Y}_t)\rangle 
\end{align}

Note that $(\vI-\Pi)\Gamma_{\alpha}\tilde{Y}_t \sim \cN(0,(1-e^{-2\gamma \alpha})\vI_d)$. 
By properties of Gaussians, it is clear that for any $p\geq 1$, we must have $\bE \left(\mathcal{S}_2((\vI-\Pi)\Gamma_{\alpha}\tilde{Y})\right)^{p} \leq C_p (\gamma\alpha T d)^p$. 
By Lemma~\ref{lem:mart_conc}, we conclude:

    $$\bE |\sum_{t=0}^{T-1}\langle \tilde{V}_{tK},(\vI-\Pi)\Gamma_{\alpha}\tilde{Y}_t\rangle|^{p} \leq C_p (\gamma \alpha)^{\tfrac{p}{2}} \sqrt{\bE (\mathcal{S}_2(V))^{p} }$$

    $$\bE |\sum_{t=0}^{T-1}\langle \nabla F(\tilde{U}_{tK}),(\vI-\Pi)\Gamma_{\alpha}\tilde{Y}_t\rangle|^{p} \leq C_p (\gamma \alpha)^{\tfrac{p}{2}} \sqrt{\bE (\mathcal{S}_2(\nabla F))^{p} }$$

By $L$ smoothness of $F$, we have: $\|(\vI-\Pi)(\alpha\Delta_t)\| \leq L\alpha N_t \sup_{0\leq i\leq K}\|\hat{\tilde{U}}_{tK+i}-\tilde{U}_{tK}\|$. Using the result in Lemma~\ref{lem:pot_evol}, we conclude: 

\begin{align}&\bE(\mathcal{S}_2((\vI-\Pi)\alpha\Delta))^p \leq C_pL^{2p}\bE(\sum_{t=0}^{T-1}\alpha^4N_t^2\|\tilde{V}_{tK}\|^2 + \alpha^6 N_t^2 \bE\|\nabla F(\tilde{U}_{tK})\|^2 +\alpha^2 N_t^2 M_{t,K}^2)^p \nonumber \\
&\leq  C_p L^{2p}T^{p-1} \bE(\sum_{t=0}^{T-1}\alpha^{4p}N_t^{2p}\|\tilde{V}_{tK}\|^{2p} + \alpha^{6p} N_t^{2p} \bE\|\nabla F(\tilde{U}_{tK})\|^{2p} +\alpha^{2p} N_t^{2p} M_{t,K}^{2p}) \nonumber \\
&\leq C_p L^{2p}T^{p-1}\alpha^{4p}\bE \mathcal{S}_{2p}(V)+ C_pT^{p-1}L^{2p}\alpha^{6p}\bE \mathcal{S}_{2p}(\nabla F) +C_p L^{2p}T^{p} \alpha^{5p}\gamma^p (d+\log K)^p
\end{align}

In the second step, we have used jensen's inequality to show that $(\frac{1}{T}\sum_t a_t^2)^{p} \leq \frac{1}{T}\sum_{t}a_t^{2p}$. In the third step, we have used the fact that $N_t$ is independent of $\tilde{X}_{tK}$ and that $\bE N_t^{2p} \leq C_p$ for some constant $C_p$ (Lemma~\ref{lem:binom_moments}).

We now use the fact that for any two random variables $X,Y$ such that $\bE \|X\|^p , \bE \|Y\|^p < \infty$, $[\bE\|X+Y\|^p]^{\frac{1}{p}} \leq [\bE\|X\|^p]^{\frac{1}{p}}+[\bE\|Y\|^p]^{\frac{1}{p}}$. Using the bounds established above and applying them to Equation~\eqref{eq:vel_evolution}, we conclude:

\begin{align}
    [\bE (\mathcal{S}_2(V))^p]^{\tfrac{1}{p}} &\leq \frac{[\bE \|\tilde{V}_0\|^{2p}]^{\tfrac{1}{p}} }{1-\sqrt{g}}+ \frac{4}{\gamma^2}[\bE (\mathcal{S}_2(\nabla F))^p]^{\tfrac{1}{p}} + \frac{C(\bE(\mathcal{S}_2((\vI-\Pi)\alpha\Delta))^p)^{\tfrac{1}{p}}}{(1-\sqrt{g})^2} \nonumber \\
&\quad + C_p \frac{\gamma \alpha T d}{1-\sqrt{g}} + C_p\frac{\sqrt{\gamma \alpha}\left[\bE (\mathcal{S}_2(V))^p\right]^{\tfrac{1}{2p}}}{1-\sqrt{g}} + C_p\sqrt{\frac{\alpha}{\gamma}}\left[\bE (\mathcal{S}_2(\nabla F))^p\right]^{\tfrac{1}{2p}} \nonumber\\
&\leq \frac{C}{\gamma\alpha}[\bE \|\tilde{V}_0\|^{2p}]^{\tfrac{1}{p}} + \frac{4}{\gamma^2}[\bE (\mathcal{S}_2(\nabla F))^p]^{\tfrac{1}{p}} + \frac{C(\bE(\mathcal{S}_2((\vI-\Pi)\alpha\Delta))^p)^{\tfrac{1}{p}} }{\gamma^2\alpha^2}\nonumber \\
&\quad + C_p Td + C_p\frac{\left[\bE (\mathcal{S}_2(V))^p\right]^{\tfrac{1}{2p}}}{\sqrt{\gamma \alpha}} + C_p\sqrt{\frac{\alpha}{\gamma}}\left[\bE (\mathcal{S}_2(\nabla F))^p\right]^{\tfrac{1}{2p}} \nonumber \\
&\leq \frac{C}{\gamma\alpha}[\bE \|\tilde{V}_0\|^{2p}]^{\tfrac{1}{p}} + \frac{4}{\gamma^2}[\bE (\mathcal{S}_2(\nabla F))^p]^{\tfrac{1}{p}} +  C_p Td + C_p\frac{\left[\bE (\mathcal{S}_2(V))^p\right]^{\tfrac{1}{2p}}}{\sqrt{\gamma \alpha}}\nonumber \\
&\quad + C_p\frac{L^2T^{1-\tfrac{1}{p}}\alpha^{2}}{\gamma^2}[\bE \mathcal{S}_{2p}(V)]^{\tfrac{1}{p}}+ C_p\frac{L^2T^{1-\tfrac{1}{p}}\alpha^{4}}{\gamma^2}[\bE \mathcal{S}_{2p}(\nabla F)]^{\tfrac{1}{p}} \nonumber \\
&\quad + C_p\frac{T L^2\alpha^{3}(d+\log K)}{\gamma}  + C_p\sqrt{\frac{\alpha}{\gamma}}\left[\bE (\mathcal{S}_2(\nabla F))^p\right]^{\tfrac{1}{2p}} \label{eq:vel_grad_rec_1}
\end{align}

Here, we have used that fact that $1-\sqrt{g} \geq C\gamma\alpha$ whenever $\gamma \alpha < 1$. We now use the AM-GM inequality to show that $$C_p\frac{\left[\bE (\mathcal{S}_2(V))^p\right]^{\tfrac{1}{2p}}}{\sqrt{\gamma \alpha}} \leq \frac{\left[\bE (\mathcal{S}_2(V))^p\right]^{\tfrac{1}{p}}}{10} + \frac{C_p^{\prime}}{\gamma\alpha}\,.$$

Similarly, we have:
$$C_p\sqrt{\frac{\alpha}{\gamma}}\left[\bE (\mathcal{S}_2(\nabla F))^p\right]^{\tfrac{1}{2p}} \leq \frac{\left[\bE (\mathcal{S}_2 (\nabla F))^p\right]^{\tfrac{1}{p}}}{2\gamma^2} + C_p^{\prime}\gamma \alpha\,.$$
Applying this to the RHS of Equation~\eqref{eq:vel_grad_rec_1} and re-arranging, we conclude the statement of the lemma. 

\item
Using similar methods as in item 1 above, we apply Lemma~\ref{lem:mart_conc} and collect the following moment bounds:
$$\bE|\sum_{t=0}^{T-1}\langle \nabla F(\psi_t)-\nabla F(\tilde{U}_{tK}),\tilde{\Psi}_t\rangle|^{p} \leq C_p\frac{L^p \alpha^{\tfrac{p}{2}}}{\gamma^{\tfrac{3p}{2}}}\sqrt{\bE (\mathcal{S}_{2}(V))^p}$$

$$\bE|\sum_{t=0}^{T-1}\langle \nabla F(\tilde{U}_{tK}),\tilde{\Psi}_t\rangle|^{p} \leq C_p\frac{\alpha^{\tfrac{p}{2}}}{\gamma^{\tfrac{p}{2}}}\sqrt{\bE (\mathcal{S}_{2}(\nabla F))^p}$$

By a calculation similar to that in item 1, we have:

\begin{align}
&\bE\left(\sum_t \frac{N^2_t\alpha L^2}{\gamma}\left(1+\frac{3\alpha L}{2\gamma}\right) \sup_{0\leq i\leq K-1}\|\hat{\tilde{U}}_{tK+i}-\tilde{U}_{tK}\|^2\right)^p \nonumber \\
&\leq C_pT^{p-1}\frac{L^{2p}}{\gamma^p}\left[\alpha^{3p}\bE \mathcal{S}_{2p}(V) + \alpha^{5p}\mathcal{S}_{2p}(\nabla F) + T\alpha^{4p}\gamma^p(d+\log K)^p\right]
\end{align}

Summing item 2 of Lemma~\ref{lem:pot_evol} over $t = 0$ to $t = T-1$ and applying the triangle inequality for $p$-th moments, we have:

\begin{align}
[\bE (\mathcal{S}_2(\nabla F))^p]^{\tfrac{1}{p}} &\leq \frac{\gamma}{\alpha}[\bE |(F(\Psi_0)-F(\Psi_T))^{+}|^p]^{\tfrac{1}{p}} + C_p\sqrt{\tfrac{\gamma}{\alpha}} [\bE (\mathcal{S}_2(\nabla F))^p]^{\tfrac{1}{2p}} \nonumber \\
&\quad + C_pL\sqrt{\tfrac{1}{\gamma\alpha}}[\bE (\mathcal{S}_2(V))^p]^{\tfrac{1}{2p}} + \frac{
CL^2}{\gamma^2}[\bE (\mathcal{S}_2(V))^p]^{\tfrac{1}{p}}  +C_p L T(d+\log K) \nonumber \\
&\quad + C_pT^{1-\tfrac{1}{p}}L^2\alpha^2[\bE \mathcal{S}_{2p}(V)]^{\tfrac{1}{p}} + C_pT^{1-\tfrac{1}{p}}L^2\alpha^4[\bE \mathcal{S}_{2p}(\nabla F)]^{\tfrac{1}{p}} \nonumber \\
&\quad + C_pT\gamma L^2\alpha^3(d+\log K) \nonumber \\&\leq  \frac{\gamma}{\alpha}[\bE |(F(\Psi_0)-F(\Psi_T))^{+}|^p]^{\tfrac{1}{p}} + C_p\sqrt{\tfrac{\gamma}{\alpha}} [\bE (\mathcal{S}_2(\nabla F))^p]^{\tfrac{1}{2p}} \nonumber \\
&\quad + C_pL\sqrt{\tfrac{1}{\gamma\alpha}}[\bE (\mathcal{S}_2(V))^p]^{\tfrac{1}{2p}} + \frac{
CL^2}{\gamma^2}[\bE (\mathcal{S}_2(V))^p]^{\tfrac{1}{p}}  +C_p L T(d+\log K) \nonumber \\
&\quad + C_pT^{1-\tfrac{1}{p}}L^2\alpha^2[\bE \mathcal{S}_{2p}(V)]^{\tfrac{1}{p}} + C_pT^{1-\tfrac{1}{p}}L^2\alpha^4[\bE \mathcal{S}_{2p}(\nabla F)]^{\tfrac{1}{p}}
\end{align}

In the last step, we have use the fact that $\gamma \alpha < 1$ and $\frac{L\alpha}{\gamma} < c_0$. Applying AM-GM inequality to $C_p\sqrt{\tfrac{\gamma}{\alpha}} [\bE (\mathcal{S}_2(\nabla F))^p]^{\tfrac{1}{2p}}$ and $ C_pL\sqrt{\tfrac{1}{\gamma\alpha}}[\bE (\mathcal{S}_2(V))^p]^{\tfrac{1}{2p}}$ similar to item 1, we conclude the result.

\end{enumerate}
\end{proof}

\subsection{Proof of Lemma~\ref{lem:vel_pos_growth_bound}}
\label{subsec:vel_pos_growth_bound}
\begin{proof}
    $$\tilde{X}_{sK}-\tilde{X}_{tK} = (A_{\alpha(s-t)}-\vI)\tilde{X}_{tK} + \sum_{h=t}^{s-1}A_{\alpha(s-h-1)}\left[G_{\alpha}b(\tilde{X}_{hK})+\alpha\Delta_h + \Gamma_{\alpha}\tilde{Y}_h\right]   $$

    In this proof only, for any random variable $W$ and any projection operator $\mathcal{P}$ over $\bR^{2d}$ we let $\cM_p(W;\mathcal{P}) := (\bE  [\|\mathcal{P} W\|^p|\tilde{X}_{tK}])^{\tfrac{1}{p}}$. We use the convention that $\cM_p(W) = (\bE \|W\|^p)^{\tfrac{1}{p}}$. Using the triangle inequality for $\cM_p$, we conclude:

\begin{align}\cM_p(\tilde{X}_{sK}-\tilde{X}_{tK},\Pi) &\leq \cM_p((A_{\alpha(s-t)}-\vI)\tilde{X}_{tK},\Pi) + \sum_{h=t}^{s-1}\cM_p(A_{\alpha(s-h-1)}G_{\alpha}b(\tilde{X}_{hK}),\Pi) \nonumber \\
&\quad +\sum_{h=t}^{s-1}\cM_p(A_{\alpha(s-h-1)}\alpha\Delta_h ,\Pi)
   + \cM_p(\sum_{h=t}^{s-1}A_{\alpha(s-h-1)} \Gamma_{\alpha}\tilde{Y}_h,\Pi) \label{eq:p_moment_pos}
   \end{align}

Similarly, we have:

\begin{align}\cM_p(\tilde{X}_{sK}-\tilde{X}_{tK},\vI-\Pi) &\leq \cM_p((A_{\alpha(s-t)}-\vI)\tilde{X}_{tK},\vI-\Pi) +\sum_{h=t}^{s-1}\cM_p(A_{\alpha(s-h-1)}\alpha\Delta_h ,\vI-\Pi) \nonumber \\
&\quad + \sum_{h=t}^{s-1}\cM_p(A_{\alpha(s-h-1)}G_{\alpha}b(\tilde{X}_{hK}),\vI-\Pi)
  \nonumber \\ &\quad + \cM_p(\sum_{h=t}^{s-1}A_{\alpha(s-h-1)} \Gamma_{\alpha}\tilde{Y}_h,\vI-\Pi) \label{eq:p_moment_vel}
\end{align}

By scaling relations given in Section~\ref{subsec:scaling_relations}, we have:
$\sum_{h=t}^{s-1}\Pi A_{\alpha(s-h-1)} \Gamma_{\alpha}\tilde{Y}_h \sim \cN(0,\Pi(\Gamma^2_{(s-t)\alpha})\Pi)$ and $\sum_{h=t}^{s-1}(\vI-\Pi) A_{\alpha(s-h-1)} \Gamma_{\alpha}\tilde{Y}_h \sim \cN(0,(\vI-\Pi)(\Gamma^2_{(s-t)\alpha})(\vI-\Pi))$. Notice that $\Pi (\Gamma^2_{\alpha(s-t)}) \Pi \lesssim \begin{bmatrix} \frac{4\alpha(s-t)}{\gamma}& 0\\ 
 0 & 0\end{bmatrix}$ and $(\vI-\Pi) (\Gamma^2_{\alpha(s-t)}) (\vI-\Pi) \lesssim \begin{bmatrix} 0 & 0\\ 
 0 & 2\gamma\alpha(s-t)\end{bmatrix}$

Therefore, we conclude:

$$\cM_p(\sum_{h=t}^{s-1}A_{\alpha(s-h-1)} \Gamma_{\alpha}\tilde{Y}_h,\Pi) \leq C_p \sqrt{\frac{d\alpha(s-t)}{\gamma}}$$

$$\cM_p(\sum_{h=t}^{s-1}A_{\alpha(s-h-1)} \Gamma_{\alpha}\tilde{Y}_h,\vI-\Pi) \leq C_p \sqrt{d\alpha\gamma (s-t)}$$

Notice that $\|\Pi (A_{\alpha(s-t)}-\vI)\tilde{X}_{tK}\| \leq \alpha (s-t)\|\tilde{V}_{tK}\|$ and $\|(\vI-\Pi) (A_{\alpha(s-t)}-\vI)\tilde{X}_{tK}\| \leq \alpha \gamma (s-t)\|\tilde{V}_{tK}\|$. This implies:

$$\cM_p((A_{\alpha(s-t)}-\vI)\tilde{X}_{tK},\Pi) \leq \alpha (s-t)\cM_p(\tilde{V}_{tK})$$

$$\cM_p((A_{\alpha(s-t)}-\vI)\tilde{X}_{tK},\vI-\Pi) \leq \alpha \gamma (s-t)\cM_p(\tilde{V}_{tK})$$

Notice that:
$\|\Pi A_{\alpha(s-h-1)}G_{\alpha}b(\tilde{X}_{hK})\| \leq \frac{\alpha}{\gamma}\|\nabla F(\tilde{U}_{hK})\|$ and $\|(\vI-\Pi) A_{\alpha(s-h-1)}G_{\alpha}b(\tilde{X}_{hK})\| \leq \alpha\|\nabla F(\tilde{U}_{hK})\|$ . This implies:
$$\cM_p(A_{\alpha(s-h-1)}G_{\alpha}b(\tilde{X}_{hK}),\Pi) \leq \frac{\alpha}{\gamma}\cM_p(\nabla F(\tilde{U}_{hK})) \leq \frac{\alpha L}{\gamma}\cM_p(\tilde{X}_{hK}-\tilde{X}_{tK},\Pi) + \frac{\alpha}{\gamma}\cM_p(\nabla F(\tilde{U}_{tK}))$$

$$\cM_p(A_{\alpha(s-h-1)}G_{\alpha}b(\tilde{X}_{hK}),\vI-\Pi) \leq \alpha\cM_p(\nabla F(\tilde{U}_{hK})) \leq \alpha L\cM_p(\tilde{X}_{hK}-\tilde{X}_{tK},\Pi) + \alpha\cM_p(\nabla F(\tilde{U}_{tK}))$$

Notice that: $\|\Pi A_{\alpha(s-h-1)}\alpha\Delta_h\| \leq \frac{\alpha L N_h}{\gamma}\sup_{0\leq i\leq K-1}\|\hat{\tilde{U}}_{hK+i}-\tilde{U}_{hK}\|$ and $\|(\vI-\Pi) A_{\alpha(s-h-1)}\alpha\Delta_h\| \leq \alpha L N_h\sup_{0\leq i\leq K-1}\|\hat{\tilde{U}}_{hK+i}-\tilde{U}_{hK}\|$. Applying item 1 in Lemma~\ref{lem:pot_evol}, we conclude:

\begin{align}\|\Pi A_{\alpha(s-h-1)}\alpha\Delta_h\| &\leq  \frac{C\alpha^2 L N_h}{\gamma}\|\tilde{V}_{hK}\| + \frac{C\alpha^3 L N_h}{\gamma}\|\nabla F(\tilde{U}_{hK})\| + \frac{C\alpha L N_h}{\gamma}M_{hK}
\end{align}

Therefore, we conclude:
\begin{align}&\cM_p( A_{\alpha(s-h-1)}\alpha\Delta_h,\Pi) \leq  \frac{C_p\alpha^2 L}{\gamma}\cM_p(\tilde{V}_{hK}) + \frac{C_p\alpha^3 L }{\gamma}\cM_p(\nabla F(\tilde{U}_{hK})) + \frac{C_p\alpha L }{\gamma}\cM_p(M_{hK}) \nonumber \\
&\leq  \frac{C_p\alpha^2 L }{\gamma}\cM_p(\tilde{V}_{hK}) + \frac{C_p\alpha^3 L }{\gamma}\cM_p(\nabla F(\tilde{U}_{tK})) + \frac{C_p\alpha^3 L^2 }{\gamma}\cM_p(\tilde{X}_{tK}-\tilde{X}_{hk},\Pi) \nonumber \\ &\quad + \frac{C_p\alpha L }{\gamma}\cM_p(M_{hK}) \nonumber \\
&\leq  \frac{C_p\alpha^2 L }{\gamma}\cM_p(\tilde{V}_{hK}) + \frac{C_p\alpha^3 L }{\gamma}\cM_p(\nabla F(\tilde{U}_{tK})) + \frac{C_p\alpha^3 L^2}{\gamma}\cM_p(\tilde{X}_{tK}-\tilde{X}_{hk},\Pi) \nonumber \\ &\quad + \frac{C_p\alpha^{\frac{5}{2}} L \sqrt{(d+\log K)} }{\sqrt{\gamma}} \nonumber \\
&\leq \frac{C_p\alpha^2 L }{\gamma}\cM_p(\tilde{X}_{tK},\vI-\Pi) + \frac{C_p\alpha^2 L }{\gamma}\cM_p(\tilde{X}_{tK}-\tilde{X}_{hK},\vI-\Pi)+\frac{C_p\alpha^3 L }{\gamma}\cM_p(\nabla F(\tilde{U}_{tK})) \nonumber \\ &\quad + \frac{C_p\alpha^3 L^2}{\gamma}\cM_p(\tilde{X}_{tK}-\tilde{X}_{hK},\Pi)  + \frac{C_p\alpha^{\frac{5}{2}} L \sqrt{(d+\log K)} }{\sqrt{\gamma}} 
\end{align}
In the first step, we have used the triangle inequality for $\cM_p$ along with the fact that $N_h$ is independent of $M_{h,K}$
and $\tilde{X}_{hK}$ and the fact that its $p$-th moment is bounded by a constant $C_p$. In the second step, we have controlled the norm of $\nabla F(\tilde{U}_{hK})$ with that of $\nabla F(\tilde{U}_{tK})$ using Lipschitzness. In the third step, we have invoked item 3 of Lemma~\ref{lem:pot_evol} to bound the moments of $M_{h,K}$. In the fourth step, we have controlled the norm $\tilde{V}_{hK}$ in terms of $\tilde{X}_{tK}$ and $\tilde{X}_{tK}-\tilde{X}_{hK}$. 

Similarly, we have:

\begin{align}\cM_p( A_{\alpha(s-h-1)}\alpha\Delta_h,\vI-\Pi) 
&\leq C_p\alpha^2 L \cM_p(\tilde{X}_{tK},\vI-\Pi) + C_p\alpha^2 L \cM_p(\tilde{X}_{tK}-\tilde{X}_{hK},\vI-\Pi)\nonumber \\ &\quad+C_p\alpha^3 L \cM_p(\nabla F(\tilde{U}_{tK}))  + C_p\alpha^3 L^2\cM_p(\tilde{X}_{tK}-\tilde{X}_{hK},\Pi) \nonumber \\ &\quad  + C_p\alpha^{\frac{5}{2}}\sqrt{\gamma} L \sqrt{(d+\log K)}  
\end{align}

Applying the estimates derived above to Equation~\eqref{eq:p_moment_pos}, along with the assumption that $\frac{\alpha L(s-t)}{\gamma} \leq c_p$, $\alpha \gamma(s-t) \leq c_p$ and $\alpha^2 L (s-t)\leq c_p$ for some small enough constant $c_p$ which depends only on $p$, we conclude:

\begin{align}
\sup_{t\leq h \leq s}\cM_p(\tilde{X}_{hK}-\tilde{X}_{tK},\Pi) &\leq  2\alpha (s-t)\cM_p(\tilde{X}_{tK},\vI-\Pi) + \alpha\sup_{t\leq h \leq s}\cM_p(\tilde{X}_{tK}-\tilde{X}_{hK},\vI-\Pi) \nonumber \\
&\quad + \frac{2\alpha (s-t)}{\gamma}\cM_p(\nabla F(\tilde{U}_{tK})) + C_p \sqrt{\frac{ (d+\log K)}{L}} \label{eq:pos_vel_recursion}
\end{align}

Similarly considering Equation~\eqref{eq:p_moment_vel}, we have:

\begin{align}
\sup_{t\leq h \leq s}\cM_p(\tilde{X}_{hK}-\tilde{X}_{tK},\vI-\Pi) &\leq  2\alpha\gamma(s-t)\cM_p(\tilde{X}_{tK},\vI-\Pi) + 2\alpha(s-t)\cM_p(\nabla F(\tilde{X}_{tK}))\nonumber \\
&\quad+ 2\alpha L(s-t)\sup_{t\leq h \leq s}\cM_p(\tilde{X}_{hK}-\tilde{X}_{tK},\Pi) + C_p \gamma \sqrt{\frac{d+\log K}{L}}\label{eq:vel_pos_recursion}
\end{align}

From Equations~\eqref{eq:pos_vel_recursion} and~\eqref{eq:vel_pos_recursion}, we conclude:

\begin{align}
\sup_{t\leq h \leq s}\cM_p(\tilde{X}_{hK}-\tilde{X}_{tK},\Pi) &\leq  8\alpha(s-t)\cM_p(\tilde{X}_{tK},\vI-\Pi)  + \frac{8\alpha (s-t)}{\gamma}\cM_p(\nabla F(\tilde{U}_{tK}))\nonumber\\
&\quad + C_p \sqrt{\frac{d+\log K}{L}} \label{eq:pos_p_moment}
\end{align}

\begin{align}
\sup_{t\leq h \leq s}\cM_p(\tilde{X}_{hK}-\tilde{X}_{tK},\vI-\Pi) &\leq  8\gamma\alpha(s-t)\cM_p(\tilde{X}_{tK},\vI-\Pi)  + 8\alpha (s-t)\cM_p(\nabla F(\tilde{U}_{tK}))\nonumber\\
&\quad + C_p \gamma\sqrt{\frac{d+\log K}{L}} \label{eq:vel_p_moment}
\end{align}

\end{proof}

\end{document}